\documentclass{article}

\usepackage{microtype}
\usepackage{graphicx}
\usepackage{subcaption}
\usepackage{tcolorbox}
\usepackage{booktabs} % for professional tables
\usepackage{tabularx}
\usepackage{adjustbox}
\usepackage{placeins}
\usepackage{changepage}

\usepackage{hyperref}

\usepackage[accepted]{icml2026}

\usepackage{amsmath}
\usepackage{amssymb}
\usepackage{mathtools}
\usepackage{amsthm}

\usepackage{amsmath,amsfonts,bm,amsthm}
\usepackage{thmtools}
\usepackage{thm-restate}
\usepackage{dsfont}
\usepackage{aliascnt}
\usepackage{tikz-cd}
\usepackage{etoolbox}
\usepackage{aliascnt}
\usepackage{hyperref}
\usepackage[nameinlink]{cleveref}
\usepackage{url}
\usepackage{enumerate}
\usepackage{fancybox}
\usepackage{longtable}
\usepackage{eso-pic}
\usepackage{natbib}
\usepackage{hyperref}
\usepackage{titletoc}
\usepackage{multirow}
\usepackage{csquotes}
\usepackage{xspace}
\usepackage{wrapfig,lipsum,booktabs}

\setcitestyle{authoryear,round,citesep={;},aysep={,},yysep={;}}

\definecolor{tumblue}{RGB}{0, 101, 189}
\definecolor{tumdarkblue}{RGB}{0, 82, 147}
\definecolor{tumlightblue}{RGB}{100, 160, 200}
\definecolor{tumlighterblue}{RGB}{152, 198, 234}
\definecolor{tumgray}{RGB}{153, 153, 153}
\definecolor{tumorange}{RGB}{227,114,34}
\definecolor{tumgreen}{RGB}{162,173,0}
\definecolor{tumlightgray}{RGB}{218, 215, 203}

\definecolor{darkblue}{rgb}{0.0,0.0,0.65}
\definecolor{darkred}{rgb}{0.68,0.05,0.0}
\definecolor{darkgreen}{rgb}{0.0,0.5,0.3}
\definecolor{darkpurple}{rgb}{0.47,0.09,0.29}

\hypersetup{
   colorlinks=true,
   citecolor=tumblue,
   linkcolor=tumblue,
   filecolor=tumblue,
   urlcolor=tumblue,
}

\def\eqref#1{(\ref{#1})}
\def\1{\bm{1}}

\def\eps{{\varepsilon}}

\def\vmu{{\bm{\mu}}}
\def\vtheta{{\bm{\theta}}}
\def\va{{\bm{a}}}
\def\vb{{\bm{b}}}
\def\vc{{\bm{c}}}
\def\vd{{\bm{d}}}
\def\ve{{\bm{e}}}

\def\vh{{\bm{h}}}

\def\vr{{\bm{r}}}
\def\vs{{\bm{s}}}

\def\vu{{\bm{u}}}

\def\vw{{\bm{w}}}
\def\vx{{\bm{x}}}
\def\vy{{\bm{y}}}
\def\vz{{\bm{z}}}

\def\valpha{{\bm{\alpha}}}

\def\vdelta{{\bm{\delta}}}

\def\mA{{\bm{A}}}
\def\mB{{\bm{B}}}

\def\mH{{\bm{H}}}
\def\mI{{\bm{I}}}

\def\mM{{\bm{M}}}

\def\mP{{\bm{P}}}
\def\mQ{{\bm{Q}}}
\def\mR{{\bm{R}}}
\def\mS{{\bm{S}}}

\def\mU{{\bm{U}}}

\def\mW{{\bm{W}}}

\def\mZ{{\bm{Z}}}

\def\mLambda{{\bm{\Lambda}}}
\def\mSigma{{\bm{\Sigma}}}

\def\mPi{{\mathbf{\Pi}}}
\def\op{{\mathrm{op}}}

\DeclareMathAlphabet{\mathsfit}{\encodingdefault}{\sfdefault}{m}{sl}
\SetMathAlphabet{\mathsfit}{bold}{\encodingdefault}{\sfdefault}{bx}{n}

\newcommand{\E}{\mathbb{E}}

\newcommand{\R}{\mathbb{R}}
\newcommand{\N}{\mathbb{N}}

\makeatletter
\newcommand{\appsection}[1]{%
  \section{#1}%
  \phantomsection%
  \addtocontents{toc}{%
    \protect\contentsline{section}%
      {\protect\numberline{\thesection}#1}%
      {\thepage}%
      {\@currentHref}%
  }%
}

\newcommand{\appsubsection}[1]{%
  \subsection{#1}%
  \phantomsection%
  \addtocontents{toc}{%
    \protect\contentsline{subsection}%
      {\protect\numberline{\thesubsection}#1}%
      {\thepage}%
      {\@currentHref}%
  }%
}
\makeatother

\crefname{section}{\S}{\S\S}
\Crefname{section}{\S}{\S\S}

\crefname{subsection}{\S}{\S\S}
\Crefname{subsection}{\S}{\S\S}

\crefname{subsubsection}{\S}{\S\S}
\Crefname{subsubsection}{\S}{\S\S}

\crefname{figure}{Fig.}{Figs.}
\Crefname{figure}{Fig.}{Figs.}

\crefname{table}{Table}{Tables}
\Crefname{table}{Table}{Tables}

\crefname{subsubsection}{\S}{\S\S}
\Crefname{subsubsection}{\S}{\S\S}

\crefname{equation}{}{}
\Crefname{equation}{}{}
\creflabelformat{equation}{#2(#1)#3}

\crefname{theorem}{Thm.}{Thms.}
\Crefname{theorem}{Thm.}{Thms.}

\crefname{proposition}{Prop.}{Props.}
\Crefname{proposition}{Prop.}{Props.}

\crefname{definition}{Def.}{Defs.}
\Crefname{definition}{Def.}{Defs.}

\crefname{lemma}{Lem.}{Lems.}
\Crefname{lemma}{Lem.}{Lems.}

\crefname{assumption}{Assumption}{Assumptions}
\Crefname{assumption}{Assumption}{Assumptions}

\crefname{example}{Example}{Examples}
\Crefname{example}{Example}{Examples}

\usepackage{tikz}
\newcommand*\circled[1]{\tikz[baseline=(char.base)]{
            \node[shape=circle,draw,inner sep=0.5pt] (char) {#1};}}

\newcommand\restr[2]{{% we make the whole thing an ordinary symbol
  \left.\kern-\nulldelimiterspace % automatically resize the bar with \right
  #1 % the function
  \vphantom{\big|} % pretend it's a little taller at normal size
  \right|_{#2} % this is the delimiter
}}

\newcommand{\indfct}{\mathds{1}}

\newcommand{\paramspace}{\Theta}
\newcommand{\param}{\bm{\theta}}
\newcommand{\paramsym}{\varphi}

\newcommand{\norm}[1]{\left\|{#1}\right\|}

\usepackage{enumitem}
\setlist{nolistsep}
\setlist[itemize]{noitemsep, topsep=0pt, leftmargin=2em}
\setlist[enumerate]{noitemsep, topsep=0pt, leftmargin=2em}

\newcommand{\layer}{\bm{H}}
\newcommand{\neuron}{\bm{h}}
\newcommand{\fixed}{\textcolor{darkred}{\mathbf{F}}}

\newcommand{\fixedn}{\textcolor{darkred}{\mathbf{f}}}

\newcommand{\fixednu}{\textcolor{darkred}{\mathbf{v}}}
\newcommand{\diag}{\textcolor{darkgreen}{\mathbf{D}}}
\newcommand{\diagn}{\textcolor{darkgreen}{\mathbf{d}}}

\newcommand{\diagnu}{\textcolor{darkgreen}{\mathbf{M}}}
\newcommand{\remn}{\bm{r}}
\newcommand{\diff}{\textcolor{darkred}{\bm{\delta}}}

\newcommand{\act}{\eta}
\newcommand{\fctcl}{\mathcal{H}}

\newcommand{\W}{\bm{W}}
\newcommand{\w}{\bm{w}}
\newcommand{\data}{\mathcal{X}}
\newcommand{\probdist}{\mathbb{P}}
\newcommand{\subsp}{\mathcal{U}}
\newcommand{\onb}{\bm{U}}
\newcommand{\gramnu}{\textcolor{darkgreen}{\mathbf{S}}}

\newcommand{\compl}{\mathsf{c}}

\newcommand{\id}{\mathrm{id}}
\newcommand{\Diag}{\mathrm{Diag}}
\newcommand{\rcin}{\mathrm{in}}
\newcommand{\rcout}{\mathrm{out}}
\newcommand{\gap}{\textcolor{darkred}{\gamma}}
\newcommand{\eff}{\mathrm{eff}}

\newcommand{\Wasymmetric}{$\W$\kern-0.22em-asymmetric\xspace}
\newcommand{\Wasym}{$\W$\kern-0.22em-asym.\xspace}

\theoremstyle{plain}
\newtheorem{theorem}{Theorem}[section]
\newtheorem{proposition}[theorem]{Proposition}
\newtheorem{lemma}[theorem]{Lemma}
\newtheorem{corollary}[theorem]{Corollary}

\theoremstyle{definition}
\newtheorem{definition}[theorem]{Definition}
\newtheorem{example}[theorem]{Example}
\newtheorem{assumption}[theorem]{Assumption}
\theoremstyle{remark}

\usepackage[colorinlistoftodos,prependcaption,textsize=tiny]{todonotes}
\usepackage{xargs}                      % Use more than one optional parameter in a new commands

\icmltitlerunning{Beyond Structural Symmetries: Linear Mode Connectivity via Neuron Identifiability}

\makeatletter
\newcommand{\printAffiliationsAndNoticeAndCode}[2]{\global\icml@noticeprintedtrue%
  \stepcounter{@affiliationcounter}%
  {\let\thefootnote\relax\footnotetext{\hspace*{-\footnotesep}\ificmlshowauthors #1\fi%
      \forloop{@affilnum}{1}{\value{@affilnum} < \value{@affiliationcounter}}{
        \textsuperscript{\arabic{@affilnum}}\ifcsname @affilname\the@affilnum\endcsname%
          \csname @affilname\the@affilnum\endcsname%
        \else
          {\bf AUTHORERR: Missing \textbackslash{}icmlaffiliation.}
        \fi
      }.%
      \ifdefined\icmlcorrespondingauthor@text
         { }Correspondence to: \icmlcorrespondingauthor@text.
      \else
        {\bf AUTHORERR: Missing \textbackslash{}icmlcorrespondingauthor.}
      \fi
      #2
      
      \ \\
      \Notice@String
    }
  }
}
\makeatother

\begin{document}

\setlength{\abovedisplayskip}{3pt}
\setlength{\belowdisplayskip}{3pt}
\frenchspacing

\twocolumn[
\icmltitle{Beyond Structural Symmetries: \\ Linear Mode Connectivity via Neuron Identifiability}
  
  % It is OKAY to include author information, even for blind submissions: the
  % style file will automatically remove it for you unless you've provided
  % the [accepted] option to the icml2026 package.

  % List of affiliations: The first argument should be a (short) identifier you
  % will use later to specify author affiliations Academic affiliations
  % should list Department, University, City, Region, Country Industry
  % affiliations should list Company, City, Region, Country

  % You can specify symbols, otherwise they are numbered in order. Ideally, you
  % should not use this facility. Affiliations will be numbered in order of
  % appearance and this is the preferred way.
  \icmlsetsymbol{equal}{*}

  \begin{icmlauthorlist}
    \icmlauthor{Vincent Bürgin}{equal,tum}
    \icmlauthor{Daniel Herbst}{equal,tum}
    \icmlauthor{Ya-Wei Eileen Lin}{tum}
    \icmlauthor{Stefanie Jegelka}{tum,mit}
  \end{icmlauthorlist}

  \icmlaffiliation{tum}{Technical University of Munich; School of CIT, MCML, MDSI}
  \icmlaffiliation{mit}{MIT; Dept. of EECS, CSAIL}

  \icmlcorrespondingauthor{Vincent Bürgin}{vincent.buergin@tum.de}
  \icmlcorrespondingauthor{Daniel Herbst}{daniel.herbst@tum.de}

  % You may provide any keywords that you find helpful for describing your
  % paper; these are used to populate the "keywords" metadata in the PDF but
  % will not be shown in the document
  \icmlkeywords{Machine Learning, ICML}
  \vskip 0.3in
]

% this must go after the closing bracket ] following \twocolumn[ ...

% This command actually creates the footnote in the first column listing the
% affiliations and the copyright notice. The command takes one argument, which
% is text to display at the start of the footnote. The \icmlEqualContribution
% command is standard text for equal contribution. Remove it (just {}) if you
% do not need this facility.

% Use ONE of the following lines. DO NOT remove the command.
% If you have no special notice, KEEP empty braces:
% \printAffiliationsAndNotice{}  % no special notice (required even if empty)
% Or, if applicable, use the standard equal contribution text:

% \newcounter{@affilnum}
% \printAffiliationsAndNotice{\icmlEqualContribution}
\printAffiliationsAndNoticeAndCode{\icmlEqualContribution}{\\Code: \href{https://github.com/vuenc/neuron-identifiability}{\texttt{github.com/vuenc/neuron-identifiability}}.}

\begin{abstract}
Many striking phenomena in deep learning, such as linear mode connectivity and the structured behavior of training dynamics, are closely tied to parameter symmetries: transformations that leave the realized function unchanged.
Despite growing attention to parameter symmetries, the exact interplay between parameters, data, and representations remains underexplored.
To investigate this, we develop a theoretical framework of effective function classes, i.e., the set of functions a neuron can realize on its input support, and the norm cost of realizing them.
We then formalize \emph{effective symmetry breaking} via neuron identifiability across independent training runs.
Our analysis shows that neural networks can admit large families of approximately equivalent solutions even in \emph{structurally asymmetric} models.  
We further show that neuron identifiability enables representation merging \emph{without prior alignment}, and characterize when such merging admits a linear low-loss path.
These findings highlight the role of effective function classes in affecting the loss landscape.
\end{abstract}

\section{Introduction}

\begin{figure}[t]
  \centering
  \includegraphics[width=\linewidth]{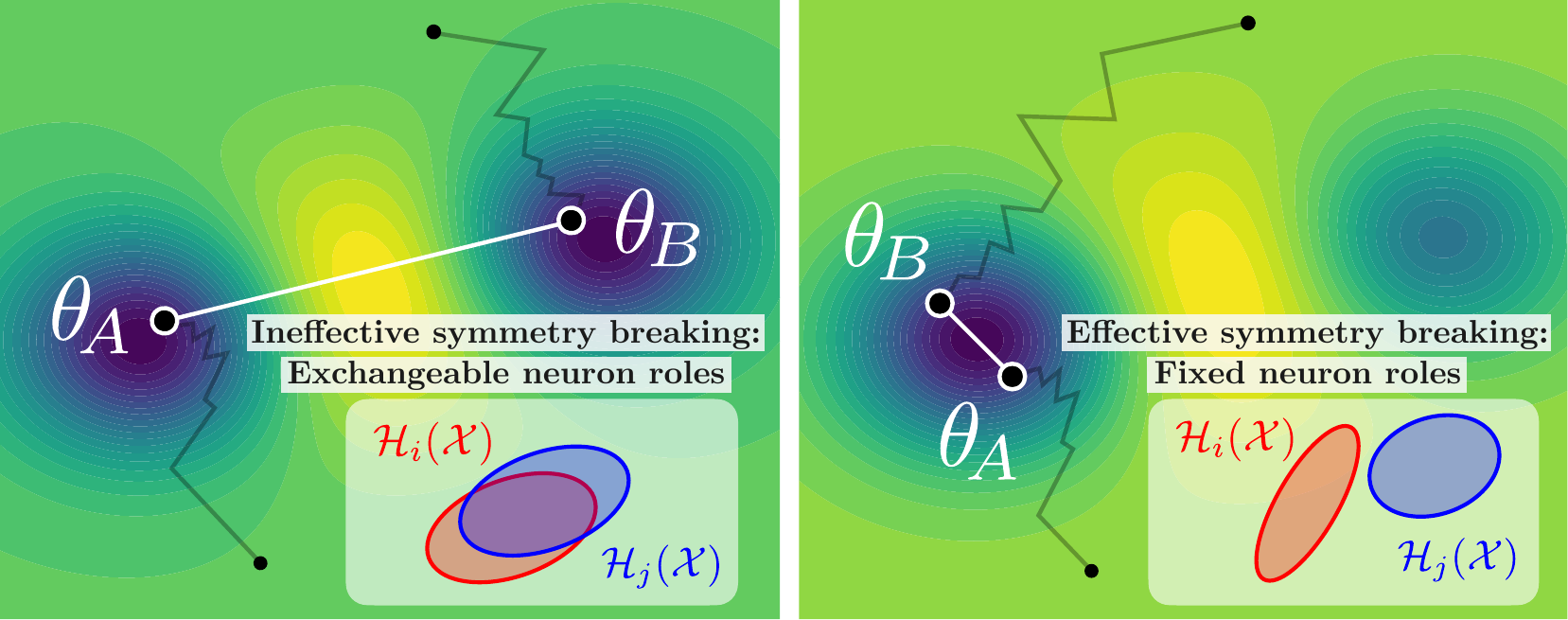}
  \caption{
        Illustration of neuron identifiability. 
        \textit{(Left)} Structural parameter symmetry broken but functions remain indistinguishable on data $\data$. 
        \textit{(Right)} Neurons identifiable, effective symmetry breaking enables merging representations without alignment.
  }
\label{fig:beyond_structure_para_symetry}
\vspace{-4 mm}
\end{figure}

Modern neural networks are typically overparameterized and have highly nonconvex loss landscapes, yet gradient-based training often converges to models with similar predictions and generalization performance \cite{goodfellow2015qualitatively, nguyen2018loss, zhang2017understanding, li2018visualizing}.
This occurs in part because many parameters in weight space implement the same function, yielding large equivalence classes of models that are functionally identical despite differing in their parameters. 
Formally, common architectures admit large parameter symmetry groups \citep{hecht1990algebraic,  zhao2026symmetry}, shaping optimization geometry \citep{zhao_symmetries_2023} and neuron-level interpretability \cite{godfrey2022on}.
Parameter symmetries also govern how we analyze or merge weights, and thus act as the data symmetries in the growing area of weight space learning \citep{andrychowicz2016learning, eilertsen2020classifying, unterthiner2020predicting, schurholt2022hyper}.

Among parameter symmetries, permutation symmetries of hidden units \citep{simsek2021geometryofthelosslandscape, zhao2022symmetry, zhao2024improving} are among the most ubiquitous and consequential.
Widely used weight initializations are invariant and gradient-based training is equivariant under such permutations. Consequently, independent training runs naturally explore different orbit representatives.
Most importantly, permutation symmetries have been connected to Linear Mode Connectivity (LMC), that is, the property that independently trained neural network solutions can be linearly interpolated in weight space while retaining similar performance \cite{frankle_linear_2020}.
LMC has been conjectured and empirically shown to hold in many settings \emph{when permutations are accounted for} \citep{entezari_role_2022}, which can be achieved via post-hoc alignment of hidden units (and, analogously, channels in convolutional neural networks or heads in transformers) \citep{singh2020model, ainsworth2023git}.

Besides \emph{structural} symmetries, neural networks can also exhibit \emph{approximate}, \emph{data-dependent}, and \emph{local} symmetries: transformations that need not preserve the realized function for all inputs, but approximately preserve it on the data or representations acting as input to a given layer. This setting is natural as both data and learned representations are often close to low-rank \cite{ansuini2019intrinsic, pope2021the, feng2022rank, huh2023the}.
In such low-rank regimes, parameter transformations can induce functions that agree closely on the inputs actually encountered by the network, while differing substantially elsewhere. Here, we connect this phenomenon to LMC.
For example, recent work proposed methods to \emph{break} structural parameter symmetries \citep{lim_empirical_2024, ziyin_remove_2024}, and observed that such interventions can yield \emph{unaligned} LMC, which indicates the absence of permutation symmetry: the assignment of features to neurons is fixed.
But not all such interventions lead to unaligned LMC.
We argue that this outcome may indicate additional symmetries: even if a structural symmetry is broken, approximate or data-dependent symmetries may remain intact on the inputs the network actually sees.
Hence, we use symmetry breaking and LMC as tools to study the interplay of data, representation geometry and neural symmetries.
Specifically, we ask:

\begin{tcolorbox}[
left=8pt,
right=8pt,
top=7pt,
bottom=8pt
]
    When does a given parameter symmetry breaking mechanism select a consistent assignment of features to neurons across runs, and how do the data and representation distribution control its effectiveness?
\end{tcolorbox}

\noindent\textbf{Our Contributions.}
We develop a unifying theoretical framework for \emph{neuron identifiability}: the consistent assignment of features to neurons across random training seeds.
In each layer, we view neurons functionally on their input support.
Given a symmetry breaking mechanism, we characterize the corresponding \emph{effective function classes} of neurons (i.e., which functions they can implement on their input support), and evaluate \emph{realization costs}  of functions (the minimum weight norm required to implement them) w.r.t.\ different neurons' function classes.
This allows us to derive conditions for neuron identifiability.

Our analysis highlights that effective symmetry breaking is controlled by which architectural perturbations are observable on the input support.
In particular, symmetries can be broken in raw parameter space, yet remain effectively available on the input support (see \cref{fig:beyond_structure_para_symetry}).
We further identify conditions under which hidden-layer representations can be merged without alignment, i.e., unaligned LMC.
Our findings systematically explain previously observed empirical phenomena.
Specifically, we show that symmetry breaking is not binary, but governed by the interaction between architecture, data geometry, and effective function classes.

\section{Related Work}

\noindent\textbf{Linear Mode Connectivity.}
A growing body of work argues that independently trained neural networks are often not separated by high-loss barriers, suggesting that good solutions form connected regions \cite{garipov2018loss, draxler2018essentially}.
Linear mode connectivity (LMC) strengthens this view by asking when the straight-line interpolation between two trained solutions stays within a low-loss region \cite{frankle_linear_2020}. 
Layer-wise connectivity \cite{adilova2024layerwise} further proposed to analyze by aligning networks at the layer level. 
A major practical barrier to LMC is parameter symmetry, which can make functionally similar models appear misaligned in weight space \cite{entezari_role_2022}. 
These symmetries also govern how we analyze or merge weights, and thus act as the data symmetries in the growing area of weight space learning \citep{ andrychowicz2016learning, zhou2023neural, manor2026spanning, han2026survey, kahana2024deep}.

\noindent\textbf{Weight Space Alignment.}
Weight-space alignment methods explicitly compute neuron-to-neuron correspondences between independently trained networks before any interpolation or parameter merging, typically by solving an assignment problem that makes the two weight tensors comparable under the symmetries of the parameterization (e.g., hidden-unit permutations) \cite{ainsworth2023git, singh2020model, pena2023re}.
Recent work shifts from solving a discrete matching instance per pair of models to learning the alignment map itself \cite{navon2023equivariant, shamsian2024improved}.
Subsequent work explores weight merging through explicit alignment pipelines, e.g., representation-based matching with interpolation repair \cite{li2020representation, jordan2022repair}, and extends them to additional architectures (e.g., transformers) and richer reparameterization families beyond simple permutations \cite{imfeld2024transformer, verma2024merging, theus2025generalized}.

\section{Preliminaries}
We first briefly introduce parameter symmetries, and then describe a simple architectural mechanism for breaking them.

\subsection{Weight Space and Parameter Symmetries}

Let a neural network be specified by a parameter vector $\param$ in a
parameter space $\paramspace$.
Denote $f_{\param} : \mathcal{X} \rightarrow \mathcal{Y}$ the function implemented by the network with parameters $\param$, mapping  the input space $\mathcal{X}$ to the output space $\mathcal{Y}$. 
A parameter symmetry is a transformation $\paramsym: \paramspace \to \paramspace$ s.t. $f_{\paramsym(\param)} = f_{\param}\; \forall\param \in \paramspace$.

\noindent\textbf{Permutation Symmetries.}
Consider a 2-layer MLP parameterized by $\param = (\W_2, \W_1) \in \paramspace := \R^{d_{\rcout} \times m} \times \R^{m \times d_{\rcin}}$:
\begin{equation}
    f_{\param}(\vx) \; := \; \W_2\act(\W_1\vx), \label{eq:neural_network_def}
\end{equation}
where $\act$ is an elementwise nonlinearity. 
For any permutation $\pi \in S_m$ with permutation matrix $\mP_\pi \in \{0,1\}^{m \times m}$, the mapping $(\W_2, \W_1) \mapsto (\W_2 \mP_\pi^\top, \mP_\pi\W_1)$ is a parameter symmetry, since $\W_2\mP_\pi^\top\act(\mP_\pi\W_1\vx) = \W_2\mP_\pi^\top\mP_\pi\act(\W_1\vx) = \W_2\act(\W_1\vx)$ for $\vx \in \R^{d_{\rcin}}$. 
In other words, hidden units are interchangeable, i.e.,  permuting them and compensating in the adjacent layer does not change the computed function.
Depending on $\act$, this group can also be significantly larger 
(e.g., diagonal rescalings for positively homogeneous $\act$ or invertible matrices for linear $\act$).
Similar symmetries appear in modern models, e.g., channels in CNNs, heads in multi-head attention, or through any computation graph automorphisms \citep{lim_graph_2023, lim_empirical_2024}.

\noindent\textbf{Symmetry Breaking.}
Symmetry breaking refers to any modification of a network’s forward pass that reduces the effective parameter symmetry group so that fewer distinct parameters represent the same function.
Equivalently, it aims to make the map $\vtheta \mapsto f_{\vtheta}$ closer to injective.
In this work, we study an architectural intervention that replaces each trainable weight matrix $\W$ with an effective matrix
\begin{equation}
    \W_{\eff} \; = \; \fixed + \diag \odot \W, \label{eq:W_eff}
\end{equation}
where $\fixed, \diag \in \R^{m \times d}$ are considered fixed parts of the architecture, $\odot$ denotes elementwise multiplication, and $\W \in \R^{m \times d}$ contains the trainable parameters that are randomly initialized independently across runs and then optimized.
$\fixed, \diag \in \R^{m \times d}$ can be fixed upfront or sampled once per architecture.
As summarized in \cref{tab:symmetry_breaking_examples}, this subsumes several existing parameter symmetry breaking schemes.

\vspace{-4pt}
\begin{table}[h]
\centering
\caption{Symmetry breaking schemes covered by  \cref{eq:W_eff}.}
\vspace{-6pt}
\scriptsize
\setlength{\tabcolsep}{3pt}
\renewcommand{\arraystretch}{1.05}
\begin{tabular}{lll}
\toprule
& $\fixed$ & $\diag$ \\
\midrule
\multirow{2}{*}{\emph{\Wasym}\ \citep{lim_empirical_2024}}
& $\mathbf{M} \odot \mathbf{B}$
& $\mathbf 1\mathbf 1^\top-\mathbf{M}$ \\
& \multicolumn{2}{l}{where $\mathbf{B} \sim \mathcal{N}(0, \sigma_{\fixed}^2)$ i.i.d., $\mathbf{M} \in \{0,1\}^{m \times d}$} \\[0.15em]
\emph{syre} \citep{ziyin_remove_2024}
& $\mathcal{N}(0,\sigma_{\fixed}^2)$ i.i.d.\
& $\mathrm{Unif}(1\!-\!\eps,1\!+\!\eps)^{-1/2}$ i.i.d.\ \\[0.15em]
Linear residual
& $\mI_d$
& $\mathbf 1\mathbf 1^\top$ \\[0.15em]
Sparse network
& $0$
& $\mathbf{M}\in\{0,1\}^{m\times d}$ \\
\bottomrule
\end{tabular}
\label{tab:symmetry_breaking_examples}
\end{table}
\vspace{-4pt}

For \emph{\Wasymmetric networks} \citep{lim_empirical_2024}, a binary mask $\mathbf{M}$ selects a subset of coordinates that $\fixed := \mathbf{M} \odot \mathbf{B}$ fixes to constant random values, while unmasked coordinates remain trainable.
Formally, \citet{lim_empirical_2024} show that if $\mathbf{M}$ has pairwise distinct nonzero rows, all architecture-induced symmetries of a neural DAG (in particular, permutations of hidden units) are broken.
In \emph{syre}, \citet{ziyin_remove_2024} directly draw $\fixed$ and $\diag$ i.i.d., and show that, almost surely, more general reflection symmetries of the loss under $\ell^2$ weight decay are removed.
Linear residual connections and sparse networks, both of which are known to break hidden-unit permutation symmetries \citep{lim_empirical_2024, zhao_understanding_2025}, can likewise be interpreted in our framework.

\subsection{Setup and Assumptions}

In our work, we will consider an asymmetric (i.e., symmetry-broken) layer with the intervention in the form of \cref{eq:W_eff} in isolation, since permutation symmetries and their breaking act locally at the level of neurons.
Concretely, we study the layer map $\layer: \R^{m \times d} \times \R^d \to \R^m$,
\begin{equation}\label{eq:W_layer}
    \layer(\W; \vx) \;:=\;  \act((\fixed + \diag \odot \W) \vx),
\end{equation}
where $\fixed, \diag$ are fixed as in \cref{eq:W_eff} and $\act: \R \to \R$ acts elementwise. 
Let $\fixedn_i, \diagn_i,$ and $\w_i$ denote the $i$-th rows of $\fixed, \diag,$ and $\W$.
The layer's $i$-th neuron in \cref{eq:W_layer} can be written as $\neuron_i(\w_i; \vx) := \act((\fixedn_i + \diagn_i \odot \w_i)^\top \vx)$.
Let the input $\vx$ (data or representations) be supported on a measurable subset $\data \subset \R^d$ and drawn from a distribution $\probdist$ on $\data$.
Define
\begin{equation}
    \fctcl(\data) := \{\layer(\W; \cdot) \mid \W \in \R^{m \times d}\} \subset \{\data \to \R^m\},
\end{equation}
and analogously $\fctcl_i(\data) \subset \{\data \to \R\}$ as the set of functions a layer (or single neuron) can implement. 
As $\data$ is typically far from filling the ambient space $\R^d$, the input support strongly influences symmetries.
For our theoretical analysis, we model this via a linear subspace assumption:
\begin{assumption}[Subspace support model]\label{ass:low-rank}
The distribution $\mathbb{P}$ is supported on a $k$-dimensional subspace $\subsp\subset\R^d$, i.e., $\mathbb{P}_\vx(\vx \in \subsp)=1$.
Throughout, we fix an orthonormal basis $\onb\in\R^{d\times k}$ of $\subsp$, i.e., $\onb^\top\onb=\mI_k$ and $\mathrm{im}(\onb)=\subsp$.
\end{assumption}\vspace{-2mm}
\Cref{ass:low-rank} is a standard low-dimensional structure hypothesis: although inputs live in a high-dimensional ambient space, their variability is often well-approximated by a low-dimensional set \cite{ansuini2019intrinsic, pope2021the, feng2022rank, huh2023the, papyan2020prevalence}. 
In our setting, the subspace model enables exact projections onto $\subsp$ and provides explicit coordinates via $\onb$; in \cref{apd:approx-subspace-relaxed-cost} we show qualitatively similar results if this assumption is relaxed.

\section{Neuron Identifiability}\label{sec:eff-func-classes}
In this section, we relate the effectiveness of a symmetry breaking intervention according to \cref{eq:W_eff} to its ability to provide neurons with distinguishable identities that lead to consistent assignments across independent training runs.
We show that effective symmetry breaking incurs high weight norm cost to realize non-identity permutations, and quantify realization costs depending on $\fixed, \diag$.

Within one layer of a neural network, each neuron can be viewed as implementing a particular feature.
Across independent training runs, a layer often realizes essentially the same \emph{set} of features, and runs differ mainly by a permutation that matches neurons with approximately the same functionality.
In a symmetric layer, this assignment is determined by the initialization.
By \emph{effective symmetry breaking} we mean that, instead, architectural asymmetries bias training towards a consistent assignment.
We analyze this through each neuron’s \emph{effective function class}, i.e., the functions it can realize on the input support together with their \emph{realization costs}, measured by the minimum weight norm required to implement them.
This viewpoint is well-motivated under explicit weight decay and the implicit norm bias of gradient-based optimization.
In an asymmetric layer, two neurons can both be capable of implementing a feature, yet one can require a much larger norm.
We quantify symmetry breaking via a \emph{permutation sensitivity}, i.e., the norm cost change from forcing a reassignment of features to different neurons.
The underlying premise is that uniformly large reassignment costs over nontrivial permutations are equivalent to there being a unique minimum-complexity assignment of the feature set to neurons.
For this analysis, we impose the following assumption on the input support.

\begin{assumption}[Non-degeneracy]\label{ass:non-degeneracy}
Under \cref{ass:low-rank}, additionally assume the following.
If $\act$ is injective (e.g., $\mathrm{tanh}$, $\mathrm{LeakyReLU}$), assume $\mathrm{span}(\mathrm{supp}(\mathbb P))=\subsp$.
If $s\mapsto(\act(s),\act(-s))$ is injective (e.g., $\mathrm{ReLU}$, $\mathrm{GELU}$), assume there is $T\subseteq \mathrm{supp}(\mathbb P)$ s.t.\ $T=-T$ and $\mathrm{span}(T)=\subsp$.
\end{assumption}\vspace{-2mm}

\begin{figure}[t]
  \centering
  \includegraphics[width=\linewidth]{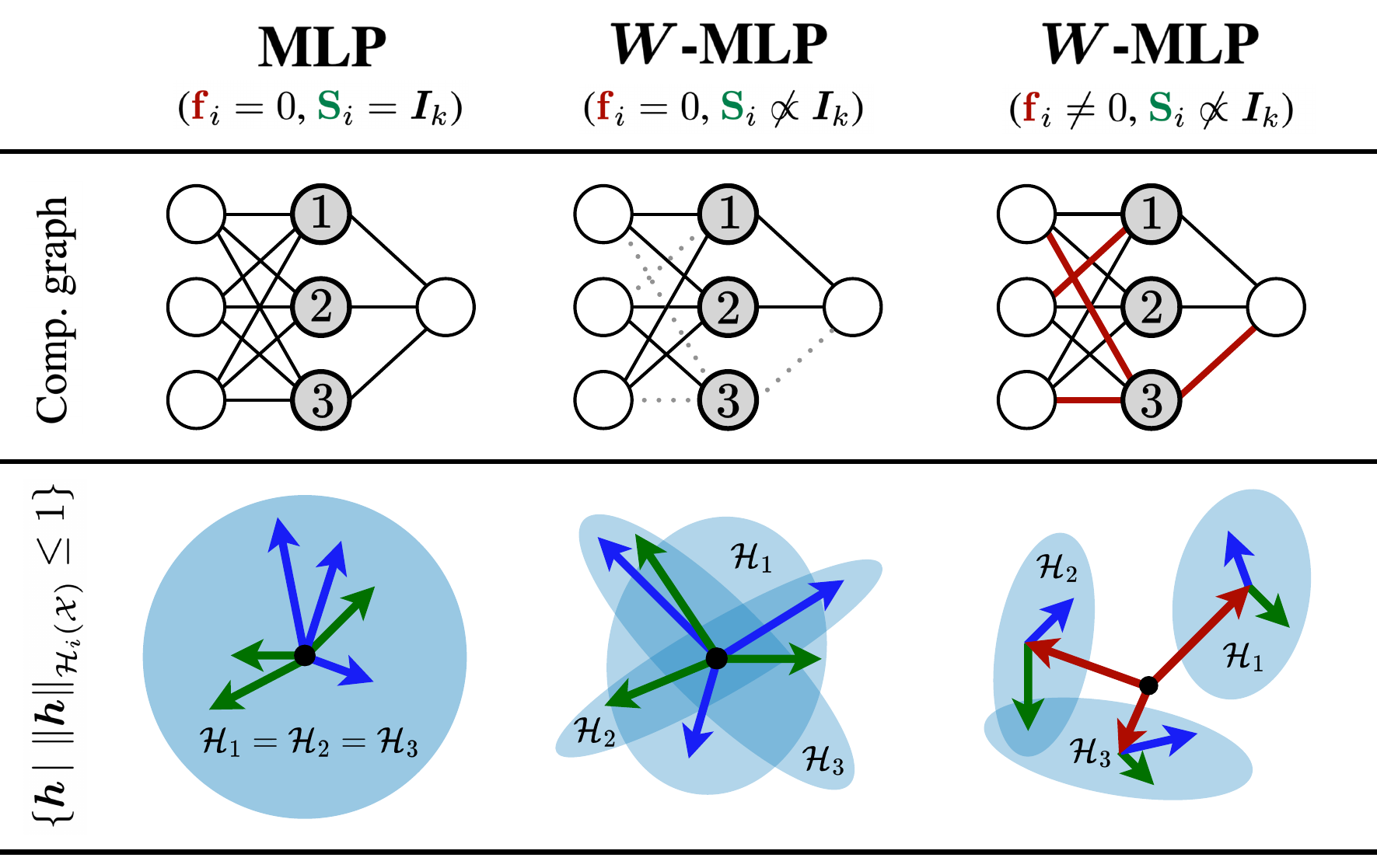}
  \caption{Effective function classes and learned features (\textbf{\textcolor{darkgreen}{run A}}/\textbf{\textcolor{blue}{run B}}) with varying levels of symmetry. \textit{Col.~1:} In a fully symmetric MLP, each neuron can implement the same functions on $\data$. \textit{Col.~2:} Pruning weights affects $\gramnu_i$ and can introduce anisotropy. \textit{Col.~3:} Fixed weights via $\fixed$ introduce functional biases.}
\label{fig:H_i_explanation}
\vspace{-4mm}
\end{figure}

The condition is relatively mild in the injective case and intentionally stronger for non-injective activations.
While we expect the results in this section to extend to weaker distributional assumptions, it allows us to derive our results in terms of linear algebra via the following proposition.

\begin{restatable}[]{proposition}{IdentificationSubspace}\label{prop:identification-subspace}
    Define $\fixednu_i$ as the projection of the fixed center of the $i$-th asymmetric neuron to the input subspace, and $\diagnu_i$ as its projected diagonal operator, i.e.,
    \begin{equation}
        \fixednu_i := \onb^\top \fixedn_i \in \R^k, \quad \diagnu_i := \onb^\top \Diag(\diagn_i) \in \R^{k \times d}.
    \end{equation}
    Under \cref{ass:low-rank,ass:non-degeneracy}, $\vh \in \fctcl_i(\data)$ belonging to weight $\w$ can be written as $\vh(\vx) = \act((\onb\va)^\top \vx)$, where
    \begin{equation}
        \va \;:= \;\onb^\top\big(\fixedn_i+\Diag(\diagn_i)\w\big)
            \;=\;\fixednu_i+\diagnu_i\w\,\in\R^k.
        \label{eq:inducedSubspaceActivationCoefficient}
    \end{equation}
    The mapping $\vh \mapsto \va$ is a bijection that identifies $\fctcl_i(\data)$ with $\fixednu_i + \mathrm{im}(\diagnu_i) \subseteq \R^k$.
    In particular, $\fctcl_i(\data)$ is an affine subspace of $\R^k$ of dimension $\mathrm{rank}(\diagnu_i) \leq k$.
\end{restatable}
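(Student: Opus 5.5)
The plan is to split the statement into three parts: a representation formula, a bijection, and the affine-image claim.

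\emph{Representation.} Take a weight $\w$ and set $\bm{b} := \fixedn_i + \Diag(\diagn_i)\w \in \R^d$. Then $\neuron_i(\w;\vx) = \act(\bm{b}^\top\vx)$, since $\diagn_i\odot\w = \Diag(\diagn_i)\w$. By \cref{ass:low-rank}, every $\vx\in\subsp$ can be written as $\vx = \onb\onb^\top\vx$, because $\onb\onb^\top$ is the orthogonal projector onto $\subsp$. Hence $\bm{b}^\top\vx = (\onb\onb^\top\bm{b})^\top\vx = (\onb\va)^\top\vx$ with $\va := \onb^\top\bm{b}$. Expanding $\va$ gives $\va = \fixednu_i + \diagnu_i\w$, which is \cref{eq:inducedSubspaceActivationCoefficient}. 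This formula holds on $\subsp$, which contains $\mathrm{supp}(\probdist)$. Functions are identified on $\data$, which I take to be the support (or any set in $\subsp$ containing it), so $\vh$ depends on $\w$ only through $\va$.

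\emph{Well-definedness and injectivity.} The map $\vh\mapsto\va$ is well defined and injective if and only if $\va\mapsto \act((\onb\va)^\top\cdot)|_{\data}$ is injective. This step is the main obstacle, and it is where \cref{ass:non-degeneracy} is used. Suppose $\act((\onb\va)^\top\vx)=\act((\onb\va')^\top\vx)$ for all $\vx\in\mathrm{supp}(\probdist)$. The two cases are:
\begin{itemize}
\item If $\act$ is injective, then $(\onb(\va-\va'))^\top\vx=0$ on the support. The support spans $\subsp$, so $\onb(\va-\va')\perp\subsp$. But $\onb(\va-\va')\in\subsp$, so it is zero, and $\onb^\top\onb=\mI_k$ gives $\va=\va'$.
\item If $s\mapsto(\act(s),\act(-s))$ is injective, fix $\vx\in T$. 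Since $-\vx\in T$ as well, equality at both $\vx$ and $-\vx$ gives $(\act(s),\act(-s))=(\act(s'),\act(-s'))$ with $s=(\onb\va)^\top\vx$ and $s'=(\onb\va')^\top\vx$. Hence $s=s'$ on $T$. Since $\mathrm{span}(T)=\subsp$, the same argument as in the first case concludes.
\end{itemize}
Equality is understood pointwise on the support; if functions are only identified $\probdist$-a.e., I would note that $T$ and the spanning set may be chosen inside the support, so pointwise equality on it is what is needed.

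\emph{Image.} As $\w$ ranges over $\R^d$, the vector $\va=\fixednu_i+\diagnu_i\w$ ranges exactly over the affine set $\fixednu_i+\mathrm{im}(\diagnu_i)$. So the injective map from the previous step, restricted to $\fctcl_i(\data)$, is a bijection onto this set. The image is a translate of the linear subspace $\mathrm{im}(\diagnu_i)\subseteq\R^k$, whose dimension is $\mathrm{rank}(\diagnu_i)\le k$ because $\diagnu_i\in\R^{k\times d}$. This completes the plan, with only routine linear-algebra checks remaining.
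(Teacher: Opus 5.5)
Your proposal is correct and follows the paper's proof essentially step by step. It uses the same rewriting of the preactivation on $\subsp$ via $\onb\onb^\top$, the same two-way inclusion identifying the attainable coefficients with $\fixednu_i+\mathrm{im}(\diagnu_i)$, and the same two-case injectivity argument using either $\mathrm{span}(\mathrm{supp}(\probdist))=\subsp$ or the symmetric spanning set $T$. The only loose spot is your side remark on $\probdist$-a.e.\ identification: upgrading a.e.\ equality to equality on the support requires continuity of $\act$, which you do not invoke. The paper also argues with pointwise equality on the support, so this does not separate your route from theirs.
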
\vspace{-2mm}

The proof of \Cref{prop:identification-subspace} is in \cref{apd:pf-eff-func-classes}. 
\Cref{prop:identification-subspace} provides a geometric comparison of neuron function classes via the induced subspace preactivation coefficient $\va$. 
It especially implies that if all $\diagnu_i$ have full rank, all neurons can represent exactly the same functions:
\begin{equation}
    \fctcl_1(\data) = \dots = \fctcl_m(\data) \: \cong \: \R^k. \label{eq:full-rank-case}
\end{equation}
Thus, even neurons  receiving incomplete information (e.g., some inputs masked out) may still be able to realize all functions on the input because of the low effective dimension of the input space.
For example, the intensity of a masked-out input pixel might be recoverable by reading out values of correlated pixels that have not been masked out. 
In the sequel, we mainly focus on the case \cref{eq:full-rank-case}, and show that even when different neurons are exchangeable in terms of the functions they can represent, some of these functions may still come at a higher weight cost than others.
Consequently, even if $\fctcl_i = \fctcl_j$, the $i$-th and $j$-th neuron may still be assigned consistent roles across training runs, by virtue of the optimizer favoring solutions of small weight norm.

\subsection{Realization Costs}\label{subsec:realization-costs}

Having characterized \emph{which} functions a neuron can represent, we now ask to quantify \emph{at which cost}.
This motivates a complexity measure given by the minimal parameter norm needed for neuron $i$ to realize a target function.
We define the \emph{realization cost}\footnote{This is the \emph{representation cost} of \citet{dai2021representation, gunasekar2018implicit, savarese2019how}.
We call it \emph{realization cost} to avoid confusion with \emph{representations} in a layer.} of $\vh: \data \to \R$ w.r.t. $\fctcl_i(\data)$ as
\begin{equation}
    \norm{\vh}_{\fctcl_i(\data)} \: := \inf \, \{ \norm{\w}_2 \mid \vh = \neuron_i (\w; \cdot) \}, \label{eq:representation-cost}
\end{equation}
with the convention $\norm{\vh}_{\fctcl_i(\data)} := +\infty$ for $\vh \notin \fctcl_i(\data)$.
Define the global version $\norm{\cdot}_{\fctcl(\data)}$ analogously w.r.t.\ the Frobenius norm $\norm{\cdot}_F$.\footnote{In general, $\|\cdot\|_{\fctcl(\data)}$ is not a norm (e.g.,$\layer(0;\cdot)=\act(\fixed\,\cdot)$ can be nonzero while having cost $0$).}

By \cref{ass:low-rank,ass:non-degeneracy}, the realization cost w.r.t. the $i$-th neuron in \cref{eq:representation-cost} can be computed in a closed form as a Mahalanobis seminorm.
Namely, define the Gram matrix
\begin{equation}
    \gramnu_i \; := \; \diagnu_i \diagnu_i^\top \; = \; \onb^\top \Diag(\diagn_i)^2 \onb \; \in \; \R^{k \times k}.
    \label{eq:gramMatrix}
\end{equation}
Then, for a function $\vh  \in \fctcl_i(\data)$ expressible by the $i$-th neuron, $\vh = \act((\onb\va)^\top \cdot)$ (\cref{prop:identification-subspace}), the realization cost is
\begin{equation}
    \norm{\act((\onb\va)^\top \cdot)}_{\fctcl_i(\data)} \; = \; \norm{\va - \fixednu_i}_{\gramnu_i^\dagger}.
    \label{eq:mahalanobisNorm}
\end{equation}
Here, $\norm{\vx}_\mA := \sqrt{\vx^\top \mA \vx}$ denotes the Mahalanobis norm and $\gramnu_i^\dagger$ the Moore-Penrose pseudoinverse of $\gramnu_i$.

This demonstrates that the realization costs w.r.t.\ neuron $i$ depend crucially on the anisotropy of its Gram matrix.
Intuitively, $\gramnu_i$ acts like a direction-dependent cost map: the realization cost grows the further $\va$ is from the classes' projected centers $\fixednu_i$, but possibly more so in some directions than others (see \cref{fig:H_i_explanation} for an illustration).
Along an eigenvector of $\gramnu_i$ with eigenvalue $\lambda$, the cost scales like $1/\sqrt{\lambda}$ (i.e., small $\lambda$ is costly), and directions outside $\mathrm{im}(\gramnu_i)$ ($= \mathrm{im}(\diagnu_i)$) are not realizable at all (i.e., have infinite cost).
If $\gramnu_i \approx c\,\mI_k$, the cost is near-Euclidean, whereas anisotropy makes some features much more expensive than others.
This is illustrated in the example below.

\begin{example}\label{example:anis}
Consider a \Wasym layer with two neurons and 4D inputs that live on a 2D subspace $\subsp_\varepsilon = \mathrm{span} \,\{\vu_1, \vu_2\}$, with $\vu_1 = (1, \varepsilon, 0, 0)^\top, \vu_2=(0,0,1,\varepsilon)^\top$, for some $\varepsilon \in [0,1]$.
Let $\fixed = 0$, $\diagn_1 = (1,0,0,1)^\top, \diagn_2 = (0,1,1,0)^\top$.
If $\varepsilon$ is small, neuron 1 can pick up on information in the $\vu_2$ direction only at high weight cost, while neuron 2 can pick up on it cheaply; vice-versa, neuron 1 can pick up on the $\vu_1$ direction cheaply and neuron 2 cannot.
This is reflected in anisotropic $\gramnu_1 \propto \Diag(1, \varepsilon^2)$ and $\gramnu_2 \propto \Diag(\varepsilon^2, 1)$.
The realization cost $\norm{\vh_2}_{\fctcl_1}$ of neuron 2's function w.r.t. neuron 1's class becomes arbitrarily large as $\varepsilon \to 0$, and $\vh_2$ cannot be represented by $\fctcl_1$ if $\varepsilon = 0$ (unless $\vh_2$ does not vary in the direction of $\vu_2$).
\end{example}
\vspace{-1mm}

The issue of certain directions of the input being masked out is closely connected to the \emph{subspace coherence}
\begin{equation}
    \nu(\mathcal{U}) \: := \: \max\nolimits_{\ell \in [d]} \,(\onb\onb^\top)_{\ell\ell} \: \in \: [k/d, 1]\label{eq:subspace-coherence}
\end{equation}
\citep{candes2005decoding, candes2009exact}.%
\footnote{Note that $\nu(\subsp)$ is defined in terms of the orthogonal projection $\onb \onb^\top$ on $\subsp$ and does not depend on the chosen basis.}
The subspace coherence $\nu(\mathcal{U})$ quantifies how much of the $k$-dimensional variability of the input support can be seen from any single ambient coordinate.
It is low ($\approx k/d$) when the intrinsic signal directions are spread uniformly across the raw coordinates, and high ($\approx 1$) when few coordinates dominate the input variation.
High coherence makes the effect of coordinate masking on symmetry breaking particularly pronounced, while low coherence can diffuse it across ambient dimensions.
In \cref{example:anis}, $\nu(\subsp_\varepsilon)=(1+\varepsilon^2)^{-1}$, which tends to $1$ as $\varepsilon\to 0$, exactly in the regime where masking ambient coordinates makes certain directions essentially inaccessible and, thus, realization costs highly anisotropic.
The following theorem bounds the anisotropy of $\gramnu_i$ in terms of $\nu(\mathcal U)$ if the diagonal asymmetry $\diagn_i$ is i.i.d. sampled.

\begin{restatable}[Spectral concentration of $\gramnu_i$]{theorem}{SpectralConcentration}\label{thm:spectral-concentration}
    Let $i \in [m]$ and sample the entries of $\diagn_i$ i.i.d.\ with $(\diagn_i)_\ell^2$ having mean $\mu_{\diag}$, variance \ $\sigma_{\diag}^2$, and a.s.\ deviation $b_{\diag}$ from its mean (all $< \infty$).
    For any $\delta \in (0, 1)$, with probability at least $1 - \delta$,
    \begin{equation}
        \norm{\gramnu_i - \mu_{\diag}\mI_k}_{\mathrm{op}} \leq
        C \left\{ \sqrt{\sigma_{\diag}^2 \nu(\mathcal{U}) \Lambda} + b_{\diag} \nu(\mathcal{U}) \Lambda \right\},
    \end{equation}
    where $\Lambda := \log \left(2k/\delta \right)$, $\nu(\mathcal{U}) := \max_{\ell \in [d]} \,(\onb\onb^\top)_{\ell\ell}$ is the subspace coherence, and $C$ is a universal constant.\footnote{One can also control $\max_i \norm{\gramnu_i - \mu_{\diag}\mI_k}_{\mathrm{op}}$ via a union bound at the cost of replacing $\log(2k/\delta)$ by $\log(2mk/\delta)$.}
\end{restatable}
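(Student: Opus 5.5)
Write $\onb$'s rows as $\vu_\ell^\top\in\R^{1\times k}$, $\ell\in[d]$, so that $\norm{\vu_\ell}_2^2=(\onb\onb^\top)_{\ell\ell}\le\nu(\subsp)$ and $\sum_\ell \vu_\ell\vu_\ell^\top=\onb^\top\onb=\mI_k$. Then by \cref{eq:gramMatrix},
\begin{equation*}
    \gramnu_i=\onb^\top\Diag(\diagn_i)^2\onb=\sum\nolimits_{\ell=1}^d (\diagn_i)_\ell^2\,\vu_\ell\vu_\ell^\top,
    \qquad
    \gramnu_i-\mu_{\diag}\mI_k=\sum\nolimits_{\ell=1}^d \mZ_\ell,
\end{equation*}
where $\mZ_\ell:=\big((\diagn_i)_\ell^2-\mu_{\diag}\big)\vu_\ell\vu_\ell^\top$. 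The $\mZ_\ell$ are independent, symmetric $k\times k$ random matrices with mean zero. The plan is to apply the matrix Bernstein inequality (Tropp) to this sum.

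The two Bernstein parameters are as follows.
\begin{itemize}
    \item \emph{Uniform bound.} Almost surely, $\norm{\mZ_\ell}_{\op}\le b_{\diag}\norm{\vu_\ell}_2^2\le b_{\diag}\nu(\subsp)=:L$.
    \item \emph{Matrix variance.} Since $(\vu_\ell\vu_\ell^\top)^2=\norm{\vu_\ell}_2^2\,\vu_\ell\vu_\ell^\top$,
    \begin{equation*}
        \sum\nolimits_\ell \E\mZ_\ell^2=\sigma_{\diag}^2\sum\nolimits_\ell \norm{\vu_\ell}_2^2\,\vu_\ell\vu_\ell^\top\preceq\sigma_{\diag}^2\nu(\subsp)\sum\nolimits_\ell\vu_\ell\vu_\ell^\top=\sigma_{\diag}^2\nu(\subsp)\mI_k.
    \end{equation*}
    Hence $v:=\norm{\sum_\ell\E\mZ_\ell^2}_{\op}\le\sigma_{\diag}^2\nu(\subsp)$.
\end{itemize}
This step is where coherence enters, in both parameters.

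Matrix Bernstein for $k\times k$ Hermitian sums gives $\Pr\big(\norm{\sum_\ell\mZ_\ell}_{\op}\ge t\big)\le 2k\exp\!\big(-\tfrac{t^2/2}{v+Lt/3}\big)$. Set the right-hand side equal to $\delta$ and solve the quadratic in $t$. This shows that $t=\sqrt{2v\Lambda}+\tfrac{2}{3}L\Lambda$ suffices, with $\Lambda=\log(2k/\delta)$. Substituting the bounds on $v$ and $L$ yields the claim with an absolute constant $C$ (e.g.\ $C=2$). The union-bound remark in the footnote then follows by replacing $\delta$ with $\delta/m$.

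The main obstacle is to verify the hypotheses cleanly rather than the concentration step itself. One needs the $\mZ_\ell$ to be exactly centered, which holds because $\E(\diagn_i)_\ell^2=\mu_{\diag}$. One also needs the variance computation to use the rank-one identity, which is what gives the factor $\nu(\subsp)$ rather than $1$. If $b_{\diag}$ is meant as a bound on $|(\diagn_i)_\ell^2-\mu_{\diag}|$, everything goes through as stated. Otherwise one first rewrites the hypotheses in that form, at the cost of a constant.
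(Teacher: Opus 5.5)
Your proposal is correct and follows the paper's proof essentially line for line. It uses the same rank-one decomposition $\gramnu_i-\mu_{\diag}\mI_k=\sum_\ell\big((\diagn_i)_\ell^2-\mu_{\diag}\big)\vu_\ell\vu_\ell^\top$ and the same matrix Bernstein parameters $R=b_{\diag}\nu(\subsp)$ and $\sigma^2\le\sigma_{\diag}^2\nu(\subsp)$ (via $\sum_\ell\vu_\ell\vu_\ell^\top=\onb^\top\onb=\mI_k$). It also uses the same quadratic solve giving $t=\sigma\sqrt{2\Lambda}+\tfrac{2}{3}R\Lambda$, and reads $b_{\diag}$ as an almost-sure bound on $\big|(\diagn_i)_\ell^2-\mu_{\diag}\big|$, exactly as the paper does.
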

\vspace{-6pt}

The proof of \cref{thm:spectral-concentration} can be found in \cref{apd:pf-eff-func-classes}.
Note that if $\norm{\gramnu_i - \mu_{\diag}\mI_k}_{\mathrm{op}} < \mu_{\diag}$, $\diagnu_i$ has full rank, so every other neuron can be represented by $\fctcl_i(\data)$.
More generally, the interpretation of \cref{thm:spectral-concentration} is that while $\diag$ on its own can restrict which neurons can represent which functions, its impact (under sampling of $\diag$) is usually not severe if the input subspace is sufficiently incoherent, which one would expect on real-world input data or hidden-layer representations.
Hence, one should not automatically expect $\diag$ alone to effectively break symmetries. 
We discuss in the next sections the interplay of $\diag$ with the second component, the fixed weights $\fixed$.

\subsection{Realization Cost Sensitivity}\label{subsec:permutation-sensitivity}
To quantify the degree of permutation symmetry breaking, we now ask: How much realization cost does it incur if we reassign which neuron implements which function?
While this cost is invariant to permutations in a fully symmetric layer, the same set of functions can have different realization cost after reassignment in an asymmetric layer.

For any permutation $\pi\in S_m$, write $\mP_\pi\in \{0,1\}^{m \times m}$ for its corresponding permutation matrix.
For a layer $\mH: \R^d \to \R^m$, define a group action of $(\pi, \tau) \in S_m \times S_d$, where $\tau$ permutes the inputs and $\pi$ permutes the outputs (neurons):
\begin{equation}
    \big((\pi, \tau) \cdot \mH \big)_i \; := \; \vh_{\pi(i)}(\mP_\tau^\top \cdot).
    \label{eq:perm-action}
\end{equation}%
Consider an MLP $\mH^{1:L} = \mH^L \circ \dots \circ \mH^1$ and let $\pi_\ell \in S_{m_\ell}$ denote a permutation of the $\ell$-th layer's neurons.
The product $\prod\nolimits_{\ell=0}^{L}\pi_\ell$ acts on $\mH^{1:L}$ by applying $(\pi_\ell, \pi_{\ell-1})$ to each individual layer $\mH^\ell$ (where $\pi_0 = \pi_{L} := \id$).
We define the \emph{(squared) realization cost sensitivity} $\Delta_{\pi, \tau}(\mH; \data)$ of a layer $\mH$ to an output-input permutation pair $(\pi, \tau)$ as
\begin{equation}
    \Delta_{\pi, \tau}(\mH; \data) \: := \: \norm{(\pi, \tau)\cdot\mH}_{\fctcl(\mP_\tau \data)}^2 - \norm{\mH}_{\fctcl(\data)}^2
    \label{eq:swap-cost-def}
\end{equation}
(we will often omit $\data$ for brevity). It decomposes as
\begin{align}
    \Delta_{\pi, \tau}(\mH;\data) =  \underbrace{\Delta_{\pi, \mathrm{id}}}_{=: \Delta_{\pi}^{\rcout}}((\mathrm{id}, \tau) \!\cdot\!\mH;\!\mP_\tau\data) + \underbrace{\Delta_{\mathrm{id}, \tau}}_{=: \Delta_{\tau}^{ \rcin}}(\mH;\!\data).
    \raisetag{12pt}
\end{align}
This allows us to separately consider the effects of permuting the inputs (via $\tau$) and outputs (via $\pi$) of a layer, for which we will write $\Delta_{\tau}^{ \rcin}$ and $ \Delta_{\pi}^{\rcout}$, respectively.
In other words, for an MLP, a single $\pi_\ell$ contributes to the total cost difference via both $\Delta_{\pi}^{\rcout}(\mH^\ell)$, and $\Delta_{\pi}^{\rcin}(\mH^{\ell+1})$.
We first quantify the sensitivity to neuron reassignments $\pi$ with input coordinates fixed in \cref{subsec:sensitivity_neurons}, and then discuss input coordinate permutations $\tau$ in \cref{subsec:sensitivity_input}.
Results for joint permutations $(\pi, \tau)$ can be obtained by applying the results from \cref{subsec:sensitivity_neurons} to $\mP_\tau \data$.

\subsection{Sensitivity to Neuron Reassignments}\label{subsec:sensitivity_neurons}

We first focus on the effect of output permutations $\pi$.
For this, we assume that all $\diagnu_i$, $i \in [m]$, have full rank, i.e., all neurons' function classes coincide on the input support: $\fctcl_1(\data)=\dots=\fctcl_m(\data)$ (by \cref{prop:identification-subspace}).
Given a layer $\layer$ with components $\vh_i = \act((\onb \va_i)^\top \cdot)$, it is convenient to recenter by defining $\remn_i:=\va_i-\fixednu_i$ to be the projected trainable part of neuron $i$. 
We decompose $\Delta_{\pi}^{\rcout}(\mH)$ for the special case where $\pi = (ij) \in S_m$, $i \neq j$ is a transposition (the cost to \emph{swap} neurons $i$ and $j$):
\begin{align}
\!\Delta_{(ij)}^\rcout(\mH) &=\underbrace{\norm{\fixednu_{j}\!-\!\fixednu_{\!i}}_{\gramnu_i^{-\!1}\!+\gramnu_j^{-\!1}}^2}_{\textup{\emph{(i)} center term}}
+
\underbrace{2\langle \fixednu_{j}\!-\!\fixednu_{\!i},\gramnu_i^{-1}\remn_{j}\!-\!\gramnu_j^{-1}\remn_{\!i}\rangle}_{\textup{\emph{(ii)} cross term}}
\notag \\
&\qquad\;\;+
\underbrace{\norm{\remn_j}_{\gramnu_i^{-\!1}-\gramnu_j^{-\!1}}^2 + \norm{\remn_i}_{\gramnu_j^{-\!1}-\gramnu_i^{-\!1}}^2}_{\textup{\emph{(iii)} metric mismatch term}} \label{eq:swap-decomposition}.
\end{align}
The decomposition in \Cref{eq:swap-decomposition} isolates three sources of non-invariance: \emph{(i)} The \emph{center} term penalizes swapping two neurons whose fixed centers differ on $\data$.
It is large when $\fixednu_i$ and $\fixednu_j$ are far apart in the cost geometry (i.e., Mahalanobis metric) induced by the two neurons.
\emph{(ii)} The \emph{cross} term captures how the trainable $\remn_i,\remn_j$ compensate or amplify this center mismatch.
\emph{(iii)} The \emph{metric mismatch} term compares $\gramnu_i^{-1}$ and $\gramnu_j^{-1}$ on the remainders, and is nonzero only when the two neurons induce different cost geometries.

We now show that in a setting where the fixed centers $\fixed$ are on a larger scale than the trainable weights, the center term \emph{(i)} dominates in \cref{eq:swap-decomposition}, which we call the \emph{center-dominated regime}.
This is relevant as both \Wasymmetric \citep{lim_empirical_2024} and \emph{syre} networks \citep{ziyin_remove_2024} rely on large fixed weights $\fixed$ to obtain unaligned LMC.
To this end, we introduce notations to jointly capture the differences of projected centers incurred by a permutation.
First, define
\begin{equation}
    \diff_{\pi}^{\rcout} \;:=\; (\mP_{\pi} - \mI_m) \fixed \onb \; \in \R^{m \times k}, \label{eq:diff-out}
\end{equation}
which for $\pi = (ij)$ is simply $(\diff_{\pi}^{\rcout})_{i,:} = \fixednu_j-\fixednu_i$, $(\diff_{\pi}^{\rcout})_{j,:} = \fixednu_i-\fixednu_j$, and $(\diff_{\pi}^{\rcout})_{i',:} = 0$ for $i' \neq i,j$.
Next, we need an assumption on the spectral concentration of $\gramnu_i$ (by \cref{thm:spectral-concentration}) to control the cross and metric mismatch terms \emph{(ii)} and \emph{(iii)} in \cref{eq:swap-decomposition}.
Fix $\varepsilon\in(0,\tfrac{\mu_{\diag}}{2})$, and assume
\begin{equation}
    \max\nolimits_{i\in[m]}\norm{\gramnu_i-\mu_{\diag}\mI_k}_{\mathrm{op}} \;\le\; \varepsilon. \label{eq:uniform-concentration-event}
\end{equation}%
\vspace{-10pt}
\begin{restatable}[Global output sensitivity]{theorem}{GlobalOutputSensitivity}\label{thm:global-output-sensitivity}
Consider a layer $\layer = \act\bigl((\fixed+\diag\odot \mW)\,\cdot\bigr)$, and assume \cref{eq:uniform-concentration-event} holds.
Set $\mR := (\diag \odot \mW)\onb \in \R^{m \times k}$ (proj. trainable parts).
Let $\pi\in S_m$ and set $\mM_\pi \;:=\; \mI_m- \Diag\bigl(\mathrm{diag}(\mP_\pi)\bigr)$ to be the projection onto the coordinates affected by $\pi$.
Assume that \(\diff_\pi^{\rcout}\neq 0\), $\rho_\pi^{\rcout} := \|\mM_\pi \mR\|_F/\|\diff_\pi^{\rcout}\|_F \leq 1$.
Then,
\begin{equation}
    \Delta_\pi^{\rcout}(\layer)
    \;=\;
    \Bigl(
        1\pm\mathcal{O}(\rho_\pi^{\rcout}+\varepsilon \mu_{\diag}^{-1})
    \Bigr) \cdot
    \mu_{\diag}^{-1} \|\diff_\pi^{\rcout}\|_F^2.
\end{equation}
\vspace{-15pt}
\end{restatable}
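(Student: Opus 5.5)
The plan is to compute $\Delta_\pi^{\rcout}(\layer)$ exactly in subspace coordinates and then expand it around the isotropic metric $\mu_{\diag}^{-1}\mI_k$. Three observations set this up.

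\emph{Separability.} The Frobenius norm splits over rows, and the constraint $\vh_i=\neuron_i(\w_i;\cdot)$ involves only $\w_i$. Hence $\norm{\layer}_{\fctcl(\data)}^2=\sum_i\norm{\vh_i}_{\fctcl_i(\data)}^2$, and the same holds for $(\pi,\mathrm{id})\cdot\layer$.

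\emph{Invertibility.} By \cref{eq:uniform-concentration-event} with $\varepsilon<\mu_{\diag}/2$, every $\gramnu_i$ has spectrum in $[\mu_{\diag}-\varepsilon,\mu_{\diag}+\varepsilon]$. So every $\gramnu_i$ is invertible, every $\diagnu_i$ has full rank, and by \cref{prop:identification-subspace} all $\fctcl_i(\data)$ coincide with $\R^k$. This guarantees that the permuted layer is realizable.

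\emph{Exact cost formula.} The bijection in \cref{prop:identification-subspace} identifies each function with a unique coefficient $\va$, so \cref{eq:mahalanobisNorm} applies. Write $\va_i=\fixednu_i+\remn_i$, where $\remn_i$ is the $i$-th row of $\mR$. The permuted layer asks neuron $i$ to realize $\va_{\pi(i)}$, which costs $\norm{\va_{\pi(i)}-\fixednu_i}^2_{\gramnu_i^{-1}}$. Set $\vdelta_i:=\fixednu_{\pi(i)}-\fixednu_i$; this is exactly the $i$-th row of $\diff_\pi^{\rcout}$ in \cref{eq:diff-out}. Then
\begin{equation*}
\Delta_\pi^{\rcout}(\layer)=\sum_{i}\Bigl(\norm{\vdelta_i+\remn_{\pi(i)}}^2_{\gramnu_i^{-1}}-\norm{\remn_i}^2_{\gramnu_i^{-1}}\Bigr).
\end{equation*}
Each fixed point $i=\pi(i)$ contributes zero. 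The sum therefore runs only over the moved set $A:=\{i:\pi(i)\neq i\}$, which $\pi$ maps bijectively onto itself. This is where $\mM_\pi$ enters, since $\sum_{i\in A}\norm{\remn_i}^2=\norm{\mM_\pi\mR}_F^2$.

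\emph{Perturbative expansion.} Write $\gramnu_i^{-1}=\mu_{\diag}^{-1}\mI_k+\mathbf{E}_i$. A Neumann-series or eigenvalue argument gives $\norm{\mathbf{E}_i}_{\op}\le \varepsilon/(\mu_{\diag}(\mu_{\diag}-\varepsilon))\le 2\varepsilon\mu_{\diag}^{-2}$. The isotropic part $\mu_{\diag}^{-1}\mI_k$ contributes three pieces.
\begin{itemize}
\item The main term $\mu_{\diag}^{-1}\sum_{i\in A}\norm{\vdelta_i}^2=\mu_{\diag}^{-1}\norm{\diff_\pi^{\rcout}}_F^2$.
\item A cross term $2\mu_{\diag}^{-1}\sum_{i\in A}\langle\vdelta_i,\remn_{\pi(i)}\rangle$. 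By Cauchy--Schwarz, and because $\pi$ permutes $A$, it is at most $2\mu_{\diag}^{-1}\norm{\diff_\pi^{\rcout}}_F\norm{\mM_\pi\mR}_F=2\rho_\pi^{\rcout}\mu_{\diag}^{-1}\norm{\diff_\pi^{\rcout}}_F^2$ in absolute value.
\item The quadratic remainder $\mu_{\diag}^{-1}\sum_{i\in A}\bigl(\norm{\remn_{\pi(i)}}^2-\norm{\remn_i}^2\bigr)$. This vanishes exactly, again because $\pi$ permutes $A$.
\end{itemize}
This cancellation is the discrete analogue of the metric-mismatch term \emph{(iii)} in \cref{eq:swap-decomposition} disappearing when the metric is isotropic.

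\emph{Error terms from $\mathbf{E}_i$.} Bounding each piece with $\norm{\mathbf{E}_i}_{\op}\le2\varepsilon\mu_{\diag}^{-2}$ and Cauchy--Schwarz:
\begin{itemize}
\item $\sum_i\vdelta_i^\top\mathbf{E}_i\vdelta_i$ is at most $2\varepsilon\mu_{\diag}^{-2}\norm{\diff_\pi^{\rcout}}_F^2$;
\item the cross piece is at most $4\varepsilon\mu_{\diag}^{-2}\rho_\pi^{\rcout}\norm{\diff_\pi^{\rcout}}_F^2$;
\item the two quadratic $\remn$-pieces are at most $4\varepsilon\mu_{\diag}^{-2}(\rho_\pi^{\rcout})^2\norm{\diff_\pi^{\rcout}}_F^2$.
\end{itemize}
The assumption $\rho_\pi^{\rcout}\le1$ reduces every power of $\rho_\pi^{\rcout}$ to at most $\rho_\pi^{\rcout}$, or to $1$ when multiplied by $\varepsilon$. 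Factoring out $\mu_{\diag}^{-1}\norm{\diff_\pi^{\rcout}}_F^2$, which is nonzero by assumption, leaves a relative error of $\mathcal{O}(\rho_\pi^{\rcout}+\varepsilon\mu_{\diag}^{-1})$, as claimed.

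I expect the main obstacle to be bookkeeping rather than estimation: justifying that the infimum defining the layer cost decouples row by row, and that the permuted layer's realization cost is exactly the Mahalanobis expression with the \emph{other} neuron's center. Both rest on the bijection in \cref{prop:identification-subspace}, which needs full rank of every $\diagnu_i$, and that is supplied by the concentration event \cref{eq:uniform-concentration-event}. The exact cancellation of the isotropic quadratic remainder over the moved set $A$ must be stated carefully. Without it, that term would only be bounded by $\mathcal{O}((\rho_\pi^{\rcout})^2)$, which would still suffice, but the exact cancellation is cleaner.
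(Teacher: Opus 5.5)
Your proposal is correct and follows essentially the same route as the paper: the exact row-wise Mahalanobis formula restricted to the moved indices, the isotropic split using $\|\gramnu_i^{-1}-\mu_{\diag}^{-1}\mI_k\|_{\op}\le\varepsilon/(\mu_{\diag}(\mu_{\diag}-\varepsilon))$, exact cancellation of the isotropic quadratic remainder because $\pi$ permutes the moved set, and the same error bound, giving identical constants. The one difference is the isotropic cross term: you bound it directly by Cauchy--Schwarz over the moved set, while the paper passes through the projection $\mQ_\pi$ onto the orthogonal complement of the $\mP_\pi$-fixed subspace, obtaining a sharper intermediate factor $\|\mQ_\pi\mR\|_F$ that it then relaxes to $\|\mM_\pi\mR\|_F$.
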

The proof of \cref{thm:global-output-sensitivity} is in \cref{apd:neuron_assign}. 
\cref{thm:global-output-sensitivity} shows that if the scale of fixed center differences dominates the scale of trainable parts affected by $\pi$, $\Delta_\pi^{\rcout}$ scales primarily as $\mu_{\diag}^{-1}\|\diff_\pi^{\rcout}\|_F^2$. 
Further, if $\rho^\rcout := \sup_{i \neq j} \rho_{(ij)}^{\rcout} \leq 1$, it yields
\begin{equation}
    \min_{\pi \neq \id}\,\Delta_{\pi}^\rcout(\layer)
    =
    \Bigl(
        1\pm\mathcal{O}(\rho^{\rcout}\!+\!\varepsilon\mu_{\diag}^{-1})
    \Bigr) \cdot \mu_{\diag}^{-1}
    \gap_{\rcout}^2, \label{eq:global-output-sensitivity-unif}
\end{equation}
where $\gap_\rcout := \min_{i \neq j} \!\|\diff_{(ij)}^\rcout\|_F = \sqrt{2} \min_{i \neq j} \!\norm{\fixednu_j\!-\!\fixednu_i}_2$ $ = \min_{\pi \neq \id} \!\|\diff_{\pi}^\rcout\|_F$
is the smallest gap between proj. centers.
If $\gap_\rcout$ is low, neuron swaps become accessible at low weight cost, and symmetry breaking may become ineffective.
However, in high dimensions, many vectors of similar norms can be packed without any pair becoming close.
We now clarify how $\gap_\rcout$ scales when $\fixed$ is sampled, depending on the number of neurons $m$ and the intrinsic dimension $k$ (i.e., number of vectors being packed, and their dimension).
\begin{restatable}[Asymptotics of projected center gap $\gap_\rcout$]{theorem}{MinCenterSeparation}\label{thm:min-center-separation}
Let $\fixed$ have i.i.d.\ entries of the form
$B\cdot \sigma_{\fixed}Y_0$, where $B\sim\mathrm{Bernoulli}(p)$, $p\in(0,1]$, $Y_0$ is independent of $B$ with a Lebesgue density bounded by $M_0<\infty$, and $\sigma_{\fixed} > 0$.
Set $q:=2p-p^2$ and fix $\delta\in(0,1)$. Let $m \geq 2$ and assume
\begin{equation}
q \ge C\big\{\sqrt{q\,\nu(\subsp)\Lambda}+\nu(\subsp)\Lambda\big\}, \;\; \Lambda:=\log(4km^2/\delta).
\label{eq:spike-slab-condition}
\end{equation}
Then, with probability at least $1-\delta$,
\begin{equation}
\!\!\!\gap_\rcout
\ge
c\,\frac{\sqrt{q}}{M_0} \sigma_{\fixed}V_k^{-1/k}\Big(\frac{\delta}{m^2}\Big)^{\!\!1/k} \!\!=\Theta(\sigma_{\fixed}\sqrt{k} m^{-2/k}),
\label{eq:spike-slab-dmin}
\end{equation}
where $V_k$ denotes the volume of the $k$-dim.\ unit ball, $V_k^{-1/k} = \Theta(\sqrt{k})$, and $c,C>0$ are absolute constants.
\end{restatable}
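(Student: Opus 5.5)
We reduce to pairwise anti-concentration of $\fixednu_j-\fixednu_i=\onb^\top(\fixedn_j-\fixedn_i)$ followed by a union bound over the $\binom{m}{2}\le m^2/2$ pairs. Fix $i\neq j$ and set $\vz:=\fixedn_j-\fixedn_i\in\R^d$. Its entries are i.i.d., each equal to $\sigma_{\fixed}(B Y_0 - B' Y_0')$. An entry vanishes only when $B=B'=0$, which has probability $(1-p)^2=1-q$; this explains the parameter $q=2p-p^2$. Let $S\subseteq[d]$ be the random support pattern of entries where $B$ or $B'$ equals $1$. Each $\ell$ lies in $S$ independently with probability $q$. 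Conditionally on $S$, the entries $z_\ell$ for $\ell\in S$ are independent and have Lebesgue densities bounded by $M_0/\sigma_{\fixed}$ (up to an absolute factor). This holds because each is a mixture over the three cases $(1,0),(0,1),(1,1)$ of either $\pm\sigma_{\fixed}Y_0$ or the difference of two independent copies, and convolution does not increase the sup of a density. We then have $\fixednu_j-\fixednu_i=\onb_S^\top \vz_S$, where $\onb_S\in\R^{|S|\times k}$ consists of the rows of $\onb$ indexed by $S$.

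\emph{Step 1 (support pattern is well conditioned).} We control $\onb_S^\top\onb_S=\sum_\ell B_\ell\, \onb_{\ell,:}^\top\onb_{\ell,:}$, where $B_\ell\sim\mathrm{Bernoulli}(q)$. This is exactly the matrix in \cref{thm:spectral-concentration}, with $(\diagn)_\ell^2=B_\ell$, $\mu=q$, $\sigma^2=q(1-q)\le q$, and $b\le 1$. Apply that theorem with failure probability $\delta/(2m^2)$ per pair; this is where $\Lambda=\log(4km^2/\delta)$ comes from. Condition \cref{eq:spike-slab-condition}, with a suitable $C$, then gives $\|\onb_S^\top\onb_S-q\mI_k\|_{\op}\le q/2$. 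In particular, $\sigma_{\min}(\onb_S)\ge\sqrt{q/2}$, so $\onb_S^\top$ has rank $k$ and every $k$-dimensional volume is stretched by at least $(q/2)^{k/2}$.

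\emph{Step 2 (small-ball bound for the projection).} Conditionally on a good $S$, we show that $\onb_S^\top \vz_S$ has a density on $\R^k$ bounded by $(C' M_0/\sigma_{\fixed})^k \det(\onb_S^\top\onb_S)^{-1/2}\le (C'M_0/(\sigma_{\fixed}\sqrt{q}))^k$. This is the main obstacle. It is a linear-image density bound for vectors with independent coordinates of bounded density, of the type proved by Rudelson--Vershynin: the density of $P\vz$ for an orthogonal projection $P$ of rank $k$ is at most $(C M)^k$. I would reduce to that statement by writing $\onb_S^\top=(\onb_S^\top\onb_S)^{1/2}Q$, where $Q$ has orthonormal rows, and then apply the projection result. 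If one wants to avoid citing it, a cruder alternative is to choose $k$ columns of $\onb_S^\top$ forming a well-conditioned block and condition on the remaining coordinates; this yields the same form up to constants only under extra effort, so I would cite the sharp result. Integrating the density over a ball of radius $r$ gives
\begin{equation*}
\Pr\bigl(\|\fixednu_j-\fixednu_i\|_2\le r \mid S\bigr)\le V_k\bigl(C'M_0 r/(\sigma_{\fixed}\sqrt q)\bigr)^k.
\end{equation*}

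\emph{Step 3 (union bound).} Set the right-hand side equal to $\delta/(2m^2)$. This gives $r=c\frac{\sqrt q}{M_0}\sigma_{\fixed}V_k^{-1/k}(\delta/m^2)^{1/k}$, after absorbing $2^{-1/k}$ into $c$. Take a union bound over all pairs, for both the bad-support events and the small-ball events, so the total failure probability is at most $\delta$. Finally, $\gap_\rcout=\sqrt2\min_{i\ne j}\|\fixednu_j-\fixednu_i\|_2\ge r$. The asymptotic form follows from $V_k^{-1/k}=\Theta(\sqrt k)$ by Stirling's formula, and from $(\delta)^{1/k}$ being bounded for fixed $\delta$ as $k$ grows.
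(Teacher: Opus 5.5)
Your proposal is correct and follows essentially the same route as the paper: per-pair matrix-Bernstein control of $\onb_S^\top\onb_S$ around $q\mI_k$ at failure level $\delta/(2m^2)$; conditional independence of the surviving coordinates, with density bound via the mixture/convolution argument; Rudelson--Vershynin applied after factoring $\onb_S^\top=(\onb_S^\top\onb_S)^{1/2}Q$; a volume-based small-ball bound; and a union bound over pairs plus Stirling. The only cosmetic difference is that you invoke the spectral concentration theorem with $\mathrm{Bernoulli}(q)$ entries rather than redoing the matrix Bernstein computation inline, and that is equivalent.
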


The proof of \Cref{thm:min-center-separation} can be found in \cref{apd:neuron_assign}. 
\Cref{thm:min-center-separation} gives a lower bound on the separation between the projected centers.
Importantly, if $k$ is reasonably large, $m^{-2/k}$ decays very slowly in $m$.
Hence, $\gap_\rcout$ becomes small if $k$ is small or the number of neurons $m$ is very large; otherwise, a symmetry breaking method in the center-dominated regime of \cref{thm:global-output-sensitivity} can become effective.
In contrast, consider the extreme case where the data only varies in one intrinsic dimension, then the offsets are forced onto a line, and $\gap_\rcout$ vanishes quickly as $m$ grows, so there are many approximate symmetries in wide layers.
As an aside, the same would happen if we were to use a symmetry breaking mechanism where $\fixed$ varies in only one intrinsic dimension, which likely also explains the empirical observation of \citet{lim_empirical_2024} that fixing biases does not effectively break symmetries.

\subsection{Sensitivity to Input Permutations}\label{subsec:sensitivity_input}

We now study input permutations $\tau \in S_d$, i.e., reindexings of the $d$ ambient coordinates of the layer input $\layer$ that may be induced upstream, whose realization cost sensitivity is $\Delta_{\tau}^\rcin(\mH)=\|\mH(\mP_\tau^\top \cdot)\|_{\fctcl(\mP_\tau\data)}^2-\|\mH\|_{\fctcl(\data)}^2$.
Note that $\tau$ can affect realization costs by \emph{(i)} reindexing how a neuron’s fixed part aligns with the data subspace, and \emph{(ii)} changing the quadratic form that measures norm cost.
An input reindexing $\tau$ changes the input support $\data \subseteq \subsp$ to $\mP_\tau \data \subseteq \mP_\tau \subsp$, thus we can use the new orthonormal basis $\mP_\tau\onb\in\R^{d\times k}$.
Further, the Gram matrices $\gramnu_i$ change to
\begin{equation}
    \gramnu_i^{\tau} \;:=\; \onb^\top \mP_\tau^\top \Diag(\diagn_i)^2 \mP_\tau\onb, \quad i \in [m].
\end{equation}
Analogously to \cref{eq:diff-out}, write
\begin{equation}
    \diff_{\tau}^{\rcin} \;:=\; \fixed (\mP_{\tau} - \mI_d) \onb \; \in \R^{m \times k}
\end{equation}
for the difference introduced to the projected centers by an input reindexing $\tau \in S_d$.
The corresponding realization cost sensitivity can now be related to $\diff_{\tau}^{\rcin}$ as follows.
\begin{restatable}[Global input sensitivity]{theorem}{GlobalInputSensitivity}\label{thm:global-input-sensitivity}
In the setup of \cref{thm:global-output-sensitivity}, fix an input reindexing $\tau\in S_d$.
Assume that the concentration event \cref{eq:uniform-concentration-event} holds both for $\gramnu_i$ and $\gramnu_i^\tau$.\footnote{When $\diag$ has i.i.d.\ entries, $\gramnu_i^{\tau} \!= \!\onb^\top\! \mP_\tau^\top \Diag(\diagn_i)^2 \mP_\tau\onb \!\stackrel{d}{=}\!\gramnu_i$.}
Also assume that $\diff_\tau^{\rcin}\neq 0$ and $\rho_\tau^{\rcin}:=\|\mR\|_F/\|\diff_\tau^{\rcin}\|_F \le 1$.
Then,
\begin{equation}
    \Delta_\tau^{\rcin}(\layer)
    \;=\;
    \Bigl(
        1\pm\mathcal{O}(\rho_\tau^{\rcin}+\varepsilon\mu_{\diag}^{-1})
    \Bigr)\cdot
    \mu_{\diag}^{-1}
    \|\diff_\tau^{\rcin}\|_F^2 .
\end{equation}
\end{restatable}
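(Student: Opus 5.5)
The plan is to reduce $\Delta_\tau^{\rcin}(\layer)$ to an explicit sum of Mahalanobis norms via \cref{prop:identification-subspace} and \cref{eq:mahalanobisNorm}, and then bound each term using \cref{eq:uniform-concentration-event}, following the proof of \cref{thm:global-output-sensitivity}. Since $\varepsilon<\mu_{\diag}/2$, every $\gramnu_i$ and $\gramnu_i^\tau$ has spectrum in $[\mu_{\diag}-\varepsilon,\mu_{\diag}+\varepsilon]\subset(\mu_{\diag}/2,\,3\mu_{\diag}/2)$. Hence all of these matrices are invertible, the pseudoinverses are genuine inverses, and all function classes equal $\R^k$ on both supports. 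The global cost is the sum of the neuron costs (the Frobenius norm splits over rows), so
\[
\norm{\layer}_{\fctcl(\data)}^2=\sum_i \norm{\remn_i}_{\gramnu_i^{-1}}^2, \qquad \remn_i=\mR_{i,:}^\top .
\]

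First I will identify the permuted layer in coordinates. For the support $\mP_\tau\data\subseteq\mP_\tau\subsp$ I use the orthonormal basis $\mP_\tau\onb$ and write $\vy=\mP_\tau\onb\vz$. Then neuron $i$ of $\layer(\mP_\tau^\top\cdot)$ evaluates to $\act((\onb\va_i)^\top\mP_\tau^\top\mP_\tau\onb\vz)=\act(\va_i^\top\vz)$. So in the new basis its coefficient is still $\va_i=\fixednu_i+\remn_i$.

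The new projected center is $(\mP_\tau\onb)^\top\fixedn_i$. Its difference from $\fixednu_i$ is, up to the sign/transpose convention relating $\mP_\tau$ in \cref{eq:perm-action} and in $\diff_\tau^{\rcin}$, the $i$-th row $\vdelta_i$ of $\diff_\tau^{\rcin}$. The new Gram matrix is $\gramnu_i^\tau$. Applying \cref{eq:mahalanobisNorm} on the new support therefore gives
\[
\Delta_\tau^{\rcin}(\layer)=\sum_i\Big(\norm{\remn_i-\vdelta_i}^2_{(\gramnu_i^\tau)^{-1}}-\norm{\remn_i}^2_{\gramnu_i^{-1}}\Big).
\]
Expanding this yields three pieces, mirroring \cref{eq:swap-decomposition}:
\[
\sum_i\norm{\vdelta_i}^2_{(\gramnu_i^\tau)^{-1}}\;-\;2\sum_i\langle\vdelta_i,(\gramnu_i^\tau)^{-1}\remn_i\rangle\;+\;\sum_i\norm{\remn_i}^2_{(\gramnu_i^\tau)^{-1}-\gramnu_i^{-1}} .
\]
I expect this bookkeeping of the basis change and the sign of $\vdelta_i$ to be the only delicate step. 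The sign is harmless in the end, since only $\norm{\diff_\tau^{\rcin}}_F$ enters and the cross term is bounded in absolute value.

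The three pieces are then bounded as follows.
\begin{itemize}
\item \emph{Center term.} This term lies between $(\mu_{\diag}+\varepsilon)^{-1}\norm{\diff_\tau^{\rcin}}_F^2$ and $(\mu_{\diag}-\varepsilon)^{-1}\norm{\diff_\tau^{\rcin}}_F^2$. It therefore equals $(1\pm\mathcal O(\varepsilon\mu_{\diag}^{-1}))\,\mu_{\diag}^{-1}\norm{\diff_\tau^{\rcin}}_F^2$.
\item \emph{Cross term.} By Cauchy--Schwarz over $i$ together with $\norm{(\gramnu_i^\tau)^{-1}}_{\op}\le 2\mu_{\diag}^{-1}$, its absolute value is at most $4\mu_{\diag}^{-1}\norm{\diff_\tau^{\rcin}}_F\norm{\mR}_F=\mathcal O(\rho_\tau^{\rcin})\,\mu_{\diag}^{-1}\norm{\diff_\tau^{\rcin}}_F^2$.
\item \emph{Metric mismatch term.} Writing $\mA^{-1}-\mB^{-1}=\mA^{-1}(\mB-\mA)\mB^{-1}$ gives $\norm{(\gramnu_i^\tau)^{-1}-\gramnu_i^{-1}}_{\op}\le 2\varepsilon\cdot 4\mu_{\diag}^{-2}$. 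The term is therefore at most $\mathcal O(\varepsilon\mu_{\diag}^{-1})\,\mu_{\diag}^{-1}\norm{\mR}_F^2$. Since $\rho_\tau^{\rcin}\le1$, this is at most $\mathcal O(\varepsilon\mu_{\diag}^{-1})\,\mu_{\diag}^{-1}\norm{\diff_\tau^{\rcin}}_F^2$.
\end{itemize}
Summing the three bounds gives the claimed $(1\pm\mathcal O(\rho_\tau^{\rcin}+\varepsilon\mu_{\diag}^{-1}))\,\mu_{\diag}^{-1}\norm{\diff_\tau^{\rcin}}_F^2$. Unlike the output case, all rows of $\mR$ enter, not only those selected by $\mM_\pi$. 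This is because $\tau$ can move every neuron's center, which is why $\rho_\tau^{\rcin}$ is defined with the full $\norm{\mR}_F$.
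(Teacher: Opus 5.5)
Your proposal is correct and is essentially the paper's proof: both reduce to $\Delta_\tau^{\rcin}(\layer)=\sum_i\bigl(\|\mR_{i,:}-(\diff_\tau^{\rcin})_{i,:}\|^2_{(\gramnu_i^\tau)^{-1}}-\|\mR_{i,:}\|^2_{\gramnu_i^{-1}}\bigr)$ and finish with spectral bounds and Cauchy--Schwarz. The new projected center is exactly $\fixednu_i+(\diff_\tau^{\rcin})_{i,:}$, so no sign hedging is needed; the only other difference is ordering. The paper first replaces each quadratic form by $\mu_{\diag}^{-1}\|\cdot\|_2^2$ up to an error $\beta\|\cdot\|_2^2$, with $\beta=\varepsilon/(\mu_{\diag}(\mu_{\diag}-\varepsilon))$, and then expands; you expand in the exact metrics (center/cross/mismatch) and control the mismatch term with the resolvent identity, which gives the same bound up to absolute constants.
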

The proof of \cref{thm:global-input-sensitivity} can be found in \cref{apd:pf_input_reindexing}. 
Further, if $\rho^\rcin := \sup_{a \neq b} \rho_{(ab)}^{\rcin} \leq 1$ is uniformly bounded over transpositions $(ab) \in S_d$, in analogy to \cref{eq:global-output-sensitivity-unif}, we can obtain
\begin{equation}
    \min_{a \neq b} \,
    \Delta_{(ab)}^{\rcin}(\layer)
    \;=\;
    \Bigl(
        1 \pm \mathcal{O}(\rho^{\rcin}+\varepsilon\mu_{\diag}^{-1})
    \Bigr)
    \cdot
    \mu_{\diag}^{-1}
    \gap_{\rcin}^2,
\end{equation}
where $\gap_\rcin := \min_{a \neq b} \|\diff_{(ab)}^{\rcin}\|_F$.\footnote{Note that unlike in the neuron reassignment case, this expression need not coincide with the minimum over all $\id \neq \tau \in S_d$.}
Hence, similar to the neuron reassignment case, the degree of symmetry breaking depends crucially on this quantity.
We again make the asymptotic scaling of $\gap_\rcin$ explicit when $\fixed$ is sampled as follows.

\begin{restatable}[Asymptotics of $\gap_\rcin$, informal]{theorem}{InputPermTransposition}\label{thm:delta-transposition}
Let $\fixed_{:,\ell}\in\R^m$ denote the $\ell$-th column of $\fixed$.
Let $a \neq b \in [d]$.
Then,
\begin{equation}
\|\diff_{(ab)}^\rcin\|_F
\,=\, \|\fixed_{:,b}-\fixed_{:,a}\|_2 \,\|\onb_{b,:} - \onb_{a,:}\|_2.
\label{eq:transposition-factorization}
\end{equation}
If further $\fixed$ has i.i.d.\ centered subgaussian entries with variance $\sigma_{\fixed}^2$, subgaussian norm uniformly bounded by $C\sigma_{\fixed}$, and $m\gtrsim C^4\log(d^2/\delta)$, then with probability at least $1-\delta$,
\begin{equation}
\gap_\rcin
\;=\;
\Theta(\sigma_{\fixed}\sqrt{m})\cdot \min\nolimits_{a\neq b} \|\onb_{b,:}-\onb_{a,:}\|_2.
\label{eq:transposition-min-scaling}
\end{equation}
\end{restatable}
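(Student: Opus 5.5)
The factorization \cref{eq:transposition-factorization} is a direct computation. For a transposition $\tau=(ab)$, the matrix $\mP_\tau-\mI_d$ is supported on the $2\times 2$ block indexed by $\{a,b\}$. Its action on $\onb$ is
\begin{equation*}
    (\mP_\tau-\mI_d)\onb \;=\; (\ve_a-\ve_b)\,(\onb_{b,:}-\onb_{a,:}),
\end{equation*}
since rows $a$ and $b$ of $\mP_\tau\onb$ are $\onb_{b,:}$ and $\onb_{a,:}$ and all other rows are unchanged. (The sign convention for the permutation matrix does not matter, as only norms enter.) Therefore
\begin{equation*}
    \diff_{(ab)}^\rcin \;=\; \fixed(\ve_a-\ve_b)(\onb_{b,:}-\onb_{a,:}) \;=\; -(\fixed_{:,b}-\fixed_{:,a})(\onb_{b,:}-\onb_{a,:}),
\end{equation*}
which is a rank-one outer product. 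For a rank-one matrix $\vx\vy^\top$ we have $\|\vx\vy^\top\|_F=\|\vx\|_2\|\vy\|_2$, and this gives \cref{eq:transposition-factorization}.

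For the scaling \cref{eq:transposition-min-scaling}, I will show that all column differences concentrate simultaneously:
\begin{equation*}
    \|\fixed_{:,b}-\fixed_{:,a}\|_2^2 \in [\sigma_{\fixed}^2 m,\; 3\sigma_{\fixed}^2 m] \quad \text{for all } a\neq b.
\end{equation*}
Fix a pair $a\neq b$. The vector $\fixed_{:,b}-\fixed_{:,a}$ has independent centered entries $X_r=\fixed_{rb}-\fixed_{ra}$ with variance $2\sigma_{\fixed}^2$ and subgaussian norm at most $C'C\sigma_{\fixed}$. Hence each $X_r^2-2\sigma_{\fixed}^2$ is centered subexponential with $\psi_1$-norm $\lesssim C^2\sigma_{\fixed}^2$. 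Bernstein's inequality then gives
\begin{equation*}
    \Bigl|\tfrac1m\textstyle\sum_r X_r^2-2\sigma_{\fixed}^2\Bigr|\le \sigma_{\fixed}^2
\end{equation*}
with probability at least $1-2\exp(-c\,m/C^4)$; the deviation threshold is a constant fraction of the variance, so the subgaussian-type branch of Bernstein's bound applies. A union bound over the at most $d^2/2$ pairs succeeds with total failure probability at most $\delta$ provided $m\gtrsim C^4\log(d^2/\delta)$, which is exactly the stated hypothesis. On this event, $\|\fixed_{:,b}-\fixed_{:,a}\|_2=\Theta(\sigma_{\fixed}\sqrt m)$ uniformly in $(a,b)$, with constants $1$ and $\sqrt3$.

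Combining with the factorization, each $\|\diff_{(ab)}^\rcin\|_F$ lies between $\sigma_{\fixed}\sqrt m\,\|\onb_{b,:}-\onb_{a,:}\|_2$ and $\sqrt3\,\sigma_{\fixed}\sqrt m\,\|\onb_{b,:}-\onb_{a,:}\|_2$. Taking the minimum over $a\neq b$ preserves these two-sided bounds, because both sides are monotone in the $\onb$-factor. This yields \cref{eq:transposition-min-scaling}. The main obstacle is the concentration step, specifically tracking the dependence on the subgaussian constant $C$ so that the union bound closes under the stated condition on $m$. The rest of the argument is algebra.
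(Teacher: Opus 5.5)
Your proposal is correct and follows the paper's proof essentially step for step: the same rank-one factorization via $(\mP_{(ab)}-\mI_d)\onb=(\ve_a-\ve_b)(\onb_{b,:}-\onb_{a,:})$, then subexponential Bernstein applied to $\sum_r (X_r^2-2\sigma_{\fixed}^2)$, a union bound over the $\binom{d}{2}$ pairs, and the same constants $1$ and $\sqrt{3}$. The only difference is cosmetic: you fix the deviation at $m\sigma_{\fixed}^2$ and read off the required $m$, whereas the paper fixes the confidence level with $t=C_1K^2(\sqrt{mL}+L)$ and then verifies $t\le m\sigma_{\fixed}^2$ under $m\gtrsim C^4 L$; both versions use $C\ge 1$ to select the quadratic branch of Bernstein's inequality.
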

\vspace{-5pt}
Note that $\|\onb_{a,:}-\onb_{b,:}\|_2$ measures how distinguishable coordinates $a$ and $b$ are on the input.
If swapping $a$ and $b$ acts approximately as the trivial action on the input support, i.e., $\mP_{(ab)}\vx \approx \vx$ for $\vx \in \data$, then $\|\onb_{a,:}-\onb_{b,:}\|_2 = \|\onb^\top(\ve_a-\ve_b)\|_2 = \tfrac{1}{\sqrt{2}}\|(\mP_{(ab)}-\mI_d)\onb\|_F \approx 0$, so the induced cost is small even if the scale of $\fixed$ is large.
One always has $\|\onb_{a,:}-\onb_{b,:}\|_2\le  2\sqrt{\nu(\subsp)}$,
so small $\nu(\subsp)$ can mitigate sensitivity to input reindexings. 
Hence, in this case, asymmetry matters precisely when it aligns with directions that the input varies along. 
The exact decomposition analogous to the neuron reassignment case \cref{eq:swap-decomposition}, as well as a formal statement and proof of \cref{thm:delta-transposition}, can be found in \cref{apd:pf_input_reindexing}.

\subsection{Identifiability vs.\ Feature Learning}\label{subsec:identifiability-feature-learning}

Our analysis suggests a tradeoff between \emph{identifiability} and \emph{feature learning}.
To make a neuron assignment stable across runs, an architecture needs to break permutation symmetries \emph{functionally}, i.e., induce a prior over which neuron functions are cheap and which are expensive.
While we have seen in \cref{subsec:sensitivity_neurons} that $\fixed$ can enforce consistent neuron-feature matching and strong effective symmetry breaking, this also implies a restriction of the set of features that can be learned within a given norm budget.
Thus, we posit that strong symmetry breaking comes with reduced feature learning.
We formalize this in \cref{apd:identifiability-feature-learning} by proving a hardness result in the center-dominated regime.
Namely, when trainable deviations are small, an asymmetric two-layer ReLU network remains close to its random feature baseline and cannot approximate certain targets well, via a reduction to hardness results for random feature models \citep{yehudai2019randomfeatures}.

\section{Linear Mode Connectivity}
\label{sec:linear-mode-connectivity}

In this section, we analyze how intermediate representations evolve when we linearly interpolate between two independently trained models, depending on effective symmetry breaking. 
LMC asks whether the loss stays small along this segment.
A natural sufficient condition is that the network’s hidden representations vary approximately affinely along the interpolation path, and recent work has observed that this co-occurs with LMC \citep{zhou2023going}.
We therefore quantify the deviation induced by a single layer along the path, and relate it to an upper bound on the loss barrier.
Given $\W^A,\W^B\in\R^{m\times d}$, we consider the linear path
\begin{equation}
    \W(\lambda):=(1-\lambda)\W^A+\lambda\W^B,\; \lambda\in[0,1].
\end{equation}
We measure the chord deviation along  $\W(\lambda)$ by 
\begin{align}
    &\;\;\; \xi_{\layer}(\lambda;\vx)
\;:=\; \\
&\layer(\W(\lambda);\vx)-\big((1\!-\!\lambda)\layer(\W^A;\vx)+\lambda\layer(\W^B;\vx)\big). \notag
\end{align}
Equivalently, $\xi_{\layer}(\lambda;\vx)$ is the interpolation error of the map $\W\mapsto \layer(\W;\vx)$ when restricted to the line segment between $\W^A$ and $\W^B$.
For $\act:=\mathrm{ReLU}$, the only curvature comes from the kink at zero.

\begin{restatable}[Chord deviation \ in center-dominated regime, $\mathrm{ReLU}$]{theorem}{ChordCenterDom}\label{thm:relu_center_dominated_layer_bound}
Let $\act=\mathrm{ReLU}$ and $\vx\sim\mathcal N(0,\mSigma)$ with $\mSigma\succeq 0$.
Fix $\W^A,\W^B\in\R^{m\times d}$.
Suppose there exists $\beta \in [0, 1)$ such that for all $i\in[m]$,
\begin{equation}\label{eq:center_dom_eps_thm}
\|\diagn_i \odot \w_i^A\|_{\mSigma}\le \beta\,\|\fixedn_i\|_{\mSigma},
\;\;
\|\diagn_i \odot \w_i^B\|_{\mSigma}\le \beta\,\|\fixedn_i\|_{\mSigma},
\end{equation}
where $\|u\|_{\mSigma}:=\sqrt{u^\top \mSigma u}$.
Then,
\begin{align}\label{eq:relu_layer_center_dom_bound}
\sup_{\lambda \in [0,1]}\big\|\xi_{\layer}(\lambda;\cdot)\big\|_{L^2(\probdist;\R^m)}
\,=\,
\mathcal{O}(\beta^{3/2})\,\|\fixed\mSigma^{1/2}\|_F.
\end{align}
\end{restatable}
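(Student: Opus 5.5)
The proof works neuron by neuron, using the fact that the layer's $L^2$ norm squared is the sum of the per-neuron squared $L^2$ norms. Fix $i$ and write the preactivation along the path as
\[
z_\lambda(\vx) = (\fixedn_i + \diagn_i\odot\w_i(\lambda))^\top\vx = u + (1-\lambda)s_A + \lambda s_B,
\]
where $u = \fixedn_i^\top\vx$, $s_A = (\diagn_i\odot\w_i^A)^\top\vx$ and $s_B = (\diagn_i\odot\w_i^B)^\top\vx$. These three are jointly Gaussian with standard deviations $\sigma_u = \|\fixedn_i\|_{\mSigma}$ and $\sigma_{A},\sigma_B \le \beta\sigma_u$. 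The $i$-th coordinate of the chord deviation is
\[
\xi_i = \mathrm{ReLU}(z_\lambda) - (1-\lambda)\mathrm{ReLU}(z_0) - \lambda\,\mathrm{ReLU}(z_1).
\]
If $\sigma_u = 0$, then $s_A = s_B = 0$ a.s. and $\xi_i = 0$, so assume $\sigma_u > 0$.

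The key pointwise observation is that $\xi_i = 0$ whenever $z_0$ and $z_1$ have the same sign: then $z_\lambda$ also has that sign, and ReLU is affine on each half-line. When they differ in sign, convexity of ReLU gives
\[
|\xi_i| \le \min\{|z_0|,|z_1|\} \le |z_1 - z_0| = |s_B - s_A|.
\]
A sign change requires $|u| \le \max(|s_A|,|s_B|) =: S$. Hence
\[
|\xi_i| \le |s_A - s_B|\,\indfct\{|u| \le S\} \le 2S\,\indfct\{|u| \le S\}.
\]

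Next I bound $\E[S^2\,\indfct\{|u|\le S\}]$ by splitting at a threshold $t = \beta\sigma_u$.
\begin{itemize}
\item On $\{S \le t\}$ the contribution is at most $t^2\,\Pr(|u|\le t) \le t^2 \cdot 2t/(\sqrt{2\pi}\,\sigma_u) = \mathcal O(\beta^3\sigma_u^2)$, because $u$ has density at most $1/(\sqrt{2\pi}\sigma_u)$.
\item On $\{S > t\}$, I use $\E[S^2\indfct\{S>t\}\indfct\{|u|\le S\}]$. Since $S$ and $u$ are dependent, I would not condition naively. Instead I decompose $s_A = \alpha_A u + g_A$ with $g_A$ independent of $u$, and similarly for $s_B$. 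Here $|\alpha_A| \le \beta$ and $\mathrm{sd}(g_A) \le \beta\sigma_u$.
\end{itemize}
Since $\beta < 1$, the condition $|u| \le |s_A|$ implies $|u| \le |\alpha_A||u| + |g_A|$, i.e. $|u| \le |g_A|/(1-\beta)$. The same holds for $B$. So
\[
\indfct\{|u|\le S\} \le \indfct\{|u| \le G/(1-\beta)\}, \qquad G := \max(|g_A|,|g_B|),
\]
with $G$ independent of $u$, and $S \le \beta|u| + G$. Conditioning on $G$ then gives
\[
\E\big[(\beta|u| + G)^2\,\indfct\{|u| \le G/(1-\beta)\}\big] \lesssim \E\!\left[\frac{G^2}{(1-\beta)^2}\cdot\frac{G}{(1-\beta)\sigma_u}\right] \lesssim \frac{\beta^3\sigma_u^2}{(1-\beta)^3},
\]
using $\E G^3 \lesssim \beta^3\sigma_u^3$. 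With this decomposition the threshold split is not even needed.

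It follows that $\E\xi_i^2 = \mathcal O(\beta^3)\|\fixedn_i\|_{\mSigma}^2$, where the implied constant depends on $(1-\beta)^{-3}$; this is absorbed into the $\mathcal O$ for $\beta$ bounded away from $1$, or one argues $\beta \le 1/2$ separately and uses the trivial bound $|\xi_i| \le 2S$ otherwise. The bound is uniform in $\lambda$. Summing over $i$ gives $\sum_i\|\fixedn_i\|_{\mSigma}^2 = \|\fixed\mSigma^{1/2}\|_F^2$, and taking square roots yields the claimed $\mathcal O(\beta^{3/2})$ rate.

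The main obstacle is handling the correlation between $u$ and the trainable preactivations. The regression decomposition into a part $\alpha u$ with $|\alpha| \le \beta$ and an independent residual is what makes the small-ball estimate $\Pr(|u| \lesssim \beta\sigma_u) = \mathcal O(\beta)$ usable, and that estimate supplies the extra factor $\beta$ beyond the trivial $\beta^2$.
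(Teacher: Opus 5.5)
Your proof is correct, and it reaches the per-neuron bound $\E\xi_i^2=\mathcal O(\beta^3)\|\fixedn_i\|_{\mSigma}^2$ (uniform in $\lambda$) by a genuinely different route from the paper.

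The paper never isolates the center. It works only with the two endpoint preactivations $Z_i^A,Z_i^B$ and uses the exact identity $\xi_i^2=\min\{(1-\lambda)^2(Z_i^A)^2,\lambda^2(Z_i^B)^2\}\,\indfct\{Z_i^AZ_i^B\le 0\}$. It then evaluates the wedge moment $\E[(Z_i^A)^2\indfct\{Z_i^AZ_i^B\le 0\}]=(\sigma_i^A)^2(\theta_i-\sin\theta_i\cos\theta_i)/\pi$ in closed form via polar coordinates. Center domination enters only geometrically: each endpoint lies within $\mSigma$-angle $\arcsin\beta$ of $\fixedn_i$, so the angle between the endpoints satisfies $\theta_i\le 2\arcsin\beta\le\pi\beta$, and $(\theta_i-\sin\theta_i\cos\theta_i)/\pi\le\tfrac{2}{3\pi}\theta_i^3$. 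This gives the explicit constant $\tfrac{2\pi^2}{3}(1+\beta)^2$ on all of $[0,1)$, with no case split.

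You instead proceed as follows:
\begin{enumerate}
\item You keep the center preactivation $u$ as a third variable.
\item You replace the exact wedge integral by the cruder bound $|\xi_i|\le 2S\,\indfct\{|u|\le S\}$.
\item You extract the extra factor $\beta$ from anti-concentration of $u$, after the Gaussian regression decouples $u$ from the residuals $g_A,g_B$.
\end{enumerate}
The cost is the $(1-\beta)^{-3}$ factor. Your fallback removes it correctly: for $\beta>1/2$ the trivial bound gives $\E\xi_i^2\le 8\beta^2\sigma_u^2\le 16\beta^3\sigma_u^2$.

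What your route buys is a transparent mechanism. The $\beta^3$ is visibly $\beta^2$ (squared size of the trainable deviation) times $\beta$ (probability that the center preactivation falls in an $\mathcal O(\beta\sigma_u)$ window around the kink). It also depends less on exact Gaussian rotational invariance. It adapts to input laws under which $u$ has density $\mathcal O(1/\sigma_u)$ conditionally on the residuals and the residuals have third moments comparable to their second moments. The paper's route buys sharper explicit constants and a single clean computation, at the price of being tied to the bivariate Gaussian wedge formula.
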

\vspace{-5pt}
See \cref{apd:pf_lmc} for a proof. 
\Cref{thm:relu_center_dominated_layer_bound} gives a second-order view on layer-wise merging along the linear path $\W(\lambda)$.
It can be interpreted threefold. 
First, the chord deviation depends on the gating stability. 
Second, the bound is uniform over  $\lambda\in[0,1]$.
This aligns with the analysis in \cref{sec:eff-func-classes} that  once every neuron is center-dominated at the endpoints, the entire segment inherits the same dominance by convexity of the norm.
Third,  $\|\fixed\mSigma^{1/2}\|_F$ corresponds to the total norm of the fixed part under the input covariance. 
The prefactor tends to zero as $\beta\to 0$.
See \cref{apd:lmc-extra} for further discussion and a generalization of \cref{thm:relu_center_dominated_layer_bound} to a broader class of activations.
With control over chord deviations for outputs (e.g., logits), one can upper bound the loss along the linear path between two independently trained models, and guarantee a small loss barrier along the segment, i.e., the presence of LMC.
We demonstrate this in the following proposition, whose proof can be found in \cref{apd:pf_lmc}.

\begin{restatable}[LMC bound by chord deviation]{proposition}{LossBarrier}\label{prop:loss_barrier_by_logit_chord}
Consider a dataset $\data \times \mathcal{Y} \ni (\vx, \vy) \sim \probdist$, a neural network $f_\vtheta: \data \to \mathcal{Y} \subseteq \R^{d_{\rcout}}$, and define the population loss $\mathcal{L}(\vtheta) := \mathbb{E}_{(\vx, \vy) \sim \probdist}[\ell(f_{\vtheta}(\vx), \vy)] < \infty$. Assume that for every $\vy$, $\vz \mapsto \ell(\vz, \vy)$ is convex and $L_\ell$-Lipschitz w.r.t.\ $\norm{\cdot}_2$. Fix two parameter vectors $\vtheta^A, \vtheta^B$ and define the linear path $\vtheta(\lambda) := (1-\lambda)\vtheta^A + \lambda \vtheta^B$, $\lambda \in [0,1]$. Define further
\begin{equation}
    \xi_{f}(\lambda; \vx) := f_{\vtheta(\lambda)}(\vx) - \big( (1-\lambda)f_{\vtheta^A}(\vx) + \lambda f_{\vtheta^B}(\vx) \big).
\end{equation}
Then, for all $\lambda \in [0,1]$, we obtain
\begin{align}
    &\mathcal{L}(\vtheta(\lambda)) - \big( (1-\lambda) \mathcal{L}(\vtheta^A) + \lambda \mathcal{L}(\vtheta^B) \big) \nonumber\\&\qquad\quad\leq\; L_\ell \norm{\xi_{f}(\lambda; \cdot)}_{L^2(\probdist_\vx, \R^{d_{\rcout}})}.
\end{align}
\end{restatable}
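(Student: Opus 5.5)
The argument is pointwise in $\vx$, combining convexity and Lipschitz continuity of $\ell$, and then averaging. Fix $\lambda\in[0,1]$ and a pair $(\vx,\vy)$. Write the interpolated prediction as the chord plus the deviation,
\begin{equation*}
f_{\vtheta(\lambda)}(\vx) = \bar f_\lambda(\vx) + \xi_f(\lambda;\vx), \qquad \bar f_\lambda(\vx) := (1-\lambda)f_{\vtheta^A}(\vx)+\lambda f_{\vtheta^B}(\vx).
\end{equation*}
Since $\vz\mapsto\ell(\vz,\vy)$ is $L_\ell$-Lipschitz, we get $\ell(f_{\vtheta(\lambda)}(\vx),\vy)\le \ell(\bar f_\lambda(\vx),\vy)+L_\ell\norm{\xi_f(\lambda;\vx)}_2$. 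Convexity then gives $\ell(\bar f_\lambda(\vx),\vy)\le (1-\lambda)\ell(f_{\vtheta^A}(\vx),\vy)+\lambda\,\ell(f_{\vtheta^B}(\vx),\vy)$.

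Next, take expectations over $(\vx,\vy)\sim\probdist$. The endpoint losses are finite by assumption, so the convex-combination term integrates to $(1-\lambda)\mathcal{L}(\vtheta^A)+\lambda\mathcal{L}(\vtheta^B)$, and subtracting it is legitimate. The remaining term is $L_\ell\,\E_{\vx}\norm{\xi_f(\lambda;\vx)}_2$. By Jensen's inequality (or Cauchy--Schwarz with the constant function $1$ under the probability measure), this is at most $L_\ell\bigl(\E_\vx\norm{\xi_f(\lambda;\vx)}_2^2\bigr)^{1/2}=L_\ell\norm{\xi_f(\lambda;\cdot)}_{L^2(\probdist_\vx,\R^{d_\rcout})}$. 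Since $\xi_f$ depends only on $\vx$, the expectation reduces to one over the marginal $\probdist_\vx$.

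The only delicate point is measure-theoretic. We need the pointwise inequality to integrate cleanly, in particular that $\mathcal{L}(\vtheta(\lambda))$ is well defined. If $\norm{\xi_f(\lambda;\cdot)}_{L^2}=\infty$ the bound is trivial. Otherwise the pointwise upper bound is integrable, which controls the positive part of $\ell(f_{\vtheta(\lambda)}(\vx),\vy)$. For the negative part, the Lipschitz lower bound $\ell(f_{\vtheta(\lambda)}(\vx),\vy)\ge\ell(\bar f_\lambda(\vx),\vy)-L_\ell\norm{\xi_f}_2$ applies, combined with the convexity lower bound via subgradients. If necessary, we simply interpret $\mathcal{L}(\vtheta(\lambda))$ in the extended sense, where the inequality still holds.
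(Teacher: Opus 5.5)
Your proposal is correct and follows the paper's proof essentially step for step. Both use the pointwise Lipschitz bound around the chord $(1-\lambda)f_{\vtheta^A}(\vx)+\lambda f_{\vtheta^B}(\vx)$ and convexity of $\ell(\cdot,\vy)$, then take expectations and apply Jensen to pass from $\E_\vx\norm{\xi_f(\lambda;\vx)}_2$ to the $L^2$ norm. Your additional measure-theoretic remark (integrability of the positive part, or the extended-valued interpretation) is a harmless refinement that the paper leaves implicit.
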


\begin{figure*}[t]
    \centering
    \begin{subfigure}[b]{0.325\textwidth}
        \includegraphics[width=\textwidth]{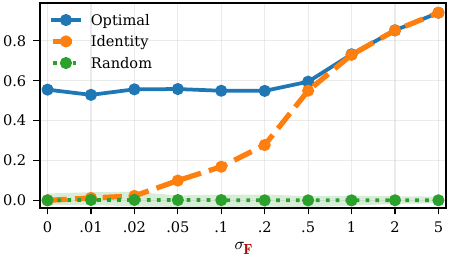}
        \caption{$\W$-MLP, different $\sigma_{\fixed}$ at epoch 100}
        \label{fig:activation-matching-objective-kappas-mlp}
    \end{subfigure}
    \begin{subfigure}[b]{0.325\textwidth}
        \includegraphics[width=\textwidth]{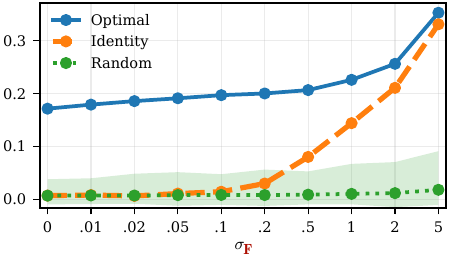}
        \caption{$\W$-ResNet, different $\sigma_{\fixed}$ at epoch 100}
        \label{fig:activation-matching-objective-kappas-resnet}
    \end{subfigure}
    \hfill
    \begin{subfigure}[b]{0.325\textwidth}
        \begin{subfigure}[b]{0.47\textwidth}
            \includegraphics[width=\textwidth]{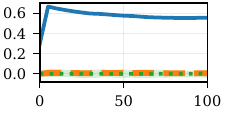}
            \vspace{-20pt}
            \caption*{\scriptsize $\W$-MLP, $\sigma_{\fixed}=0$}
            \vspace{-5pt}
        \end{subfigure}
        \begin{subfigure}[b]{0.47\textwidth}
            \includegraphics[width=\textwidth]{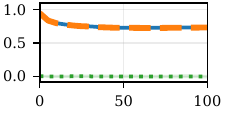}
            \vspace{-20pt}
            \caption*{\scriptsize $\W$-MLP, $\sigma_{\fixed}=1$}
            \vspace{-5pt}
        \end{subfigure}\\[0.2cm]
        \begin{subfigure}[b]{0.47\textwidth}
            \includegraphics[width=\textwidth]{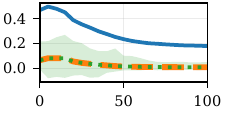}
            \vspace{-20pt}
            \caption*{\scriptsize $\W$-ResNet, $\sigma_{\fixed}=0$}
        \end{subfigure}
        \begin{subfigure}[b]{0.47\textwidth}
            \includegraphics[width=\textwidth]{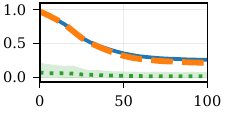}
            \vspace{-20pt}
            \caption*{\scriptsize $\W$-ResNet, $\sigma_{\fixed}=2$}
        \end{subfigure}
        \caption{\Wasym models over different epochs}
        \label{fig:activation-matching-objective-epochs}
    \end{subfigure}
    
    \caption{Activation matching objectives of optimal, identity, and random permutations for networks trained with different values of the fixed weight scale $\sigma_{\fixed}$. We average the objectives of different layers and use post-norm, post-activation function values.
    }
    \label{fig:activation-matching-objective}
\end{figure*}

\section{Experimental Results}
\label{sec:experimental-results}

In this section, we empirically investigate the effects that our theory predicts.
We check which variables influence the effectiveness of symmetry breaking: in particular, we verify that it significantly depends on properties of the data, and that the structural breaking of weight space symmetries alone is not sufficient to obtain linear mode connectivity.

\vspace{-2 mm}
\begin{figure}[h]
    \centering
    \begin{subfigure}[b]{0.49\linewidth}\includegraphics[width=\textwidth]{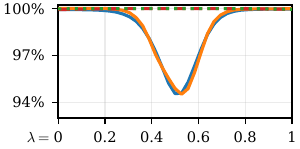}
    \vspace{-18pt}
    \caption{\scriptsize MLPs on MNIST, unaligned}
    \end{subfigure}
    \begin{subfigure}[b]{0.49\linewidth}
    \includegraphics[width=\textwidth]{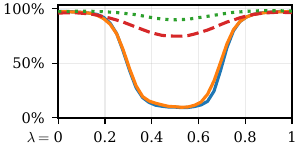}
    \vspace{-18pt}
    \caption{\scriptsize ResNets on CIFAR-10, unaligned}
    \end{subfigure}
    \\[0.2cm]
    \begin{subfigure}[b]{0.49\linewidth}
    \includegraphics[width=\textwidth]{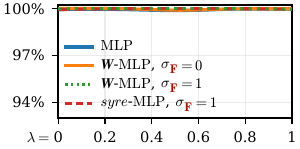}
    \vspace{-18pt}
    \caption{\scriptsize MLPs on MNIST, aligned}
    \end{subfigure}
    \begin{subfigure}[b]{0.49\linewidth}
    \includegraphics[width=\textwidth]{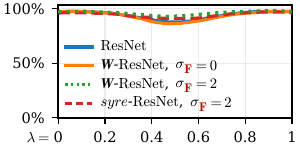}
    \vspace{-18pt}
    \caption{\scriptsize ResNets on CIFAR-10, aligned}
    \end{subfigure}
    \caption{Aligned and unaligned training accuracy LMC interpolation for standard models and asymmetric models (average of 8 model pairs each). Alignment using activation matching.}
    \label{fig:lmc-barriers}
    \vspace{-2 mm}
\end{figure}%
\noindent\textbf{Q1:} \emph{How do existing symmetry breaking architectures perform in terms of permutation-aligned and unaligned LMC?}

We train standard and asymmetric models on MNIST and CIFAR-10: MLPs, $\W$-MLPs \citep{lim_empirical_2024}, and \emph{syre}-MLPs \citep{ziyin_remove_2024} on MNIST, and ResNets, $\W$-ResNets, and \emph{syre}-ResNets on CIFAR-10.
For the \Wasymmetric models, we respectively train versions where $\sigma_{\fixed} = 0$ (zero fixed weights) vs. $\sigma_{\fixed}=1$ for MLPs/$\sigma_{\fixed}=2$ for ResNets (center-dominated regime), following the defaults of \citet{lim_empirical_2024}.
We measure LMC barriers both for \emph{unaligned} LMC (linear interpolation between trained parameters $\vtheta_A, \vtheta_B$), as well as \emph{aligned} LMC, interpolating between $\vtheta_A$ and $\pi\vtheta_B$ for an alignment permutation $\pi$.
\footnote{
Permuting an asymmetric model's weights while preserving functionality would require permuting $\fixed$. We can nevertheless define alignment by interpreting the effective weights $\W_{\mathrm{eff}} = \fixed + \diag \odot \W$ as the weights of a standard model.
}
We obtain $\pi$ via \emph{activation matching} \cite{singh2020model,ainsworth2023git}, see \textbf{Q2} for details.

\Cref{fig:lmc-barriers} shows the aligned and unaligned LMC results for all architecture variants.
The results show that unaligned LMC is achieved for large $\sigma_{\fixed}$ both in MLPs, and to some degree in ResNets, while the LMC barrier is high in standard MLPs and $\W$-MLPs with $\sigma_{\fixed}=0$.
This supports our findings that structural symmetry breaking does not equal effective symmetry breaking.
All models exhibit reasonably low LMC barriers in the aligned case; perhaps surprisingly, alignment reduces the barrier for \emph{syre}-ResNets despite their architectural asymmetry. 
Furthermore, alignment reduces the LMC barrier of \Wasym architectures with $\sigma_{\fixed}=0$ similarly well as that of standard architectures.
In \cref{sec:appendix-experiments-transformers} in the appendix, we also compare to a \Wasymmetric vision transformer variant.

\noindent\textbf{Q2:} \emph{
How large must the fixed weight scale $\sigma_{\fixed}$ be to make neurons identifiable across independent training runs?}
\newcommand{\actobj}{\mathcal{L^{\text{act}}}}

We answer this by using the activation matching objective \cite{ainsworth2023git,singh2020model}.
Given $d$-dimensional hidden network activation matrices  $\mZ^A, \mZ^B \in \R^{n\times d}$ on $n$ data points, produced by two independently trained networks, define a similarity matrix between neurons $\mS^{\text{act}}\in \R^{d \times d}$, where $\mS^{\text{act}}_{ij}$ is the Pearson correlation coefficient between the activations of neurons $i$ and $j$. 
Activation matching finds a permutation matrix $\mP^\ast$ that maximizes the Frobenius inner product $\actobj(\mP) = \langle \mP, \mS^{\text{act}} \rangle_F$.
In a setting where symmetries are effectively broken on the data, we would expect $\actobj(\mP^\ast) \approx \actobj(\mI)$: the identity permutation is nearly optimal.
By the results of \cref{subsec:sensitivity_neurons} and \cref{subsec:sensitivity_input}, we would expect this to be the case for $\sigma_{\fixed}$ large enough.
In \Cref{fig:activation-matching-objective-kappas-mlp,fig:activation-matching-objective-kappas-resnet}, we sweep over different values of $\sigma_{\fixed}$ and compute $\actobj(\mP^\ast)$ and $\actobj(\mI)$ and compare them with $\actobj(\tilde \mP)$ for random permutations $\tilde\mP$ on pairs of trained networks on MNIST and CIFAR-10.
Both $\W$-MLPs and $\W$-ResNets show that for large enough $\sigma_{\fixed}$, $\actobj(\mI)$ becomes close to  $\actobj(\mP^\ast)$, while random permutations remain significantly lower.
\Cref{fig:activation-matching-objective-epochs} shows that this trend holds for all stages of training.
In \cref{sec:appendix-experiments-layerwise-activation-matching} in the appendix, we give detailed per-layer breakdowns of the activation matching objective.
 
\begin{figure}
    \captionsetup[subfigure]{skip=.1cm}

    \newsavebox{\boxKappaZeroA}
    \newsavebox{\boxKappaZeroB}
    \newsavebox{\boxKappaZeroC}
    \sbox{\boxKappaZeroA}{\includegraphics[height=0.31\linewidth]{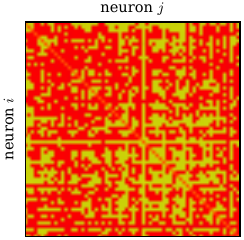}}
    \sbox{\boxKappaZeroB}{\includegraphics[height=0.31\linewidth]{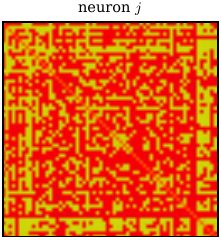}}
    \sbox{\boxKappaZeroC}{\includegraphics[height=0.31\linewidth]{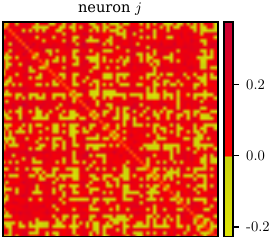}}

    \newsavebox{\boxKappaOneA}
    \newsavebox{\boxKappaOneB}
    \newsavebox{\boxKappaOneC}
    \sbox{\boxKappaOneA}{\includegraphics[width=\wd\boxKappaZeroA]{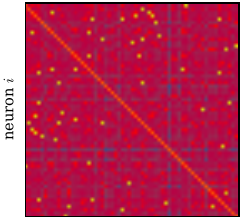}}
    \sbox{\boxKappaOneB}{\includegraphics[width=\wd\boxKappaZeroB]{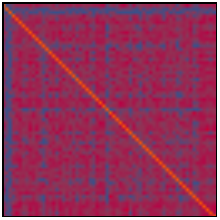}}
    \sbox{\boxKappaOneC}{\includegraphics[height=\ht\boxKappaOneA]{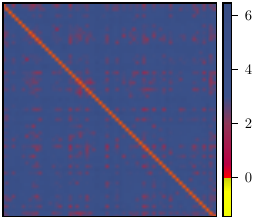}}%
    \hspace{0.05cm}\raisebox{0.62cm}{\rotatebox{90}{\raisebox{0.1cm}{\scriptsize$\sigma_{\fixed}=0$}}}\hspace{0.05cm}%
    \begin{subfigure}[b]{\wd\boxKappaZeroA}
        \usebox{\boxKappaZeroA}
    \end{subfigure}\hspace{0.01cm}%
    \begin{subfigure}[b]{\wd\boxKappaZeroB}
        \usebox{\boxKappaZeroB}
    \end{subfigure}\hspace{0.01cm}%
    \begin{subfigure}[b]{\wd\boxKappaZeroC}
        \usebox{\boxKappaZeroC}
    \end{subfigure}
    
    \hspace{0.05cm}\raisebox{1.05cm}{\rotatebox{90}{\raisebox{0.1cm}{\scriptsize$\sigma_{\fixed}=1$}}}\hspace{0.05cm}%
    \begin{subfigure}[b]{\wd\boxKappaOneA}
        \usebox{\boxKappaOneA}
        \caption*{\scriptsize$k=2$}
    \end{subfigure}\hspace{0.01cm}%
    \begin{subfigure}[b]{\wd\boxKappaOneB}
        \usebox{\boxKappaOneB}
        \caption*{\scriptsize$k=8$}
    \end{subfigure}\hspace{0.01cm}%
    \begin{subfigure}[b]{\wd\boxKappaOneC}
        \usebox{\boxKappaOneC}
        \caption*{\hspace{0.0\wd\boxKappaOneC}\scriptsize$k=32$\hspace{0.18\wd\boxKappaOneC}
        }
    \end{subfigure}
    \caption{Neuron swap costs $\Delta_{(ij)}^{\mathrm{out}}$ (signed square-root transformed) in synthetic experiments with $\W$-MLPs on GMM data with varying intrinsic dim.\ $k$, showing {\color{darkblue}large}/{\color{red}small}/{\textcolor[RGB]{169,234,8}{negative}} costs.
    }
    \label{fig:transposition-costs-matrix}
    \vspace{-0.5cm}
\end{figure}

\noindent\textbf{Q3:} \emph{Can we empirically observe the dependence of \emph{neuron swap costs} $\Delta^{\mathrm{out}}_{(ij)}$ on the fixed weight scale $\sigma_{\fixed}$, and how does the intrinsic data dimension $k$ influence the swap cost of neuron pairs in trained models? (see \cref{thm:global-output-sensitivity}, \cref{thm:min-center-separation})}

Our theory in \cref{subsec:realization-costs} defines neuron realization costs \cref{eq:representation-cost}, reduces them to a Mahalanobis distance under the subspace support model \eqref{eq:mahalanobisNorm}, and uses them to define the realization cost sensitivity $\Delta_{\pi,\tau}(\mH;\data)$ in \eqref{eq:swap-cost-def}.
We now estimate realization costs in trained models via \eqref{eq:mahalanobisNorm}, using a subspace basis $\mU$ estimated as the PCA subspace at 90\% explained variance.
We compute the output realization cost sensitivities for transpositions $\pi=(ij)$, i.e., \emph{neuron swap costs} $\Delta^{\mathrm{out}}_{(ij)}$.
We empirically compare to another method to estimate realization cost, via a ridge regression objective (\cref{app:approximate-subspace-support}), in the appendix (\cref{sec:appendix-experiments-realization-costs-ridge}).

\Cref{thm:global-output-sensitivity} shows that $\Delta_\pi^{\rcout}$ is approximately proportional to the squared min. neuron center gap $\gap_\rcout^2$ in the center-dominated regime, and  \cref{thm:min-center-separation} predicts that $\gap_\rcout$ depends on the intrinsic data dimension $k$ and the number of neurons $m$ (also see \cref{sec:appendix-experiments-center-separation-rate}).
We train $\W$-MLPs with $m=64$ on a Gaussian mixture dataset, where $k$ can be controlled, and report the measured neuron swap costs for $k\in\{2,8,32\}$ and $\sigma_{\fixed} \in\ \{0, 1\}$ in \cref{fig:transposition-costs-matrix}.
It shows that \emph{(a)}, near zero and even negative pairwise neuron swap costs dominate in the non-center-dominated regime ($\sigma_{\fixed} = 0$); \emph{(b)} that in the center-dominated regime ($\sigma_{\fixed} = 1$), there is a strong dependence on the intrinsic dimension $k$: for $k=2$, many low-cost swaps exist, while only the diagonal remains at low cost for $k=32$.
The experiment demonstrates that for low $k$ and high-enough $m$, neuron swaps again become accessible at lower weight cost despite the symmetry breaking.
Unaligned LMC performance in the $\sigma_{\fixed}=1$ setting corroborates these results: Over two model pairs each, the train accuracy between endpoints (avg.) and midpoint drops by $46.4$ percentage points for $k=2$, by $15.7$ percentage points for $k=8$, and only by $6.1$ percentage points for $k=32$.

\noindent\textbf{Q4:} \emph{How does the input subspace coherence $\nu(\subsp)$ control the effectiveness of symmetry breaking?}

\Cref{thm:spectral-concentration} suggests that on highly coherent data, (i.e., when the data's principal directions are aligned well with the standard basis axes and hence $\nu(\subsp) \approx 1$), the Gram matrix $\gramnu_i$ can become more anisotropic.
In this case, the diagonal operator $\diag$ may have a greater influence than observed so far by making certain directions of data variation more expensive to read out than others for some neurons, and hence enable effective symmetry breaking even in the non-center-dominated setting ($\fixed = 0$).

We consider an intrinsically two-dimensional dataset, where $a, b \sim \mathcal N(0,1)$ are embedded into a $20$-dim. ambient space by copying each value $a, b$ to $t \leq 10$ positions, filling up the rest with zeros.
Then $\nu(\subsp)=\frac{1}{t}$ can be varied from minimal ($\frac{1}{10}$) to maximal ($1$) by varying $t$.
We train a 1-hidden layer $\W$-MLP ($\fixed=0, \diag\sim \mathrm{Bernoulli}(0.15)$) to fit the target $\mathrm{ReLU}(a+b)$.
The results in \cref{fig:coherence-synthetic-experiment-barriers} show LMC for high coherence, but high midpoint barrier for low coherence, and the same trend for a variant where the data is randomly rotated inside the first $2t$ coordinates (``random frame''). 

\begin{figure}
    \centering
    \includegraphics[width=0.49\linewidth]{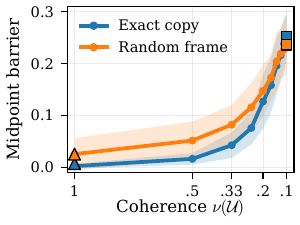}
    \includegraphics[width=0.49\linewidth]{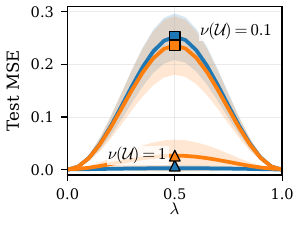}
    \caption{
    LMC dependence on coherence $\nu(\subsp)$ in $\fixed = 0$ setting: LMC barriers on a synthetic dataset with varying $\nu(\subsp)$.
    }
    \label{fig:coherence-synthetic-experiment-barriers}
    \vspace{-10 mm}
\end{figure}

\FloatBarrier

\section{Conclusion}

In this work, we studied under which conditions parameter symmetry breaking yields neuron identifiability, i.e., consistent assignment of features to neurons across independent training runs, and showed that approximate and data-dependent symmetries play a significant role even when structural symmetries are removed.
This clarifies that even when symmetries in raw parameter space are broken, symmetry breaking may remain practically ineffective.
Further, we show that identifiability enables representation merging and can yield sufficient conditions for unaligned linear mode connectivity.

Future directions include using neuron identifiability as a tool beyond symmetry breaking, such as for more reliable neuron-level interpretability.
Furthermore, it remains to extend our theory to richer symmetry groups, analyze exact training dynamics, and to develop more quantitative diagnostics that predict when neuron identifiability and unaligned LMC should be expected for a given training setup.

% Acknowledgements should only appear in the accepted version.
\section*{Acknowledgements}
We thank the anonymous reviewers for their insightful feedback, and Valerie Engelmayer for providing feedback on an earlier version of this manuscript.
VB acknowledges support from a PhD fellowship of the Munich Center for Machine Learning (MCML).
DH acknowledges support from the Munich Data Science Institute (MDSI) via the MDSI Doctoral Fellowship. 
YEL acknowledges support from the Schmidt Futures Israeli Women’s Postdoctoral Award.
This research was supported by an Alexander von Humboldt Professorship.

\section*{Impact Statement}

This paper presents work whose goal is to advance the field of Machine Learning.
There are many potential societal consequences of our work, none of which we feel must be specifically highlighted here.

% In the unusual situation where you want a paper to appear in the
% references without citing it in the main text, use \nocite
% \nocite{langley00}

\bibliography{paramsym_zotero}
\bibliographystyle{icml2026}

%%%%%%%%%%%%%%%%%%%%%%%%%%%%%%%%%%%%%%%%%%%%%%%%%%%%%%%%%%%%%%%%%%%%%%%%%%%%%%%
%%%%%%%%%%%%%%%%%%%%%%%%%%%%%%%%%%%%%%%%%%%%%%%%%%%%%%%%%%%%%%%%%%%%%%%%%%%%%%%
% APPENDIX
%%%%%%%%%%%%%%%%%%%%%%%%%%%%%%%%%%%%%%%%%%%%%%%%%%%%%%%%%%%%%%%%%%%%%%%%%%%%%%%
%%%%%%%%%%%%%%%%%%%%%%%%%%%%%%%%%%%%%%%%%%%%%%%%%%%%%%%%%%%%%%%%%%%%%%%%%%%%%%%
\newpage
\appendix
\onecolumn

\startcontents[appendix]
\section*{Appendices}

\vspace{10pt}
\printcontents[appendix]{}{1}[2]{}

\newpage
\appsection{Notation}

\begin{longtable}{p{.24\textwidth}  p{.72\textwidth}}
\caption{We list the most important symbols used throughout this work.}\label{tab:appendix_notation} \\
\hline
\\[-0.9em]
   $\N, \N_0$, $\mathbb{Q}$, $\R$ & Natural, non-negative integer, rational, real numbers. \\
   $[n]$ & Set $\{1, \dots, n\}$ for $n \in \N$. \\
   $\indfct_A$ & Indicator function in a set $A$. \\
   $\mathcal{O}(\cdot), \Theta(\cdot), \Omega(\cdot)$ & Landau symbols for asymptotic growth of a function. \\
   $S_n$ & Symmetric group $\{\pi: [n] \to [n] \,|\, \pi \text{ bijective}\}$. \\
   $\mathrm{id}$, $(ij)$, $\pi$ & Identity, transposition $i \leftrightarrow j$, generic permutation $\in S_n$. \\
   $\mP, \mP_{\pi} \in \{0,1\}^{n \times n}$ & Generic permutation matrix, permutation matrix of $\pi \in S_n$. \\
   $\odot$ & Elementwise multiplication of matrices. \\
   $\mI_n$ & Identity matrix in $n$ dimensions. \\
   $\bm{1}_n$ & Vector of all ones in $n$ dimensions. \\
   $\ve_i$ & $i$-th canonical basis vector. \\
   $\mathrm{Unif}(\cdot)$ & Uniform distribution on a set. \\
   $\mathrm{Bernoulli}(p)$ & Bernoulli distribution with success probability $p$. \\
   $\mathcal{N}(\mu, \sigma^2)$ & Normal distribution with mean $\mu$, variance $\sigma^2$. \\
   $\mathcal{N}(\vmu, \mSigma)$ & Multivariate normal distribution with mean vector $\vmu$, covariance matrix $\mSigma$. \\
   $\norm{\cdot}_p$, $\norm{\cdot}_{\mathrm{op}}$, $\norm{\cdot}_F$, $\norm{\cdot}_{\mA}$ & $\ell^p$ norms ($p \in [1, \infty]$), spectral norm, Frobenius norm, Mahalanobis norm w.r.t.\ $\mA$. \\
   $\mA^\dagger$ & Moore-Penrose pseudoinverse of $\mA$.\\
   $\succ, \succeq, \prec, \preceq$ & Löwner partial order for matrices. \\
   $L^p(\mu)$ & $p$-integrable functions w.r.t.\ a measure $\mu$, $p \in [1, \infty]$. \\
   $\norm{\cdot}_{L^p(\mu)}$, $\norm{\cdot}_{L^p(\mu, \R^m)}$ & $L^p$ norms of functions mapping to $\R$, $\R^m$, $p \in [1, \infty]$. \\
   $f \ast g$ & Convolution of two functions $f, g$. \\
   $\lambda^n$ & $n$-dimensional Lebesgue measure. \\
   $B_t^n$ & $n$-dimensional ball in $\R^n$, i.e., $\{\vx \in \R^n\,|\, \norm{\vx}_2 < t\}$ . \\
   $V_n$ & Volume $\lambda^n(B_1^n)$ of the $n$-dimensional unit ball. \\
   $\norm{\cdot}_{\psi_2}$, $\norm{\cdot}_{\psi_1}$ & Subgaussian and subexponential norms. \\
   $\mathbb{S}^{n-1}$ & $n$-dimensional sphere $\{\vx \in \R^n \,|\, \norm{\vx}_2 = 1\}$. \\
   [0.3em]
   \hline\\[-0.7em]
   $\vtheta, \Theta$ & (Total) parameter vector and parameter space of a neural architecture. \\
   $\mW, \vw$ & Weight matrix of a layer, weight vector of a single neuron. \\
   $d, m$ & Ambient input dimension, output dimension (i.e., width) of a layer. \\
   $\act$ & Activation function. \\
   $\fixed, \diag \in \R^{m \times d}$ & Fixed offset and diagonal operator of symmetry breaking intervention. \\
   $\fixedn_i, \diagn_i, \w_i \in \R^{d}$ & $i$-th rows of $\fixed, \diag, \W$, $i \in [m]$. \\
   $\sigma_{\fixed}$ & Standard deviation (fixed weight scale) of entry distribution of $\fixed$. \\
   $\mu_{\diag}$, $\sigma_{\diag}$, $b_{\diag}$ &  Mean, standard deviation, a.s.\ deviation of entry distribution of $\diagn_i^2$. \\
   $\vx, \data, \probdist$ & Input, input support, input distribution of a layer ($\data = \mathrm{supp}(\probdist)$). \\
   $\layer(\W, \vx)$, $\neuron_i(\w_i, \vx)$ & Layer map of a symmetry-broken layer, single neuron $i$. \\
   $\fctcl$, $\fctcl_i$, $\fctcl_i(\data)$ & Fct.\ class of a layer, fct.\ class of neuron $i$, fct.\ class of neuron $i$ w/ explicit input supp.\ $\data$. \\
   $k$ & Intrinsic input dimension $\leq d$. \\
   $\subsp$, $\onb$ & Input subspace, orthonormal basis of $\subsp$. \\
   $\fixednu_i \in \R^k, \diagnu_i \in \R^{k \times d}$ & $\fixed_i, \Diag(\diagn_i)$ in projected $\onb$ coordinates. \\
   $\gramnu_i \in \R^{k \times k}$ & Gram matrix $\gramnu_i = \diagnu_i \diagnu_i^\top$. \\
   $\norm{\vh}_{\fctcl_i(\data)}$ & Realization cost of $\vh: \data \to \R$ w.r.t.\ neuron $i$ on input support $\data$. \\
   $\nu(\subsp)$ & Subspace coherence. \\
   $\Delta_{\pi, \tau}(\mH; \data)$ & Realization cost sensitivity of $\mH \in \fctcl(\data)$ under output-input permutation pair $(\pi, \tau)$. \\
   $\diff_\pi^\rcout, \diff_\tau^\rcin \in \R^{m \times k}$ & Projected center differences under output / input permutation $\pi$ / $\tau$. \\
   $\gap_\rcout$, $\gap_\rcin$ & Minimum proj.\ center distances for output / input. \\
   $\xi_{\layer}(\lambda;\vx)$ & Chord deviation of layer $\layer$ at interpolation parameter $\lambda \in [0,1]$. \\
   $\ell, \mathcal{L}$ & Loss on one data point, population loss.
   \\[0.4em]
\hline
\end{longtable}

\newpage
\appsection{Extended Related Work}

\appsubsection{Parameter Symmetries and Mode Connectivity}

Mode connectivity of neural networks was introduced by early works which observed that trained neural network solutions can be connected by paths in parameter space where the performance (measured by training or validation loss or accuracy) does not significantly worsen \cite{garipov2018loss,draxler2018essentially}.
The focus soon shifted to linear mode connectivity (LMC) \cite{frankle_linear_2020}, which has been widely interpreted to occur when two solutions are in the same basin of the optimization landscape, which is a desirable property for applications such as model merging.
Several settings which admit LMC have been identified: \citet{frankle_linear_2020} found LMC for networks trained with different optimization randomness but starting from the same partially converged solution.
Later studies obtained LMC by aligning networks with respect to symmetries \cite{ainsworth2023git}, or removing symmetries \cite{lim_empirical_2024}.
Variants of LMC focusing on individual layers have been investigated \cite{adilova2024layerwise,zhou2023going}.

Exact structural neural network parameter symmetries, in particular \textit{permutation symmetries}, have been investigated as early as \citet{hecht1990algebraic,chen1993geometry}, where it was observed that the whole parameter space can be mapped to a small cone via function-preserving permutations.
Recent work has also studied hidden, local, and data-dependent symmetries, where equivalences may depend on the activation region, parameter point, or data distribution rather than preserving the realized function globally \citep{zhao_symmetries_2023,grigsby2023hidden,xie2025tale,zhao2026symmetry}.

\citet{entezari_role_2022} connected parameter symmetries to LMC and stated the conjecture that pairs of SGD solutions are likely linearly mode connected after applying a suitable permutation.
A number of works have proposed permutation alignment algorithms, with the goal of obtaining interpolating paths of low barrier, based either on optimal transport (OT) of activations or weights (\textit{activation} or \textit{weight matching}), or directly optimizing for a permutation that decreases the midpoint barrier (\textit{straight-through estimator} or \textit{learned matching}) \cite{singh2020model,tatro2020optimizing,pittorino2022toroids,ainsworth2023git}, alongside non-OT variants \cite{li2015convergentlearning,crisostomi2024cycleconsistent}.
Alignment methods have been primarily applied to MLPs (permuting neurons) and convolutional architectures such as ResNets (permuting channels), but have recently been successfully applied to Transformers \cite{imfeld2024transformer,verma2024merging,theus2025generalized}, both for vision and language (where vision task models seem to be easier to align).
Theoretical results in \citet{ferbach2024proving} show aligned LMC can be obtained for trained wide one-hidden-layer neural networks in the mean-field regime.

There has also been interest in \textit{unaligned LMC}, i.e., conditions under which LMC is given without post-hoc alignment. Recently, \citet{lim_empirical_2024} proposed architectural symmetry breaking methods and obtained unaligned LMC with \textit{\Wasymmetric models}, where parts of the weights of each neuron are fixed in a way that breaks the usual permutation symmetries.
Not primarily motivated by LMC but in a similar fashion, \citet{ziyin_remove_2024} add fixed noise to parameters.
\citet{ito2025we} find that increasingly wide models tend to yield LMC without symmetry alignment or removal, and explain this by wide layers exhibiting reciprocally orthogonal null spaces, so that merged models increasingly behave like an ensemble.

\appsubsection{Fixed Masks, Pruning, and Sketching}

When $\diagn_i\in\{0,1\}^d$, the trainable part $\diagn_i\odot \w_i$ corresponds to a fixed sparse connectivity pattern, which is similar to a pruning-at-initialization or fixed DropConnect-style mask on the incoming weights \citep{wan2013regularization}.
In contrast, standard dropout resamples activation masks during training \citep{srivastava2014dropout}.
Dropout and related feature-noising schemes have been analyzed as induced regularizers \citep{wager2013dropout} and, under variational assumptions, as approximate Bayesian inference \citep{gal2016dropout}.
The masks in this work, however, are sampled once and kept fixed across the independent training runs.

Prior work on pruning has studied post-training sparsification, lottery-ticket subnetworks \citep{frankle2018lottery}, dynamic sparse training \citep{evci2020rigging}, and pruning at initialization \citep{lee2019snip,wang2020picking,tanaka2020pruning,frankle2021pruning}.
Recent work has also connected pruning at initialization to randomized sketching \citep{bar2025pruning}.
Our use of this viewpoint is different: we do not use masks primarily to preserve an approximation of a dense network, but to understand when fixed diagonal operators make neurons functionally distinguishable on the input support.
The relevant quantity for this distinction is the subspace coherence $\nu(\subsp)$, equivalently the maximum leverage score of the input subspace \citep{drineas2012fast}. Coherence is standard in matrix completion, compressed sensing, and randomized numerical linear algebra, where it controls whether uniform coordinate sampling preserves a low-dimensional structure \citep{candes2005decoding,candes2009exact}.
In our setting, the same quantity controls the concentration of $\gramnu_i = \onb^\top \Diag(\diagn_i)^2 \onb$.
I.e., we use coherence not for exact recovery, but to quantify when fixed masks are visible to the effective function classes of neurons.

\newpage
\appsection{Neuron Identifiability (\cref{sec:eff-func-classes})}

\appsubsection{Proofs from \cref{subsec:realization-costs}--\cref{subsec:permutation-sensitivity} (Realization Costs)}\label{apd:pf-eff-func-classes}

\begin{restatable}[Neuron identifiability from preactivations]{lemma}{NeuronIdPreact}\label{lem:neuron-identifiability}
Under \cref{ass:low-rank,ass:non-degeneracy}, any $\neuron_i$, $i \in [m]$, can be identified from its preactivation. Formally,
\begin{equation}
    \neuron_i(\w; \cdot) = \neuron_i(\w'; \cdot) \: \Leftrightarrow \: \onb^\top \Diag(\diagn_i)(\w - \w') = 0,
\end{equation}
where the l.h.s.\ is understood almost surely w.r.t.\ $\probdist$.
\end{restatable}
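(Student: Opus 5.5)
We prove both implications by reducing everything to the projected preactivation coefficient. Set $\va := \onb^\top(\fixedn_i + \Diag(\diagn_i)\w)$ and $\va' := \onb^\top(\fixedn_i + \Diag(\diagn_i)\w')$, so that $\va - \va' = \onb^\top \Diag(\diagn_i)(\w-\w')$. By \cref{ass:low-rank}, almost every input satisfies $\vx = \onb\onb^\top \vx$. For $\probdist$-a.e.\ $\vx$ the preactivation is therefore $(\fixedn_i + \Diag(\diagn_i)\w)^\top \onb\onb^\top\vx = (\onb\va)^\top \vx$. Consequently $\neuron_i(\w;\vx) = \act(\va^\top \onb^\top\vx)$ almost surely, and the claim becomes: $\act(\va^\top\vz) = \act(\va'^\top\vz)$ for a.e.\ $\vz := \onb^\top\vx$ if and only if $\va = \va'$.

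\emph{($\Leftarrow$)} If $\va = \va'$, the two preactivations coincide a.s., so the neurons agree a.s.

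\emph{($\Rightarrow$)} First I will upgrade the a.s.\ equality to equality on all of $\mathrm{supp}(\probdist)$. Both maps $\vx \mapsto \act((\onb\va)^\top\vx)$ are continuous, since $\act$ is continuous for all listed activations (I will state this as an implicit standing assumption). Continuous functions that agree $\probdist$-a.s.\ agree on the support: the set where they differ is open, and if it met the support it would have positive mass. Then I split into two cases.

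\emph{Injective case.} Injectivity of $\act$ gives $(\onb(\va-\va'))^\top\vx = 0$ for all $\vx \in \mathrm{supp}(\probdist)$. Because $\mathrm{span}(\mathrm{supp}(\probdist)) = \subsp$, the vector $\onb(\va-\va')$ is orthogonal to $\subsp$. It also lies in $\subsp = \mathrm{im}(\onb)$, so $\onb(\va-\va') = 0$. Since $\onb$ has orthonormal columns, $\va = \va'$.

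\emph{Non-injective case.} Take $T \subseteq \mathrm{supp}(\probdist)$ with $T=-T$ and $\mathrm{span}(T)=\subsp$. For $\vx \in T$, write $s := (\onb\va)^\top\vx$ and $s' := (\onb\va')^\top\vx$. Equality holds at both $\vx$ and $-\vx \in T$, so $(\act(s),\act(-s)) = (\act(s'),\act(-s'))$. Injectivity of $s \mapsto (\act(s),\act(-s))$ then gives $s = s'$. The rest is the same linear-algebra step with $T$ in place of the support.

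The main obstacle I expect is the passage from "almost surely" to "on a spanning set". In the non-injective case especially, we need equality at both $\vx$ and $-\vx$ simultaneously. This is exactly why I first establish equality on all of $\mathrm{supp}(\probdist)$ via continuity, rather than working with a full-measure set that might not be symmetric. If one wants to avoid assuming continuity of $\act$, one would need an alternative argument, for instance interpreting $T$ as a set of atoms or working with measurability-based variants. I would note that continuity holds for every activation named in \cref{ass:non-degeneracy} and adopt it.
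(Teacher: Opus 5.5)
Your proof is correct and follows the same route as the paper's proof: reduce to the projected coefficients $\va,\va'$, then conclude $\va=\va'$ from a spanning set, separately in the injective case and the $(\act(s),\act(-s))$-injective case. The one difference is that you explicitly upgrade $\probdist$-a.s.\ equality to equality on all of $\mathrm{supp}(\probdist)$ via continuity of $\act$, whereas the paper's proof skips this and simply assumes equality on the support. That upgrade is dispensable in the injective case, since the zero set of a linear functional is closed and, having full measure, already contains the support. In the symmetric-set case, however, some such regularity is genuinely needed, because the a.s.\ formulation can fail for pathological discontinuous $\act$, so stating it is a real improvement.
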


\begin{proof}
Fix $i\in[m]$, $\w, \w' \in \R^d$, and write the (row-wise) preactivation vectors
\begin{equation}
    \w_{\eff}\;:=\;\fixedn_i+\diagn_i\odot \w
\;=\;\fixedn_i+\Diag(\diagn_i)\w,
\qquad
\w'_{\eff}\;:=\;\fixedn_i+\Diag(\diagn_i)\w'.
\end{equation}
By \cref{ass:low-rank}, every input $\vx$ satisfies $\vx\in\subsp=\mathrm{im}(\onb)$, hence we can write $\vx=\onb \vc$ for some $\vc\in\R^k$. Define the subspace coefficients $\va\;:=\;\onb^\top \w_\eff\in\R^k$ and $\va'\;:=\;\onb^\top \w'_\eff\in\R^k$.
For any $\vx=\onb \vc\in\subsp$, we have the identity
\begin{equation}
    \w_\eff^\top \vx \;=\; \w_\eff^\top \onb \vc \;=\; (\onb^\top \w_\eff)^\top \vc \;=\; \va^\top \vc,
\end{equation}
and likewise $(\w'_\eff)^\top \vx=\va'^\top \vc$. Therefore, the neuron functions restricted to $\subsp$ can be written as
\begin{equation}
    \neuron_i(\w;\onb \vc)\;=\;\act(\va^\top \vc),
\qquad
\neuron_i(\w';\onb \vc)\;=\;\act(\va'^\top \vc). \label{eq:h_factorization}
\end{equation}
Note that
\begin{equation}
    \va-\va' \;=\; \onb^\top\big(\w_\eff-\w'_\eff\big)
\;=\;\onb^\top \Diag(\diagn_i)(\w-\w'). \label{eq:u_difference}
\end{equation}
We can now prove the equivalence.

\textbf{\enquote{$\Leftarrow$}:} If $\onb^\top \Diag(\diagn_i)(\w-\w')=0$, then \cref{eq:u_difference} and \cref{eq:h_factorization} directly yield $\neuron_i(\w;\cdot)=\neuron_i(\w';\cdot)$ on $\subsp$.

\textbf{\enquote{$\Rightarrow$}:} Assume $\neuron_i(\w;\cdot)=\neuron_i(\w';\cdot)$ on $\mathrm{supp}(\probdist) \subseteq \subsp$ (by \cref{ass:low-rank}). Equivalently,
\begin{equation}
    \act(\va^\top \vc)\;=\;\act(\va'^\top \vc)\qquad\text{for all } \vc \in \onb^\top(\mathrm{supp}(\mathbb P))\subseteq\R^k.
\end{equation}
We show this forces $\va=\va'$ under \cref{ass:non-degeneracy}, distinguishing the two activation cases.

\noindent\textbf{Case \circled{1}: $\act$ injective.}
Injectivity gives $\va^\top \vc=\va'^\top \vc$ for all $\vc \in \onb^\top(\mathrm{supp}(\mathbb P))$, hence $(\va-\va')^\top \vc=0$ for all such $\vc$.
By \cref{ass:non-degeneracy}, we have $\mathrm{span}(\mathrm{supp}(\mathbb P))=\subsp$, hence
$\mathrm{span}\big(\onb^\top \mathrm{supp}(\mathbb P)\big)=\R^k$.
Therefore, $(\va-\va')^\top \vc=0$ on a spanning set of $\R^k$, which implies $\va-\va' = \onb^\top \Diag(\diagn_i)(\w-\w') = 0$.

\textbf{Case \circled{2}: $s\mapsto(\act(s),\act(-s))$ injective.}
Let $T\subseteq \mathrm{supp}(\mathbb P)$ be as in \cref{ass:non-degeneracy}, so $T=-T$ and $\mathrm{span}(T)=\subsp$.
Set $T_{\onb}:=\onb^\top T\subseteq \R^k$. Then $T_{\onb}=-T_{\onb}$ and $\mathrm{span}(T_{\onb})=\R^k$. Since $\act(\va^\top \vc)=\act(\va'^\top \vc)$ for all $\vc\in \onb^\top(\mathrm{supp}(\mathbb P))$, in particular
$\act(\va^\top \vc)=\act(\va'^\top \vc)$ for all $\vc\in T_{\onb}$.
Moreover, for any $\vc\in T_{\onb}$ we have $-\vc\in T_{\onb}$, hence also
$\act(-\va^\top \vc)=\act(-\va'^\top \vc)$.
Therefore, for all $\vc\in T_{\onb}$,
$(\act(\va^\top \vc),\act(-\va^\top \vc))=(\act(\va'^\top \vc),\act(-\va'^\top \vc))$, and injectivity yields $\va^\top \vc=\va'^\top \vc$ for all $\vc\in T_{\onb}$, i.e.\ $(\va-\va')^\top \vc=0$ for all $\vc\in T_{\onb}$.
Since $\mathrm{span}(T_{\onb})=\R^k$, we get $\va=\va'$, which concludes the proof.
\end{proof}

\IdentificationSubspace*

\begin{proof}
Fix $i\in[m]$. Under \cref{ass:low-rank}, every $\vx\in\data$ satisfies $\vx\in\mathrm{im}(\onb)$, so we can write
$\vx=\onb \vc$ with $\vc=\onb^\top \vx\in\R^k$.
For any $\w\in\R^d$, define the induced subspace preactivation coefficient
\begin{equation}
    \va(\w)\;:=\;\onb^\top\big(\fixedn_i+\Diag(\diagn_i)\w\big)
\;=\;\fixednu_i+\diagnu_i\w\;\in\R^k.
\end{equation}
Then for every $\vx=\onb\vc\in\data$, $\neuron_i(\w;\vx)=\act\big((\onb\va(\w))^\top \vx\big)$.
This yields
\begin{equation}
    \fctcl_i(\data) \subseteq \Big\{\act\big((\onb\va)^\top\cdot\big)\;\big|\;\va\in \fixednu_i+\mathrm{im}(\diagnu_i)\Big\}. \label{eq:H_i-inclusion}
\end{equation}

Conversely, let $\va\in \fixednu_i+\mathrm{im}(\diagnu_i)$. Then there exists $\w\in\R^d$ such that
$\diagnu_i\w=\va-\fixednu_i$, i.e., $\va(\w)=\va$ and
$\neuron_i(\w;\cdot)=\act((\onb\va)^\top\cdot)$ on $\data$, so the reverse inclusion in \cref{eq:H_i-inclusion} holds as well.

It remains to argue that $\va\mapsto \act((\onb\va)^\top\cdot)$ is injective on $\R^k$ under
\cref{ass:non-degeneracy}. Let $\va,\va'\in\R^k$ satisfy
$\act((\onb\va)^\top \vx)=\act((\onb\va')^\top \vx)$ for all $\vx\in\mathrm{supp}(\mathbb P)$.
Set $\vd:=\va-\va'$.
If $\act$ is injective, then $(\onb\vd)^\top \vx=0$ for all $\vx\in\mathrm{supp}(\mathbb P)$, hence
$\vd^\top(\onb^\top \vx)=0$ for all $\vx\in\mathrm{supp}(\mathbb P)$. Since
$\mathrm{span}(\mathrm{supp}(\mathbb P))=\subsp=\mathrm{im}(\onb)$, we obtain $\vd=0$, i.e. $\va=\va'$.
If $s\mapsto(\act(s),\act(-s))$ is injective, then for all $\vx\in T\subseteq\mathrm{supp}(\mathbb P)$, with $T=-T$,
we have both
\begin{equation}
    \act((\onb\va)^\top \vx)=\act((\onb\va')^\top \vx)
\quad\text{and}\quad
\act((\onb\va)^\top (-\vx))=\act((\onb\va')^\top (-\vx)),
\end{equation}
so the injectivity of $s\mapsto(\act(s),\act(-s))$ yields $(\onb\va)^\top \vx=(\onb\va')^\top \vx$ for all $\vx\in T$.
Since $\mathrm{span}(T)=\subsp$, the same argument gives $\va=\va'$.

Having obtained the injectivity of $\va\mapsto\act((\onb\va)^\top\cdot)$, the final claim follows immediately.
\end{proof}

\begin{restatable}[Realization cost in subspace coefficients]{lemma}{Mahalanobis}\label{lem:realization-cost-subspace-coefficients}
    Let $i \in [m]$ and $\va \in \R^k$. Then,
    \begin{align}
        \norm{\act((\onb\va)^\top \cdot)}_{\fctcl_i(\data)} \; &= \; \sqrt{(\va - \fixednu_i)^\top \gramnu_i^\dagger(\va - \fixednu_i)} \\ \; &=: \; \norm{\va - \fixednu_i}_{\gramnu_i^\dagger}
    \end{align}
    for $\va - \fixednu_i \in \mathrm{im}(\gramnu_i) = \mathrm{im}(\diagnu_i)$ (and $+\infty$ otherwise).
    Here, $\gramnu_i^\dagger$ denotes the Moore-Penrose pseudoinverse of $\gramnu_i$.
\end{restatable}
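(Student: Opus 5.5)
The plan is to reduce the realization cost to a minimum-norm least-squares problem in the subspace coefficients and then solve it with the pseudoinverse.

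\textbf{Step 1: reduce to a linear constraint.} By \cref{lem:neuron-identifiability} and \cref{prop:identification-subspace}, for $\w\in\R^d$ we have $\neuron_i(\w;\cdot)=\act((\onb\va)^\top\cdot)$ on $\data$ if and only if $\fixednu_i+\diagnu_i\w=\va$. The identification map is injective, so the feasible set in \cref{eq:representation-cost} is exactly the affine set $\{\w \mid \diagnu_i\w=\vr\}$, where $\vr:=\va-\fixednu_i$.

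\textbf{Step 2: handle infeasibility.} If $\vr\notin\mathrm{im}(\diagnu_i)$, the feasible set is empty and the cost is $+\infty$ by convention. Moreover, $\mathrm{im}(\gramnu_i)=\mathrm{im}(\diagnu_i\diagnu_i^\top)=\mathrm{im}(\diagnu_i)$, because $\ker(\diagnu_i\diagnu_i^\top)=\ker(\diagnu_i^\top)$; orthogonal complements then give the equality of images.

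\textbf{Step 3: solve the minimum-norm problem.} For $\vr\in\mathrm{im}(\diagnu_i)$, the infimum of $\norm{\w}_2$ subject to $\diagnu_i\w=\vr$ is attained at $\w^\star=\diagnu_i^\dagger\vr$. To see this, write any feasible $\w=\w^\star+\vz$ with $\vz\in\ker(\diagnu_i)$. Since $\w^\star\in\mathrm{im}(\diagnu_i^\top)=\ker(\diagnu_i)^\perp$, the Pythagorean theorem gives $\norm{\w}_2^2=\norm{\w^\star}_2^2+\norm{\vz}_2^2$. Feasibility of $\w^\star$ itself follows from $\diagnu_i\diagnu_i^\dagger$ being the orthogonal projector onto $\mathrm{im}(\diagnu_i)$.

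\textbf{Step 4: compute the norm.} We have
\[
\norm{\w^\star}_2^2=\vr^\top(\diagnu_i^\dagger)^\top\diagnu_i^\dagger\vr=\vr^\top(\diagnu_i\diagnu_i^\top)^\dagger\vr=\vr^\top\gramnu_i^\dagger\vr .
\]
The middle equality uses the standard identity $(\mA\mA^\top)^\dagger=(\mA^\dagger)^\top\mA^\dagger$. To justify that identity, take the SVD $\diagnu_i=\mU\mSigma\mathbf{V}^\top$; both sides then equal $\mU(\mSigma\mSigma^\top)^\dagger\mU^\top$. Taking square roots yields the claim.

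The only delicate point is Step 1, specifically the "only if" direction: we must show that every $\w$ realizing the function has the same coefficient $\va$. This is exactly the injectivity supplied by \cref{ass:non-degeneracy} through \cref{lem:neuron-identifiability}. The remaining steps are routine linear algebra.
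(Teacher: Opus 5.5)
Your proposal is correct and follows essentially the same route as the paper: reduce via \cref{prop:identification-subspace} (using injectivity of $\va \mapsto \act((\onb\va)^\top\cdot)$ on all of $\R^k$) to the linear constraint $\diagnu_i\w = \va - \fixednu_i$, take the minimum-norm solution via the Pythagorean split over $\ker(\diagnu_i)$, and evaluate its norm. The only difference is cosmetic. The paper writes $\w^\star = \diagnu_i^\top\gramnu_i^\dagger(\va-\fixednu_i)$ and uses $\gramnu_i^\dagger\gramnu_i\gramnu_i^\dagger=\gramnu_i^\dagger$, whereas you write $\w^\star=\diagnu_i^\dagger\vr$ and use $(\mA\mA^\top)^\dagger=(\mA^\dagger)^\top\mA^\dagger$; the two expressions for $\w^\star$ coincide. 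You also verify $\mathrm{im}(\gramnu_i)=\mathrm{im}(\diagnu_i)$, which the paper only asserts.
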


\begin{proof}
Fix $i\in[m]$ and $\va\in\R^k$.
By \cref{prop:identification-subspace}, we have
\begin{equation}
    \act((\onb\va)^\top\cdot)\in \fctcl_i(\data)
\quad\Leftrightarrow\quad \va - \fixednu_i \,\in\, \mathrm{im}(\diagnu_i),
\end{equation}
and in that case $\act((\onb\va)^\top\cdot)=\neuron_i(\w;\cdot)$ holds iff $\diagnu_i \w = \va - \fixednu_i$.
Therefore, by definition of realization cost \cref{eq:representation-cost},
\begin{equation}
    \big\|\act((\onb\va)^\top\cdot)\big\|_{\fctcl_i(\data)}
=
\inf\{\|\w\|_2 \mid \diagnu_i\w=\va - \fixednu_i\},
\end{equation}
with the convention $+\infty$ if $\va - \fixednu_i\notin\mathrm{im}(\diagnu_i)$. Assume now that $\va - \fixednu_i\in\mathrm{im}(\diagnu_i)$. 
Recalling $\gramnu_i = \diagnu_i\diagnu_i^\top$, the (unique) minimum $\ell^2$ norm solution is
\begin{equation}
    \w^\star \:=\: \diagnu_i^\top(\diagnu_i\diagnu_i^\top)^\dagger (\va - \fixednu_i) \:=\: \diagnu_i^\top\gramnu_i^\dagger (\va - \fixednu_i),
\end{equation}
and every solution can be written as $\w=\w^\star + \w_\perp$ with $\w_\perp\in\ker(\diagnu_i)$, which implies
$\|\w\|_2^2=\|\w^\star\|_2^2+\|\w_\perp\|_2^2\ge \|\w^\star\|_2^2$.
Hence, $\inf\{\|\w\|_2 \mid \diagnu_i\w=\va - \fixednu_i\}=\|\w^\star\|_2$.
Moreover,
\begin{align}
\|\w^\star\|_2^2
&= (\va - \fixednu_i)^\top \gramnu_i^\dagger \gramnu_i \gramnu_i^\dagger (\va - \fixednu_i) \\
&= (\va - \fixednu_i)^\top \gramnu_i^\dagger (\va - \fixednu_i),
\end{align}
where we used $\gramnu_i^\dagger \gramnu_i \gramnu_i^\dagger = \gramnu_i^\dagger$. This yields
\begin{equation}
    \big\|\act((\onb\va)^\top\cdot)\big\|_{\fctcl_i(\data)}
=
\sqrt{(\va-\fixednu_i)^\top \gramnu_i^\dagger (\va-\fixednu_i)},
\end{equation}
which holds for $\va-\fixednu_i\in\mathrm{im}(\gramnu_i)=\mathrm{im}(\diagnu_i)$ (and equals $+\infty$ otherwise).
\end{proof}

\SpectralConcentration*

The proof of this theorem relies on an application of the matrix Bernstein inequality.

\begin{lemma}[Matrix Bernstein Inequality; Theorem 1.4 of \citet{tropp2015introduction}]\label{lem:matrix_bernstein}
    Let $\mA_1, \dots, \mA_d \in \R^{k \times k}$ be independent, centered random symmetric matrices. Assume a uniform bound $\|\mA_l\|_{\mathrm{op}} \le R$ almost surely for each addend $l \in [d]$. Let
    \begin{equation}
        \sigma^2 := \left\| \sum_{l=1}^d \mathbb{E}[\mA_l^2] \right\|_{\mathrm{op}}.
    \end{equation}
    Then, for all $t \ge 0$,
    \begin{equation}
        \mathbb{P}\left( \left\| \sum_{l=1}^d \mA_l \right\|_{\mathrm{op}} \ge t \right) \le 2k \exp\left( \frac{-t^2/2}{\sigma^2 + Rt/3} \right).
    \end{equation}
\end{lemma}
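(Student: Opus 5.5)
The plan is the standard matrix Laplace transform argument (Ahlswede--Winter, sharpened via Lieb's theorem as in \citet{tropp2015introduction}). Write $\mathbf{Y} := \sum_{l=1}^d \mA_l$. Since $\norm{\mathbf{Y}}_{\mathrm{op}} = \max\{\lambda_{\max}(\mathbf{Y}), \lambda_{\max}(-\mathbf{Y})\}$ and the $-\mA_l$ satisfy the same hypotheses, it suffices to show
\begin{equation*}
\mathbb{P}(\lambda_{\max}(\mathbf{Y})\ge t)\le k\exp\bigl(\tfrac{-t^2/2}{\sigma^2+Rt/3}\bigr).
\end{equation*}
A union bound over $\pm\mathbf{Y}$ then gives the factor $2k$.

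\textbf{Step 1 (Laplace transform).} For every $\theta>0$, Markov's inequality applied to $e^{\theta\lambda_{\max}(\mathbf{Y})}$ gives
\begin{equation*}
\mathbb{P}(\lambda_{\max}(\mathbf{Y})\ge t)\le e^{-\theta t}\,\mathbb{E}\,\mathrm{tr}\exp(\theta \mathbf{Y}).
\end{equation*}
This uses $e^{\theta\lambda_{\max}(\mathbf{Y})}=\lambda_{\max}(e^{\theta \mathbf{Y}})\le \mathrm{tr}\,e^{\theta \mathbf{Y}}$, valid because $e^{\theta\mathbf{Y}}$ is positive definite.

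\textbf{Step 2 (subadditivity of matrix cumulants).} Here I invoke Lieb's concavity theorem: for fixed symmetric $\mH$, the map $\mA\mapsto \mathrm{tr}\exp(\mH+\log \mA)$ is concave on positive definite matrices. I condition on all summands but one, apply Jensen to the remaining $\mA_l$ with $\mH$ the sum of the others, and iterate over $l$ using independence. This yields
\begin{equation*}
\mathbb{E}\,\mathrm{tr}\exp\Bigl(\sum_l\theta\mA_l\Bigr)\le \mathrm{tr}\exp\Bigl(\sum_l\log\mathbb{E}e^{\theta\mA_l}\Bigr).
\end{equation*}
This is the \emph{main obstacle}: Lieb's theorem is deep, and I would cite it rather than reprove it. Every other step is elementary.

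\textbf{Step 3 (Bernstein mgf bound per summand).} Set $g(\theta):=\frac{\theta^2/2}{1-R\theta/3}$ for $0<\theta<3/R$. The scalar inequality I need is
\begin{equation*}
e^{\theta x}\le 1+\theta x+g(\theta)x^2 \qquad\text{for } |x|\le R.
\end{equation*}
It follows from the series bound $\sum_{q\ge2}\theta^q x^{q-2}/q!\le \tfrac{\theta^2}{2}\sum_{j\ge0}(R\theta/3)^j$, which uses $q!\ge 2\cdot 3^{q-2}$.

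The transfer rule (a scalar inequality on the spectrum lifts to a semidefinite inequality) then gives $e^{\theta\mA_l}\preceq \mI+\theta\mA_l+g(\theta)\mA_l^2$. Taking expectations and using $\mathbb{E}\mA_l=0$,
\begin{equation*}
\mathbb{E}e^{\theta\mA_l}\preceq \mI+g(\theta)\mathbb{E}\mA_l^2\preceq \exp\bigl(g(\theta)\mathbb{E}\mA_l^2\bigr).
\end{equation*}
Because $\log$ is operator monotone, this becomes $\log\mathbb{E}e^{\theta\mA_l}\preceq g(\theta)\mathbb{E}\mA_l^2$.

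\textbf{Step 4 (assemble and optimize).} The map $\mH\mapsto \mathrm{tr}\exp(\mH)$ is monotone with respect to $\preceq$, so Steps 1--3 combine to
\begin{equation*}
\mathbb{P}(\lambda_{\max}(\mathbf{Y})\ge t)\le e^{-\theta t}\,\mathrm{tr}\exp\Bigl(g(\theta)\sum_l\mathbb{E}\mA_l^2\Bigr).
\end{equation*}
Bounding the trace by $k\,\lambda_{\max}$ gives
\begin{equation*}
\mathrm{tr}\exp\Bigl(g(\theta)\sum_l\mathbb{E}\mA_l^2\Bigr)\le k\exp\bigl(g(\theta)\sigma^2\bigr),
\end{equation*}
and hence the bound $k\exp(-\theta t+g(\theta)\sigma^2)$.

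Finally, choose $\theta=t/(\sigma^2+Rt/3)$, which indeed lies in $(0,3/R)$. With this choice $1-R\theta/3=\sigma^2/(\sigma^2+Rt/3)$, so $g(\theta)\sigma^2=\theta t/2$. The exponent therefore equals $-\theta t/2 = -\frac{t^2/2}{\sigma^2+Rt/3}$, which is the claimed bound.
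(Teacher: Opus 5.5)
Your proof is correct and is the standard argument from Tropp's monograph: the matrix Laplace transform, Lieb's concavity theorem for subadditivity of matrix cumulant generating functions, the Bernstein mgf bound via $q!\ge 2\cdot 3^{q-2}$ and the transfer rule, and finally the choice $\theta = t/(\sigma^2+Rt/3)$. The paper does not reprove the lemma but imports it from that source, so your route coincides with the one it relies on. The only loose end is cosmetic: for $t=0$ or $\sigma^2=0$ your $\theta$ does not lie in $(0,3/R)$, but then the claim is trivial, since either the right-hand side equals $2k\ge 1$ or every $\mA_l$ vanishes almost surely.
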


\begin{proof}[Proof of \cref{thm:spectral-concentration}]
    Denote the $l$-th row of $\onb$ by $\vu_l \in \R^k$. Observe that
    \begin{equation}
        (\onb\onb^\top)_{ll} \: = \: \ve_l^\top \onb \onb^\top \ve_l \: = \: \|\onb^\top \ve_l\|_2^2 \: = \: \|\vu_l\|_2^2,
    \end{equation}
    where $\ve_l$ denotes the $l$-th canonical basis vector in $\R^d$.
    The fluctuation of the Gram matrix $\gramnu_i$ around its mean
    \begin{equation}
        \mathbb{E}[\gramnu_i] \: = \: \mU^\top \mathbb{E}[\text{diag}(\diagn_i)^2] \mU \: = \: \mU^\top (\mu_{\diag} \mI_d) \mU \: = \: \mu_{\diag} \mI_k
    \end{equation}
    can be written as a sum of independent rank $1$ matrices
    \begin{equation}
        \gramnu_i - \mu_{\diag}\mI_k \: = \: \sum_{l=1}^d \underbrace{((\diagn_i)_l^2 - \mu_{\diag}) \vu_l \vu_l^\top}_{=: \mA_l} \: = \: \sum_{l=1}^d \mA_l.
    \end{equation}
    Clearly, the addends $\mA_l$ are centered and symmetric. Furthermore, we can bound
    \begin{equation}
        \|\mA_l\|_{\mathrm{op}} \: = \: | (\diagn_i)_l^2 - \mu_{\diag} | \cdot \|\vu_l \vu_l^\top\|_{\mathrm{op}} \: \leq \: b_{\diag} \|\vu_l\|_2^2 \: \leq \: b_{\diag} \nu(\mathcal{U}) \: =: \: R. \label{eq:bernstein_R}
    \end{equation}
    Next, we compute the variance parameter $\sigma^2$. Since 
    \begin{equation}
        \mA_l^2 = ((\diagn_i)_l^2 - \mu_{\diag})^2 (\vu_l \vu_l^\top)^2 = ((\diagn_i)_l^2 - \mu_{\diag})^2 \|\vu_l\|_2^2 (\vu_l \vu_l^\top)
    \end{equation}
    and $\mathbb{E}[((\diagn_i)_l^2 - \mu_{\diag})^2] = \sigma_{\diag}^2$ by definition, we have
    \begin{align}
    \sigma^2 \: &= \: \left\| \sum_{l=1}^d \sigma_{\diag}^2 \|\vu_l\|_2^2 (\vu_l \vu_l^\top) \right\|_{\mathrm{op}} \\
    &\overset{(\ast)}{\leq} \: \sigma_{\diag}^2 \underbrace{\left(\max_{k} \|\vu_k\|_2^2\right)}_{=\nu(\mathcal{U})} \left\| \sum_{l=1}^d \vu_l \vu_l^\top \right\|_{\mathrm{op}} \\
    &= \: \sigma_{\diag}^2 \nu(\mathcal{U}) \|\underbrace{\mU^\top \mU}_{=\mI_k}\|_{\mathrm{op}} \\
    &= \: \sigma_{\diag}^2 \nu(\mathcal{U}). \label{eq:bernstein_sigma}
    \end{align}
    In $(\ast)$, we have utilized that for positive semidefinite matrices $\mB_l \succeq 0$ and scalars $c_l \in [0, C]$, one has $\|\sum_l c_l \mB_l\|_{\mathrm{op}} \le C \|\sum_l \mB_l\|_{\mathrm{op}}$.
    We are now ready to apply \cref{lem:matrix_bernstein} to $\sum_{l=1}^d \mA_l$. We set the target failure probability to $\delta \in (0,1)$. We seek a threshold $t$ such that the failure probability is at most $\delta$. This holds if
    \begin{equation}
        2k \exp\left( \frac{-t^2/2}{\sigma^2 + Rt/3} \right) \;\leq\; \delta \quad \Leftrightarrow \quad \frac{t^2}{2} \;\geq\; \left(\sigma^2 + \frac{Rt}{3}\right) \log\left(\frac{2k}{\delta}\right).
    \end{equation}
    Let $L := \log(2k/\delta)$. The inequality becomes $t^2 - \frac{2}{3}RL t - 2\sigma^2 L \ge 0$. Solving for $t$ yields
    \begin{equation}
        t \;\ge\; \frac{RL}{3} + \sqrt{\left(\frac{RL}{3}\right)^2 + 2\sigma^2 L} \;\;\overset{(\ast\ast)}{\le}\;\; \frac{2}{3}RL + \sigma\sqrt{2L},
    \end{equation}
    where $(\ast\ast)$ uses $\sqrt{a+b} \le \sqrt{a} + \sqrt{b}$ for $a, b \geq 0$ (setting $t$ to this upper bound suffices). Finally, we can plug in $R = b_{\diag} \nu(\mathcal{U})$ (\cref{eq:bernstein_R}) and $\sigma^2 \leq \sigma_{\diag}^2\nu(\mathcal{U})$ (\cref{eq:bernstein_sigma}), and absorbing absolute constants into $C$, we obtain
    \begin{equation}
        t \;\ge\; C \left\{ \sqrt{\sigma_{\diag}^2 \nu(\mathcal{U}) \log\left(2k/\delta\right)} + b_{\diag} \nu(\mathcal{U}) \log\left(2k/\delta\right) \right\},
    \end{equation}
    which completes the proof.
\end{proof}

\begin{restatable}{lemma}{TranspositionCost}\label{lem:center-cross-mismatch}
Let $\mH\in\fctcl(\data)$ with $\vh_i=\act((\onb\va_i)^\top\cdot)$ and $\remn_i:=\va_i-\fixednu_i$. Then,
\begin{equation}
    \Delta_{(ij)}^\rcout(\mH) \;=\; \underbrace{\norm{\fixednu_{j}-\fixednu_{i}}_{\gramnu_i^{-1}+\gramnu_j^{-1}}^2}_{\textup{\emph{(i)} center term}}+\underbrace{2\langle \fixednu_{j}-\fixednu_{i},\gramnu_i^{-1}\remn_{j}-\gramnu_j^{-1}\remn_{i}\rangle}_{\textup{\emph{(ii)} cross term}}+\underbrace{\norm{\remn_j}_{\gramnu_i^{-1}-\gramnu_j^{-1}}^2 + \norm{\remn_i}_{\gramnu_j^{-1}-\gramnu_i^{-1}}^2}_{\textup{\emph{(iii)} metric mismatch term}}. \label{eq:center-cross-mismatch}
\end{equation}
\end{restatable}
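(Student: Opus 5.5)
The plan is to compute both realization costs explicitly via \cref{lem:realization-cost-subspace-coefficients} and expand. We work under the standing assumption of \cref{subsec:sensitivity_neurons} that all $\diagnu_i$ have full rank, so every $\gramnu_i$ is invertible, $\gramnu_i^\dagger=\gramnu_i^{-1}$, and every function of the form $\act((\onb\va)^\top\cdot)$ lies in every $\fctcl_i(\data)$.

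First, the global cost decomposes row-wise. The Frobenius norm squared is a sum of squared row norms, and the rows $\w_i$ of $\W$ can be chosen independently. Hence $\norm{\mH}_{\fctcl(\data)}^2=\sum_{i}\norm{\vh_i}_{\fctcl_i(\data)}^2$. By \cref{lem:realization-cost-subspace-coefficients}, each term equals $\norm{\va_i-\fixednu_i}_{\gramnu_i^{-1}}^2=\norm{\remn_i}_{\gramnu_i^{-1}}^2$.

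Next we apply the action with $\pi=(ij)$ and $\tau=\id$. By \cref{eq:perm-action}, the permuted layer has component $\vh_j$ in slot $i$, component $\vh_i$ in slot $j$, and all other components unchanged. The function $\vh_j=\act((\onb\va_j)^\top\cdot)$ must now be realized by neuron $i$, at cost $\norm{\va_j-\fixednu_i}_{\gramnu_i^{-1}}^2$; symmetrically, $\vh_i$ costs $\norm{\va_i-\fixednu_j}_{\gramnu_j^{-1}}^2$ in slot $j$. All unaffected slots cancel in the difference, giving
\begin{equation*}
\Delta_{(ij)}^\rcout(\mH)=\norm{\va_j-\fixednu_i}_{\gramnu_i^{-1}}^2+\norm{\va_i-\fixednu_j}_{\gramnu_j^{-1}}^2-\norm{\remn_i}_{\gramnu_i^{-1}}^2-\norm{\remn_j}_{\gramnu_j^{-1}}^2.
\end{equation*}

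Finally, we substitute $\va_j-\fixednu_i=(\fixednu_j-\fixednu_i)+\remn_j$ and $\va_i-\fixednu_j=-(\fixednu_j-\fixednu_i)+\remn_i$, then expand the quadratic forms.
\begin{itemize}
\item The pure center parts give $\norm{\fixednu_j-\fixednu_i}_{\gramnu_i^{-1}}^2+\norm{\fixednu_j-\fixednu_i}_{\gramnu_j^{-1}}^2$, which is term (i).
\item The cross parts give $2\langle\fixednu_j-\fixednu_i,\gramnu_i^{-1}\remn_j\rangle-2\langle\fixednu_j-\fixednu_i,\gramnu_j^{-1}\remn_i\rangle$, using the symmetry of $\gramnu_i^{-1}$. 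This is term (ii).
\item The remaining quadratic parts are $\norm{\remn_j}_{\gramnu_i^{-1}}^2+\norm{\remn_i}_{\gramnu_j^{-1}}^2-\norm{\remn_i}_{\gramnu_i^{-1}}^2-\norm{\remn_j}_{\gramnu_j^{-1}}^2$. Grouping by $\remn_j$ and $\remn_i$ and using that $\norm{\cdot}_{\mA}^2$ is linear in $\mA$ yields term (iii). Since $\gramnu_i^{-1}-\gramnu_j^{-1}$ may be indefinite, this "norm" is understood as a quadratic form.
\end{itemize}

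There is no real obstacle here. The only points needing care are the justification that the layer cost splits row-wise, and that full rank of the $\diagnu_i$ makes every swapped function realizable with the pseudoinverse equal to the inverse.
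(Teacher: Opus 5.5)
Your proposal is correct and follows essentially the same route as the paper: reduce $\Delta_{(ij)}^{\mathrm{out}}$ to the four Mahalanobis terms of the two affected slots, then expand the quadratic forms and regroup. You also make explicit two points the paper leaves implicit, namely why the Frobenius realization cost splits row-wise (the paper only says ``only the $i$- and $j$-addends change'') and why term (iii) must be read as a possibly indefinite quadratic form rather than a genuine norm.
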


\begin{proof}
Fix $i\neq j$ and write $\fixednu_{ij} := \fixednu_j-\fixednu_i$. For a transposition $(ij)$, only the $i$- and $j$-addends change, hence
\begin{align}
\Delta_{(ij)}^\rcout(\mH)
\;&=\;
\norm{\fixednu_j-\fixednu_i+\remn_j}_{\gramnu_i^{-1}}^2
+
\norm{\fixednu_i-\fixednu_j+\remn_i}_{\gramnu_j^{-1}}^2
-\norm{\remn_i}_{\gramnu_i^{-1}}^2
-\norm{\remn_j}_{\gramnu_j^{-1}}^2 \notag\\
&=\;
\norm{\fixednu_{ij}+\remn_j}_{\gramnu_i^{-1}}^2
+
\norm{-\fixednu_{ij}+\remn_i}_{\gramnu_j^{-1}}^2
-\norm{\remn_i}_{\gramnu_i^{-1}}^2
-\norm{\remn_j}_{\gramnu_j^{-1}}^2.
\label{eq:swap-start}
\end{align}
Using $\norm{\vx}_{\mA}^2=\vx^\top \mA \vx$ and expanding the quadratic forms in \cref{eq:swap-start} yields
\begin{align}
\norm{\fixednu_{ij}+\remn_j}_{\gramnu_i^{-1}}^2
\: &= \;
\fixednu_{ij}^\top \gramnu_i^{-1}\fixednu_{ij}
+2\,\fixednu_{ij}^\top \gramnu_i^{-1}\remn_j
+\remn_j^\top \gramnu_i^{-1}\remn_j, \label{eq:expand-i}\\
\norm{-\fixednu_{ij}+\remn_i}_{\gramnu_j^{-1}}^2
\: &= \;
\fixednu_{ij}^\top \gramnu_j^{-1}\fixednu_{ij}
-2\,\fixednu_{ij}^\top \gramnu_j^{-1}\remn_i
+\remn_i^\top \gramnu_j^{-1}\remn_i. \label{eq:expand-j}
\end{align}
Substituting \cref{eq:expand-i} and \cref{eq:expand-j} into \cref{eq:swap-start} and regrouping gives
\begin{align}
\Delta_{(ij)}^\rcout(\mH)
&=
\fixednu_{ij}^\top(\gramnu_i^{-1}+\gramnu_j^{-1})\fixednu_{ij}
+2\,\fixednu_{ij}^\top(\gramnu_i^{-1}\remn_j-\gramnu_j^{-1}\remn_i) \notag\\
&\quad+
\remn_j^\top(\gramnu_i^{-1}-\gramnu_j^{-1})\remn_j
+\remn_i^\top(\gramnu_j^{-1}-\gramnu_i^{-1})\remn_i,
\end{align}
which is precisely \cref{eq:swap-decomposition} / \cref{eq:center-cross-mismatch}.
\end{proof}

\appsubsection{Proofs from \cref{subsec:sensitivity_neurons} (Neuron Reassignments)}
\label{apd:neuron_assign}

\GlobalOutputSensitivity*

\begin{proof}
First, note that by definition,
\begin{equation}
\label{eq:global-output-sensitivity-start}
    \Delta_\pi^{\rcout}(\layer,\data)
    \;=\;
    \sum_{i=1}^m
    \left(
        \left\|
            (\diff_{\pi}^{\rcout})_{i,:}+(\mP_\pi\mR)_{i,:}
        \right\|_{\gramnu_i^{-1}}^2
        -
        \|\mR_{i,:}\|_{\gramnu_i^{-1}}^2
    \right).
\end{equation}
Now if $\pi(i) = i$, then
$(\diff_{\pi}^{\rcout})_{i,:}=0$ and
$(\mP_\pi\mR)_{i,:}=\mR_{i,:}$, so the $i$-th summand in
\cref{eq:global-output-sensitivity-start} vanishes. Since
$\mM_\pi$ is the (diagonal) projection onto the rows moved by $\pi$, we may write
\begin{equation}
\label{eq:global-output-sensitivity-masked}
    \Delta_\pi^{\rcout}(\layer,\data)
    \;=\;
    \sum_{i=1}^m
    \left(
        \left\|
            \bigl(\mM_\pi(\diff_\pi^{\rcout}+\mP_\pi\mR)\bigr)_{i,:}
        \right\|_{\gramnu_i^{-1}}^2
        -
        \left\|
            (\mM_\pi\mR)_{i,:}
        \right\|_{\gramnu_i^{-1}}^2
    \right).
\end{equation}
Since $\varepsilon < \mu_{\diag}$, the uniform bound
$\|\gramnu_i-\mu_{\diag}\mI_k\|_{\op}\leq\varepsilon$ in $i$ yields
$(\mu_{\diag}\!-\!\varepsilon)\mI_k \preceq\gramnu_i\preceq(\mu_{\diag}\!+\!\varepsilon)\mI_k$. Hence $\gramnu_i$ is invertible, and
\begin{equation}
    \left\|
        \gramnu_i^{-1}-\mu_{\diag}^{-1}\mI_k
    \right\|_{\op}
    \;\le\;
    \frac{\varepsilon}{\mu_{\diag}(\mu_{\diag}-\varepsilon)}
    \;=:\;
    \beta .
\end{equation}
Thus, for every \(\va\in\R^k\),
\begin{equation}
\label{eq:inverse-metric-perturbation}
    \left|
        \|\va\|_{\gramnu_i^{-1}}^2-\mu_{\diag}^{-1}\|\va\|_2^2
    \right|
    \;\le\;
    \beta \|\va\|_2^2.
\end{equation}
Applying \cref{eq:inverse-metric-perturbation} to each term in
\cref{eq:global-output-sensitivity-masked} yields
\begin{equation}
\label{eq:global-output-isotropic-split}
    \Delta_\pi^{\rcout}(\layer,\data)
    \;=\;
    \mu_{\diag}^{-1}
    \left(
        \|\mM_\pi(\diff_\pi^{\rcout}+\mP_\pi\mR)\|_F^2
        -
        \|\mM_\pi\mR\|_F^2
    \right)
    +
    E_\pi,
\end{equation}
with
\begin{equation}
\label{eq:global-output-error-prelim}
    |E_\pi|
    \;\le\;
    \beta
    \left(
        \|\mM_\pi(\diff_\pi^{\rcout}+\mP_\pi\mR)\|_F^2
        +
        \|\mM_\pi\mR\|_F^2
    \right).
\end{equation}
Because \(\mM_\pi\diff_\pi^{\rcout}=\diff_\pi^{\rcout}\) and 
\(\mM_\pi\mP_\pi=\mP_\pi\mM_\pi\) implies $\|\mM_\pi\mP_\pi\mR\|_F=\|\mP_\pi\mM_\pi\mR\|_F=\|\mM_\pi\mR\|_F$,
we can apply Cauchy-Schwarz to bound \cref{eq:global-output-error-prelim} as
\begin{equation}
\label{eq:global-output-error-final}
    |E_\pi|
    \;\le\;
    \beta
    \Bigl(
        \|\diff_\pi^{\rcout}\|_F^2
        +
        2\|\diff_\pi^{\rcout}\|_F\|\mM_\pi\mR\|_F
        +
        2\|\mM_\pi\mR\|_F^2
    \Bigr).
\end{equation}

It remains to simplify the isotropic term in
\cref{eq:global-output-isotropic-split}, for which one can obtain
\begin{equation}
\label{eq:global-output-isotropic-expand}
    \|\mM_\pi(\diff_\pi^{\rcout}+\mP_\pi\mR)\|_F^2
    -
    \|\mM_\pi\mR\|_F^2
    \;=\;
    \|\diff_\pi^{\rcout}\|_F^2
    +
    2\langle \diff_\pi^{\rcout},\mM_\pi\mP_\pi\mR\rangle_F
    =
    \|\diff_\pi^{\rcout}\|_F^2
    +
    2\langle \diff_\pi^{\rcout},\mP_\pi\mR\rangle_F ,
\end{equation}
where the last equality holds since $\mM_\pi^\top = \mM_\pi$ and $\mM_\pi\diff_\pi^{\rcout}=\diff_\pi^{\rcout}$. Define
\begin{equation}
    \mQ_\pi
    \;:=\;
    \mI_m
    -
    \mathrm{ord}(\pi)^{-1} \!\!\!
    \sum_{t=0}^{\mathrm{ord}(\pi)-1} \!\!\! \mP_\pi^t.
\end{equation}
Since $\mP_\pi^{\mathrm{ord}(\pi)}=\mI_m$,
\begin{equation}
    (\mI_m-\mQ_\pi)\mP_\pi
    \;=\;
    \mP_\pi(\mI_m-\mQ_\pi)
    \;=\;
    \mI_m-\mQ_\pi .
\end{equation}
Thus, $\mI_m-\mQ_\pi$ is the orthogonal projection onto the fixed subspace
of $\mP_\pi$, and $\mQ_\pi$ is the orthogonal projection onto its
orthogonal complement. Moreover,
\begin{equation}
    (\mI_m-\mQ_\pi)\diff_\pi^{\rcout}
    \;=\;
    (\mI_m-\mQ_\pi)(\mP_\pi-\mI_m)\fixed\onb
    \;=\;
    0.
\end{equation}
Hence,
$\mQ_\pi\diff_\pi^{\rcout} = \diff_\pi^{\rcout}$ and, using
$\mP_\pi(\mI_m-\mQ_\pi)=\mI_m-\mQ_\pi$,
\begin{equation}
    \left\langle
        \diff_\pi^{\rcout},
        \mP_\pi(\mI_m-\mQ_\pi)\mR
    \right\rangle_F
    \;=\;
    \left\langle
        \diff_\pi^{\rcout},
        (\mI_m-\mQ_\pi)\mR
    \right\rangle_F
    \;=\;
    0 .
\end{equation}
Therefore,
\begin{equation}
    \left\langle
        \diff_\pi^{\rcout},
        \mP_\pi\mR
    \right\rangle_F
    \;=\;
    \left\langle
        \diff_\pi^{\rcout},
        \mP_\pi\mQ_\pi\mR
    \right\rangle_F .
\end{equation}
By Cauchy--Schwarz and the orthogonality of $\mP_\pi$,
\begin{equation}
\label{eq:global-output-cross-bound}
    \left|
        \left\langle
            \diff_\pi^{\rcout},
            \mP_\pi\mR
        \right\rangle_F
    \right|
    \;\le\;
    \|\diff_\pi^{\rcout}\|_F
    \|\mQ_\pi\mR\|_F .
\end{equation}
Combining \cref{eq:global-output-isotropic-split},
\cref{eq:global-output-error-final},
\cref{eq:global-output-isotropic-expand}, and
\cref{eq:global-output-cross-bound} gives
\begin{equation}
\label{eq:additive-bound}
\begin{aligned}
    \left|
        \Delta_\pi^{\rcout}(\layer,\data)
        -
        \mu_{\diag}^{-1}
        \|\diff_\pi^{\rcout}\|_F^2
    \right|
    \;\le\;&
    \frac{2}{\mu_{\diag}}\,
    \|\diff_\pi^{\rcout}\|_F
    \|\mQ_\pi\mR\|_F \\
    &+
    \frac{\varepsilon}{\mu_{\diag}(\mu_{\diag}-\varepsilon)}
    \Bigl(
        \|\diff_\pi^{\rcout}\|_F^2
        +
        2\|\diff_\pi^{\rcout}\|_F
        \|\mM_\pi\mR\|_F
        +
        2\|\mM_\pi\mR\|_F^2
    \Bigr).
\end{aligned}
\end{equation}
For the final statement, note that
$\mQ_\pi=\mQ_\pi\mM_\pi$, and as $\mQ_\pi$ is an orthogonal projection,
\begin{equation}
    \|\mQ_\pi\mR\|_F
    \;=\;
    \|\mQ_\pi\mM_\pi\mR\|_F
    \;\le\;
    \|\mM_\pi\mR\|_F .
\end{equation}
Dividing \cref{eq:additive-bound} by
$\mu_{\diag}^{-1}
\|\diff_\pi^{\rcout}\|_F^2$
therefore yields
\begin{equation}
    \left|
        \frac{
            \mu_{\diag}\Delta_\pi^{\rcout}(\layer,\data)
        }{
            \|\diff_\pi^{\rcout}\|_F^2
        }
        -1
    \right|
    \;\le\;
    2\rho_\pi^{\rcout}
    +
    \frac{\varepsilon}{\mu_{\diag}-\varepsilon}
    \bigl(1+2\rho_\pi^{\rcout}+2(\rho_\pi^{\rcout})^2\bigr).
\end{equation}
If $\rho_\pi^{\rcout}\le 1$ and $\varepsilon\le \mu_{\diag}/2$, then
\begin{equation}
    \frac{\varepsilon}{\mu_{\diag}-\varepsilon}
    \;\le\;
    \frac{2\varepsilon}{\mu_{\diag}},
    \qquad
    1+2\rho_\pi^{\rcout}+2(\rho_\pi^{\rcout})^2
    \;\le\;
    5.
\end{equation}
Hence, the r.h.s.\ is
$\mathcal{O}(\rho_\pi^{\rcout}+\varepsilon\mu_{\diag}^{-1})$, and therefore
\begin{equation}
    \Delta_\pi^{\rcout}(\layer,\data)
    \;=\;
    \Bigl(
        1\pm\mathcal{O}(\rho_\pi^{\rcout}+\varepsilon\mu_{\diag}^{-1})
    \Bigr) \cdot \mu_{\diag}^{-1}
    \|\diff_\pi^{\rcout}\|_F^2. \qedhere
\end{equation}
\end{proof}

\begin{corollary}[Minimum output reassignment cost]
In the setup of \cref{thm:global-output-sensitivity}, suppose further that $\rho_{(ij)}^{\rcout} \leq \rho^{\rcout} \leq 1$ for some $\rho^{\rcout}$, uniformly in $i \neq j \in [m]$. Then,
\begin{equation}
    \min_{\mathrm{id}\neq\pi\in S_m}
    \Delta_\pi^{\rcout}(\layer,\data)
    \;=\;
    \Bigl(
        1\pm\mathcal{O}(\rho^{\rcout}+\varepsilon\mu_{\diag}^{-1})
    \Bigr) \cdot \mu_{\diag}^{-1}
    \gap_{\rcout}^2 .
\end{equation}
\end{corollary}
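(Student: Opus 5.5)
The plan is to prove the two inequalities separately. The upper bound comes from a single well-chosen transposition. The lower bound applies the explicit (non-asymptotic) estimate derived in the proof of \cref{thm:global-output-sensitivity} to an arbitrary $\pi\neq\id$. Throughout I assume $\gap_\rcout>0$; otherwise $\diff_\pi^\rcout=0$ for some $\pi\neq\id$ and the theorem does not apply.

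\emph{Upper bound.} Choose $i\neq j$ attaining $\gap_\rcout=\|\diff_{(ij)}^\rcout\|_F=\sqrt2\|\fixednu_j-\fixednu_i\|_2$. By hypothesis $\rho_{(ij)}^\rcout\le\rho^\rcout\le 1$, so \cref{thm:global-output-sensitivity} applies and gives
\[
\min_{\pi\neq\id}\Delta_\pi^\rcout\le\Delta_{(ij)}^\rcout=\bigl(1\pm\mathcal O(\rho^\rcout+\varepsilon\mu_{\diag}^{-1})\bigr)\mu_{\diag}^{-1}\gap_\rcout^2 .
\]

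\emph{Lower bound for a general $\pi\neq\id$.} Here the hypothesis controls only transpositions, so I first bound $\rho_\pi^\rcout$ in terms of $\rho^\rcout$.

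\begin{enumerate}
\item \textbf{Per-row bound on $\mR$.} Fix a moved index $i$, so $\pi(i)\neq i$. Row $i$ of $\diff_\pi^\rcout$ is $\fixednu_{\pi(i)}-\fixednu_i$. Apply $\rho_{(i\,\pi(i))}^\rcout\le\rho^\rcout$ and drop the $\pi(i)$-row of $\mR$ on the left:
\[
\|\mR_{i,:}\|_2^2\le(\rho^\rcout)^2\,\|\diff^\rcout_{(i\,\pi(i))}\|_F^2=2(\rho^\rcout)^2\,\|(\diff_\pi^\rcout)_{i,:}\|_2^2 .
\]
\item \textbf{Summing over moved rows.} This gives $\|\mM_\pi\mR\|_F^2\le 2(\rho^\rcout)^2\|\diff_\pi^\rcout\|_F^2$, i.e.\ $\rho_\pi^\rcout\le\sqrt2\,\rho^\rcout\le\sqrt2$.
\item \textbf{Applying the explicit estimate.} The proof of \cref{thm:global-output-sensitivity} yields
\[
\Bigl|\mu_{\diag}\Delta_\pi^\rcout/\|\diff_\pi^\rcout\|_F^2-1\Bigr|\le 2\rho_\pi^\rcout+\tfrac{\varepsilon}{\mu_{\diag}-\varepsilon}\bigl(1+2\rho_\pi^\rcout+2(\rho_\pi^\rcout)^2\bigr).
\]
This bound holds for any value of $\rho_\pi^\rcout$; the restriction $\rho\le1$ there only served to bound the last factor by $5$. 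With $\rho_\pi^\rcout\le\sqrt2$ the factor is bounded by an absolute constant, so
\[
\Delta_\pi^\rcout\ge\bigl(1-\mathcal O(\rho^\rcout+\varepsilon\mu_{\diag}^{-1})\bigr)\mu_{\diag}^{-1}\|\diff_\pi^\rcout\|_F^2 .
\]
\item \textbf{Comparing with $\gap_\rcout$.} Since $\pi\neq\id$, at least two rows are moved. Each moved row satisfies $\|\fixednu_{\pi(i)}-\fixednu_i\|_2\ge\gap_\rcout/\sqrt2$. Hence $\|\diff_\pi^\rcout\|_F^2\ge\gap_\rcout^2$, which also justifies the identity $\gap_\rcout=\min_{\pi\neq\id}\|\diff_\pi^\rcout\|_F$ stated in the text.
\end{enumerate}

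If the $\mathcal O$-term is $\ge1$, the lower bound is vacuous and holds trivially. Otherwise, steps 3 and 4 give $\Delta_\pi^\rcout\ge(1-\mathcal O(\cdot))\mu_{\diag}^{-1}\gap_\rcout^2$ for every $\pi\neq\id$. Combined with the upper bound, this yields the claimed two-sided estimate.

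The main obstacle is the transfer from transposition-level control of $\mR$ to control of $\rho_\pi^\rcout$ for arbitrary $\pi$. The per-row trick in step 1 handles it: it pairs each moved row $i$ with its own image $\pi(i)$.
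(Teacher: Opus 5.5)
Your proof is correct and has the same skeleton as the paper's. The upper bound comes from a gap-attaining transposition. The lower bound transfers the transposition-level control of $\mR$ to an arbitrary $\pi\neq\id$ and then uses $\|\diff_\pi^{\rcout}\|_F\ge\gap_{\rcout}$; your ``at least two moved rows'' count is equivalent to the paper's cycle-length count.

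The difference is in the transfer step. The paper sums the \emph{full} pair inequality $\|\mR_{i,:}\|_2^2+\|\mR_{\pi(i),:}\|_2^2\le 2(\rho^{\rcout})^2\|\fixednu_{\pi(i)}-\fixednu_i\|_2^2$ over all moved $i$, organized by cycles. Because $\pi$ permutes the moved indices, every row of $\mM_\pi\mR$ is counted exactly twice. The left side is therefore $2\|\mM_\pi\mR\|_F^2$, which gives $\rho_\pi^{\rcout}\le\rho^{\rcout}\le 1$, so \cref{thm:global-output-sensitivity} applies as a black box.

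By discarding the $\pi(i)$-row you lose a factor $\sqrt2$. Since $\sqrt2\,\rho^{\rcout}$ can exceed $1$, you have to reopen the theorem's proof and use its explicit inequality, which does hold for every value of $\rho_\pi^{\rcout}$. That is legitimate, but it is weaker and less modular, and keeping the second row costs nothing.

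One side remark is wrong. When the $\mathcal{O}$-term $c$ is at least $1$, the lower bound is not ``vacuous'', because $\Delta_\pi^{\rcout}$ can be negative. Once $1-c<0$ and $\|\diff_\pi^{\rcout}\|_F\gg\gap_{\rcout}$, the bound $\Delta_\pi^{\rcout}\ge(1-c)\mu_{\diag}^{-1}\|\diff_\pi^{\rcout}\|_F^2$ no longer implies $\Delta_\pi^{\rcout}\ge(1-c)\mu_{\diag}^{-1}\gap_{\rcout}^2$.

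Here is a concrete counterexample:
\begin{itemize}
\item take $\gramnu_i=\mu_{\diag}\mI_k$;
\item let one pair of centers be at distance $D$, with $\mR_{a,:}=-\mR_{b,:}=\fixednu_b-\fixednu_a$;
\item let another pair be at distance $g\ll D$, with zero remainders;
\item place all other centers far away.
\end{itemize}
Every transposition ratio is then at most $1$, yet $\Delta_{(ab)}^{\rcout}=-2\mu_{\diag}^{-1}D^2$. This lies below $(1-C)\mu_{\diag}^{-1}\cdot 2g^2$ for any fixed $C$.

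So the lower half of the corollary holds only once $\rho^{\rcout}+\varepsilon\mu_{\diag}^{-1}$ is below an absolute threshold, so that the $\mathcal{O}$-term is below $1$; this is the asymptotic reading of the statement. The paper's proof passes over this point silently too. You should restrict to that regime explicitly rather than assert that the bound holds trivially.
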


\begin{proof}
Fix $\mathrm{id} \neq \pi \in S_m$. Then,
\begin{equation}
    \|\diff_\pi^{\rcout}\|_F^2
    \;=\;
    \|(\mP_\pi-\mI_m)\fixed\onb\|_F^2
    \;=\;
    \sum_{i=1}^m
    \|\fixednu_{\pi(i)}-\fixednu_{i}\|_2^2 .
\end{equation}
Write $\pi$ as a product of disjoint cycles, and consider one nontrivial cycle
$(i_1 \dots i_\ell)$ of length $\ell \geq 2$, and use the convention $i_{\ell+1}:=i_1$. Its contribution to the above sum is
\begin{equation}
    \sum_{r=1}^{\ell}
    \|\fixednu_{i_{r+1}}-\fixednu_{i_r}\|_2^2 \;\geq\; \ell
    \min_{i\neq j}
    \|\fixednu_{j}-\fixednu_{i}\|_2^2
    \;\ge\;
    2
    \min_{i\neq j}
    \|\fixednu_{j}-\fixednu_{i}\|_2^2.
\end{equation}
On the other hand, for a transposition $(ij)$ one has $\|\diff_{(ij)}^{\rcout}\|_F^2\;=\;2\|\fixednu_{j}-\fixednu_i\|_2^2$. Therefore,
\begin{equation}
\label{eq:gamma-out-transposition-cor}
    \min_{\mathrm{id}\neq\pi\in S_m}
    \|\diff_\pi^{\rcout}\|_F
    \;=\;
    \min_{i\neq j}
    \|\diff_{(ij)}^{\rcout}\|_F
    \;=\;
    \gap_{\rcout} .
\end{equation}

Next, for every $i\neq j$,
\begin{equation}
    \|\mR_{i,:}\|_2^2+\|\mR_{j,:}\|_2^2
    \;=\;
    \|\mM_{(ij)}\mR\|_F^2
    \;=\;
    (\rho_{(ij)}^{\rcout})^2
    \|\diff_{(ij)}^{\rcout}\|_F^2
    \;\leq\;
    2(\rho^{\rcout})^2
    \|\fixednu_j-\fixednu_i\|_F^2.
\end{equation}
Now let $\pi \neq \mathrm{id}$ and decompose its moved indices into $s$ nontrivial disjoint cycles
$(i_1^{(s)} \dots i_{\ell_s}^{(s)})$. Then,
\begin{equation}
\begin{aligned}
    2\|\mM_\pi\mR\|_F^2
    \;&=\;
    \sum_s \sum_{r=1}^{\ell_s}
    \Bigl(
        \|\mR_{i_r^{(s)},:}\|_2^2+\|\mR_{i_{r+1}^{(s)},:}\|_2^2
    \Bigr)
    \;\leq\;
    2(\rho^{\rcout})^2
    \sum_s \sum_{r=1}^{\ell_s}
    \|\fixednu_{i_{r+1}^{(s)}}-\fixednu_{i_r^{(s)}}\|_2^2
    \;=\;
    2(\rho^{\rcout})^2
    \|\diff_\pi^{\rcout}\|_F^2 .
\end{aligned}
\end{equation}
Hence, $\rho_\pi^{\rcout} = \|\mM_\pi\mR\|_F/\|\diff_\pi^{\rcout}\|_F\leq\rho^{\rcout}$ for all $\pi \neq \mathrm{id}$. By Theorem~\ref{thm:global-output-sensitivity}, it follows that
\begin{equation}
    \Delta_\pi^{\rcout}(\layer,\data)
    \;=\;
    \Bigl(
        1\pm\mathcal{O}(\rho^{\rcout}+\varepsilon\mu_{\diag}^{-1})
    \Bigr) \cdot \mu_{\diag}^{-1}
    \|\diff_\pi^{\rcout}\|_F^2
\end{equation}
uniformly in $\pi \neq \mathrm{id}$. Using
\cref{eq:gamma-out-transposition-cor} yields
\begin{equation}
    \min_{\mathrm{id}\neq\pi\in S_m}
    \Delta_\pi^{\rcout}(\layer,\data)
    \;=\;
    \Bigl(
        1\pm\mathcal{O}(\rho^{\rcout}+\varepsilon\mu_{\diag}^{-1})
    \Bigr) \cdot \mu_{\diag}^{-1}
    \gap_{\rcout}^2 . \qedhere
\end{equation}
\end{proof}

\MinCenterSeparation*

\begin{proof}
Recall that $\fixedn_i\in\R^d$ is the $i$-th row of $\fixed$ and $\fixednu_i=\onb^\top\fixedn_i\in\R^k$, further $\gap_{\rcout}= \sqrt{2}\min_{i\neq j}\|\fixednu_i-\fixednu_j\|_2$ and $\fixednu_{ij} =\fixednu_i-\fixednu_j=\onb^\top(\fixedn_i-\fixedn_j)\in\R^k$. Moreover, define $Y := \sigma_{\fixed} Y_0$, which has a Lebesgue density bounded by $M := \sigma_{\fixed}^{-1} M_0$.

\paragraph{Step \circled{1}: Random mask concentration.}
Fix a pair $(i,j)$. For each $\ell\in[d]$, let
\begin{equation}
    \xi_\ell \;:=\; \indfct\{(B_{i\ell},B_{j\ell})\neq(0,0)\}\in\{0,1\},
\qquad
    \mP\;:=\;\onb^\top \Diag(\xi) \onb\in\R^{k\times k}.
\end{equation}
Then, $\xi_\ell$ are i.i.d.\ Bernoulli variables with
\begin{equation}
    \E[\xi_\ell]
    \;=\;
    \mathbb{P}(\xi_\ell=1)
    \;=\;
    1-(1-p)^2
    \;=\;
    2p-p^2
    \;=:\;
    q.
\end{equation}
Let $\vu_\ell:=\onb^\top \ve_\ell\in\R^k$, so that $\|\vu_\ell\|_2^2=(\onb\onb^\top)_{\ell\ell}\le \nu(\subsp)$. Then,
\begin{equation}
    \mP-q\mI_k
    \;=\;
    \sum_{\ell=1}^d (\xi_\ell-q)\,\vu_\ell \vu_\ell^\top.
\end{equation}
Set $\mA_\ell:=(\xi_\ell-q)\,\vu_\ell \vu_\ell^\top$. Then, $\E[\mA_\ell]=0$ and
\begin{equation}
    \|\mA_\ell\|_{\op}
    \;=\;
    |\xi_\ell-q|\,\|\vu_\ell\|_2^2
    \;\le\;
    \nu(\subsp)
    \;=:\;
    R.
\end{equation}
Further,
\begin{equation}
    \mA_\ell^2
    \;=\;
    (\xi_\ell-q)^2(\vu_\ell \vu_\ell^\top)^2
    \;=\;
    (\xi_\ell-q)^2\|\vu_\ell\|_2^2\,\vu_\ell \vu_\ell^\top,
\end{equation}
so, since $\E[(\xi_\ell-q)^2]=q(1-q)\le q$,
\begin{equation}
    \E[\mA_\ell^2]
    \;\preceq\;
    q\,\|\vu_\ell\|_2^2\,\vu_\ell \vu_\ell^\top
    \;\preceq\;
    q\,\nu(\subsp)\,\vu_\ell \vu_\ell^\top.
\end{equation}
Because $\sum_{\ell=1}^d \vu_\ell \vu_\ell^\top=\onb^\top\onb=\mI_k$, it follows that
\begin{equation}
    \sum_{\ell=1}^d \E[\mA_\ell^2]
    \;\preceq\;
    q\,\nu(\subsp)\,\mI_k,
\qquad
    \text{hence}\qquad
    \sigma^2
    \;:=\;
    \Big\|\sum_{\ell=1}^d \E[\mA_\ell^2]\Big\|_{\op}
    \;\le\;
    q\,\nu(\subsp).
\end{equation}

By the matrix Bernstein inequality (\citet{tropp2015introduction} and \cref{lem:matrix_bernstein}),
there exists an absolute constant $C_0>0$ such that for all $\eta\in(0,1)$, with probability at least $1-\eta$,
\begin{equation}\label{eq:P-bernstein-rig}
    \|\mP-q\mI_k\|_{\op}
    \;\le\;
    C_0\Big(\sqrt{\sigma^2\log(2k/\eta)}+R\log(2k/\eta)\Big)
    \;\le\;
    C_0\Big(\sqrt{q\,\nu(\subsp)\log(2k/\eta)}+\nu(\subsp)\log(2k/\eta)\Big).
\end{equation}
Now set $\eta:=\delta/(2m^2)$, let
\begin{equation}
    \Lambda
    \;:=\;
    \log(2k/\eta)
    \;=\;
    \log(4km^2/\delta),
\end{equation}
and define the event
\begin{equation}
    \mathcal G
    \;:=\;
    \Big\{\|\mP-q\mI_k\|_{\op}\le \frac{q}{2}\Big\}.
\end{equation}
By \cref{eq:P-bernstein-rig} and the assumption
$q \ge C\big(\sqrt{q\,\nu(\subsp)\Lambda}+\nu(\subsp)\Lambda\big)$, taken with $C:=2C_0$, we have
\begin{equation}
    \mathbb P(\mathcal G)
    \;\ge\;
    1-\frac{\delta}{2m^2}.
\end{equation}
On $\mathcal G$, we have $\mP\succeq (q/2)\mI_k$, and therefore
\begin{equation}\label{eq:P-det-rig}
    \det(\mP)\;\ge\; (q/2)^k.
\end{equation}

\paragraph{Step \circled{2}: Uniform conditional density bound for $\fixednu_{ij}$.}
Fix $\xi^\star\in\{0,1\}^d$ such that $\mathbb P(\xi=\xi^\star)>0$ and
\begin{equation}
    \Big\|\onb^\top \Diag(\xi^\star)\onb-q\mI_k\Big\|_{\op}
    \;\le\;
    \frac{q}{2}.
\end{equation}
Let
\begin{equation}
    S
    \;:=\;
    \{\ell\in[d]\mid \xi^\star_\ell=1\}
    \;=:\;
    \{\ell_1,\dots,\ell_{|S|}\},
\end{equation}
define
\begin{equation}
    \zeta
    \;:=\;
    \big((\fixedn_i-\fixedn_j)_{\ell_1},\dots,(\fixedn_i-\fixedn_j)_{\ell_{|S|}}\big)\in\R^{|S|},
\end{equation}
and let $\onb_S\in\R^{|S|\times k}$ be the submatrix of $\onb$ with rows indexed by $S$. Then,
\begin{equation}
    \mP^\star
    \;:=\;
    \onb^\top \Diag(\xi^\star)\onb
    \;=\;
    \onb_S^\top \onb_S
    \;\succeq\;
    \frac{q}{2}\mI_k.
\end{equation}
In particular, $\mP^\star$ is positive definite and $\mathrm{rank}(\onb_S^\top)=k$. For each $\ell\in[d]$, write $X_\ell
    :=
    (\fixedn_i-\fixedn_j)_\ell
    =
    B_{i\ell}Y_{i\ell}-B_{j\ell}Y_{j\ell}$
and let $\mathcal E_\ell
    :=
    \{\xi_\ell=\xi^\star_\ell\}$.
Then, $\{\xi=\xi^\star\}=\bigcap_{\ell=1}^d \mathcal E_\ell$, and each $\mathcal E_\ell$ is measurable w.r.t.\ $(B_{i\ell},B_{j\ell})$. Since the families $\big(B_{i\ell},B_{j\ell},Y_{i\ell},Y_{j\ell}\big)$ are independent across $\ell \in [d]$, it follows that for arbitrary Borel sets $A_1,\dots,A_{|S|}\subseteq\R$,
\begin{align}
    \mathbb P\!\left(\bigcap_{r=1}^{|S|}\{X_{\ell_r}\in A_r\}\,\middle|\,\xi=\xi^\star\right) 
    &\;=\;
    \frac{
        \mathbb P\!\left(
            \bigcap_{r=1}^{|S|}\{X_{\ell_r}\in A_r\}
            \cap
            \bigcap_{\ell=1}^d \mathcal E_\ell
        \right)
    }{
        \mathbb P(\xi=\xi^\star)
    } \\
    &\;=\;
    \frac{
        \prod_{r=1}^{|S|}\mathbb P\!\left(X_{\ell_r}\in A_r,\mathcal E_{\ell_r}\right)
        \prod_{\ell\notin S}\mathbb P(\mathcal E_\ell)
    }{
        \prod_{r=1}^{|S|}\mathbb P(\mathcal E_{\ell_r})
        \prod_{\ell\notin S}\mathbb P(\mathcal E_\ell)
    } \\
    &\;=\;
    \prod_{r=1}^{|S|}\mathbb P\!\left(X_{\ell_r}\in A_r \,\middle|\, \mathcal E_{\ell_r}\right). \label{eq:conditional-independence-rig}
\end{align}
Thus, the coordinates of $\zeta$ are independent under $\mathbb P(\cdot| \xi=\xi^\star)$. Now fix $\ell\in S$ for which $\xi^\star_\ell=1$ and $\mathcal E_\ell=\{\xi_\ell=1\}$. Let $f_Y$ denote the Lebesgue density of $Y$, with $\|f_Y\|_{L^\infty(\lambda)}\le M$, and let $Y'$ be an independent copy of $Y$. For every Borel set $A\subseteq\R$,
\begin{align}
    \mathbb P(X_\ell\in A \mid \mathcal E_\ell)
    \;=\;
    \frac{p(1-p)}{q}\,\mathbb P(Y\in A)
    \;+\;
    \frac{p(1-p)}{q}\,\mathbb P(-Y\in A)
    \;+\;
    \frac{p^2}{q}\,\mathbb P(Y-Y'\in A). \label{eq:one-coordinate-mixture-rig}
\end{align}
Hence, under $\mathbb P(\cdot|\mathcal E_\ell)$, $X_\ell$ has a Lebesgue density which is a convex combination of the densities of $Y$, $-Y$, and $Y-Y'$. $-Y$ has density $x\mapsto f_{-Y}(x) = f_Y(-x)$ and $Y-Y'$ has density $f_{Y-Y'} = f_Y*f_{-Y}$. Thus,
\begin{equation}
    \|f_{Y-Y'}\|_{L^\infty(\lambda)} \;=\; \|f_Y*f_{-Y}\|_{L^\infty(\lambda)}
    \;\le\;
    \underbrace{\|f_Y\|_{L^\infty(\lambda)}}_{\leq \,M} \underbrace{\|f_{-Y}\|_{L^1(\lambda)}}_{=\, 1}
    \;\le\;
    M.
\end{equation}
Therefore, for every $\ell\in S$, the conditional distribution of $X_\ell$ given $\mathcal E_\ell$ admits a Lebesgue density bounded by $M$. Together with \cref{eq:conditional-independence-rig}, this shows that under $\mathbb P(\cdot| \xi=\xi^\star)$, $\zeta\in\R^{|S|}$ has independent coordinates, each with Lebesgue density bounded by $M$. Now set $\mA:=\onb_S^\top\in\R^{k\times |S|}$, so that $\mA\mA^\top=\mP^\star$. Since $\xi^\star_\ell=0$ implies $B_{i\ell}=B_{j\ell}=0$, we have $X_\ell=0$ a.s.\ under $\mathbb P(\cdot| \xi=\xi^\star)$ for every $\ell\notin S$. Consequently,
\begin{equation}
    \fixednu_{ij}
    \;=\;
    \onb^\top(\fixedn_i-\fixedn_j)
    \;=\;
    \onb_S^\top \zeta
    \;=\;
    \mA\zeta
\qquad
    \text{a.s. under }\;\mathbb P(\cdot| \xi=\xi^\star).
\end{equation}
Let $\mSigma:=(\mA\mA^\top)^{1/2}=(\mP^\star)^{1/2}$ and $\mQ:=\mSigma^{-1}\mA \in \R^{k\times |S|}$. Then, $\mQ\mQ^\top=\mI_k$. Writing $\mPi:=\mQ^\top\mQ$, $\mPi$ is the orthogonal projection onto the $k$-dim.\ subspace $\mathrm{im}(\mQ^\top)\subseteq\R^{|S|}$, and $\mQ\zeta=\mQ\mPi\zeta$. Let $\lambda_{\mathrm{im}(\mQ^\top)}$ denote the $k$-dimensional Lebesgue measure on the subspace $\mathrm{im}(\mQ^\top)$. By \citet[Theorem~1.1]{rudelson2014smallball}, applied under the conditional probability measure $\mathbb P(\cdot| \xi=\xi^\star)$ to $\zeta$ and $\mPi$, the random vector $\mPi\zeta$ admits a density $f_{\mPi\zeta|\xi=\xi^\star}$ w.r.t.\ $\lambda_{\mathrm{im}(\mQ^\top)}$ satisfying
\begin{equation}
    \|f_{\mPi\zeta|\xi=\xi^\star}\|_{L^\infty(\lambda_{\mathrm{im}(\mQ^\top)})}
    \;\le\;
    (C_1M)^k
\end{equation}
for an absolute constant $C_1>0$. Since the restriction $\restr{\mQ}{\mathrm{im}(\mQ^\top)}:\mathrm{im}(\mQ^\top)\to\R^k$ is an isometry, it preserves the $k$-dim.\ Lebesgue measure $\lambda^k$, and thus, $\mQ\zeta$ admits a Lebesgue density $f_{\mQ\zeta|\xi=\xi^\star}$ on $\R^k$ with
\begin{equation}
    \|f_{\mQ\zeta|\xi=\xi^\star}\|_{L^\infty(\lambda^k)}
    \;\le\;
    (C_1M)^k.
\end{equation}
Since $\fixednu_{ij}=\mA\zeta=\mSigma(\mQ\zeta)$ under $\mathbb P(\cdot|\xi=\xi^\star)$, a change of variables for the invertible linear map $\mSigma:\R^k\to\R^k$ shows that $\fixednu_{ij}$ admits a Lebesgue density $f_{\fixednu_{ij}|\xi=\xi^\star}$ on $\R^k$ given by
\begin{equation}
    f_{\fixednu_{ij}|\xi=\xi^\star}(\vx)
    \;=\;
    \frac{f_{\mQ\zeta|\xi=\xi^\star}(\mSigma^{-1}\vx)}{|\!\det \mSigma|},
    \qquad
    \vx\in\R^k.
\end{equation}
Therefore,
\begin{equation}
    \|f_{\fixednu_{ij}|\xi=\xi^\star}\|_{L^\infty(\lambda^k)}
    \;\le\;
    \frac{\|f_{\mQ\zeta|\xi=\xi^\star}\|_{L^\infty(\lambda^k)}}{|\!\det \mSigma|}
    \;\le\;
    \frac{(C_1M)^k}{\sqrt{\det(\mA\mA^\top)}}
    \;=\;
    \frac{(C_1M)^k}{\sqrt{\det(\mP^\star)}}
    \;\le\;
    (C_1M)^k\Big(\frac{2}{q}\Big)^{k/2}. \label{eq:density-diff-rig}
\end{equation}
Consequently, for every $t\ge 0$,
\begin{align}
    \mathbb P\!\left(\|\fixednu_{ij}\|_2\le t \,\middle|\, \xi=\xi^\star\right)
    &\;=\;
    \int_{B_t^k} f_{\fixednu_{ij}|\xi=\xi^\star}\,\mathrm{d}\lambda^k \\
    &\;\le\;
    \lambda^k(B_t^k)\,\|f_{\fixednu_{ij}|\xi=\xi^\star}\|_{L^\infty(\lambda^k)} \\
    &\;\le\;
    V_k t^k (C_1M)^k\Big(\frac{2}{q}\Big)^{k/2}
    \;=\;
    \left(C_2 V_k^{1/k}\frac{Mt}{\sqrt q}\right)^{\!k}, \label{eq:pair-smallball-rig}
\end{align}
where we set $C_2:=\sqrt{2}\,C_1$.

The bound \cref{eq:pair-smallball-rig} is uniform over all $\xi^\star\in\{0,1\}^d$ with $\mathbb P(\xi=\xi^\star)>0$ and
$\|\onb^\top \Diag(\xi^\star)\onb-q\mI_k\|_{\op}\le q/2$. Since the event $\mathcal G$ is measurable w.r.t.\ $\xi$, and $\xi$ takes values in the finite set $\{0,1\}^d$, conditioning on $\mathcal G$ yields
\begin{align}
    \mathbb P\!\left(\|\fixednu_{ij}\|_2\le t \,\middle|\, \mathcal G\right)
    \;=\;
    \sum_{\substack{\xi^\star\in\{0,1\}^d\\ \mathbb P(\xi=\xi^\star\mid \mathcal G)>0}}
    \mathbb P(\xi=\xi^\star\mid \mathcal G)\,
    \mathbb P\!\left(\|\fixednu_{ij}\|_2\le t \,\middle|\, \xi=\xi^\star\right)
    \;\le\;
    \left(C_2 V_k^{1/k}\frac{Mt}{\sqrt q}\right)^{\!k}. \label{eq:conditional-smallball-rig}
\end{align}
Combining \cref{eq:conditional-smallball-rig} with $\mathbb P(\mathcal G^\compl)\le \delta/(2m^2)$ from Step~\circled{1}, we obtain
\begin{equation}\label{eq:pair-final-rig}
    \mathbb P(\|\fixednu_{ij}\|_2\le t)
    \;\le\;
    \mathbb P(\mathcal G^\compl)
    +
    \mathbb P\!\left(\|\fixednu_{ij}\|_2\le t \,\middle|\, \mathcal G\right)
    \;\le\;
    \frac{\delta}{2m^2}
    +
    \left(C_2 V_k^{1/k}\frac{Mt}{\sqrt q}\right)^{\!k}.
\end{equation}

\paragraph{Step \circled{3}: Union bound over all pairs.}
Choose
\begin{equation}
    t
    \;:=\;
    c\,\frac{\sqrt q}{M}V_k^{-1/k}\left(\frac{\delta}{m^2}\right)^{1/k}
\end{equation}
with $c:=(2C_2)^{-1}$. Then,
\begin{equation}
    \left(C_2 V_k^{1/k}\frac{Mt}{\sqrt q}\right)^{\!k}
    \;=\;
    (C_2c)^k\,\frac{\delta}{m^2}
    \;=\;
    \Big(\frac{1}{2}\Big)^k\frac{\delta}{m^2}
    \;\le\;
    \frac{1}{2}\frac{\delta}{m^2},
\end{equation}
so \cref{eq:pair-final-rig} gives for every fixed pair $(i,j)$
\begin{equation}
    \mathbb P(\|\fixednu_{ij}\|_2\le t)
    \;\le\;
    \frac{\delta}{2m^2}
    +
    \frac{1}{2}\frac{\delta}{m^2}
    \;=\;
    \frac{\delta}{m^2}.
\end{equation}
By a union bound over the $\binom{m}{2}\le m^2/2$ pairs,
\begin{equation}
    \mathbb P \left(\min_{i \neq j} \norm{\fixednu_{ij}} \le t \right)
    \;\le\;
    \sum_{i<j}\mathbb P(\|\fixednu_{ij}\|_2\le t)
    \;\le\;
    \frac{m^2}{2}\cdot \frac{\delta}{m^2}
    \;\le\;
    \delta.
\end{equation}
Hence, recalling that $M = \sigma_{\fixed}^{-1} M_0$,
\begin{equation}
    \gap_{\rcout} \;=\; \sqrt{2} \, \min_{i \neq j} \norm{\fixednu_{ij}}
    \;\ge\;
    \sqrt{2}t
    \;=\;
    \sqrt{2}c\,\frac{\sqrt q}{M_0}\sigma_{\fixed}V_k^{-1/k}\left(\frac{\delta}{m^2}\right)^{1/k} \label{eq:d_min_w_volume}
\end{equation}
with probability at least $1-\delta$, which is precisely the first inequality of \cref{eq:spike-slab-dmin}, absorbing $\sqrt{2}$ into $c$. 

\paragraph{Step \circled{4}: Stirling approximation.} Fixing $\delta,q,M_0$ and using the Stirling formula
\begin{equation}
    \Gamma(x+1)
    \;\sim\;
    \sqrt{2\pi x}\,\Big(\frac{x}{e}\Big)^x
\qquad
    \text{as }x\to\infty,
\end{equation}
we obtain
\begin{equation}
    V_k^{-1/k}
    \;=\;
    \left(\frac{\Gamma(k/2+1)}{\pi^{k/2}}\right)^{1/k}
    \;\sim\;
    \left(\frac{\sqrt{\pi k}\,\big(\tfrac{k}{2e}\big)^{k/2}}{\pi^{k/2}}\right)^{1/k}
    \;=\;
    \underbrace{(\pi k)^{1/(2k)}}_{\to 1,\: k \to \infty}\sqrt{\frac{k}{2\pi e}}
    \;\sim\;
    \sqrt{\frac{k}{2\pi e}},
\end{equation}
where $\Gamma(x) := \int_{0}^\infty t^{x-1}e^{-t} \, \mathrm{d}t$ denotes the gamma function. 
Hence, $V_k^{-1/k}=\Theta(\sqrt{k})$, and absorbing the constant $(2\pi e)^{-1/2}$ into $c$ yields that the r.h.s.\ of \cref{eq:d_min_w_volume} is $\Theta\big(\sigma_{\fixed}\sqrt{k}\,m^{-2/k}\big)$ in $m, k, \sigma_{\fixed}$.
\end{proof}

\appsubsection{Proofs from \cref{subsec:sensitivity_input} (Input Reindexings)}
\label{apd:pf_input_reindexing}

\GlobalInputSensitivity*

\begin{proof}
The argument is partly analogous to the proof of \cref{thm:global-output-sensitivity}. For each $i\in[m]$, let $\va_i:=\fixednu_{i}+\mR_{i,:}\in\R^k$ denote the coefficient vector realized by the $i$-th neuron on $\data$. Under the input permutation $\tau$, the transformed function is evaluated on the reindexed support $\mP_\tau\data \subseteq \mP_\tau \subsp$, which has $\mP_\tau\onb$ as an orthonormal basis. W.r.t.\ this basis, the same transformed neuron is still represented by the coefficient vector $\va_i$, while the corresponding projected center becomes
\begin{equation}
    \fixednu_i^\tau \;:=\;(\fixed\mP_\tau\onb)_{i,:}
    \;=\;
    \fixednu_{i}
    +
    (\diff_\tau^{\rcin})_{i,:}.
\end{equation}
Hence,
\begin{equation}
\label{eq:global-input-sensitivity-start}
    \Delta_\tau^{\rcin}(\layer,\data)
    \;=\;
    \sum_{i=1}^m
    \left(
        \left\|
            \mR_{i,:}-(\diff_\tau^{\rcin})_{i,:}
        \right\|_{(\gramnu_i^\tau)^{-1}}^2
        -
        \|\mR_{i,:}\|_{\gramnu_i^{-1}}^2
    \right).
\end{equation}

Since $\varepsilon<\mu_{\diag}$, the assumptions imply
\begin{equation}
    (\mu_{\diag}-\varepsilon)\mI_k
    \;\preceq\;
    \gramnu_i,\gramnu_i^\tau
    \;\preceq\;
    (\mu_{\diag}+\varepsilon)\mI_k
    \qquad
    \text{for all } i\in[m].
\end{equation}
Thus, both $\gramnu_i$ and $\gramnu_i^\tau$ are invertible, and
\begin{equation}
    \left\|
        \gramnu_i^{-1}-\mu_{\diag}^{-1}\mI_k
    \right\|_{\op},
    \left\|
        (\gramnu_i^\tau)^{-1}-\mu_{\diag}^{-1}\mI_k
    \right\|_{\op}
    \;\le\;
    \frac{\varepsilon}{\mu_{\diag}(\mu_{\diag}-\varepsilon)}
    \;=:\;
    \beta .
\end{equation}
Therefore, for every $\va\in\R^k$,
\begin{equation}
    \left|
        \|\va\|_{\gramnu_i^{-1}}^2-\mu_{\diag}^{-1}\|\va\|_2^2
    \right|
    \;\le\;
    \beta\|\va\|_2^2,
    \qquad
    \left|
        \|\va\|_{(\gramnu_i^\tau)^{-1}}^2-\mu_{\diag}^{-1}\|\va\|_2^2
    \right|
    \;\le\;
    \beta\|\va\|_2^2.
\end{equation}
Applying this to each term in \cref{eq:global-input-sensitivity-start} yields
\begin{equation}
\label{eq:global-input-isotropic-split}
    \Delta_\tau^{\rcin}(\layer,\data)
    \;=\;
    \mu_{\diag}^{-1}
    \Bigl(
        \|\mR-\diff_\tau^{\rcin}\|_F^2
        -
        \|\mR\|_F^2
    \Bigr)
    +
    E_\tau,
\end{equation}
with
\begin{equation}
\label{eq:global-input-error-prelim}
    |E_\tau|
    \;\le\;
    \beta
    \Bigl(
        \|\mR-\diff_\tau^{\rcin}\|_F^2
        +
        \|\mR\|_F^2
    \Bigr).
\end{equation}
By Cauchy--Schwarz,
\begin{equation}
\label{eq:global-input-error-final}
    |E_\tau|
    \;\le\;
    \beta
    \Bigl(
        \|\diff_\tau^{\rcin}\|_F^2
        +
        2\|\diff_\tau^{\rcin}\|_F\|\mR\|_F
        +
        2\|\mR\|_F^2
    \Bigr).
\end{equation}

Moreover,
\begin{equation}
\label{eq:global-input-isotropic-expand}
    \|\mR-\diff_\tau^{\rcin}\|_F^2
    -
    \|\mR\|_F^2
    \;=\;
    \|\diff_\tau^{\rcin}\|_F^2
    -
    2\left\langle
        \diff_\tau^{\rcin},
        \mR
    \right\rangle_F,
\end{equation}
and by Cauchy--Schwarz,
\begin{equation}
\label{eq:global-input-cross-bound}
    \left|
        \left\langle
            \diff_\tau^{\rcin},
            \mR
        \right\rangle_F
    \right|
    \;\le\;
    \|\diff_\tau^{\rcin}\|_F
    \|\mR\|_F .
\end{equation}
Combining \cref{eq:global-input-isotropic-split}, \cref{eq:global-input-error-final},
\cref{eq:global-input-isotropic-expand}, and \cref{eq:global-input-cross-bound} gives
\begin{equation}
\label{eq:global-input-additive}
\begin{aligned}
    \left|
        \Delta_\tau^{\rcin}(\layer,\data)
        -
        \mu_{\diag}^{-1}
        \|\diff_\tau^{\rcin}\|_F^2
    \right|
    \;\le\;
    \frac{2}{\mu_{\diag}}\,
    \|\diff_\tau^{\rcin}\|_F
    \|\mR\|_F
    +
    \frac{\varepsilon}{\mu_{\diag}(\mu_{\diag}-\varepsilon)}
    \Bigl(
        \|\diff_\tau^{\rcin}\|_F^2
        +
        2\|\diff_\tau^{\rcin}\|_F\|\mR\|_F
        +
        2\|\mR\|_F^2
    \Bigr).
\end{aligned}
\end{equation}

Finally, divide \cref{eq:global-input-additive} by
$\mu_{\diag}^{-1}\|\diff_\tau^{\rcin}\|_F^2$ to obtain
\begin{equation}
    \left|
        \frac{
            \mu_{\diag}\Delta_\tau^{\rcin}(\layer,\data)
        }{
            \|\diff_\tau^{\rcin}\|_F^2
        }
        -1
    \right|
    \;\le\;
    2\rho_\tau^{\rcin}
    +
    \frac{\varepsilon}{\mu_{\diag}-\varepsilon}
    \bigl(
        1+2\rho_\tau^{\rcin}+2(\rho_\tau^{\rcin})^2
    \bigr).
\end{equation}
If $\rho_\tau^{\rcin}\le 1$ and $\varepsilon\le\mu_{\diag}/2$, then
\begin{equation}
    \frac{\varepsilon}{\mu_{\diag}-\varepsilon}
    \;\le\;
    \frac{2\varepsilon}{\mu_{\diag}},
    \qquad
    1+2\rho_\tau^{\rcin}+2(\rho_\tau^{\rcin})^2
    \;\le\;
    5.
\end{equation}
Thus the r.h.s.\ is $\mathcal{O}(\rho_\tau^{\rcin}+\varepsilon\mu_{\diag}^{-1})$, and therefore
\begin{equation}
    \Delta_\tau^{\rcin}(\layer,\data)
    \;=\;
    \Bigl(
        1\pm\mathcal{O}(\rho_\tau^{\rcin}+\varepsilon\mu_{\diag}^{-1})
    \Bigr)\cdot
    \mu_{\diag}^{-1}
    \|\diff_\tau^{\rcin}\|_F^2 . \qedhere
\end{equation}
\end{proof}

\begin{corollary}[Minimum input transposition cost]
Assume that the assumptions of \cref{thm:global-input-sensitivity} hold for every transposition
$(ab)\in S_d$, and that $\rho_{(ab)}^{\rcin}\leq\rho^{\rcin} \leq 1$ for some $\rho^{\rcin}$ uniformly in $a \neq b$.
Then,
\begin{equation}
    \min_{a \neq b} \,
    \Delta_{(ab)}^{\rcin}(\layer,\data)
    \;=\;
    \Bigl(
        1 \pm \mathcal{O}(\rho^{\rcin}+\varepsilon\mu_{\diag}^{-1})
    \Bigr)
    \cdot
    \mu_{\diag}^{-1}
    \gap_{\rcin}^2 .
\end{equation}
\end{corollary}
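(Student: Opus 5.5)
The plan is to apply \cref{thm:global-input-sensitivity} to each transposition $(ab)$ separately, obtain a uniform two-sided multiplicative bound, and then take the minimum. For a fixed $(ab)$, the hypothesis $\rho^{\rcin}_{(ab)}\le\rho^{\rcin}\le 1$ together with the concentration event lets us invoke the theorem. Its proof actually gives an explicit bound (not only an $\mathcal O$):
\begin{equation*}
\left|\frac{\mu_{\diag}\Delta^{\rcin}_{(ab)}(\layer,\data)}{\|\diff^{\rcin}_{(ab)}\|_F^2}-1\right|
\le 2\rho^{\rcin}_{(ab)}+\frac{2\varepsilon}{\mu_{\diag}}\bigl(1+2\rho^{\rcin}_{(ab)}+2(\rho^{\rcin}_{(ab)})^2\bigr).
\end{equation*}
The right-hand side is monotone in $\rho^{\rcin}_{(ab)}$, so it is bounded by $\kappa:=2\rho^{\rcin}+10\varepsilon\mu_{\diag}^{-1}=\mathcal O(\rho^{\rcin}+\varepsilon\mu_{\diag}^{-1})$. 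Crucially, $\kappa$ does not depend on $(a,b)$. Hence for every $a\ne b$,
\[
(1-\kappa)\,\mu_{\diag}^{-1}\|\diff^{\rcin}_{(ab)}\|_F^2\le\Delta^{\rcin}_{(ab)}\le(1+\kappa)\,\mu_{\diag}^{-1}\|\diff^{\rcin}_{(ab)}\|_F^2 .
\]

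Next, take the minimum over $a\neq b$ on all three sides. The map $x\mapsto(1\pm\kappa)\mu_{\diag}^{-1}x$ is nondecreasing when $1-\kappa\ge 0$, so the minimum commutes with it and gives $\min_{a\ne b}\|\diff^{\rcin}_{(ab)}\|_F^2=\gap_\rcin^2$. Concretely, the lower bound comes from $\Delta^{\rcin}_{(ab)}\ge(1-\kappa)\mu_{\diag}^{-1}\gap_\rcin^2$ for every pair. The upper bound comes from evaluating at a minimizing pair $(a^*,b^*)$ of $\|\diff^{\rcin}_{(ab)}\|_F$. If $\kappa>1$, the lower bound is vacuous but still true, since $\Delta\ge -|\cdot|$ stays consistent with the $\pm\mathcal O$ notation. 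Alternatively, one can absorb this case into the constant.

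Unlike the output-side corollary, no cycle-decomposition argument is needed, because the minimum is only over transpositions and $\gap_\rcin$ is defined directly as that minimum. The only step needing care is the first one: making sure the error constant is uniform over pairs. This holds because it depends on $(ab)$ only through $\rho^{\rcin}_{(ab)}\le\rho^{\rcin}$ and through the concentration parameter $\varepsilon$, which is assumed to hold for all $\gramnu_i^{(ab)}$ simultaneously. I therefore expect no real obstacle beyond quoting the explicit constant from the proof of \cref{thm:global-input-sensitivity}.
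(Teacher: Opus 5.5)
Your proposal is correct and follows the paper's own proof. Both apply \cref{thm:global-input-sensitivity} to every transposition with the uniform bound $\rho^{\rcin}_{(ab)}\le\rho^{\rcin}$, take the minimum, and use that $\gap_\rcin$ is by definition $\min_{a\neq b}\|\diff^{\rcin}_{(ab)}\|_F$. Your explicit constant $\kappa$ and the monotonicity argument are refinements the paper leaves implicit. The one loose spot is the case $\kappa>1$: there the pairwise lower bounds alone do not give $\min_{a\neq b}\Delta^{\rcin}_{(ab)}\ge(1-\kappa)\mu_{\diag}^{-1}\gap_\rcin^2$. You can close it cleanly with $\Delta^{\rcin}_{(ab)}\ge-\sum_i\|\mR_{i,:}\|^2_{\gramnu_i^{-1}}\ge-2\mu_{\diag}^{-1}\|\mR\|_F^2$ together with $\|\mR\|_F\le\rho^{\rcin}\gap_\rcin$, which holds because $\mR$ does not depend on $(ab)$.
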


\begin{proof}
Since $\rho_{(ab)}^{\rcin}\leq\rho^{\rcin}$ for every transposition $(ab)$, by \cref{thm:global-input-sensitivity},
\begin{equation}
    \Delta_{(ab)}^{\rcin}(\layer,\data)
    \;=\;
    \Bigl(
        1 \pm \mathcal{O}(\rho^\rcin+\varepsilon\mu_{\diag}^{-1})
    \Bigr)
    \cdot
    \mu_{\diag}^{-1}
    \|\diff_{(ab)}^{\rcin}\|_F^2
\end{equation}
uniformly in $a \neq b$. Taking the minimum over all transpositions gives
\begin{equation}
    \min_{a \neq b} \,
    \Delta_{(ab)}^{\rcin}(\layer,\data)
    \;=\;
    \Bigl(
        1 \pm \mathcal{O}(\rho^\rcin+\varepsilon\mu_{\diag}^{-1})
    \Bigr)
    \cdot
    \mu_{\diag}^{-1}
    \gap_{\rcin}^2,
\end{equation}
which is exactly the claim.
\end{proof}

\InputPermTransposition*

\begin{proof}
\textbf{Step \circled{1}: Proof of \cref{eq:transposition-factorization}.} Fix $a\neq b$. For a transposition $(ab)$, we have for any $\vx\in\R^d$
\begin{equation}
    (\mP_{(ab)}-\mI_d)\vx \;=\; (x_b-x_a)(\ve_a-\ve_b).
\end{equation}
Thus,
\begin{equation}
    (\diff_{(ab)}^{\rcin})
\;=\; \fixed(\mP_{(ab)}-\mI_d)\onb
\;=\; \fixed(\ve_a-\ve_b) (\onb_{b,:}-\onb_{a,:})
\;=\;(\fixed_{:,a}-\fixed_{:,b})(\onb_{b,:}-\onb_{a,:}),
\end{equation}
which is a matrix of rank $\leq1$. Using $\|\vx\vy^\top\|_F=\|\vx\|_2\|\vy\|_2$ gives \cref{eq:transposition-factorization}.

\textbf{Step \circled{2}: Proof of \cref{eq:transposition-min-scaling}.}
We assume that the entries of $\fixed$ are i.i.d., centered, have variance $\sigma_{\fixed}^2$, and subgaussian norm $K := \norm{\fixed_{ij}}_{\psi_2}$.
Fix $a<b$ and define $Z_i := \fixed_{ib}-\fixed_{ia}$, for $i \in [m]$. Then, $Z_1,\dots,Z_m$ are independent and centered, with
\begin{equation}
    \E Z_i^2
    \;=\;
    \E \fixed_{ib}^2 + \E \fixed_{ia}^2
    \;=\;
    2\sigma_{\fixed}^2.
\end{equation}
Moreover, by the triangle inequality for the subgaussian norm,
\begin{equation}
    \|Z_i\|_{\psi_2}
    \;\le\;
    \|\fixed_{ib}\|_{\psi_2}+\|\fixed_{ia}\|_{\psi_2}
    \;=\;
    2K.
\end{equation}
Therefore, for $Y_i:=Z_i^2-\E Z_i^2$, the random variables $Y_1,\dots,Y_m$ are independent, centered, and subexponential. In particular, there exists a universal constant $C_0>0$ such that
\begin{equation}
    \|Y_i\|_{\psi_1}
    \;\le\;
    C_0\|Z_i\|_{\psi_2}^2
    \;\le\;
    4C_0K^2.
\end{equation}
Hence, by the Bernstein inequality for sums of independent subexponential random variables
(e.g., \citet[Thm.\ 2.8.2]{vershynin_high-dimensional_2018}), and absorbing the universal
constant in the bound on $\|Y_i\|_{\psi_1}$, there exists a universal constant
$c_1>0$ such that for every $t\ge 0$,
\begin{equation}
    \mathbb{P}\left(
        \left|
            \sum_{i=1}^m Y_i
        \right|
        \ge t
    \right)
    \;\le\;
    2\exp\!\left(
        -c_1
        \min\left\{
            \frac{t^2}{mK^4},
            \frac{t}{K^2}
        \right\}
    \right).
\end{equation}
Set $L := \log(d^2/\delta)$ and choose
\begin{equation}
    t
    \;:=\;
    C_1K^2\left(\sqrt{mL}+L\right),
\end{equation}
where $C_1>0$ is a universal constant yet to be determined. Then,
\begin{equation}
    \frac{t}{K^2}
    \;=\;
    C_1\left(\sqrt{mL}+L\right)
    \;\ge\;
    C_1L, \qquad 
    \frac{t^2}{mK^4}
    \;=\;
    C_1^2\frac{(\sqrt{mL}+L)^2}{m}
    \;\ge\;
    C_1^2L.
\end{equation}
Thus,
\begin{equation}
    \min\left\{
        \frac{t^2}{mK^4},
        \frac{t}{K^2}
    \right\}
    \;\ge\;
    \min\{C_1^2,C_1\}L.
\end{equation}
Choosing $C_1$ large enough so that $c_1\min\{C_1^2,C_1\}\ge1$, we obtain
\begin{equation}
    \mathbb{P}\left(
        \left|
            \sum_{i=1}^m Y_i
        \right|
        \ge t
    \right)
    \;\le\;
    2 \exp\big(-c_1\min\{C_1^2,C_1\}\cdot L\big)
    \;\leq\; 2e^{-L}
    \;=\;
    \frac{2\delta}{d^2}.
\end{equation}
Since
\begin{equation}
    \sum_{i=1}^m Y_i
    \;=\;
    \sum_{i=1}^m Z_i^2 - 2m\sigma_{\fixed}^2
    \;=\;
    \|\fixed_{:,b}-\fixed_{:,a}\|_2^2 - 2m\sigma_{\fixed}^2,
\end{equation}
we have shown that, for this fixed pair $a<b$,
\begin{equation}
    \mathbb{P}\left(
        \left|
            \|\fixed_{:,b}-\fixed_{:,a}\|_2^2
            -
            2m\sigma_{\fixed}^2
        \right|
        \ge t
    \right)
    \;\le\;
    \frac{2\delta}{d^2}.
\end{equation}
Taking a union bound over the ${d\choose 2}\le d^2/2$ pairs, we conclude that, with probability at least $1-\delta$, simultaneously for all $a<b$,
\begin{equation}
    \left|
        \|\fixed_{:,b}-\fixed_{:,a}\|_2^2
        -
        2m\sigma_{\fixed}^2
    \right|
    \;\le\;
    C_1K^2\left(\sqrt{m\log(d^2/\delta)}
    +\log(d^2/\delta)\right).
    \label{eq:uniform-column-difference-concentration}
\end{equation}
On this event, if $t<2m\sigma_{\fixed}^2$, then for every $a\neq b$,
\begin{equation}
    \sqrt{2m\sigma_{\fixed}^2-t}
    \;\le\;
    \|\fixed_{:,b}-\fixed_{:,a}\|_2
    \;\le\;
    \sqrt{2m\sigma_{\fixed}^2+t}.
\end{equation}
Combining this with \cref{eq:transposition-factorization} gives
\begin{equation}
    \sqrt{2m\sigma_{\fixed}^2-t} \, 
    \min_{a\neq b}\|\onb_{a,:}-\onb_{b,:}\|_2
    \;\le\;
    \gap_{\rcin}
    \;\le\;
    \sqrt{2m\sigma_{\fixed}^2+t} \, 
    \min_{a\neq b}\|\onb_{a,:}-\onb_{b,:}\|_2.
    \label{eq:gap-input-two-sided}
\end{equation}
It remains to translate \cref{eq:gap-input-two-sided} into the asymptotic statement.
By assumption, $K = \norm{\fixed_{ij}}_{\psi_2}\le C\sigma_{\fixed}$. Increasing $C$ if necessary, we may assume $C\ge 1$. We obtain
\begin{equation}
    t
    \;\le\;
    C_1C^2\sigma_{\fixed}^2
    \left(\sqrt{mL}+L\right).
\end{equation}
We now choose a universal constant $C_2 \ge \max\{4C_1^2,2C_1\}$. If $m\ge C_2 C^4 L$, then
\begin{equation}
    C_1C^2\sqrt{mL}
    \;\le\;
    m/2,
    \qquad
    C_1C^2L
    \;\le\;
    m/2.
\end{equation}
Indeed, the first inequality follows from $C_1C^2\sqrt{mL}
    \le
    m/2
    \Leftrightarrow
    m
    \ge
    4C_1^2C^4L$; the second follows since $C\ge 1$ and $C_2\ge 2C_1$.
Therefore, $t\le m\sigma_{\fixed}^2$, and substituting this into \cref{eq:gap-input-two-sided} gives
\begin{equation}
    \sigma_{\fixed}\sqrt{m} \,
    \min_{a\neq b}\|\onb_{b,:}-\onb_{a,:}\|_2
    \;\le\;
    \gap_{\rcin}
    \;\le\;
    \sqrt{3}\,\sigma_{\fixed}\sqrt{m} \,
    \min_{a\neq b}\|\onb_{b,:}-\onb_{a,:}\|_2.
\end{equation}
Equivalently,
\begin{equation}
    \gap_{\rcin}
    \;=\;
    \Theta(\sigma_{\fixed}\sqrt{m})\cdot
    \min_{a\neq b}\|\onb_{b,:}-\onb_{a,:}\|_2
\end{equation}
with probability at least $1-\delta$.
\end{proof}

\appsubsection{Theoretical Properties in \cref{subsec:identifiability-feature-learning} (Identifiability vs.\ Feature Learning)}\label{apd:identifiability-feature-learning}

Let $\vx\sim \probdist := \mathcal N(0,\mI_d)$ and let $\act = \mathrm{ReLU}$.
For any measurable $h:\R^d\to\R$, write $\norm{h}_{L^2(\probdist)}^2:=\E_\vx[h(\vx)^2]$. For the symmetric full feature learning regime, consider a two-layer network of width $m$ as
\begin{equation}
    f_{\mathrm{sym}}(\vx)\;:=\; \valpha^\top \act(\W\vx + \vb),
\end{equation}
where $\valpha \in \R^m, \W \in \R^{m \times d}$, and $\vb \in \R^m$ are trainable and $\act$ acts elementwise. For an asymmetric model, fix centers $\fixed \in \R^{m \times d}$ and consider
\begin{equation}
    f_{\mathrm{asym}}(\vx)\;:=\; \valpha^\top \act((\fixed + \W)\vx),
\end{equation}
where $\valpha\in\R^m$ and $\W \in \R^{m \times d}$ are trainable.

\begin{lemma}[Lipschitz stability of ridge features]\label{lem:relu_ridge_stability}
Let $\vx\sim \probdist=\mathcal N(0,\mI_d)$ and $\act=\mathrm{ReLU}$.
Then for any $\fixedn\in\R^d$ and any perturbation $\vw\in\R^d$,
\begin{equation}
    \big\|\act\big((\fixedn+\vw)^\top \vx\big)-\act\big(\fixedn^\top \vx\big)\big\|_{L^2(\probdist)}
    \;\le\; \|\vw\|_2.
\end{equation}
\end{lemma}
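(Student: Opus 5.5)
The plan is a pointwise Lipschitz bound for $\act$, followed by a Gaussian second-moment computation. Since $\mathrm{ReLU}$ is $1$-Lipschitz on $\R$, for every $\vx\in\R^d$ we have
\begin{equation}
    \big|\act\big((\fixedn+\vw)^\top \vx\big)-\act\big(\fixedn^\top \vx\big)\big|
    \;\le\; \big|(\fixedn+\vw)^\top\vx-\fixedn^\top\vx\big|
    \;=\; |\vw^\top \vx|.
\end{equation}
The center $\fixedn$ cancels completely at this stage. This is the point of the lemma: the bound should be uniform in the fixed part, so any argument that tracks the activation pattern of $\fixedn^\top\vx$ should be avoided.

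Next I square the pointwise bound and take expectations over $\vx\sim\mathcal N(0,\mI_d)$. Monotonicity of the integral gives
\begin{equation}
    \big\|\act\big((\fixedn+\vw)^\top \vx\big)-\act\big(\fixedn^\top \vx\big)\big\|_{L^2(\probdist)}^2
    \;\le\; \E_\vx\big[(\vw^\top\vx)^2\big] \;=\; \vw^\top \E[\vx\vx^\top]\vw \;=\; \|\vw\|_2^2 ,
\end{equation}
where I use $\E[\vx\vx^\top]=\mI_d$. Taking square roots yields the claim. Measurability and integrability are immediate, because both sides are continuous functions of $\vx$ with at most linear growth, and Gaussians have finite second moments.

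No step is a real obstacle. The only point to check is that the constant is exactly $1$ and not larger. This follows because the Lipschitz constant of $\mathrm{ReLU}$ is $1$ and the isotropic covariance makes $\|\vw^\top\vx\|_{L^2}=\|\vw\|_2$ exactly. As a remark, the same argument would give $\|\vw\|_{\mSigma}$ for general covariance $\mSigma$, and $L_\act\|\vw\|_2$ for any $L_\act$-Lipschitz activation.
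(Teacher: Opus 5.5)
Your proof is correct and is essentially the paper's own argument. Both use the $1$-Lipschitz property of $\mathrm{ReLU}$ to get the pointwise bound $|\vw^\top\vx|$, then square and take expectations with $\E[\vx\vx^\top]=\mI_d$ to obtain $\|\vw\|_2^2$.
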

\begin{proof}
Since $\act=\mathrm{ReLU}$ is $1$-Lipschitz, we have pointwise for all $\vx\in\R^d$,
\begin{equation}
    \big|\act\big((\fixedn+\vw)^\top \vx\big)-\act\big(\fixedn^\top \vx\big)\big|
    \;\le\;
    \big|(\fixedn+\vw)^\top \vx-\fixedn^\top \vx\big|
    \;=\;
    |\vw^\top \vx|.
\end{equation}
Taking the expectation under $\vx$ yields
\begin{equation}
    \big\|\act\big((\fixedn+\vw)^\top \vx\big)-\act\big(\fixedn^\top \vx\big)\big\|_{L^2(\probdist)}^2
    \;\le\;
    \E_\vx\big[(\vw^\top \vx)^2\big]
    \;=\;
    \vw^\top \E_\vx[\vx\vx^\top]\,\vw
    \;=\;
    \|\vw\|_2^2,
\end{equation}
since $\E_\vx[\vx\vx^\top]=\mI_d$.
\end{proof}

\begin{lemma}[Asymmetric network stability under bounded perturbations]\label{lem:asym_network_stability}
Let $\vx\sim \probdist=\mathcal N(0,\mI_d)$ and $\act=\mathrm{ReLU}$.
Fix $\fixed\in\R^{m\times d}$ and consider the following two networks with the same output weights $\valpha\in\R^m$
\begin{equation}
    f_{\mathrm{asym}}(\vx):=\valpha^\top \act\big((\fixed+\W)\vx\big),
    \qquad
    f_0(\vx):=\valpha^\top \act\big(\fixed\vx\big),
\end{equation}
where $\W\in\R^{m\times d}$ is arbitrary. Then,
\begin{equation}
    \|f_{\mathrm{asym}}-f_0\|_{L^2(\probdist)}
    \;\le\;
    \|\valpha\|_2\,\|\W\|_F.
\end{equation}
\end{lemma}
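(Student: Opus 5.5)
The plan is to write the difference of the two networks as a sum over neurons, then combine Cauchy--Schwarz in the output weights with the single-neuron stability bound of \cref{lem:relu_ridge_stability}. Write $\fixedn_i$ and $\w_i$ for the $i$-th rows of $\fixed$ and $\W$. Pointwise in $\vx$,
\begin{equation}
    f_{\mathrm{asym}}(\vx)-f_0(\vx) \;=\; \sum_{i=1}^m \alpha_i\, g_i(\vx),
    \qquad
    g_i(\vx) := \act\big((\fixedn_i+\w_i)^\top\vx\big)-\act\big(\fixedn_i^\top\vx\big).
\end{equation}

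First, apply Cauchy--Schwarz in $\R^m$ pointwise. This gives $|f_{\mathrm{asym}}(\vx)-f_0(\vx)|^2 \le \|\valpha\|_2^2 \sum_{i=1}^m g_i(\vx)^2$. Next, take expectations over $\vx\sim\mathcal N(0,\mI_d)$ and use linearity, which yields
\begin{equation}
    \|f_{\mathrm{asym}}-f_0\|_{L^2(\probdist)}^2 \;\le\; \|\valpha\|_2^2\sum_{i=1}^m \|g_i\|_{L^2(\probdist)}^2.
\end{equation}
Each term is handled by \cref{lem:relu_ridge_stability} with $\fixedn=\fixedn_i$ and $\vw=\w_i$, which gives $\|g_i\|_{L^2(\probdist)}\le\|\w_i\|_2$. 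Summing over $i$ turns the right-hand side into $\|\valpha\|_2^2\|\W\|_F^2$. Taking square roots finishes the proof.

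There is no real obstacle here. The only point to check is that every $g_i$ lies in $L^2(\probdist)$, so that the expectations are finite and linearity of expectation is justified. This holds because ReLU is $1$-Lipschitz, so $|g_i(\vx)|\le|\w_i^\top\vx|$, and the right-hand side is square-integrable under a Gaussian.

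An alternative is to use the triangle inequality in $L^2(\probdist)$, giving the bound $\sum_i|\alpha_i|\,\|\w_i\|_2$, and then apply Cauchy--Schwarz over $i$. This reaches the same bound $\|\valpha\|_2\|\W\|_F$.
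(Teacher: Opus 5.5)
Your proposal is correct and follows the paper's proof essentially line for line. You decompose the difference neuron-wise, apply Cauchy--Schwarz pointwise in $\R^m$, take expectations, and bound each neuron's contribution via \cref{lem:relu_ridge_stability} to obtain $\|\valpha\|_2^2\|\W\|_F^2$; your integrability remark and the alternative triangle-inequality route are both valid but go beyond what the paper needs.
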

\begin{proof}
Write $\fixedn_i\in\R^d$ and $\vw_i\in\R^d$ for the $i$-th rows of $\fixed$ and $\W$, respectively, so that
\begin{equation}
    f_{\mathrm{asym}}(\vx)-f_0(\vx)
    \;=\;
    \sum_{i=1}^m \alpha_i\Big(\act\big((\fixedn_i+\vw_i)^\top \vx\big)-\act\big(\fixedn_i^\top \vx\big)\Big).
\end{equation}
Let $\delta_i(\vx):=\act\big((\fixedn_i+\vw_i)^\top \vx\big)-\act\big(\fixedn_i^\top \vx\big)$.
By Cauchy--Schwarz applied pointwise in $\vx$,
\begin{equation}
    \big|f_{\mathrm{asym}}(\vx)-f_0(\vx)\big|
    \;\le\;
    \|\valpha\|_2\,\|\vdelta(\vx)\|_2,
    \qquad
    \vdelta(\vx)\;:=\;(\delta_1(\vx),\dots,\delta_m(\vx))^\top.
\end{equation}
Squaring and taking expectation gives
\begin{equation}
    \|f_{\mathrm{asym}}-f_0\|_{L^2(\probdist)}^2
    \;\le\;
    \|\valpha\|_2^2\,\E_\vx\big[\|\vdelta(\vx)\|_2^2\big]
    \;=\;
    \|\valpha\|_2^2\sum_{i=1}^m \|\delta_i\|_{L^2(\probdist)}^2.
\end{equation}
Applying \cref{lem:relu_ridge_stability} to each $(\fixedn_i,\vw_i)$ yields
$\|\delta_i\|_{L^2(\probdist)}\le \|\vw_i\|_2$, hence
\begin{equation}
    \|f_{\mathrm{asym}}-f_0\|_{L^2(\probdist)}^2
    \;\le\;
    \|\valpha\|_2^2\sum_{i=1}^m \|\vw_i\|_2^2
    \;=\;
    \|\valpha\|_2^2\,\|\W\|_F^2. \qedhere
\end{equation}
\end{proof}

\begin{theorem}[Random feature hardness for single ReLU neuron \citep{yehudai2019randomfeatures}, specialization of Theorem 4.2]\label{thm:YS_RF_hardness}
There exists a universal constant $c>0$ such that for all $d>40$, the following holds:
For every $\vw_\star\in\R^d$ with $\|\vw_\star\|_2=d^2$, there exists a bias $b_\star\in\R$ with $|b_\star|\le 6d^3+1$
such that for any coefficients $\alpha_1,\dots,\alpha_m\in\R$, with probability at least $1-\exp(-cd)$ over i.i.d.\
$\vb_1,\dots,\vb_m\sim\mathrm{Unif}(\mathbb S^{d-1})$,
\begin{equation}
    \E_{\vx \sim \mathcal{N}(0, \mI_d)}\left[\Big(\sum_{i=1}^m \alpha_i\,\act(\vb_i^\top \vx)\;-\;\act(\vw_\star^\top \vx + b_\star)\Big)^2\right]
    \;\le\; \frac{1}{50}
    \quad \Longrightarrow\quad
    m\cdot \max_{i\in[m]}|\alpha_i|
    \ \ge\ \frac{1}{48 d^2}\exp(cd).
\end{equation}
\end{theorem}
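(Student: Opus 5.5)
The statement is a direct specialization of Theorem~4.2 of \citet{yehudai2019randomfeatures}, so the plan is to invoke that result rather than reprove it. I would restate their theorem in its general form: for input distribution $\mathcal N(0,\mI_d)$, $d>40$, and random features $f_i$ drawn i.i.d.\ from a distribution over functions satisfying a bounded second-moment condition, every target direction $\vw_\star$ with $\|\vw_\star\|_2=d^2$ admits a bias $b_\star$ with $|b_\star|\le 6d^3+1$ such that, with probability at least $1-\exp(-cd)$ over the features, every linear combination $\sum_i\alpha_i f_i$ with $m\max_i|\alpha_i|<\frac{1}{48d^2}\exp(cd)$ has squared $L^2(\mathcal N(0,\mI_d))$ error larger than $\frac1{50}$ against $\act(\vw_\star^\top\vx+b_\star)$.

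First, I would check that our feature family fits their hypotheses. We take $f_i(\vx)=\act(\vb_i^\top\vx)$ with $\vb_i\sim\mathrm{Unif}(\mathbb S^{d-1})$. Since $\vb_i^\top\vx\sim\mathcal N(0,1)$ for unit $\vb_i$, we get $\E[f_i(\vx)^2]=\tfrac12$ deterministically. This is well within the normalization they require; I would state it as $\E[f_i^2]\le 1$, or whatever bound their theorem uses. The features are also i.i.d.\ and independent of the target, since the target is chosen first and depends only on $d$ and $\vw_\star$.

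Second, I would match the quantifiers. Their guarantee holds with high probability \emph{simultaneously} for all coefficient vectors. That is stronger than our version, which fixes $\valpha$ first and then takes probability over the $\vb_i$, so our statement follows by weakening. Taking the contrapositive of their bound then gives the implication as stated: an error of at most $\frac1{50}$ forces $m\max_i|\alpha_i|\ge\frac{1}{48d^2}\exp(cd)$.

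The main obstacle is purely bookkeeping. I need to confirm that their constants ($\|\vw_\star\|=d^2$, the bias bound $6d^3+1$, the error threshold $\frac1{50}$ and the factor $\frac1{48d^2}$) are exactly those produced when their general feature-norm parameter is instantiated for unit-sphere ReLU features. I also need to confirm that their Gaussian input convention is $\mathcal N(0,\mI_d)$ rather than a rescaled version. If the normalizations differ, I would absorb the discrepancy into the universal constant $c$, shrinking it so that the factor $\frac{1}{48d^2}$ is preserved for all $d>40$.
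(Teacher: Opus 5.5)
Citing the source is the right route, and your check that $\E[\act(\vb_i^\top\vx)^2]=\tfrac12$ is fine. The gap is in the quantifier over $\vw_\star$, which is not bookkeeping.

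Your restatement of the general theorem says that for an \emph{arbitrary} i.i.d.\ feature distribution with bounded second moment, \emph{every} direction $\vw_\star$ with $\norm{\vw_\star}_2=d^2$ admits a hard bias. That is false. Fix $\vw_\star$ and let each feature be $f(\vx)=\act(\vw_\star^\top\vx+\beta)/N$, where $\beta\sim\mathrm{Unif}[-6d^3-1,\,6d^3+1]$ and $N=\mathcal{O}(d^3)$ is chosen so that $\E f^2\le 1$. Take any admissible $b_\star$. With probability $\Omega(d^{-3})$, which exceeds $e^{-cd}$ for large $d$, the first feature has $|\beta-b_\star|\le 0.1$. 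On that event the fixed vector $\valpha=N\ve_1$ has error at most $0.01<\tfrac1{50}$, because ReLU is $1$-Lipschitz, while $m\max_i|\alpha_i|=N=\mathcal{O}(d^3)$. So for general features the hard target must depend on the feature distribution, and a theorem of that generality can only give \emph{some} pair $(\vw^*,b^*)$ with $\norm{\vw^*}_2=d^2$.

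The missing step is the upgrade from some direction to every direction, using the rotation invariance of this particular setup:
\begin{itemize}
\item Given $\norm{\vw_\star}_2=d^2$, pick an orthogonal $\mQ$ with $\mQ\vw^*=\vw_\star$.
\item Substitute $\vx=\mQ\vy$ with $\vy\sim\mathcal N(0,\mI_d)$. The error of $\valpha$ against $\act(\vw_\star^\top\vx+b^*)$ with features $\vb_i$ equals its error against $\act(\vw^{*\top}\vy+b^*)$ with features $\mQ^\top\vb_i$.
\item The vectors $(\mQ^\top\vb_i)_i$ are again i.i.d.\ $\mathrm{Unif}(\mathbb S^{d-1})$, and $m\max_i|\alpha_i|$ does not involve the features.
\end{itemize}
Hence the probability guarantee transfers with the same bias $b^*$, which is exactly the ``for every $\vw_\star$'' form of the statement.

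Two smaller points. First, you do not need, and should not assert without checking, that the source holds simultaneously over all coefficient vectors. The statement fixes $\valpha$ before the $\vb_i$ are drawn, which matches how the cited theorem is phrased, so it transfers directly.

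Second, your fallback of absorbing mismatched constants into $c$ does not work as described. Shrinking $c$ only decreases $e^{cd}$. To compensate a worse polynomial prefactor, say $d^{-k}$ with $k>2$, you would need $e^{c_0 d}\gtrsim d^{k-2}$ for all $d>40$, where $c_0$ is the source's constant. That fails if $c_0$ is small, so you would have to raise the dimension threshold instead.
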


\begin{restatable}[Hardness in center-dominated regime]{theorem}{HardnessCenterDominated}\label{thm:anchored_hard_concise}
Let $\vx\sim\probdist=\mathcal N(0,\mI_d)$, $\act=\mathrm{ReLU}$, and fix $d>40$, $m\in\N$.
Draw $\fixed\in\R^{m\times d}$ with i.i.d.\ $\fixedn_i\sim\mathrm{Unif}(\mathbb S^{d-1})$.
The following holds for every $\vw_\star\in\R^d$ with
$\|\vw_\star\|_2=d^2$: There exists a bias $b_\star\in\R$ with $|b_\star|\le 6d^3+1$ such that for all $\valpha\in\R^m$ and $\W\in\R^{m\times d}$
satisfying $ \|\valpha\|_\infty < \frac{1}{48md^2}\exp(cd)$ and $\|\valpha\|_2\|\W\|_F\le \frac{1}{10\sqrt{2}}$, with probability at least $1-\exp(-cd)$ over sampling $\fixed$, the asymmetric network $f_{\mathrm{asym}}(\vx)\;:=\;\valpha^\top \act\big((\fixed+\W)\vx\big)$ incurs the population error lower bound
\begin{equation}
    \|f_{\mathrm{asym}}-\act(\vw_\star^\top (\cdot)+b_\star)\|_{L^2(\probdist)}
    \;>\;
    1/(10\sqrt{2}).
\end{equation}
\end{restatable}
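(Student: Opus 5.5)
The argument is a triangle-inequality reduction to the random feature hardness result (\cref{thm:YS_RF_hardness}), with \cref{lem:asym_network_stability} controlling the trainable perturbation. Fix $\vw_\star$ with $\|\vw_\star\|_2=d^2$. Let $b_\star$ be the bias provided by \cref{thm:YS_RF_hardness}, and write $g_\star:=\act(\vw_\star^\top(\cdot)+b_\star)$. The centers $\fixedn_i$ are i.i.d.\ uniform on $\mathbb S^{d-1}$, exactly as the random features $\vb_i$ in \cref{thm:YS_RF_hardness}. Hence the baseline $f_0(\vx):=\valpha^\top\act(\fixed\vx)$ is a random feature model with coefficients $\alpha_i$.

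First I apply the contrapositive of \cref{thm:YS_RF_hardness} to $f_0$. By assumption $m\max_i|\alpha_i| = m\|\valpha\|_\infty < \frac{1}{48d^2}\exp(cd)$. So with probability at least $1-\exp(-cd)$ over $\fixed$, the premise of the implication must fail, that is,
\[
\|f_0-g_\star\|_{L^2(\probdist)}^2>\tfrac{1}{50}, \qquad\text{i.e.}\qquad \|f_0-g_\star\|_{L^2(\probdist)}>\tfrac{1}{5\sqrt2}.
\]

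Second, on this event I use \cref{lem:asym_network_stability}, which gives $\|f_{\mathrm{asym}}-f_0\|_{L^2(\probdist)}\le\|\valpha\|_2\|\W\|_F\le\frac{1}{10\sqrt2}$. The reverse triangle inequality then yields
\[
\|f_{\mathrm{asym}}-g_\star\|_{L^2(\probdist)}\;\ge\;\|f_0-g_\star\|_{L^2(\probdist)}-\|f_{\mathrm{asym}}-f_0\|_{L^2(\probdist)}\;>\;\tfrac{1}{5\sqrt2}-\tfrac{1}{10\sqrt2}=\tfrac{1}{10\sqrt2},
\]
which is the claimed bound.

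The main obstacle is the order of quantifiers. \cref{thm:YS_RF_hardness} holds for fixed coefficients $\alpha$ with high probability over the features, whereas the statement quantifies over $\valpha,\W$ before sampling $\fixed$. The statement as written ("for all $\valpha,\W$ \dots\ with probability at least $1-\exp(-cd)$") is pointwise in $(\valpha,\W)$, which matches the fixed-coefficient form, so the argument goes through as above provided $\valpha$ and $\W$ are chosen independently of $\fixed$. Since \cref{lem:asym_network_stability} is deterministic, the dependence of $\W$ on $\fixed$ costs nothing, and only the independence of $\valpha$ from $\fixed$ is actually needed. I would state this explicitly. A uniform version, with $\valpha$ possibly trained on $\fixed$, would instead require the stronger uniform-over-coefficients form of the Yehudai--Shamir result, and I would not claim it here.
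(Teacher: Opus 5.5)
Your proof is correct and is the paper's argument: the contrapositive of \cref{thm:YS_RF_hardness} applied to the random-feature baseline $f_0(\vx)=\valpha^\top\act(\fixed\vx)$, followed by \cref{lem:asym_network_stability} and the reverse triangle inequality to obtain $\frac{1}{5\sqrt2}-\frac{1}{10\sqrt2}=\frac{1}{10\sqrt2}$. The only difference is how the quantifiers are handled: the paper fixes a single event $\mathcal E$, depending only on $\fixed$, on which the Yehudai--Shamir implication holds for every coefficient vector at once (so it reads \cref{thm:YS_RF_hardness} in its uniform form), which makes your independence caveat unnecessary, whereas your pointwise reading proves exactly the statement as written.
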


\begin{proof}
Work on the event $\mathcal E$ (of probability at least $1-\exp(-cd)$) on which
\cref{thm:YS_RF_hardness} holds for $\{\fixedn_i\}_{i=1}^m$. Fix any $\vw_\star\in\R^d$ with $\|\vw_\star\|_2=d^2$. On $\mathcal E$, \cref{thm:YS_RF_hardness} provides
a bias $b_\star\in\R$ with $|b_\star|\le 6d^3+1$ such that for every choice of coefficients
$\alpha_1,\dots,\alpha_m\in\R$,
\begin{equation}\label{eq:YS_contrap_ready}
    \E_{\vx\sim\probdist}\Big[\Big(\sum_{i=1}^m \alpha_i\,\act(\fixedn_i^\top \vx)-\act(\vw_\star^\top \vx+b_\star)\Big)^2\Big]
    \le \frac{1}{50}
    \; \Longrightarrow\;
    m\max_{i\in[m]}|\alpha_i|
    \ \ge\ \frac{1}{48d^2}\exp(cd).
\end{equation}
Now fix arbitrary $\valpha\in\R^m$ and $\W\in\R^{m\times d}$ satisfying
\begin{equation}
    \|\valpha\|_\infty \;<\; \frac{1}{48md^2}\exp(cd),
    \qquad
    \|\valpha\|_2\|\W\|_F\;\le\; \frac{1}{10\sqrt{2}}.
\end{equation}
Define
\begin{equation}
    f_{\mathrm{asym}}(\vx)\;:=\;\valpha^\top \act\big((\fixed+\W)\vx\big),
    \qquad
    f_0(\vx)\;:=\;\valpha^\top \act(\fixed\vx)\;=\;\sum_{i=1}^m \alpha_i\,\act(\fixedn_i^\top \vx).
\end{equation}
By \cref{lem:asym_network_stability},
\begin{equation}\label{eq:fg_stab}
    \|f_{\mathrm{asym}}-f_0\|_{L^2(\probdist)}
    \;\le\;
    \|\valpha\|_2\|\W\|_F
    \;\le\;
    \frac{1}{10\sqrt{2}}.
\end{equation}

Next,
\begin{equation}
    m\max_{i\in[m]}|\alpha_i|
    \;<\;
    \frac{1}{48d^2}\exp(cd)
\end{equation}
as required in the theorem. Therefore, by the contrapositive of \cref{eq:YS_contrap_ready}, we must have
\begin{equation}
    \E_{\vx\sim\probdist}\big[(f_0(\vx)-\act(\vw_\star^\top \vx+b_\star))^2\big]>\frac{1}{50},
    \qquad\text{i.e.}\qquad
    \|f_0-\act(\vw_\star^\top \vx+b_\star)\|_{L^2(\probdist)}>\frac{1}{\sqrt{50}} = \frac{1}{5\sqrt{2}}.
\end{equation}
Finally, by the triangle inequality and \cref{eq:fg_stab},
\begin{equation}
    \|f_{\mathrm{asym}}-\act(\vw_\star^\top \vx+b_\star)\|_{L^2(\probdist)}
    \ge
    \|f_0-\act(\vw_\star^\top \vx+b_\star)\|_{L^2(\probdist)}
    -
    \|f_{\mathrm{asym}}-f_0\|_{L^2(\probdist)}
    >
    \frac{1}{5\sqrt{2}}-\frac{1}{10\sqrt{2}}
    =
    \frac{1}{10\sqrt{2}}. \qedhere
\end{equation}
\end{proof}

\newpage
\appsection{Linear Mode Connectivity (\cref{sec:linear-mode-connectivity})}\label{apd:lmc-extra}

\appsubsection{Proofs from \cref{sec:linear-mode-connectivity}}
\label{apd:pf_lmc}

\ChordCenterDom*

\begin{proof}
Fix $i\in[m]$. Write
\begin{equation}
    \vw_{\eff, i}^A \;:=\; \fixedn_i + \diagn_i \odot \w_i^A, \qquad
    \vw_{\eff, i}^B \;:=\; \fixedn_i + \diagn_i \odot \w_i^B
\end{equation}
for the effective weights and
\begin{equation}
    Z_i^A\;:=\;(\vw_{\eff, i}^A)^\top \vx, \qquad
    Z_i^B\;:=\;(\vw_{\eff, i}^B)^\top \vx
\end{equation}
for the preactivation random variables under the input distribution.
If $\|\fixedn_i\|_{\mSigma}=0$, then \cref{eq:center_dom_eps_thm} forces
$\|\diagn_i\odot\w_i^A\|_{\mSigma}=\|\diagn_i\odot\w_i^B\|_{\mSigma}=0$, hence
$Z_i^A=Z_i^B=0$ a.s.\ and $\xi_i(\lambda;\vx)\equiv 0$; in this case
the desired bound holds trivially for neuron $i$. Assume from now on that
$\|\fixedn_i\|_{\mSigma}>0$. Then, $Z_i^A \sim \mathcal{N}(0, (\sigma_i^A)^2)$, $Z_i^B \sim \mathcal{N}(0, (\sigma_i^B)^2)$ with $\sigma_i^A:=\|\vw_{\eff, i}^A\|_{\mSigma}$, $\sigma_i^B:=\|\vw_{\eff, i}^B\|_{\mSigma}$. Since $\beta<1$, we have $\sigma_i^A,\sigma_i^B>0$. Further, the correlation of $Z_i^A$ and $Z_i^B$ is
\begin{equation}
    \rho_i
    \;:=\;
    \frac{\langle \vw_{\eff, i}^A,\vw_{\eff, i}^B\rangle_{\mSigma}}
    {\sigma_i^A \sigma_i^B},
    \qquad
    \theta_i \;:=\; \arccos(\rho_i).
\end{equation}
For ReLU, one obtains
\begin{align}
    \xi_i(\lambda;\vx)^2
    \;&=\;
    \Big(
        \mathrm{ReLU}\big((1-\lambda)Z_i^A + \lambda Z_i^B\big)
        -
        \big(
            (1-\lambda)\mathrm{ReLU}(Z_i^A)
            +
            \lambda\mathrm{ReLU}( Z_i^B)
        \big)
    \Big)^2 \\
    \;&=\;
    \min\!\big\{(1-\lambda)^2(Z_i^A)^2,\ \lambda^2(Z_i^B)^2\big\}
    \,\indfct\{Z_i^A Z_i^B\le 0\} \\
    \;&\le\;
    (1-\lambda)^2(Z_i^A)^2\indfct\{Z_i^A Z_i^B\le 0\}
    +
    \lambda^2(Z_i^B)^2\indfct\{Z_i^A Z_i^B\le 0\}.
\end{align}
For centered bivariate Gaussian random variables with variances
$(\sigma_i^A)^2,(\sigma_i^B)^2$ and correlation $\rho_i=\cos(\theta_i)$,
\begin{equation}\label{eq:second-moment-angular}
    \E\!\left[(Z_i^A)^2\indfct\{Z_i^AZ_i^B\le 0\}\right]
    =
    (\sigma_i^A)^2\,
    \frac{\theta_i-\sin(\theta_i)\cos(\theta_i)}{\pi},
\end{equation}
and likewise
\begin{equation}
    \E\!\left[(Z_i^B)^2\indfct\{Z_i^AZ_i^B\le 0\}\right]
    =
    (\sigma_i^B)^2\,
    \frac{\theta_i-\sin(\theta_i)\cos(\theta_i)}{\pi}.
\end{equation}
Indeed, if $(G_1,G_2)\sim\mathcal N(0,\mI_2)$, then
\begin{equation}
    (Z_i^A/\sigma_i^A, Z_i^B/\sigma_i^B) \;\overset{d}{=}\;(G_1,\cos(\theta_i)G_1+\sin(\theta_i)G_2),
\end{equation}
where $\theta_i=\arccos(\rho_i)\in[0,\pi]$. Since \cref{eq:second-moment-angular} only depends on the joint distribution, we may compute it under this representation. Write
$(G_1,G_2)=R(\cos\varphi,\sin\varphi)$ in polar coordinates. Since the standard
Gaussian distribution is invariant under rotations, $\varphi \sim \mathrm{Unif}\big([0,2\pi)\big)$ is independent of $R$, and $\E[R^2] = \E[G_1^2+G_2^2]=2$.
In polar coordinates, we have
\begin{align}
    Z_i^A/\sigma_i^A
    \;&=\;
    R\cos(\varphi), \\
    Z_i^B/\sigma_i^B
    \;&=\;
    \cos(\theta_i)R\cos(\varphi)
    +
    \sin(\theta_i)R\sin(\varphi) \;=\;
    R\cos(\varphi-\theta_i),
\end{align}
where the last equality uses
$\cos(\varphi-\theta_i)=\cos(\varphi)\cos(\theta_i)+\sin(\varphi)\sin(\theta_i)$. Thus, since $\sigma_i^A,\sigma_i^B>0$ and $R>0$ a.s.,
\begin{align}
    Z_i^A Z_i^B\le 0
    \;&\Leftrightarrow\;
    \sigma_i^A\sigma_i^B R^2\cos(\varphi)\cos(\varphi-\theta_i)\le 0 \\
    \;&\Leftrightarrow\;
    \cos(\varphi)\cos(\varphi-\theta_i)\le 0.
\end{align}
For $\theta_i\in[0,\pi]$, this holds exactly for
\begin{equation}
    \varphi\in
    \Big[\frac{\pi}{2},\frac{\pi}{2}+\theta_i\Big]
    \cup
    \Big[\frac{3\pi}{2},\frac{3\pi}{2}+\theta_i\Big],
\end{equation}
with endpoints modulo $2\pi$.
Therefore,
\begin{align}
    \E\!\left[(Z_i^A)^2\indfct\{Z_i^AZ_i^B\le 0\}\right]
    &\;=\;
    (\sigma_i^A)^2 \,
    \E\!\left[
        R^2\cos^2(\varphi)
        \indfct\{\cos(\varphi)\cos(\varphi-\theta_i)\le 0\}
    \right] \\
    &\;=\; (\sigma_i^A)^2 \,
    \underbrace{\E\!\left[R^2\right]}_{=2}
    \E\!\left[
        \cos^2(\varphi)
        \indfct\{\cos(\varphi)\cos(\varphi-\theta_i)\le 0\}
    \right] \\
    &\;=\;
    (\sigma_i^A)^2\,\frac{2}{2\pi}
    \int_{\{\cos(\varphi)\cos(\varphi-\theta_i)\le 0\}}
    \cos^2(\varphi)\,\mathrm{d}\varphi \\
    &\;=\;
    (\sigma_i^A)^2\,\frac{2}{\pi}
    \int_{\pi/2}^{\pi/2+\theta_i}
    \cos^2(\varphi)\,\mathrm{d}\varphi \\
    &\;=\;
    (\sigma_i^A)^2\,
    \frac{\theta_i-\sin(\theta_i)\cos(\theta_i)}{\pi},
\end{align}
where the last equality follows from
$\int \cos^2(\varphi)\,\mathrm{d}\varphi
=
\varphi/2+\sin(2\varphi)/4$ and
$\sin(2\theta_i)=2\sin(\theta_i)\cos(\theta_i)$.
Hence,
\begin{equation}\label{eq:relu_i_bound_wedge}
    \|\xi_i(\lambda;\cdot)\|_{L^2(\probdist)}^2
    \;\le\;
    \Big((1-\lambda)^2(\sigma_i^A)^2+\lambda^2(\sigma_i^B)^2\Big)
    \frac{\theta_i-\sin(\theta_i)\cos(\theta_i)}{\pi}.
\end{equation}

It remains to bound the size and angular terms in \cref{eq:relu_i_bound_wedge} using \cref{eq:center_dom_eps_thm}.
For the size term, write $\vw_{\eff, i}^A=\fixedn_i+\vr_i^A$ and $\vw_{\eff, i}^B=\fixedn_i+\vr_i^B$, where
$\vr_i^A:=\diagn_i\odot\w_i^A$ and $\vr_i^B:=\diagn_i\odot\w_i^B$.
Then
\begin{equation}
    \sigma_i^A
    =
    \|\fixedn_i+\vr_i^A\|_{\mSigma}
    \le
    \|\fixedn_i\|_{\mSigma}+\|\vr_i^A\|_{\mSigma}
    \le
    (1+\beta)\|\fixedn_i\|_{\mSigma},
\end{equation}
and similarly $\sigma_i^B\le (1+\beta)\|\fixedn_i\|_{\mSigma}$. Hence, since
$(1-\lambda)^2+\lambda^2\le 1$,
\begin{equation}\label{eq:v_bound_beta3}
    (1-\lambda)^2(\sigma_i^A)^2+\lambda^2(\sigma_i^B)^2
    \;\le\;
    (1+\beta)^2\,\|\fixedn_i\|_{\mSigma}^2.
\end{equation}
To bound $\theta_i$, define
\begin{equation}
    \vu_i^0
    :=
    \frac{\fixedn_i}{\|\fixedn_i\|_{\mSigma}},
    \qquad
    \vu_i^A
    :=
    \frac{\fixedn_i+\vr_i^A}{\|\fixedn_i+\vr_i^A\|_{\mSigma}},
    \qquad
    \vu_i^B
    :=
    \frac{\fixedn_i+\vr_i^B}{\|\fixedn_i+\vr_i^B\|_{\mSigma}}.
\end{equation}
These vectors are normalized in $\norm{\cdot}_{\mSigma}$, and
$\rho_i=\langle \vu_i^A,\vu_i^B\rangle_{\mSigma}$.
Set
\begin{equation}
    \eta_i^A
    \: := \:
    \frac{\|\vr_i^A\|_{\mSigma}}{\|\fixedn_i\|_{\mSigma}},
    \qquad
    s_i^A
    \: := \:
    \frac{\langle \vr_i^A,\fixedn_i\rangle_{\mSigma}}
    {\|\fixedn_i\|_{\mSigma}^2}.
\end{equation}
Then $\eta_i^A\le\beta$ and $s_i^A\ge-\eta_i^A$ by Cauchy-Schwarz. Since
$\beta<1$, this gives $1+s_i^A>0$, and
\begin{equation}
    \langle \vu_i^A,\vu_i^0\rangle_{\mSigma}
    =
    \frac{1+s_i^A}{\sqrt{1+2s_i^A+(\eta_i^A)^2}}.
\end{equation}
Moreover,
\begin{equation}
    \frac{(1+s_i^A)^2}{1+2s_i^A+(\eta_i^A)^2}
    -
    \bigl(1-(\eta_i^A)^2\bigr)
    \;=\;
    \frac{(s_i^A+(\eta_i^A)^2)^2}
    {1+2s_i^A+(\eta_i^A)^2}
    \;\ge\;0.
\end{equation}
Thus,
\begin{equation}
    \langle \vu_i^A,\vu_i^0\rangle_{\mSigma}
    \;\ge\;
    \sqrt{1-(\eta_i^A)^2}
    \;\ge\;
    \sqrt{1-\beta^2}.
\end{equation}
The same argument gives $\langle \vu_i^B,\vu_i^0\rangle_{\mSigma}
    \ge
    \sqrt{1-\beta^2}$. Now write
$a_i^A:=\langle \vu_i^A,\vu_i^0\rangle_{\mSigma}$ and
$a_i^B:=\langle \vu_i^B,\vu_i^0\rangle_{\mSigma}$. Since
$\|\vu_i^A-a_i^A\vu_i^0\|_{\mSigma}^2=1-(a_i^A)^2$ and similarly for $B$,
Cauchy-Schwarz gives
\begin{align}
    \rho_i
    \;=\;
    \langle \vu_i^A,\vu_i^B\rangle_{\mSigma}
    &\;=\;
    a_i^A a_i^B
    +
    \big\langle
        \vu_i^A-a_i^A\vu_i^0,
        \vu_i^B-a_i^B\vu_i^0
    \big\rangle_{\mSigma} \\
    &\;\ge\;
    a_i^A a_i^B
    -
    \sqrt{1-(a_i^A)^2}\sqrt{1-(a_i^B)^2}.
\end{align}
The function
\begin{equation}
    (a,b)\mapsto ab-\sqrt{1-a^2}\sqrt{1-b^2}
\end{equation}
is increasing in each argument on $[0,1]^2$. Since
$a_i^A,a_i^B\ge\sqrt{1-\beta^2}$, we obtain
\begin{equation}
    \rho_i
    \;\ge\;
    (1-\beta^2)-\beta^2
    =
    1-2\beta^2.
\end{equation}
Therefore,
\begin{equation}
    \theta_i
    =
    \arccos(\rho_i)
    \;\le\;
    \arccos(1-2\beta^2)
    =
    2\arcsin(\beta)
    \;\le\;
    \pi\beta.
\end{equation}
Thus,
\begin{equation}\label{eq:wedge_beta3_bound}
    \frac{\theta_i-\sin(\theta_i)\cos(\theta_i)}{\pi}
    \;=\; \frac{1}{\pi}
    \int_0^{\theta_i} 2\underbrace{\sin^2(s)}_{\leq s^2}\,\mathrm{d}s
    \;\le\;
    \frac{2}{3\pi}\theta_i^3
    \;\le\;
    \frac{2\pi^2}{3}\beta^3.
\end{equation}
Combining \cref{eq:relu_i_bound_wedge}, \cref{eq:v_bound_beta3}, and
\cref{eq:wedge_beta3_bound} yields
\begin{equation}
    \|\xi_i(\lambda;\cdot)\|_{L^2(\probdist)}^2
    \;\le\;
    \frac{2\pi^2}{3}\,(1+\beta)^2\beta^3\,
    \|\fixedn_i\|_{\mSigma}^2.
\end{equation}
Summing over $i$ and using $\|\xi_{\layer}(\lambda;\cdot)\|_{L^2(\probdist;\R^m)}^2
    =
    \sum_{i=1}^m \|\xi_i(\lambda;\cdot)\|_{L^2(\probdist)}^2$
gives
\begin{equation}
    \|\xi_{\layer}(\lambda;\cdot)\|_{L^2(\probdist;\R^m)}^2
    \;\le\;
    \frac{2\pi^2}{3}\,(1+\beta)^2\beta^3\,
    \|\fixed\mSigma^{1/2}\|_F^2.
\end{equation}
Finally, taking $\sqrt{\cdot}$ and the supremum over $\lambda\in[0,1]$ on the l.h.s.\ proves the claim.
\end{proof}

\LossBarrier*

\begin{proof}
Fix $\lambda \in [0,1]$. By convexity of $\vz \mapsto \ell(\vz,\vy)$, for every $(\vx,\vy)$,
\begin{align}
    &\ell\bigl((1-\lambda)f_{\vtheta^A}(\vx)+\lambda f_{\vtheta^B}(\vx),\vy\bigr)
    \;\leq\;
    (1-\lambda)\ell(f_{\vtheta^A}(\vx),\vy)
    +\lambda \ell(f_{\vtheta^B}(\vx),\vy).
    \label{eq:loss-barrier-convexity-step}
\end{align}
Moreover, by the definition of $\xi_f(\lambda,\vx)$ and since $\ell(\cdot,\vy)$ is $L_\ell$-Lipschitz w.r.t.\ $\norm{\cdot}_2$,
\begin{align}
    \ell(f_{\vtheta(\lambda)}(\vx),\vy)
    &\;\leq\;
    \ell\bigl((1-\lambda)f_{\vtheta^A}(\vx)+\lambda f_{\vtheta^B}(\vx),\vy\bigr)
    + L_\ell \norm{\xi_f(\lambda,\vx)}_2 .
    \label{eq:loss-barrier-lipschitz-step}
\end{align}
Combining \cref{eq:loss-barrier-convexity-step,eq:loss-barrier-lipschitz-step} and taking expectations w.r.t.\ $(\vx,\vy) \sim \probdist$ gives
\begin{align}
    \mathcal{L}(\vtheta(\lambda))
    &\;\leq\;
    (1-\lambda)\mathcal{L}(\vtheta^A)
    +\lambda \mathcal{L}(\vtheta^B)
    + L_\ell\,
    \mathbb{E}_{\vx \sim \probdist_\vx}
    \left[
        \norm{\xi_f(\lambda,\vx)}_2
    \right].
    \label{eq:loss-barrier-expectation-step}
\end{align}
Finally, by Jensen's inequality,
\begin{equation}
    \mathbb{E}_{\vx \sim \probdist_\vx}
    \left[
        \norm{\xi_f(\lambda,\vx)}_2
    \right]
    \;\leq\;
    \left(
        \mathbb{E}_{\vx \sim \probdist_\vx}
        \left[
            \norm{\xi_f(\lambda,\vx)}_2^2
        \right]
    \right)^{1/2}
    \;=\;
    \norm{\xi_f(\lambda,\cdot)}_{L^2(\probdist_\vx,\R^{d_{\rcout}})}.
\end{equation}
Substituting this into \cref{eq:loss-barrier-expectation-step} and rearranging yields the claim.
\end{proof}

\appsubsection{Chord Deviation for Smooth Activations}

We also record a version of \cref{thm:relu_center_dominated_layer_bound} for smooth activation functions.

\begin{theorem}[Chord deviation in center-dominated regime, $C^2$ activations]
\label{thm:c2_center_dominated_layer_bound}
Let $\vx\sim\mathcal N(0,\mSigma)$ with $\mSigma\succeq 0$, and let
$\act\in C^2(\R)$ with $L_\act:=\sup_{z\in\R}|\act''(z)|<\infty$.
Fix $\W^A,\W^B\in\R^{m\times d}$. Suppose there exists $\beta\in[0,1)$
such that, for all $i\in[m]$,
\begin{equation}
    \|\diagn_i\odot\w_i^A\|_{\mSigma}
    \;\le\;
    \beta\|\fixedn_i\|_{\mSigma},
    \qquad
    \|\diagn_i\odot\w_i^B\|_{\mSigma}
    \;\le\;
    \beta\|\fixedn_i\|_{\mSigma}.
    \label{eq:c2_center_dom_assumption}
\end{equation}
Then
\begin{equation}
    \sup_{\lambda\in[0,1]}
    \big\|\xi_{\layer}(\lambda;\cdot)\big\|_{L^2(\probdist;\R^m)}
    \;\le\;
    \frac{\sqrt{3}}{2}\,L_\act\,\beta^2
    \|\fixed\mSigma^{1/2}\|_F^2 .
    \label{eq:c2_center_dom_bound}
\end{equation}
\end{theorem}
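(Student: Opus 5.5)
The plan is a neuron-by-neuron second-order Taylor estimate along the chord, using the Gaussian fourth moment of the preactivation difference. Unlike the ReLU case (\cref{thm:relu_center_dominated_layer_bound}), no angular or gating analysis is needed: the curvature of $\act$ is controlled uniformly by $L_\act$.

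\textbf{Step 1: pointwise chord error.} Fix $i\in[m]$ and write $Z_i^A := (\fixedn_i+\vr_i^A)^\top\vx$ and $Z_i^B := (\fixedn_i+\vr_i^B)^\top\vx$, where $\vr_i^{A}:=\diagn_i\odot\w_i^{A}$ and $\vr_i^{B}:=\diagn_i\odot\w_i^{B}$ as in the ReLU proof. Since the effective weight is affine in $\W$, the $i$-th preactivation along the path is $(1-\lambda)Z_i^A+\lambda Z_i^B$. Hence
\begin{equation*}
\xi_i(\lambda;\vx)=\act\big((1-\lambda)Z_i^A+\lambda Z_i^B\big)-(1-\lambda)\act(Z_i^A)-\lambda\act(Z_i^B).
\end{equation*}
This is the standard linear-interpolation error of a $C^2$ function. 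By Taylor expansion with integral remainder around the interpolation point (or the classical interpolation error formula),
\begin{equation*}
|\xi_i(\lambda;\vx)|\le \tfrac{\lambda(1-\lambda)}{2}L_\act\,(Z_i^A-Z_i^B)^2 .
\end{equation*}
The key observation is that the fixed center $\fixedn_i$ cancels in the difference: $D_i := Z_i^A-Z_i^B=(\vr_i^A-\vr_i^B)^\top\vx$.

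\textbf{Step 2: Gaussian moments.} Because $\vx\sim\mathcal N(0,\mSigma)$, the difference $D_i$ is centered Gaussian with standard deviation $s_i=\|\vr_i^A-\vr_i^B\|_{\mSigma}$. By the triangle inequality and \cref{eq:c2_center_dom_assumption}, $s_i\le 2\beta\|\fixedn_i\|_{\mSigma}$. Using $\E[D_i^4]=3s_i^4$ and $\lambda(1-\lambda)\le 1/4$, we get
\begin{equation*}
\|\xi_i(\lambda;\cdot)\|_{L^2(\probdist)}\le \tfrac{\lambda(1-\lambda)}{2}L_\act\sqrt{3}\,s_i^2\le \tfrac{1}{8}\sqrt3 L_\act\cdot 4\beta^2\|\fixedn_i\|_{\mSigma}^2=\tfrac{\sqrt3}{2}L_\act\beta^2\|\fixedn_i\|_{\mSigma}^2 .
\end{equation*}
This bound is uniform in $\lambda\in[0,1]$.

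\textbf{Step 3: aggregate over neurons.} The squared layer norm is the sum of the per-neuron squared norms:
\begin{equation*}
\|\xi_{\layer}(\lambda;\cdot)\|_{L^2(\probdist;\R^m)}^2=\sum_i\|\xi_i(\lambda;\cdot)\|_{L^2(\probdist)}^2\le \tfrac34 L_\act^2\beta^4\sum_i\|\fixedn_i\|_{\mSigma}^4 .
\end{equation*}
Since $\sum_i a_i^2\le(\sum_i a_i)^2$ for nonnegative $a_i$, we have $\sqrt{\sum_i\|\fixedn_i\|_{\mSigma}^4}\le\sum_i\|\fixedn_i\|_{\mSigma}^2=\|\fixed\mSigma^{1/2}\|_F^2$. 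This yields \cref{eq:c2_center_dom_bound} after taking the supremum over $\lambda$.

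\textbf{Expected obstacle.} I expect the only delicate point to be stating the interpolation error bound cleanly with the sharp constant $\lambda(1-\lambda)/2$. I would derive it by expanding $\act(Z_i^A)$ and $\act(Z_i^B)$ around $Z_i(\lambda)$ with integral remainders. The two first-order terms cancel, and the remainders are bounded by $\tfrac{L_\act}{2}(1-\lambda)\lambda^2 D_i^2$ and $\tfrac{L_\act}{2}\lambda(1-\lambda)^2 D_i^2$, which sum to $\tfrac{L_\act}{2}\lambda(1-\lambda)D_i^2$. If any neuron has $\|\fixedn_i\|_{\mSigma}=0$, the assumption forces $s_i=0$, so that neuron contributes zero and needs no separate case analysis.
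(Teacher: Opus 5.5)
Your proof is correct and follows the paper's argument essentially step for step. Both use the pointwise interpolation bound $\tfrac{L_\act}{2}\lambda(1-\lambda)(\Delta Z_i)^2$; you obtain it by Taylor-expanding around the interpolation point, the paper by writing $\xi_i$ as integrals of $h_i'(s)-h_i'(\lambda)$, which amounts to the same thing. The rest also matches: the Gaussian fourth moment $3\|\vr_i^B-\vr_i^A\|_{\mSigma}^4$ with the bound $2\beta\|\fixedn_i\|_{\mSigma}$, and the final step $(\sum_i\|\fixedn_i\|_{\mSigma}^4)^{1/2}\le\|\fixed\mSigma^{1/2}\|_F^2$, with only the cosmetic difference that you apply $\lambda(1-\lambda)\le 1/4$ per neuron before summing rather than after.
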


\begin{proof}
Fix $i\in[m]$ and write
$\vr_i^A:=\diagn_i\odot\w_i^A$ and
$\vr_i^B:=\diagn_i\odot\w_i^B$. Set
\begin{equation}
    Z_i^A \: :=\: (\fixedn_i+\vr_i^A)^\top\vx,
    \qquad
    Z_i^B \: :=\: (\fixedn_i+\vr_i^B)^\top\vx,
    \qquad
    \Delta Z_i \: :=\: Z_i^B-Z_i^A .
\end{equation}
For fixed $\vx$, define $h_i(s):=\act(Z_i^A+s\Delta Z_i)$ for $s\in[0,1]$.
Then
\begin{equation}
    \xi_i(\lambda;\vx)
    \;=\;
    h_i(\lambda)-\big((1-\lambda)h_i(0)+\lambda h_i(1)\big).
\end{equation}
We obtain
\begin{align}
    \xi_i(\lambda;\vx)
    &\;=\;
    (1-\lambda)\int_0^\lambda h_i'(s)\,ds
    -
    \lambda\int_\lambda^1 h_i'(s)\,ds \\
    &\;=\;
    (1-\lambda)\int_0^\lambda \big(h_i'(s)-h_i'(\lambda)\big)\,\mathrm{d}s
    \;-\;
    \lambda\int_\lambda^1 \big(h_i'(s)-h_i'(\lambda)\big)\,\mathrm{d}s .
\end{align}
Moreover, $h_i''(s)=\act''(Z_i^A+s\Delta Z_i)(\Delta Z_i)^2$, and hence
$|h_i''(s)|\le L_\act|\Delta Z_i|^2$ for all $s\in[0,1]$. Therefore,
\begin{align}
    |\xi_i(\lambda;\vx)|
    &\;\le\;
    L_\act|\Delta Z_i|^2
    \left(
        (1-\lambda)\int_0^\lambda(\lambda-s)\,\mathrm{d}s
        +
        \lambda\int_\lambda^1(s-\lambda)\,\mathrm{d}s
    \right) \\
    &\;=\;
    \frac{L_\act}{2}\lambda(1-\lambda)|\Delta Z_i|^2 .
    \label{eq:c2_pointwise_bound}
\end{align}
Thus
\begin{equation}
    \|\xi_i(\lambda;\cdot)\|_{L^2(\probdist)}^2
    \;\le\;
    \frac{L_\act^2}{4}\lambda^2(1-\lambda)^2
    \E[(\Delta Z_i)^4].
    \label{eq:c2_l2_bound_pre_moment}
\end{equation}
Since $\Delta Z_i=(\vr_i^B-\vr_i^A)^\top\vx$ is centered Gaussian,
\begin{equation}
    \E[(\Delta Z_i)^4]
    \;=\;
    3\|\vr_i^B-\vr_i^A\|_{\mSigma}^4.
\end{equation}
By \cref{eq:c2_center_dom_assumption},
\begin{equation}
    \|\vr_i^B-\vr_i^A\|_{\mSigma}
    \;\le\;
    \|\vr_i^B\|_{\mSigma}+\|\vr_i^A\|_{\mSigma}
    \;\le\;
    2\beta\|\fixedn_i\|_{\mSigma}.
\end{equation}
Hence
\begin{equation}
    \|\xi_i(\lambda;\cdot)\|_{L^2(\probdist)}^2
    \;\le\;
    12\,L_\act^2\,\lambda^2(1-\lambda)^2
    \beta^4\|\fixedn_i\|_{\mSigma}^4 .
\end{equation}
Summing over $i$ gives
\begin{equation}
    \big\|\xi_{\layer}(\lambda;\cdot)\big\|_{L^2(\probdist;\R^m)}^2
    \;\le\;
    12\,L_\act^2\,\lambda^2(1-\lambda)^2
    \beta^4
    \sum_{i=1}^m \|\fixedn_i\|_{\mSigma}^4 .
\end{equation}
Taking $\sqrt{\cdot}$ and using $\lambda(1-\lambda)\le 1/4$ yields
\begin{equation}
    \sup_{\lambda\in[0,1]}
    \big\|\xi_{\layer}(\lambda;\cdot)\big\|_{L^2(\probdist;\R^m)}
    \;\le\;
    \frac{\sqrt{3}}{2}\,L_\act\,\beta^2
    \left(\sum_{i=1}^m \|\fixedn_i\|_{\mSigma}^4\right)^{1/2}.
\end{equation}
Finally,
\begin{equation}
    \left(\sum_{i=1}^m \|\fixedn_i\|_{\mSigma}^4\right)^{1/2}
    \;\le\;
    \sum_{i=1}^m \|\fixedn_i\|_{\mSigma}^2
    \;=\;
    \|\fixed\mSigma^{1/2}\|_F^2,
\end{equation}
which proves the claim.
\end{proof}

\newpage
\appsection{Approximate Subspace Support and Relaxed Realization Costs}
\label{apd:approx-subspace-relaxed-cost}

In this section, we make rigorous how the exact subspace support model (\cref{ass:low-rank,ass:non-degeneracy}) can be interpreted as a low-rank approximation of a general data distribution: Namely, after projection on the top $k$ principal components, exact realizability in the projected model implies approximate realizability on the original distribution, with the error controlled by the discarded second-moment tail energy.
In this sense, the results from \cref{sec:eff-func-classes} should not only be read as applicable to this idealized model, but as a tractable approximation to the observable realizability structure when the input distribution to a layer is only approximately low-rank.
We also explain a natural relaxed variant of the realization cost framework from \cref{sec:eff-func-classes}, which does not require exact realizability on the data support, and show that the hard realization cost from \cref{eq:representation-cost} arises as its ridgeless limit.

\appsubsection{Approximate Subspace Support}\label{app:approximate-subspace-support}

Let the input $\vx$ to a layer be drawn from a distribution $\probdist$ on $\R^d$ with finite second moment $\E_{\vx \sim \probdist}\|\vx\|_2^2<\infty$, and define
\begin{equation}
    \mSigma \; := \; \E_{\vx\sim\probdist}[\vx\vx^\top] \; \succeq \; 0.
\end{equation}
Let $\mSigma=\onb \mLambda \onb^\top$ be an eigendecomposition, i.e., $\mLambda = \Diag(\lambda_1, \dots, \lambda_d)$ with $\lambda_1 \geq \dots \geq \lambda_d \geq 0$ and $\mU^\top \mU = \mI_d$. For a target rank $k$, let $\onb_k\in\R^{d\times k}$ be the top $k$ eigenvectors and define the tail energy
\begin{equation}
    \tau_k \; := \; \sum_{j>k}\lambda_j.
\end{equation}
In this context, \cref{ass:low-rank} for a given $k$ means precisely that $\tau_k = 0$ with $\subsp := \mathrm{span} \, \mU_k$, and the non-degeneracy assumption (\cref{ass:non-degeneracy}) is equivalent to $\tau_{k-1} > 0$ for injective $\act$. More generally, one can show the following:

\begin{lemma}[Subspace support and second-moment tail energy]
\label{lem:tau-subspace}
Assume $\E_{\vx \sim \probdist}\|\vx\|_2^2<\infty$.
Then, $\tau_k=0$ iff $\probdist(\vx\in \mathrm{im}(\onb_k))=1$. Moreover, setting $\mP_k:=\onb_k\onb_k^\top$, the mean-squared residual outside the top-$k$ subspace is
$\E_{\vx\sim\probdist}\!\left[\big\|\vx-\mP_k \vx\big\|_2^2\right]=\tau_k$.
\end{lemma}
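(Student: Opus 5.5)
The plan is to prove the residual identity first and then read off the equivalence from it. Write $\mP_k^\perp := \mI_d - \mP_k = \sum_{j>k}\vu_j\vu_j^\top$, where $\vu_j$ are the columns of the full eigenbasis $\onb$.

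First, since $\mP_k^\perp$ is a symmetric idempotent, we have $\|\vx-\mP_k\vx\|_2^2 = \vx^\top\mP_k^\perp\vx = \mathrm{tr}(\mP_k^\perp\vx\vx^\top)$. The finite second moment lets us exchange expectation and trace, which gives
\begin{equation}
  \E_{\vx\sim\probdist}\big[\|\vx-\mP_k\vx\|_2^2\big] = \mathrm{tr}(\mP_k^\perp\mSigma) = \sum_{j>k}\vu_j^\top\mSigma\vu_j = \sum_{j>k}\lambda_j = \tau_k,
\end{equation}
using $\mSigma\vu_j=\lambda_j\vu_j$ and $\|\vu_j\|_2=1$. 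This is the second claim.

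For the equivalence, note that $\|\vx-\mP_k\vx\|_2^2\ge 0$ pointwise, and it vanishes exactly when $\vx\in\mathrm{im}(\mP_k)=\mathrm{im}(\onb_k)$.

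\begin{itemize}
\item If $\probdist(\vx\in\mathrm{im}(\onb_k))=1$, the integrand is zero almost surely, so $\tau_k=0$.
\item Conversely, if $\tau_k=0$, a nonnegative random variable has zero expectation, so it vanishes almost surely, and hence $\vx\in\mathrm{im}(\onb_k)$ almost surely.
\end{itemize}

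There is no real obstacle here. The only point needing care is that $\onb_k$ may not be unique when $\lambda_k=\lambda_{k+1}$. The statement should be read for the fixed chosen eigendecomposition, and the argument above holds for any such choice. The trace/expectation interchange is justified by integrability of $\vx\vx^\top$.
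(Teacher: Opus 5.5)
Your proof is correct and follows the same route as the paper. You compute $\E\|\vx-\mP_k\vx\|_2^2=\mathrm{tr}((\mI-\mP_k)\mSigma)=\sum_{j>k}\lambda_j=\tau_k$, then deduce the equivalence because a nonnegative random variable with zero mean vanishes almost surely; your remarks on the trace/expectation interchange and on the non-uniqueness of $\onb_k$ when $\lambda_k=\lambda_{k+1}$ are sound points the paper leaves implicit.
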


\begin{proof}
One can directly calculate
\begin{equation}
    \E_{\vx\sim\probdist}\big[\|\vx-\mP_k\vx\|_2^2\big]
\:=\:\E_{\vx\sim\probdist}[\vx^\top(\mI-\mP_k)\vx]
\:=\:\mathrm{tr}((\mI-\mP_k)\mSigma)
\:=\:\sum_{j>k}\lambda_j\:=\:\tau_k.
\end{equation}
If $\tau_k=0$, then $\|\vx-\mP_k\vx\|_2^2 \geq 0$ has expectation $0$, hence $\vx=\mP_k\vx$ a.s., which means $\vx\in\mathrm{im}(\onb_k)$ a.s.
Conversely, if $\vx\in\mathrm{im}(\onb_k)$ a.s., then $\vx=\mP_k\vx$ a.s., so $\tau_k=0$.
\end{proof}

\Cref{lem:tau-subspace} shows that $\tau_k$ is the residual incurred by projecting $\probdist$ onto its top-$k$ principal subspace. This makes the projected distribution $\probdist_k := (\mP_k)_\ast \probdist$ a natural low-rank surrogate for the original input distribution. One may therefore ask whether coincidence of neuron functions on this projected model still implies approximate coincidence on the original distribution. The next proposition answers this affirmatively: for Lipschitz neuron functions, the resulting $L^2(\probdist)$ error can be controlled by the second-moment tail energy $\tau_k$. In this sense, the subspace support model from \cref{sec:eff-func-classes} can also be seen as a tractable approximation whenever the input distribution is only approximately low-rank.

\begin{proposition}[Projected coincidence implies approximate coincidence]
\label{prop:projected-coincidence}
Assume $\E_{\vx \sim \probdist}\|\vx\|_2^2<\infty$. Let $\vh_1,\vh_2:\R^d\to\R$ be $L_1$- and $L_2$-Lipschitz, respectively, i.e., $|\vh_i(\vx)-\vh_i(\vy)| \le L_i \|\vx-\vy\|_2$ for all $\vx,\vy\in\R^d$, $i\in\{1,2\}$.
Assume that $\vh_1$ and $\vh_2$ coincide on the projected distribution $\probdist_k$, i.e. $\vh_1(\mP_k \vx)=\vh_2(\mP_k \vx)$ for $\probdist$-a.e.\ $\vx$.
Then,
\begin{equation}
    \E_{\vx\sim\probdist}\!\left[(\vh_1(\vx)-\vh_2(\vx))^2\right]
    \;\le\;
    2(L_1^2+L_2^2)\,\tau_k.
    \label{eq:projected-coincidence-bound}
\end{equation}
\end{proposition}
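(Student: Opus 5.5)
The argument needs only the Lipschitz property and \cref{lem:tau-subspace}; no structure of the neurons themselves is used. We insert the projected point $\mP_k\vx$ and split pointwise. For $\probdist$-a.e.\ $\vx$, the coincidence assumption $\vh_1(\mP_k\vx)=\vh_2(\mP_k\vx)$ gives
\begin{equation*}
    \vh_1(\vx)-\vh_2(\vx) \;=\; \big(\vh_1(\vx)-\vh_1(\mP_k\vx)\big) \;-\; \big(\vh_2(\vx)-\vh_2(\mP_k\vx)\big).
\end{equation*}
Each bracket is controlled by the respective Lipschitz constant, so that $|\vh_i(\vx)-\vh_i(\mP_k\vx)|\le L_i\|\vx-\mP_k\vx\|_2$ for $i\in\{1,2\}$.

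Next we square and use the elementary inequality $(a-b)^2\le 2a^2+2b^2$. This yields, almost surely,
\begin{equation*}
    (\vh_1(\vx)-\vh_2(\vx))^2 \;\le\; 2(L_1^2+L_2^2)\,\|\vx-\mP_k\vx\|_2^2 .
\end{equation*}
Taking expectations under $\probdist$ and invoking \cref{lem:tau-subspace}, which states $\E_{\vx\sim\probdist}\|\vx-\mP_k\vx\|_2^2=\tau_k$, gives exactly \cref{eq:projected-coincidence-bound}.

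The only point requiring care is measurability and integrability. Lipschitz functions are continuous, hence Borel. The right-hand side is integrable because $\E\|\vx\|_2^2<\infty$, so the expectation on the left is finite and the bound is meaningful. The almost-sure qualifier of the coincidence assumption is harmless, since null sets do not affect the expectation. I do not expect a genuine obstacle here; the step that could be mishandled is forgetting that the coincidence holds only on projected points, which is exactly why the intermediate point $\mP_k\vx$ is inserted.
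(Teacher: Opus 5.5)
Your proof is correct and follows the paper's argument step for step: you insert $\mP_k\vx$ using the almost-sure coincidence, bound each difference by its Lipschitz constant, apply $(a+b)^2\le 2a^2+2b^2$, and conclude with \cref{lem:tau-subspace}. Your remarks on measurability and integrability add detail the paper leaves implicit but do not change the argument.
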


This means that if $\tau_k$ is small, any two neuron functions that agree on the top-$k$ projected model are close in $L^2(\probdist)$ on the original distribution.

\begin{proof}
Since $\vh_1(\mP_k\vx)=\vh_2(\mP_k\vx)$ for $\probdist$-a.e.\ $\vx$,
\begin{equation}
    \vh_1(\vx)-\vh_2(\vx)
    =
    \big(\vh_1(\vx)-\vh_1(\mP_k\vx)\big)
    +
    \big(\vh_2(\mP_k\vx)-\vh_2(\vx)\big).
\end{equation}
Using $(a+b)^2\le 2a^2+2b^2$, we obtain
\begin{align}
    (\vh_1(\vx)-\vh_2(\vx))^2
    &\;\le\;
    2\big(\vh_1(\vx)-\vh_1(\mP_k\vx)\big)^2
    +
    2\big(\vh_2(\vx)-\vh_2(\mP_k\vx)\big)^2
    \nonumber\\
    &\;\le\;
    2(L_1^2+L_2^2)\|\vx - \mP_k\vx\|_2^2.
\end{align}
Taking expectations and using \cref{lem:tau-subspace},
\begin{equation}
    \E_{\vx\sim\probdist}\!\left[(\vh_1(\vx)-\vh_2(\vx))^2\right]
    \;\le\;
    2(L_1^2+L_2^2)\, \E_{\vx\sim\probdist} \|\vx - \mP_k\vx\|_2^2
    \;=\;
    2(L_1^2+L_2^2)\tau_k,
\end{equation}
which proves \cref{eq:projected-coincidence-bound}.
\end{proof}

In particular, if $\act$ is $L$-Lipschitz and $\vh_i(\vx)=\act(\vw_i^\top \vx)$ for $\vw_i\in\R^d$, $i\in\{1,2\}$, then each $\vh_i$ is $L\|\vw_i\|_2$-Lipschitz (by Cauchy-Schwarz). Hence, \cref{prop:projected-coincidence} yields
\begin{equation}
    \E_{\vx\sim\probdist}\!\left[(\vh_1(\vx)-\vh_2(\vx))^2\right]
    \;\le\;
    2L^2(\|\vw_1\|_2^2+\|\vw_2\|_2^2)\tau_k
\end{equation}
whenever $\vh_1(\mP_k \vx)=\vh_2(\mP_k \vx)$ for $\probdist$-a.e.\ $\vx$.

\appsubsection{Relaxed Realization Costs}\label{app:relaxed-realization-costs}

In the previous subsection, we have seen in what sense the subspace support model can be viewed as an approximation to a general input distribution. For realization costs, a complementary question is how one should formulate the cost when exact realizability is not imposed. A natural choice is to relax the hard constraint by penalizing squared $L^2(\probdist)$ approximation error and weight norm. We record it here to show that the realization cost from \cref{subsec:realization-costs} arises as its ridgeless limit.

Just as in \cref{sec:eff-func-classes}, consider a single asymmetric layer in isolation, and define $\layer(\W;\vx)
    :=
    \act((\fixed+\diag\odot \W)\vx)\in\R^m$,
where $\W\in\R^{m\times d}$, $\vx\in\R^d$, $\fixed,\diag\in\R^{m\times d}$ are fixed architectural components, and $\act:\R\to\R$ acts elementwise.

\begin{definition}[Relaxed $L^2$ realization cost]
\label{def:approx-cost}
Fix neuron $i$ and $\beta>0$. For any $\vh \in L^2(\probdist)$, define
\begin{equation}
    \|\vh\|^{2}_{\fctcl_i(\probdist),\beta} \;=\; \|\vh\|^{2}_{\fctcl_i,\beta}
    \; := \;\inf_{\w\in\R^d}
    \Big\{ \beta^{-1}
    \E_{\vx\sim\probdist}\!\left[(\vh(\vx)-\neuron_i(\w;\vx))^2\right]
    +
    \|\w\|_2^2
    \Big\}.
\label{eq:approx-cost}
\end{equation}
\end{definition}

In cases where the distribution $\probdist$ might vary, we will make the dependence explicit through writing $\fctcl_i(\probdist)$. It is straightforward to see that for Lipschitz $\act$ and any $\vh \in L^2(\probdist)$, $\|\vh\|^{2}_{\fctcl_i,\beta}<\infty$ is finite. A canonical family of targets we will look at are functions of the form $\vh_\vu(\vx) := \act(\vu^\top \vx)$, $\vu\in\R^d$, i.e., precisely \emph{realizable} target features, since this is precisely the setting in \cref{sec:eff-func-classes}.

\begin{theorem}[Explicit upper bound via a quadratic surrogate]
\label{thm:lipschitz-surrogate}
Assume $\act$ is $L$-Lipschitz. Fix neuron $i$ and define $\diag_i:=\Diag(\diagn_i)\in\R^{d\times d}$. For any $\vu\in\R^d$,
\begin{equation}
    \|\vh_\vu\|^{2}_{\fctcl_i,\beta}
    \;\le\;
    \inf_{\w\in\R^d}
    \left\{ \beta^{-1}
    L^2\,
    \|\vu-\fixedn_i-\diag_i\w\|_{\mSigma}^2
    +
    \|\w\|_2^2
    \right\}.
    \label{eq:surrogate-bound}
\end{equation}
Moreover, the infimum in \cref{eq:surrogate-bound} has a unique minimizer
\begin{equation}
    \w_i^\star(\vu)
    =
    \Big(\beta\,\mI+L^2\,\diag_i\mSigma\diag_i\Big)^{-1}L^2\,\diag_i\mSigma(\vu-\fixedn_i),
    \label{eq:wstar-surrogate}
\end{equation}
and the corresponding minimized value equals
\begin{equation}
    \frac{L^2}{\beta}\|\vu-\fixedn_i\|_{\mSigma}^2
    -
    \frac{L^4}{\beta^2}\, (\vu-\fixedn_i)^\top
    \mSigma\diag_i\left(\mI+ \frac{L^2}{\beta}\diag_i\mSigma\diag_i\right)^{-1}\diag_i\mSigma
    (\vu-\fixedn_i).
    \label{eq:surrogate-value}
\end{equation}
\end{theorem}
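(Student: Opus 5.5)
The plan has three parts: bound the neuron's error pointwise by a quadratic, minimize the resulting ridge-type objective exactly, and then simplify its value with the Woodbury identity.

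\textbf{Step 1: pointwise surrogate.} Fix $\vu$ and any $\w\in\R^d$, and note that $\neuron_i(\w;\vx)=\act((\fixedn_i+\diag_i\w)^\top\vx)$. Since $\act$ is $L$-Lipschitz,
\begin{equation*}
|\vh_\vu(\vx)-\neuron_i(\w;\vx)|\le L\,|(\vu-\fixedn_i-\diag_i\w)^\top\vx|.
\end{equation*}
Squaring and taking expectations gives
\begin{equation*}
\E[(\vh_\vu-\neuron_i(\w;\cdot))^2]\le L^2\,\vv^\top\mSigma\vv=L^2\|\vv\|_{\mSigma}^2,
\qquad \vv:=\vu-\fixedn_i-\diag_i\w.
\end{equation*}
Here $\mSigma=\E[\vx\vx^\top]$ is the second-moment matrix of \cref{app:approximate-subspace-support}. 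Plugging this into the definition \cref{eq:approx-cost} and taking the infimum over $\w$ on both sides yields \cref{eq:surrogate-bound}.

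\textbf{Step 2: the minimizer.} Write $\vc:=\vu-\fixedn_i$ and consider
\begin{equation*}
J(\w)=\beta^{-1}L^2(\vc-\diag_i\w)^\top\mSigma(\vc-\diag_i\w)+\|\w\|_2^2 .
\end{equation*}
This is a quadratic with Hessian $2(\mI+\beta^{-1}L^2\diag_i\mSigma\diag_i)\succeq 2\mI$, because $\diag_i$ is symmetric and $\mSigma\succeq0$. So $J$ is strongly convex and has a unique minimizer. Setting the gradient to zero gives
\begin{equation*}
(\beta\mI+L^2\diag_i\mSigma\diag_i)\,\w=L^2\diag_i\mSigma\vc,
\end{equation*}
which is exactly \cref{eq:wstar-surrogate}.

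\textbf{Step 3: the minimized value.} For a quadratic $J(\w)=\w^\top\mA\w-2\vb^\top\w+c_0$, the minimum equals $c_0-\vb^\top\mA^{-1}\vb$. Here
\begin{equation*}
\mA=\mI+\beta^{-1}L^2\diag_i\mSigma\diag_i,\qquad \vb=\beta^{-1}L^2\diag_i\mSigma\vc,\qquad c_0=\beta^{-1}L^2\|\vc\|_{\mSigma}^2 .
\end{equation*}
Substituting, with $\diag_i^\top=\diag_i$, gives
\begin{equation*}
\beta^{-1}L^2\|\vc\|_{\mSigma}^2-\beta^{-2}L^4\,\vc^\top\mSigma\diag_i\mA^{-1}\diag_i\mSigma\vc,
\end{equation*}
which is \cref{eq:surrogate-value}.

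None of these steps is a real obstacle. The only points needing care are invertibility of $\mA$, guaranteed since $\mA\succeq\mI$, and keeping the normalization consistent: the factor $\beta$ must stay inside $(\beta\mI+\dots)^{-1}$ in \cref{eq:wstar-surrogate}, while the value is written with $(\mI+\beta^{-1}L^2\dots)^{-1}$, so the two forms have to be reconciled by pulling out $\beta^{-1}$.
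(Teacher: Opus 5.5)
Your proposal is correct and follows the paper's proof essentially step for step. Both proofs use the pointwise Lipschitz bound integrated against $\mSigma$, then the first-order condition for the strongly convex quadratic, then the completing-the-square formula $c_0-\vb^\top\mA^{-1}\vb$ for the minimum value. The Woodbury identity announced in your overview is never needed, because Step 3 already yields \cref{eq:surrogate-value} directly.
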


\begin{proof}
Fix $\w\in\R^d$ and abbreviate
\begin{equation}
    \ell_\vu(\vx)\;:=\;\vu^\top\vx,
    \qquad
    \ell_i(\w;\vx)\;:=\;(\fixedn_i+\diag_i\w)^\top\vx.
\end{equation}
Since $\act$ is $L$-Lipschitz,
\begin{equation}
    (\vh_\vu(\vx)-\neuron_i(\w;\vx))^2
    \;\le\;
    L^2\,(\ell_\vu(\vx)-\ell_i(\w;\vx))^2.
\end{equation}
Taking expectations yields
\begin{equation}
    \E[(\vh_\vu(\vx)-\neuron_i(\w;\vx))^2]
    \;\le\;
    L^2\,\|\vu-\fixedn_i-\diag_i\w\|_{\mSigma}^2.
\end{equation}
Plugging into \cref{def:approx-cost} and then taking the infimum over $\w$ proves \cref{eq:surrogate-bound}. To compute the infimum, define $\alpha:=L^2/\beta$ and consider the strictly convex quadratic
\begin{equation}
    J(\w)\;:=\;\|\w\|_2^2+\alpha\,(\vu-\fixedn_i-\diag_i\w)^\top\mSigma(\vu-\fixedn_i-\diag_i\w).
\end{equation}
Differentiating gives
\begin{equation}
    \nabla J(\w)
    \;=\;
    2\w - 2\alpha\,\diag_i\mSigma(\vu-\fixedn_i-\diag_i\w)
    \;=\;
    2\Big(\mI+\alpha\,\diag_i\mSigma\diag_i\Big)\w
    -2\alpha\,\diag_i\mSigma(\vu-\fixedn_i).
\end{equation}
Since $\mI+\alpha\,\diag_i\mSigma\diag_i\succ 0$, setting $\nabla J(\w)=0$ yields the unique minimizer \cref{eq:wstar-surrogate}.
It remains to compute the minimum value. Write $\vr:=\vu-\fixedn_i$ and $\mA:=\mI+\alpha\,\diag_i\mSigma\diag_i\succ 0$.
Expanding $J$ yields
\begin{equation}
    J(\w)
    \;=\;
    \w^\top \mA \w
    -2\alpha\,\w^\top \diag_i\mSigma \vr
    +\alpha\, \vr^\top\mSigma \vr.
\end{equation}
Let $\vc:=\alpha\,\diag_i\mSigma \vr$. Then, $J(\w)=\w^\top\mA\w-2\w^\top \vc+\alpha\,\vr^\top\mSigma \vr$.
Since $\mA \succ 0$, we have
\begin{equation}
    \w^\top\mA\w-2\w^\top \vc
    \;=\;
    (\w-\mA^{-1}\vc)^\top \mA (\w-\mA^{-1}\vc) - \vc^\top \mA^{-1}\vc,
\end{equation}
which implies that the minimum is attained at $\w^\star:=\mA^{-1}\vc$ (which coincides with \cref{eq:wstar-surrogate}) and equals
\begin{equation}
    \min_{\w\in\R^d} J(\w)
    \;=\;
    \alpha\,\vr^\top\mSigma \vr - \vc^\top\mA^{-1}\vc
    \;=\;
    \alpha\,\vr^\top\mSigma \vr
    -
    \alpha^2\, \vr^\top\mSigma\diag_i(\mI+\alpha\,\diag_i\mSigma\diag_i)^{-1}\diag_i\mSigma \vr,
\end{equation}
which is exactly \cref{eq:surrogate-value}.
\end{proof}

If $\act$ is bi-Lipschitz, one can prove a lower bound in a similar way.
We will now show that the $\beta\to 0$ (ridgeless) limit recovers the hard realization cost from \cref{subsec:realization-costs}, where non-realizable targets have infinite cost.

\begin{theorem}[Ridgeless limit]
\label{thm:ridgeless-hard-cost}
Assume $\act$ is Lipschitz and $\E_{\vx\sim\probdist}\|\vx\|_2^2<\infty$.
Fix neuron $i$ and let
\begin{equation}
    \|\vh\|^{2}_{\fctcl_i(\probdist)} \;=\; \norm{\vh}_{\fctcl_i}^2
    \;:=\;
    \inf\left\{\|\w\|_2^2 \;\middle|\; \E_{\vx\sim\probdist}\!\left[(\vh(\vx)-\neuron_i(\w;\vx))^2\right]=0\right\}
    \in [0,\infty]
\end{equation}
with the convention $\inf\varnothing:=+\infty$.
Then for any $\vh\in L^2(\probdist)$,
\begin{equation}
    \lim_{\beta\to 0}\ \|\vh\|_{\fctcl_i,\beta}^2
    \;=\;
    \norm{\vh}_{\fctcl_i}^2.
\end{equation}
\end{theorem}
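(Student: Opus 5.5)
The plan is to write $F_\beta(\w):=\beta^{-1}E(\w)+\|\w\|_2^2$ with $E(\w):=\E_{\vx\sim\probdist}[(\vh(\vx)-\neuron_i(\w;\vx))^2]$, so that $\|\vh\|_{\fctcl_i,\beta}^2=\inf_\w F_\beta(\w)$. Since $F_\beta$ is pointwise nonincreasing as $\beta$ increases, the limit as $\beta\to0$ exists in $[0,\infty]$ and equals $\sup_{\beta>0}\inf_\w F_\beta$. I will show this limit equals $\norm{\vh}_{\fctcl_i}^2$ by matching upper and lower bounds.

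\emph{Upper bound.} Take any $\w$ with $E(\w)=0$. Then $F_\beta(\w)=\|\w\|_2^2$ for every $\beta$, so $\|\vh\|_{\fctcl_i,\beta}^2\le\|\w\|_2^2$. Taking the infimum over such $\w$ gives $\limsup_{\beta\to0}\|\vh\|_{\fctcl_i,\beta}^2\le\norm{\vh}_{\fctcl_i}^2$. This holds trivially if the feasible set is empty.

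\emph{Lower bound.} Let $c:=\lim_{\beta\to0}\|\vh\|_{\fctcl_i,\beta}^2$ and assume $c<\infty$; otherwise there is nothing to prove. Choose $\beta_n\to0$ and $\w_n$ with $F_{\beta_n}(\w_n)\le c+1/n$. Then $\|\w_n\|_2^2\le c+1$, so the sequence is bounded. We also get $E(\w_n)\le\beta_n(c+1)\to0$. By compactness we pass to a subsequence $\w_n\to\w^\ast$, and then
\[
\|\w^\ast\|_2^2=\lim_n\|\w_n\|_2^2\le\liminf_n F_{\beta_n}(\w_n)\le c.
\]

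The key step is continuity of $E$ in $\w$. Since $\act$ is $L$-Lipschitz,
\[
|\neuron_i(\w;\vx)-\neuron_i(\w';\vx)|\le L\,|(\diagn_i\odot(\w-\w'))^\top\vx|\le L\,\|\diagn_i\odot(\w-\w')\|_2\,\|\vx\|_2 .
\]
Together with $\E\|\vx\|_2^2<\infty$, this makes $\w\mapsto\neuron_i(\w;\cdot)$ Lipschitz into $L^2(\probdist)$. Finiteness of $E(\w)$ follows as well: $\neuron_i(\w;\cdot)\in L^2(\probdist)$ because $|\act(z)|\le|\act(0)|+L|z|$, and $\vh\in L^2(\probdist)$ by assumption. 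Hence $\sqrt{E}$, being the $L^2$ distance from $\vh$, is continuous.

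It follows that $E(\w^\ast)=\lim_n E(\w_n)=0$, so $\w^\ast$ is feasible for the hard problem. Therefore $\norm{\vh}_{\fctcl_i}^2\le\|\w^\ast\|_2^2\le c$, which closes the argument.

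I expect no serious obstacle. The one point needing care is this continuity step, where the Lipschitz assumption and the finite second moment of $\probdist$ are exactly what make compactness applicable. A minor bookkeeping point is handling the infinite case, which is covered by the assumption $c<\infty$ in the lower bound.
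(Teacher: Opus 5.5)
Your proof is correct and follows essentially the same route as the paper. Both take bounded near-minimizers $\w_n$ with $E(\w_n)\to 0$, extract a convergent subsequence, and use the $L^2(\probdist)$-continuity of $\w\mapsto\neuron_i(\w;\cdot)$ (which follows from Lipschitzness of $\act$ and $\E\|\vx\|_2^2<\infty$) to show the limit is feasible. The only difference is cosmetic: you use monotonicity of $F_\beta$ in $\beta$ to get existence of the limit and absorb the case $\norm{\vh}_{\fctcl_i}^2=+\infty$ into the lower-bound argument, whereas the paper treats that case separately by contradiction and works with $\limsup$ and $\liminf$ in the finite case.
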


\begin{proof}
Write $f(\w):=\E[(\vh(\vx)-\neuron_i(\w;\vx))^2] \geq 0$ and
\(
\phi(\beta):=\inf_{\w}\{f(\w)+\beta\|\w\|_2^2\}
\)
so that $\|\vh\|_{\fctcl_i,\beta}^2=\beta^{-1}\phi(\beta)$ by \cref{def:approx-cost}.

First, suppose $\norm{\vh}_{\fctcl_i}^2=+\infty$. We show $\|\vh\|_{\fctcl_i,\beta}^2\to+\infty$.
Assume for contradiction that there exists $0 < \beta_n\to 0$ and $C<\infty$ with $\|\vh\|_{\fctcl_i,\beta_n}^2\le C$ for all $n$.
Then there exist $\w_n$ such that
\begin{equation}
    f(\w_n)+\beta_n\|\w_n\|_2^2 \;\le\; \beta_n(C+1).
\end{equation}
In particular, this implies that $\|\w_n\|_2^2\le C+1$ and $f(\w_n)\to 0$ as $n \to \infty$.
Since $(\w_n)_n$ is bounded, we can extract a convergent subsequence $(\w_{n_k})_k$ with $\w_{n_k}\to \w^\ast$ as $k \to \infty$.
By Lipschitzness of $\act$ and $\E\|\vx\|_2^2<\infty$, the map $\w\mapsto \neuron_i(\w;\cdot)$ is continuous from $(\R^d,\norm{\cdot}_2)$ to $L^2(\probdist)$, hence $f(\w_{n_k})\to f(\w^\ast)$ and therefore $f(\w^\ast)=0$.
This contradicts $\norm{\vh}_{\fctcl_i}^2=+\infty$.

Assume now $\norm{\vh}_{\fctcl_i}^2<\infty$. Then, for every $\beta > 0$,
\begin{equation}
    \phi(\beta) \; = \; \inf_{\w}\{f(\w)+\beta\|\w\|_2^2\} \; \leq \; \inf_{\vw: f(\vw)=0} \{\beta \|\w\|_2^2\} \; = \; \beta \norm{\vh}_{\fctcl_i}^2,
\end{equation}
and
\begin{equation}
    \limsup_{\beta\to 0}\: \beta^{-1}\phi(\beta) \;\le\; \norm{\vh}_{\fctcl_i}^2. \label{eq:ridgeless_limsup}
\end{equation}
For the reverse inequality, let $0 <\beta_n\to 0$ and pick $\w_n$ such that
\(
f(\w_n)+\beta_n\|\w_n\|_2^2\le \phi(\beta_n)+\beta_n/n.
\)
By the previous upper bound, $\phi(\beta_n)\le \beta_n(\norm{\vh}_{\fctcl_i}^2+1)$ for all large enough $n$, hence $(\|\w_n\|_2)_n$ is bounded and $f(\w_n)\to 0$.
Similar as above, any limit point $\w^\ast$ of a convergent subsequence satisfies $f(\w^\ast)=0$ as $n \to \infty$.
Therefore $\|\w^\ast\|_2^2\ge \norm{\vh}_{\fctcl_i}^2$ and thus
\(
\liminf_{n\to\infty}\|\w_n\|_2^2\ge \norm{\vh}_{\fctcl_i}^2.
\)
Since $\beta_n^{-1}\phi(\beta_n)\ge \|\w_n\|_2^2 - 1/n$, we can conclude
\begin{equation}
    \liminf_{\beta\to 0}\beta^{-1}\phi(\beta) \;\ge\; \norm{\vh}_{\fctcl_i}^2. \label{eq:ridgeless_liminf}
\end{equation}
Combining \cref{eq:ridgeless_limsup} and \cref{eq:ridgeless_liminf} gives the claim.
\end{proof}

When $\act=\mathrm{id}$, the relaxed cost in \cref{def:approx-cost} reduces to a strictly convex quadratic ridge objective in $\w$, so the minimizer and optimum admit explicit closed forms for every $\beta>0$. For general nonlinear $\act\neq\mathrm{id}$, the same objective is typically nonconvex in $\w$ and no simple closed-form solution is available, so apart from the ridgeless limit, one would have to work with tractable surrogates such as \cref{thm:lipschitz-surrogate}. By contrast, in the ridgeless limit the problem simplifies conceptually: \cref{thm:ridgeless-hard-cost} shows that as $\beta\to 0$, one recovers the realization cost from \cref{subsec:realization-costs}, which admits the explicit closed form \cref{eq:mahalanobisNorm}. At the same time, because the limit enforces exact realizability, it is singular under perturbations of the \emph{support}, i.e., even a small amount of mass outside the top-$k$ subspace can destroy feasibility on the original distribution and make the hard cost jump to $+\infty$. For fixed $\beta>0$, however, the relaxed objective remains stable under such perturbations. The next corollary records the corresponding quantitative upper bound.

\begin{corollary}[Projected realizability yields small relaxed cost]
\label{cor:projected-relaxed-cost}
Fix neuron $i$ and $\beta>0$, and let $\vh\in L^2(\probdist)$. Assume that the hard realization cost of $\vh$ with respect to $\probdist_k$ is finite and attained at some $\vw^\ast\in\R^d$, i.e.
\begin{equation}
    \vh(\mP_k \vx)=\neuron_i(\vw^\ast;\mP_k \vx)
\quad\text{for $\probdist$-a.e.\ $\vx$},
\qquad
\|\vw^\ast\|_2=\|\vh\|_{\fctcl_i(\probdist_k)}.
\end{equation}
Assume also that $\vh$, $\neuron_i(\w^\ast;\cdot)$ are $L_\vh$- and $L_{i}(\vw^\ast)$-Lipschitz, respectively. Then,
\begin{equation}
    \|\vh\|_{\fctcl_i(\probdist),\beta}^2
    \;\le\;
    \norm{\vh}_{\fctcl_i(\probdist_k)}^2
    +
    2 (L_\vh^2+L_{i}(\vw^\ast)^2)\, \beta^{-1}\tau_k.
    \label{eq:projected-relaxed-cost}
\end{equation}
\end{corollary}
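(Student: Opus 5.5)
The bound follows by plugging the single candidate weight $\vw^\ast$ into the relaxed objective of \cref{def:approx-cost}. Since $\|\vh\|_{\fctcl_i(\probdist),\beta}^2$ is an infimum over $\w\in\R^d$, choosing $\w=\vw^\ast$ gives
\begin{equation*}
    \|\vh\|_{\fctcl_i(\probdist),\beta}^2
    \;\le\;
    \beta^{-1}\E_{\vx\sim\probdist}\!\left[(\vh(\vx)-\neuron_i(\vw^\ast;\vx))^2\right]
    +
    \|\vw^\ast\|_2^2 .
\end{equation*}
By the attainment assumption, the second term equals $\norm{\vh}_{\fctcl_i(\probdist_k)}^2$. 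It therefore remains to bound the expected squared error by $2(L_\vh^2+L_i(\vw^\ast)^2)\tau_k$.

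For this I will apply \cref{prop:projected-coincidence} with $\vh_1:=\vh$ and $\vh_2:=\neuron_i(\vw^\ast;\cdot)$, which are $L_\vh$- and $L_i(\vw^\ast)$-Lipschitz respectively. The proposition requires coincidence on the projected model, namely $\vh_1(\mP_k\vx)=\vh_2(\mP_k\vx)$ for $\probdist$-a.e.\ $\vx$. This is exactly the realizability hypothesis $\vh(\mP_k\vx)=\neuron_i(\vw^\ast;\mP_k\vx)$. We also have $\E\|\vx\|_2^2<\infty$, which holds as in the setting of \cref{app:approximate-subspace-support} and is needed for $\tau_k$ to be defined. The proposition then yields
\begin{equation*}
    \E_{\vx\sim\probdist}\big[(\vh(\vx)-\neuron_i(\vw^\ast;\vx))^2\big]\le 2(L_\vh^2+L_i(\vw^\ast)^2)\,\tau_k .
\end{equation*}
Multiplying by $\beta^{-1}$ and adding $\|\vw^\ast\|_2^2$ gives \cref{eq:projected-relaxed-cost}.

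There is essentially no obstacle. The only point to check is that the hard cost with respect to $\probdist_k$ is the appropriate notion, so that the feasibility condition is exactly pointwise agreement at $\mP_k\vx$ almost surely. Almost-sure equality under the pushforward $(\mP_k)_\ast\probdist$ is equivalent to $\probdist$-a.e.\ equality of the composed functions, and the hypothesis is stated in the latter form anyway. Lipschitzness is needed only for $\vh$ and $\neuron_i(\vw^\ast;\cdot)$, not for arbitrary $\w$, so no continuity argument is required.
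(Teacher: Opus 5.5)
Your proof is correct and matches the paper's argument: evaluate the infimum in \cref{def:approx-cost} at $\vw^\ast$, bound the resulting $L^2(\probdist)$ error with \cref{prop:projected-coincidence} applied to $\vh$ and $\neuron_i(\vw^\ast;\cdot)$, and identify $\|\vw^\ast\|_2^2$ with the hard cost on $\probdist_k$ via attainment. Your remark that $\E\|\vx\|_2^2<\infty$ is the standing assumption needed to define $\tau_k$ is appropriate.
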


\begin{proof}
By \cref{def:approx-cost}, evaluating the infimum at $\w^\ast$ and applying \cref{prop:projected-coincidence} gives
\begin{equation}
    \|\vh\|_{\fctcl_i(\probdist),\beta}^2
    \;\le\;
    \beta^{-1}\E_{\vx\sim\probdist}\!\left[(\vh(\vx)-\neuron_i(\w^\ast;\vx))^2\right]
    +
    \|\w^\ast\|_2^2 \;\leq\; 
    2 (L_\vh^2+L_{i}(\vw^\ast)^2)\, \beta^{-1}\tau_k + \|\w^\ast\|_2^2,
\end{equation}
which yields the claim since $\norm{\vw^\ast}_2 = \norm{\vh}_{\fctcl_i(\probdist_k)}$.
\end{proof}

\Cref{cor:projected-relaxed-cost} links the approximate subspace support from \cref{app:approximate-subspace-support} and the relaxed realization costs we introduced in \cref{app:relaxed-realization-costs}. Namely, the first term on the r.h.s.\ of \cref{eq:projected-relaxed-cost} is the hard realization cost on the projected model $\probdist_k$. The second term is the approximation error incurred when passing back to the original distribution. Thus, for fixed $\beta>0$, the projected subspace support model yields a quantitative upper bound on the relaxed realization cost under the true distribution.

\newpage
\appsection{Experimental Setup}

\appsubsection{MLPs}
For MLP experiments, we use MLPs with 4 layers, input dimension 784 (MNIST input dimension), hidden dimensions 512, layer norms, and GELU activation functions.
For \Wasymmetric MLPs ($\W$-MLPs), following \citet{lim_empirical_2024}, we fix 64 entries per linear layer row (= neuron) for the first three layers, and 256 entries per row for the final layer.
Fixed entries are randomly sampled, making use of a {mask seed} that ensures the same mask in different model instances.
Whenever we perform parameter interpolation between models, the models have the same mask (which is considered a fixed part of the architecture).

For the fixed weight scale $\sigma_{\fixed}$, we use $\sigma_{\fixed}=1$ as the default choice for effective symmetry breaking $\W$-MLPs, and $\sigma_{\fixed}=0$ for the variant that does not exhibit effective symmetry breaking, as well as a range of values for $\sigma_{\fixed}$ between 0.0 and 5.0 for one $\sigma_{\fixed}$ sweep experiment.
In contrast to \citet{lim_empirical_2024}, we always apply the same value of $\sigma_{\fixed}$ at all layers.

We train the MLPs, $\W$-MLPs, and \emph{syre}-MLPs on MNIST for 100 epochs with AdamW \citep{loshchilov_decoupled_2019}, a learning rate of 0.001, and a weight decay of 0.01.

\appsubsection{ResNets}
For our ResNet experiments, we use ResNet20 networks with $8\times$ width multiplier (following the setup of \citet{lim_empirical_2024}), ReLU activation functions, and group norms.
The architecture contains an initial convolution (3 $\to$ 128 channels), three block groups (128, 128 $\to$ 256, 256 $\to$ 512 channels), and a final linear layer.
For \Wasymmetric ResNets, in each convolution, a number of fixed weights are introduced per input channel (spread randomly over the output channels and kernel dimensions).
The number of fixed parameters varies per block group, with the shortcut convolution in block groups 2 and 3 having a separate value for the number of fixed values.
\Cref{tab:fixed_params_resnet} gives the values across the network.

\begin{table}[htbp]
\centering
\caption{Number of fixed parameters per layer.}
\begin{tabular}{lr}
\hline
Layer & \# fixed \\
\hline
Initial convolution & 12 \\
Block group 1 & 108 \\
Block group 2 & 162 \\
\quad\quad Shortcut & 18 \\
Block group 3 & 216 \\
\quad\quad Shortcut & 24 \\
Final linear & 24 \\
\hline
\end{tabular}
\label{tab:fixed_params_resnet}
\end{table}

The default fixed weight scale in our $\W$-ResNet experiments is $\sigma_{\fixed}=2$ for the effective symmetry breaking setting, and $\sigma_{\fixed}=0$ for the ineffective symmetry breaking setting.
We train the ResNets on CIFAR-10 for 100 epochs using AdamW \citep{loshchilov_decoupled_2019} with weight decay of 0.01, and learning rate warm up from 0.0001 to 0.01 in the first 20 epochs ($\gamma=1.259$).

\appsubsection{Datasets and Augmentations}

We use MNIST and CIFAR-10 datasets with a random 80\%/10\%/10\% train/val/test split.
On MNIST, we only use a normalization transform.
On CIFAR-10, we additionally use a RandomHorizontalFlip transform, and a RandomCrop transform with 4 pixels of padding and a 32x32 output size.

\appsubsection{Activation Matching}
We implement activation matching as described in \cref{sec:experimental-results}.
For MLPs, this is straightforward: we forward the full dataset through the models and record intermediate activations $\mZ^A, \mZ^B$ at each layer, which we use to compute the similarity matrix $\mS^{\text{act}}$.
For ResNets, some subtleties are involved.
The permutable dimension is the convolution channel dimension, which takes the role of the hidden representation dimension in $\mZ$.
We could average over the pixel dimensions to have $\mZ \in \R^{n \times d}$. However we find it beneficial for LMC to treat each pixel as an independent data sample, producing $\mZ \in \R^{(nk^2) \times d}$.
Furthermore, in ResNets, the residual connections cause certain activation permutations to affect several layer inputs: For example, the input to block 1 in block group 1 is also indirectly fed into block 2 via the additive residual connection, without anything yielding an independent permutation symmetry in between.
Each block group exhibits one such a global permutation that affects several layer inputs, and hence should be estimated jointly, yielding $\mZ \in \R^{(nk^2c) \times d}$ (for $c = 4$ in our case).
Besides, each block admits an inner permutation symmetry between its two convolution layers, which can be estimated independently.
Due to memory constraints, we perform activation matching on 10\% of our CIFAR-10 training set, which is sufficient to yield an alignment of low barrier.
For our plots in \cref{sec:experimental-results}, we average the activation matching objective across all layers.

\appsubsection{Coherence Experiment}

We record details for the experiment isolating dependence on the coherence of the data subspace (see \cref{fig:coherence-synthetic-experiment-barriers}) in a teacher-student setup. 
The latent input variable is $(a, b) \sim \mathcal{N}(0,\mI_2)$ and we aim to learn the teacher $f(a,b) := \operatorname{ReLU}(a+b)$.
We sample $1024$ training examples and $2048$ test examples from this latent distribution, which are used across all coherence levels.
The latent $(a,b)$ is embedded into the ambient space $\mathbb{R}^d$, $d=20$, as
\begin{equation}
    \vx \;=\; \onb_k (a,b)^\top,\qquad \onb_k \in \mathbb{R}^{d \times 2},\qquad \onb_k^\top \onb_k=\mI_2,
\end{equation}
where $k \in \{1,\ldots,10\}$ controls the active ambient support.
For the ``exact-copy'' family, all rows of $\onb_k$ are zero except
\begin{equation}
    U_{2r-1,1} \;=\; \frac{1}{\sqrt{k}},
    \qquad
    U_{2r,2} \;=\; \frac{1}{\sqrt{k}},
    \qquad r=1,\ldots,k.
\end{equation}
I.e., each latent coordinate is copied uniformly over $k$ ambient coordinates.
For the ``random-frame'' family, we sample a permutation $\pi$ and angles $\theta_1,\ldots,\theta_{d/2} \sim \operatorname{Unif}(0,\pi)$.
These sampled values are reused for all $k$.
For a given $k$, all rows of $\onb_k$ are zero except, for $r=1,\ldots,k$,
\begin{equation}
    \onb_{\pi(2r-1),:}
    \;=\;
    \frac{1}{\sqrt{k}}
    (\cos\theta_r, \,\sin\theta_r),
    \qquad
    \onb_{\pi(2r),:}
    \;=\;
    \frac{1}{\sqrt{k}}
    (-\sin\theta_r, \,\cos\theta_r).
\end{equation}

The student is a one-hidden-layer asymmetric ReLU network of width $128$ with $\fixed = 0$ and $\diag \sim\operatorname{Bernoulli}(0.15)$ i.i.d.
Further, the output weights are fixed to 1.
Hidden weights are initialized independently from $\mathcal{N}(0,0.05^2)$ and hidden biases are initialized to zero.
Each model is trained for $100$ epochs on MSE on a batch size of $512$ using AdamW \citep{loshchilov_decoupled_2019} (learning rate $10^{-2}$, weight decay $10^{-2}$).
For each $k$ and subspace embedding, we sample 5 outer seeds for $\diag$, and 10 initializations of student pairs for interpolation each, yielding in total 50 interpolation pairs per $k$ and subspace embedding.
For each $k$ and subspace embedding, the reported midpoint barrier is the mean over the $50$ pair-level midpoint barriers $\pm$ standard deviation over these $50$ values.

\appsubsection{Gaussian Mixture Synthetic Dataset}
In our experiments (\textbf{Q3} in \cref{sec:experimental-results}, and \cref{sec:appendix-experiments-center-separation-rate}, \cref{sec:appendix-experiments-realization-costs-ridge} in the appendix) we use a synthetic Gaussian mixture model dataset that allows us to control the intrinsic data dimension parameter $k$.
We sample class centers $\vmu_c \in \R^k$, $(\vmu_c)_i \sim \mathcal N(0, \sigma_{\text{sep}}^2)$, where $\sigma_{\text{sep}}$ is a class separation parameter.
We generate $k$-dimensional class $k$ latent samples as
\begin{equation}
    \vz_i \;=\; \mQ_c(\vmu_c + \bm{\varepsilon}_i) \in \R^k,
\end{equation}
where $\bm{\varepsilon}_i \sim \mathcal N(0, \mathrm{Diag}(s_1^2,\dots,s_k^2))$ is anisotropic Gaussian noise (with direction-wise scaling vector $\vs \in \R^k)$ and $\mQ_c$ is a class-specific orthogonal transformation, and embed them into the ambient space as $\vx_i = \mP \vz_i \in \R^d$, where $\mP \in \R^{d \times k}$.
We generate $50\,000$ training samples using five classes in an ambient space of dimension $d=128$, varying the intrinsic dimension $k$ depending on the experiment.

\appsubsection{Empirical Neuron Realization and Swap Costs}
We estimate neuron realization costs in trained models, and pairwise neuron swap costs derived from them (\textbf{Q3} in \cref{sec:experimental-results}, and \cref{sec:appendix-experiments-realization-costs-ridge} in the appendix) using two methods: Using the Mahalanobis distance of \cref{eq:mahalanobisNorm}, and using a ridge ridge regression setup (see \cref{def:approx-cost}).

For the Mahalanobis distance estimation, we record activations during a forward pass on the whole dataset, and use them to estimate an input subspace for each layer as a PCA subspace that explains at least 90\% of the variance.
We then follow equations \cref{eq:inducedSubspaceActivationCoefficient}, \cref{eq:gramMatrix} and \cref{eq:mahalanobisNorm} to compute per neuron the induced subspace preactivation coefficient $\va$, the Gram matrix $\gramnu_i$, and finally the realization cost (Mahalanobis distance) under the subspace support model.

For the ridge regression realization cost estimation, we optimize \cref{eq:approx-cost} for $15\,000$ iterations using AdamW (learning rate $0.005$, no weight decay).
We sweep $\beta \in \{ 0.001, 0.01, 0.1, 1.0 \}$, and report results for $\beta=0.01$.
For the ridge regression experiments, we re-train models using batch norm instead of layer norm, consider the neuron function to include the normalization and activation function, but leave the batch norm parameters fixed.
The optimization for neuron $i$'s realization cost with respect to neuron $j$'s class starts at neuron $i$'s trained weights.

\newpage
\appsection{Additional Experimental Results}

\appsubsection{Layerwise Activation Matching Objectives}
\label{sec:appendix-experiments-layerwise-activation-matching}

In addition to \cref{fig:activation-matching-objective}, we also report activation matching objectives per layer.
In \cref{fig:activation-matching-epochs-mlp-layerwise}, we plot layerwise activation matching objectives by epoch for MLPs on MNIST.
This reveals that for all observed cases, the patterns are consistent across layers and, therefore, the aggregate over all layers from \cref{fig:activation-matching-objective} is informative.

We also report activation matching objectives for $\sigma$-asymmetric networks \citep{lim_empirical_2024}.
This different symmetry breaking intervention operates by replacing the nonlinearity with one that does not act elementwise.
This specific approach has been found to not break symmetries as effectively as \Wasym in the context of unaligned LMC.
While $\sigma$-asym.\ networks do not fit into our framework of \cref{eq:W_eff}, one can still perform activation matching to qualitatively analyze neuron identifiability.
\Cref{fig:activation-matching-epochs-mlp-layerwise} reveals that, in fact, the identity does not produce a higher objective than random permutations, and therefore, similar to \Wasym networks with $\sigma_{\fixed} = 0$, symmetries are not broken effectively.

\newcommand{\amrowlabel}[2]{%
    \raisebox{0.9cm}{
    \rotatebox[origin=c]{90}{%
        \begin{tabular}{@{}c@{}}
            #1\\[-1pt]
            #2
        \end{tabular}%
    }%
    }
}
\newcommand{\amrowlabelone}[1]{%
    \rotatebox[origin=c]{90}{#1}%
}

\begin{figure}[H]
    \centering
    \setlength{\tabcolsep}{2pt}
    \renewcommand{\arraystretch}{1.15}

    \begin{tabular}{
        @{}
        >{\centering\arraybackslash}m{0.06\linewidth}
        >{\centering\arraybackslash}m{0.22\linewidth}
        >{\centering\arraybackslash}m{0.22\linewidth}
        >{\centering\arraybackslash}m{0.22\linewidth}
        >{\centering\arraybackslash}m{0.22\linewidth}
        @{}
    }
        &
        \textbf{All layers}
        &
        \textbf{After layer 1}
        &
        \textbf{After layer 2}
        &
        \textbf{After layer 3}
        \\[0.3em]

        \raisebox{.55cm}{
        \amrowlabelone{MLP}}
        &
        \includegraphics[width=\linewidth]{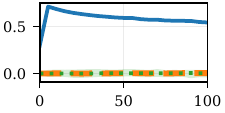}
        &
        \includegraphics[width=\linewidth]{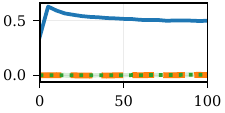}
        &
        \includegraphics[width=\linewidth]{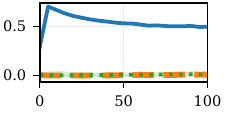}
        &
        \includegraphics[width=\linewidth]{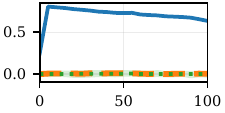}
        \\
        \noalign{\vskip 0.6em}

        \amrowlabel{$\W$-MLP}{$\sigma_{\fixed}=0$}
        &
        \includegraphics[width=\linewidth]{figures/activation-matching-epochs-mlp_symmetry1_kappa0-alllayers.pdf}
        &
        \includegraphics[width=\linewidth]{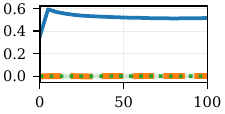}
        &
        \includegraphics[width=\linewidth]{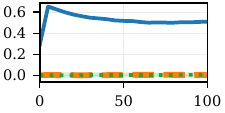}
        &
        \includegraphics[width=\linewidth]{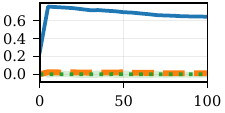}
        \\
        \noalign{\vskip 0.6em}

        \amrowlabel{$\W$-MLP}{$\sigma_{\fixed}=1$}
        &
        \includegraphics[width=\linewidth]{figures/activation-matching-epochs-mlp_symmetry1_kappa1-alllayers.pdf}
        &
        \includegraphics[width=\linewidth]{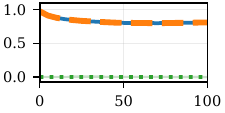}
        &
        \includegraphics[width=\linewidth]{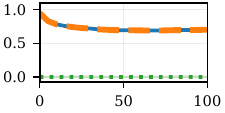}
        &
        \includegraphics[width=\linewidth]{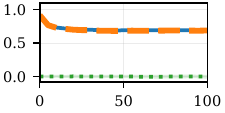}
        \\
        \noalign{\vskip 0.6em}

        \amrowlabel{\emph{syre}-MLP}{$\sigma_{\fixed}=1$}
        &
        \includegraphics[width=\linewidth]{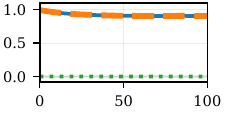}
        &
        \includegraphics[width=\linewidth]{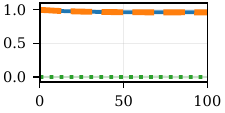}
        &
        \includegraphics[width=\linewidth]{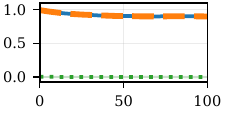}
        &
        \includegraphics[width=\linewidth]{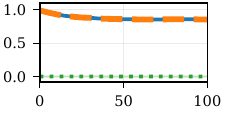}
        \\
        \noalign{\vskip 0.6em}

        \raisebox{.5cm}{
        \amrowlabelone{$\sigma$-MLP}}
        &
        \includegraphics[width=\linewidth]{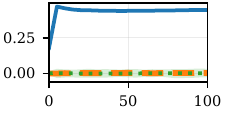}
        &
        \includegraphics[width=\linewidth]{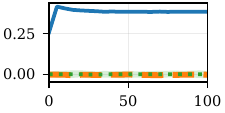}
        &
        \includegraphics[width=\linewidth]{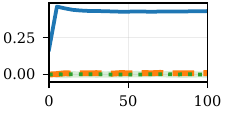}
        &
        \includegraphics[width=\linewidth]{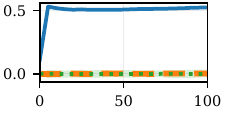}
    \end{tabular}

    \caption{Layerwise activation matching objectives over epochs, separated by layer and by different symmetry-breaking architectures (including the $\sigma$-MLP variant, see \citet{lim_empirical_2024}). We plot \textcolor{tumblue}{optimal}, \textcolor{orange}{identity}, and \textcolor{darkgreen}{random} permutations.}
    \label{fig:activation-matching-epochs-mlp-layerwise}
\end{figure}

Further, in \cref{fig:activation-matching-by-layer-mlp,fig:activation-matching-by-layer-resnet}, we plot layerwise versions of the $\sigma_{\fixed}$ sweep for both MLPs on MNIST and ResNets on CIFAR-10 respectively.
We also sweep the sparsity (proportion of fixed weights) in \cref{fig:activation-matching-by-layer-mlp-sparsity} to measure its effect on symmetry breaking, and find that sparsity alone without fixed weights does not yield strong neuron identifiability.

\begin{figure}[h]
    \centering
    \begin{subfigure}[b]{0.25\linewidth}
        \includegraphics[width=\textwidth]{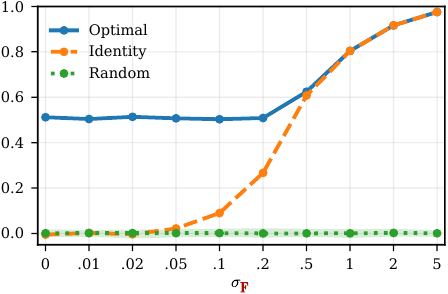}
        \caption{After layer 1}
    \end{subfigure}
    \hspace{20pt}
    \begin{subfigure}[b]{0.25\linewidth}
        \includegraphics[width=\textwidth]{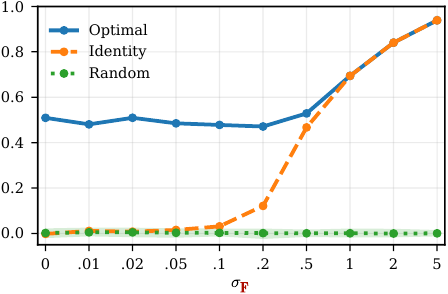}
        \caption{After layer 2}
    \end{subfigure}
    \hspace{20pt}
    \begin{subfigure}[b]{0.25\linewidth}
        \includegraphics[width=\textwidth]{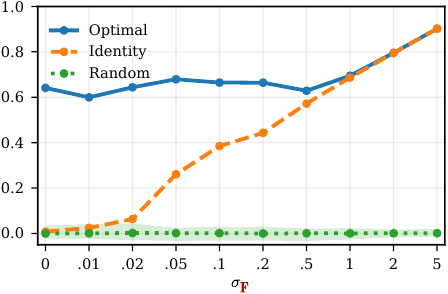}
        \caption{After layer 3}
    \end{subfigure}
    \caption{Activation matching objectives per layer ($\W$-MLP on MNIST), sweeping over fixed weight scale $\sigma_{\fixed}$.}
    \label{fig:activation-matching-by-layer-mlp}
\end{figure}

\begin{figure}%
    \centering%
    \begin{subfigure}[b]{0.235\linewidth}
        \includegraphics[width=\textwidth]{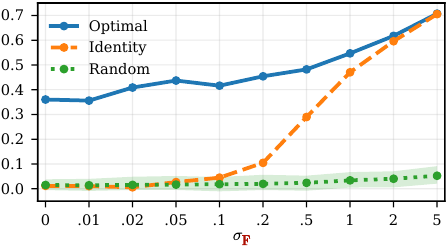}
        \caption{BG 1 residual stream}
    \end{subfigure}
    \begin{subfigure}[b]{0.235\linewidth}
        \includegraphics[width=\textwidth]{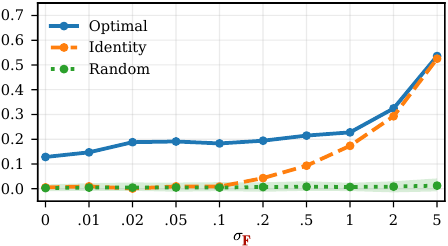}
        \caption{Layer 1.1 inner}
    \end{subfigure}
    \begin{subfigure}[b]{0.235\linewidth}
        \includegraphics[width=\textwidth]{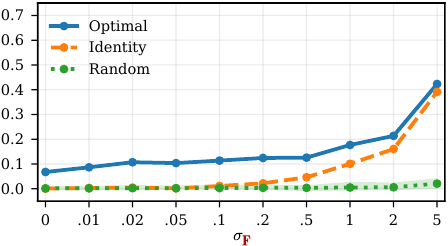}
        \caption{Layer 1.2 inner}
    \end{subfigure}
    \begin{subfigure}[b]{0.235\linewidth}
        \includegraphics[width=\textwidth]{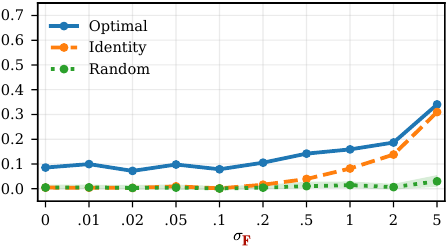}
        \caption{Layer 1.3 inner}
    \end{subfigure}%
    \\[0.25cm]

    \begin{subfigure}[b]{0.235\linewidth}
        \includegraphics[width=\textwidth]{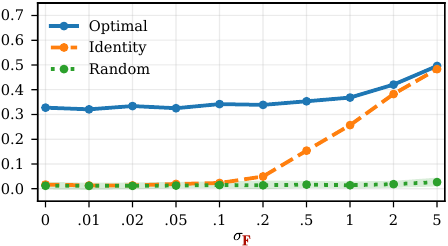}
        \caption{BG 2 residual stream}
    \end{subfigure}
    \begin{subfigure}[b]{0.235\linewidth}
        \includegraphics[width=\textwidth]{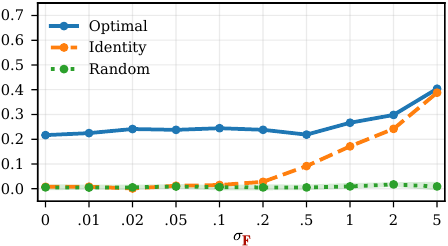}
        \caption{Layer 2.1 inner}
    \end{subfigure}
    \begin{subfigure}[b]{0.235\linewidth}
        \includegraphics[width=\textwidth]{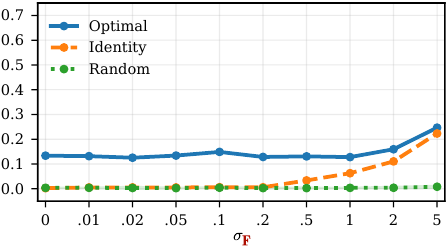}
        \caption{Layer 2.2 inner}
    \end{subfigure}
    \begin{subfigure}[b]{0.235\linewidth}
        \includegraphics[width=\textwidth]{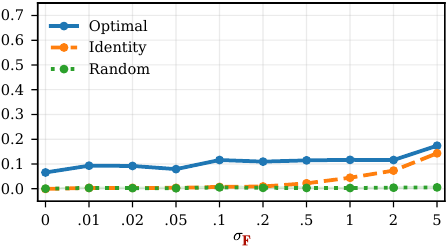}
        \caption{Layer 2.3 inner}
    \end{subfigure}%
    \\[0.25cm]

    \begin{subfigure}[b]{0.235\linewidth}
        \includegraphics[width=\textwidth]{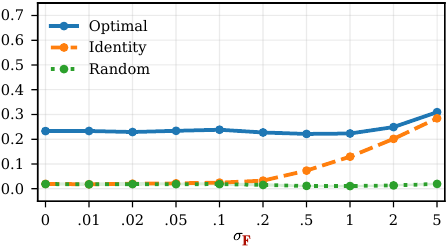}
        \caption{BG 3 residual stream}
    \end{subfigure}
    \begin{subfigure}[b]{0.235\linewidth}
        \includegraphics[width=\textwidth]{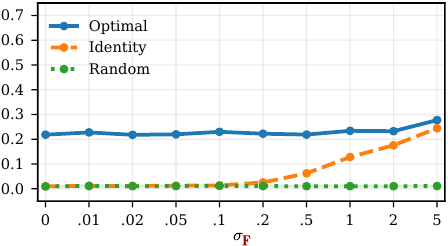}
        \caption{Layer 3.1 inner}
    \end{subfigure}
    \begin{subfigure}[b]{0.235\linewidth}
        \includegraphics[width=\textwidth]{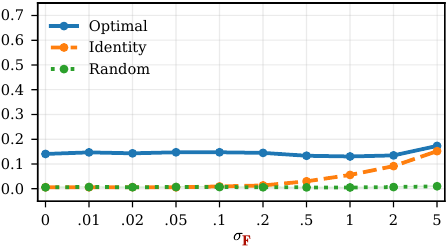}
        \caption{Layer 3.2 inner}
    \end{subfigure}
    \begin{subfigure}[b]{0.235\linewidth}
        \includegraphics[width=\textwidth]{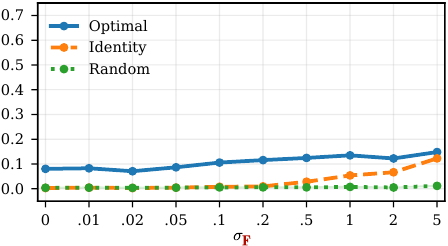}
        \caption{Layer 3.3 inner}
    \end{subfigure}

    \caption{Activation matching objectives separated by layer/activation matching point ($\W$-ResNet on CIFAR-10). In our $\W$-ResNets, activation matching points lie within each of the three block groups' (BG) residual streams (used to estimate a global permutation for the residual stream that multiple layers write into), and between the two lin. layers of the three inner two-layer MLPs within each block group.}
    \label{fig:activation-matching-by-layer-resnet}
\end{figure}

\begin{figure}[h]
    \centering
    \hspace{0.05cm}\raisebox{1.82cm}{\rotatebox{90}{\raisebox{0.2cm}{$\sigma_{\fixed}=0$}}}
    \begin{subfigure}[b]{0.31\linewidth}
        \includegraphics[width=\textwidth]{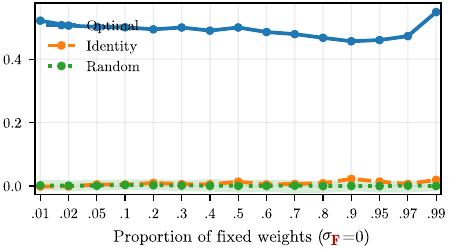}
        \caption{After layer 1}
    \end{subfigure}
    \begin{subfigure}[b]{0.31\linewidth}
        \includegraphics[width=\textwidth]{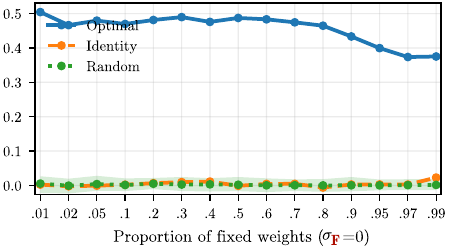}
        \caption{After layer 2}
    \end{subfigure}
    \begin{subfigure}[b]{0.31\linewidth}
        \includegraphics[width=\textwidth]{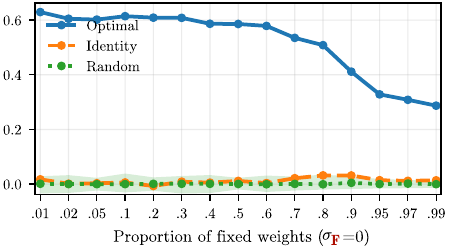}
        \caption{After layer 3}
    \end{subfigure}

    \hspace{0.05cm}\raisebox{1.82cm}{\rotatebox{90}{\raisebox{0.2cm}{$\sigma_{\fixed}=1$}}}
    \begin{subfigure}[b]{0.31\linewidth}
        \includegraphics[width=\textwidth]{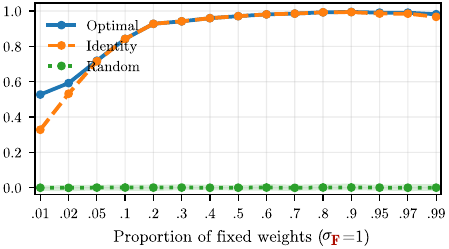}
        \caption{After layer 1}
    \end{subfigure}
    \begin{subfigure}[b]{0.31\linewidth}
        \includegraphics[width=\textwidth]{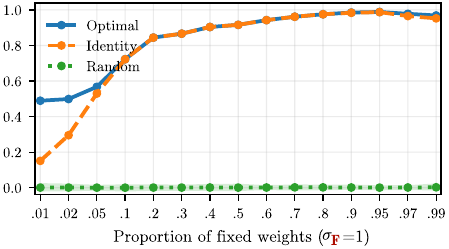}
        \caption{After layer 2}
    \end{subfigure}
    \begin{subfigure}[b]{0.31\linewidth}
        \includegraphics[width=\textwidth]{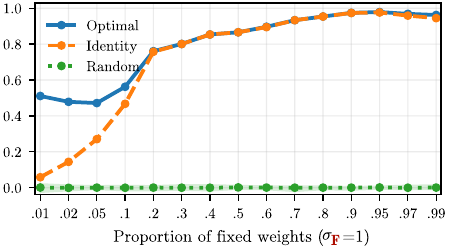}
        \caption{After layer 3}
    \end{subfigure}
    \caption{Activation matching objectives per layer (\Wasymmetric MLPs on MNIST), sweeping over sparsity parameter (proportion of fixed weights) for both $\sigma_{\fixed} = 0$ (standard sparse training) and $\sigma_{\fixed} = 1$.}
    \label{fig:activation-matching-by-layer-mlp-sparsity}
\end{figure}

\FloatBarrier
\appsubsection{Empirical Validation of Center Separation Rate}
\label{sec:appendix-experiments-center-separation-rate}

\begin{wrapfigure}[12]{r}{0.40\textwidth} 
    \vspace{-15pt}
     \centering
     \includegraphics[width=\linewidth]{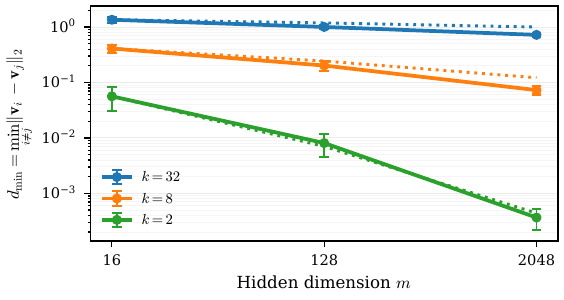}
     \caption{\textbf{Minimum pairwise projected center distance $\gap_\rcout$} (solid, mean $\pm$ std) vs. \textbf{predicted asymptotic rate $c_k m^{-2/k}$} (dotted).
    }
    \label{fig:asymptotic-rate}
 \end{wrapfigure}
We also examine how the minimum distance between projected neuron centers changes with layer width $m$.
For randomly sampled $\fixed$ we observe the expected scaling behavior from \cref{thm:min-center-separation}.
To this end, we sweep the hidden dimension $m$ and intrinsic dimension $k \in \{2, 8, 32\}$ on the Gaussian mixture dataset.
The constant $c_k$ in \cref{fig:asymptotic-rate} is chosen per $k$ such that the rate is anchored at the respective $\gap_\rcout$ for $m=16$.
\Cref{thm:min-center-separation} predicts a rate of $\Theta(\sigma_{\fixed}\sqrt{k} m^{-2/k})$, where $\sigma_{\fixed}\sqrt{k}$ and further constants hidden by $\Theta$ are taken care of by the anchoring.
This helps explain that neuron swaps are only plausible when projected centers are close, which is mainly the case for small intrinsic dimension or extremely wide layers. For realistic values of $k$, layers would need to be astronomically wide for such close centers to become likely.

\FloatBarrier
\appsubsection{Neuron Swap Costs via Ridge Regression and Additional Pairwise Neuron Swap Costs}\label{sec:appendix-experiments-realization-costs-ridge}
In \cref{sec:experimental-results}, we estimated the realization cost, and derived from that the neuron swap cost, via the Mahalanobis distance \cref{eq:mahalanobisNorm} in a PCA space with 90\% explained variance.
Appendix \cref{app:approximate-subspace-support} suggests a different way to estimate realization costs, via a ridge regression objective \cref{def:approx-cost}.
We compare the two methods in the following figures in more detail, separated by architecture, layer, and estimation method: We provide distribution box plots, comparing neuron swap costs for the four tested MLP-based architectures (MNIST) / zero fixed weights vs. center-dominated regime \Wasym MLPs (Gaussian mixture data), and detailed pairwise neuron swap cost plots showing {\color{darkblue}large}/{\color{red}small}/{\textcolor[RGB]{169,234,8}{negative}} costs.

\newsavebox{\rowAcolA}
\newsavebox{\rowAcolB}
\newsavebox{\rowAcolC}
\newsavebox{\rowBcolA}
\newsavebox{\rowBcolB}
\newsavebox{\rowBcolC}
 \begin{figure}[h]
    \captionsetup[subfigure]{skip=.1cm}
        \sbox{\rowAcolA}{\includegraphics[height=0.18\linewidth]{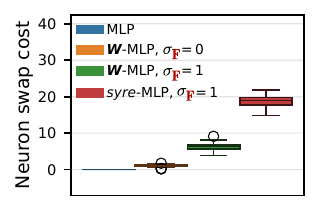}}
        \sbox{\rowAcolB}{\includegraphics[height=0.18\linewidth]{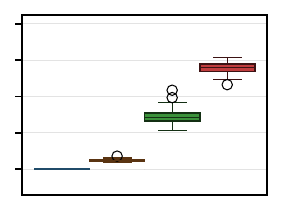}}
        \sbox{\rowAcolC}{\includegraphics[height=0.18\linewidth]{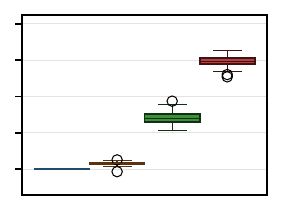}}

        \sbox{\rowBcolA}{\includegraphics[width=\wd\rowAcolA]{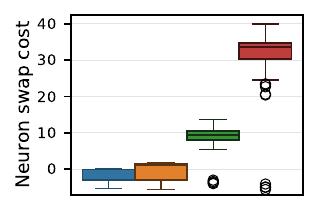}}
        \sbox{\rowBcolB}{\includegraphics[width=\wd\rowAcolB]{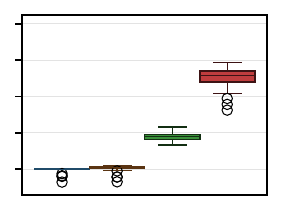}}
        \sbox{\rowBcolC}{\includegraphics[height=\ht\rowAcolC]{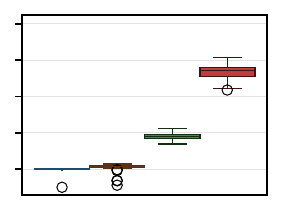}}%
    \hspace{2.15cm}\raisebox{.05cm}{\rotatebox{90}{\raisebox{0.2cm}{{\textbf{Mahalanobis est.}}}}}\hspace{0.05cm}%
    \begin{subfigure}[b]{\wd\rowAcolA} \usebox{\rowAcolA} \end{subfigure}\hspace{0.05cm}%
    \begin{subfigure}[b]{\wd\rowAcolB} \usebox{\rowAcolB} \end{subfigure}\hspace{0.05cm}%
    \begin{subfigure}[b]{\wd\rowAcolC} \usebox{\rowAcolC} \end{subfigure}
    
    \hspace{2.15cm}\raisebox{0.59cm}{\rotatebox{90}{\raisebox{0.2cm}{{\textbf{Ridge regr.\ est.}}}}}\hspace{0.05cm}%
    \begin{subfigure}[b]{\wd\rowBcolA} \usebox{\rowBcolA} \caption*{Layer 1} \end{subfigure}\hspace{0.05cm}%
    \begin{subfigure}[b]{\wd\rowBcolB} \usebox{\rowBcolB} \caption*{Layer 2} \end{subfigure}\hspace{0.05cm}%
    \begin{subfigure}[b]{\wd\rowBcolC} \usebox{\rowBcolC} \caption*{\hspace{0.0\wd\boxKappaOneC}Layer 3\hspace{0.18\wd\boxKappaOneC}} \end{subfigure}
    \caption{Distributions of neuron swap costs by architecture on \textbf{MNIST}. Plotted are signed sqrt.\ transformed $\Delta_{(ij)}^{\mathrm{out}}$ for disjoint consecutive pairs $(i,j)$ of neurons, estimated via Mahalanobis distance \cref{eq:mahalanobisNorm} and ridge regression (\cref{def:approx-cost}, $\beta=0.01$).
    }
\end{figure}
\vspace{-10pt}
 \begin{figure}[h]
    \captionsetup[subfigure]{skip=.1cm}
        \centering
        \sbox{\rowAcolA}{\includegraphics[height=0.2\linewidth]{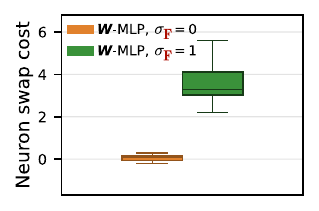}}
        \sbox{\rowAcolB}{\includegraphics[height=0.2\linewidth]{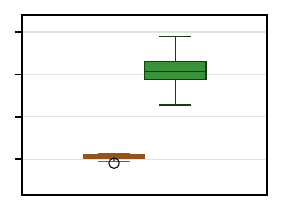}}
    \begin{subfigure}[b]{\wd\rowAcolA} \usebox{\rowAcolA} \caption*{Mahalanobis estimate} \end{subfigure}\hspace{0.05cm}%
    \begin{subfigure}[b]{\wd\rowAcolB} \usebox{\rowAcolB} \caption*{Ridge regression estimate} \end{subfigure}\hspace{0.05cm}%
    \caption{Neuron swap costs for $\W$-MLPs with $\sigma_{\fixed} \in\{0,1\}$ on \textbf{Gaussian mixture data}. Plotted are signed sqrt.\-transformed $\Delta_{(ij)}^{\mathrm{out}}$ for disj.\ consecutive pairs $(i,j)$ of neurons, estimated via Mahalanobis dist.\ \cref{eq:mahalanobisNorm} and ridge regression (\cref{def:approx-cost}, $\beta=0.01$).
    }
\end{figure}

\newcommand{\costname}{transposition_cost_sqrt_mahalanobis}
    \begin{figure}[h]
        \captionsetup[subfigure]{skip=.1cm}
    
        \sbox{\boxKappaZeroA}{\includegraphics[height=0.29\linewidth]{figures/neuron-pair-cost_d-intr2_hidden-dim64_symmetry1_kappa0.0_lins.0_0.01_\costname.pdf}}
        \sbox{\boxKappaZeroB}{\includegraphics[height=0.29\linewidth]{figures/neuron-pair-cost_d-intr8_hidden-dim64_symmetry1_kappa0.0_lins.0_0.01_\costname.pdf}}
        \sbox{\boxKappaZeroC}{\includegraphics[height=0.29\linewidth]{figures/neuron-pair-cost_d-intr32_hidden-dim64_symmetry1_kappa0.0_lins.0_0.01_\costname.pdf}}
    
        \sbox{\boxKappaOneA}{\includegraphics[width=\wd\boxKappaZeroA]{figures/neuron-pair-cost_d-intr2_hidden-dim64_symmetry1_kappa1.0_lins.0_0.01_\costname.pdf}}
        \sbox{\boxKappaOneB}{\includegraphics[width=\wd\boxKappaZeroB]{figures/neuron-pair-cost_d-intr8_hidden-dim64_symmetry1_kappa1.0_lins.0_0.01_\costname.pdf}}
        \sbox{\boxKappaOneC}{\includegraphics[height=\ht\boxKappaOneA]{figures/neuron-pair-cost_d-intr32_hidden-dim64_symmetry1_kappa1.0_lins.0_0.01_\costname.pdf}}%
        \hspace{0.05cm}\raisebox{1.69cm}{\rotatebox{90}{\raisebox{0.4cm}{\Large$\sigma_{\fixed}=0$}}}\hspace{0.05cm}%
        \begin{subfigure}[b]{\wd\boxKappaZeroA} \usebox{\boxKappaZeroA} \end{subfigure}\hspace{0.05cm}%
        \begin{subfigure}[b]{\wd\boxKappaZeroB} \usebox{\boxKappaZeroB} \end{subfigure}\hspace{0.05cm}%
        \begin{subfigure}[b]{\wd\boxKappaZeroC} \usebox{\boxKappaZeroC} \end{subfigure}
        
        \hspace{0.05cm}\raisebox{2.17cm}{\rotatebox{90}{\raisebox{0.4cm}{\Large$\sigma_{\fixed}=1$}}}\hspace{0.05cm}%
        \begin{subfigure}[b]{\wd\boxKappaOneA} \usebox{\boxKappaOneA} \caption*{$k=2$} \end{subfigure}\hspace{0.05cm}%
        \begin{subfigure}[b]{\wd\boxKappaOneB} \usebox{\boxKappaOneB} \caption*{$k=8$} \end{subfigure}\hspace{0.05cm}%
        \begin{subfigure}[b]{\wd\boxKappaOneC} \usebox{\boxKappaOneC} \caption*{\hspace{0.0\wd\boxKappaOneC}$k=32$\hspace{0.18\wd\boxKappaOneC}} \end{subfigure}
        \caption{Neuron swap costs $\Delta_{(ij)}^{\mathrm{out}}$ (signed square-root transformed) estimated via \textbf{Mahalanobis distance} on \textbf{Gaussian mixture data}, for varying \emph{intrinsic dimension $k$} and \emph{fixed weight scale $\sigma_{\fixed}$} (same as \cref{fig:transposition-costs-matrix}).}        
    \end{figure}%
\renewcommand{\costname}{transposition_cost_sqrt_ridge}%
    \begin{figure}[h]
        \captionsetup[subfigure]{skip=.1cm}
    
        \sbox{\boxKappaZeroA}{\includegraphics[height=0.29\linewidth]{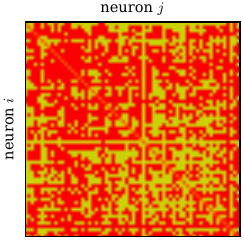}}
        \sbox{\boxKappaZeroB}{\includegraphics[height=0.29\linewidth]{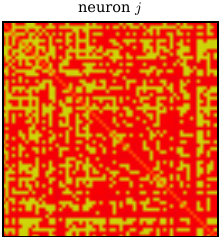}}
        \sbox{\boxKappaZeroC}{\includegraphics[height=0.29\linewidth]{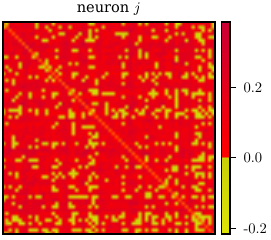}}
    
        \sbox{\boxKappaOneA}{\includegraphics[width=\wd\boxKappaZeroA]{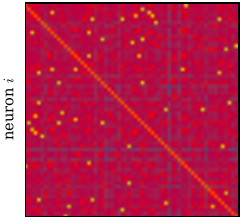}}
        \sbox{\boxKappaOneB}{\includegraphics[width=\wd\boxKappaZeroB]{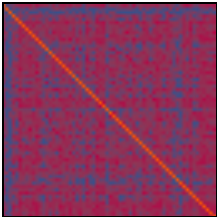}}
        \sbox{\boxKappaOneC}{\includegraphics[height=\ht\boxKappaOneA]{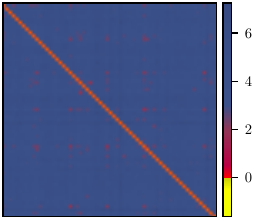}}%
        \hspace{0.05cm}\raisebox{1.69cm}{\rotatebox{90}{\raisebox{0.4cm}{\Large$\sigma_{\fixed}=0$}}}\hspace{0.05cm}%
        \begin{subfigure}[b]{\wd\boxKappaZeroA} \usebox{\boxKappaZeroA} \end{subfigure}\hspace{0.05cm}%
        \begin{subfigure}[b]{\wd\boxKappaZeroB} \usebox{\boxKappaZeroB} \end{subfigure}\hspace{0.05cm}%
        \begin{subfigure}[b]{\wd\boxKappaZeroC} \usebox{\boxKappaZeroC} \end{subfigure}
        
        \hspace{0.05cm}\raisebox{2.17cm}{\rotatebox{90}{\raisebox{0.4cm}{\Large$\sigma_{\fixed}=1$}}}\hspace{0.05cm}%
        \begin{subfigure}[b]{\wd\boxKappaOneA} \usebox{\boxKappaOneA} \caption*{$k=2$} \end{subfigure}\hspace{0.05cm}%
        \begin{subfigure}[b]{\wd\boxKappaOneB} \usebox{\boxKappaOneB} \caption*{$k=8$} \end{subfigure}\hspace{0.05cm}%
        \begin{subfigure}[b]{\wd\boxKappaOneC} \usebox{\boxKappaOneC} \caption*{\hspace{0.0\wd\boxKappaOneC}$k=32$\hspace{0.18\wd\boxKappaOneC}} \end{subfigure}
        \caption{Neuron swap costs $\Delta_{(ij)}^{\mathrm{out}}$ (signed square-root transformed) estimated via \textbf{ridge regression}, $\beta=0.01$ (see \cref{def:approx-cost}) on \textbf{Gaussian mixture data}, for varying \emph{intrinsic dimension $k$} and \emph{fixed weight scale $\sigma_{\fixed}$}.}      
    \end{figure}
\FloatBarrier
\newpage
\newcommand{\matrixplotboxheight}{0.285\linewidth}
\newcommand{\runkey}{mlp_symmetry0}
    \begin{figure}
        \captionsetup[subfigure]{skip=.1cm}
            \renewcommand{\costname}{transposition_cost_sqrt_mahalanobis}
                \sbox{\boxKappaZeroA}{\includegraphics[height=\matrixplotboxheight]{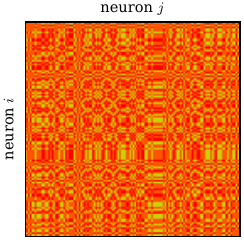}}
                \sbox{\boxKappaZeroB}{\includegraphics[height=\matrixplotboxheight]{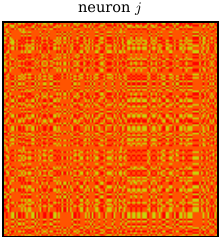}}
                \sbox{\boxKappaZeroC}{\includegraphics[height=\matrixplotboxheight]{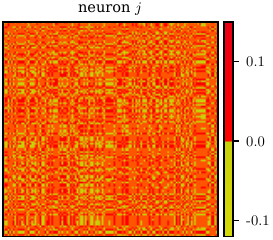}}
    
            \renewcommand{\costname}{transposition_cost_sqrt_ridge}
                \sbox{\boxKappaOneA}{\includegraphics[width=\wd\boxKappaZeroA]{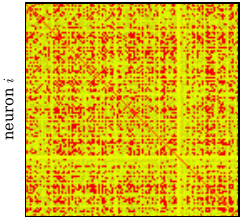}}
                \sbox{\boxKappaOneB}{\includegraphics[width=\wd\boxKappaZeroB]{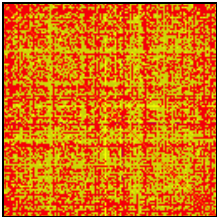}}
                \sbox{\boxKappaOneC}{\includegraphics[height=\ht\boxKappaOneA]{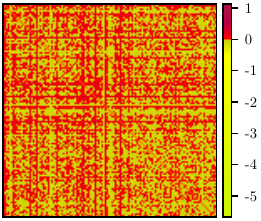}}%
            \hspace{1.05cm}\raisebox{.85cm}{\rotatebox{90}{\raisebox{0.2cm}{{Mahalanobis} estimate}}}\hspace{0.05cm}%
            \begin{subfigure}[b]{\wd\boxKappaZeroA} \usebox{\boxKappaZeroA} \end{subfigure}\hspace{0.05cm}%
            \begin{subfigure}[b]{\wd\boxKappaZeroB} \usebox{\boxKappaZeroB} \end{subfigure}\hspace{0.05cm}%
            \begin{subfigure}[b]{\wd\boxKappaZeroC} \usebox{\boxKappaZeroC} \end{subfigure}

            \hspace{1.05cm}\raisebox{1cm}{\rotatebox{90}{\raisebox{0.2cm}{{ridge regression} estimate}}}\hspace{0.05cm}%
            \begin{subfigure}[b]{\wd\boxKappaOneA} \usebox{\boxKappaOneA} \caption*{Layer 1} \end{subfigure}\hspace{0.05cm}%
            \begin{subfigure}[b]{\wd\boxKappaOneB} \usebox{\boxKappaOneB} \caption*{Layer 2} \end{subfigure}\hspace{0.05cm}%
            \begin{subfigure}[b]{\wd\boxKappaOneC} \usebox{\boxKappaOneC} \caption*{\hspace{0.0\wd\boxKappaOneC}Layer 3\hspace{0.18\wd\boxKappaOneC}} \end{subfigure}
        \caption{Neuron swap costs $\Delta_{(ij)}^{\mathrm{out}}$ (signed square-root transformed) \emph{per layer} for \textbf{standard MLP} trained on \textbf{MNIST}, estimated via \emph{Mahalanobis distance} \cref{eq:mahalanobisNorm} vs. \emph{ridge regression} (\cref{def:approx-cost}, $\beta=0.01$).
        Layers 1 -- 3 have hidden dimension $512$, shown are costs for first $128\times 128$ neuron pairs.
        }
    \end{figure}

\renewcommand{\runkey}{mlp_symmetry1_kappa0}
    \begin{figure}
        \captionsetup[subfigure]{skip=.1cm}
        \renewcommand{\costname}{transposition_cost_sqrt_mahalanobis}
            \sbox{\boxKappaZeroA}{\includegraphics[height=\matrixplotboxheight]{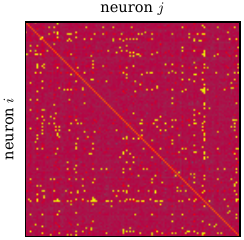}}
            \sbox{\boxKappaZeroB}{\includegraphics[height=\matrixplotboxheight]{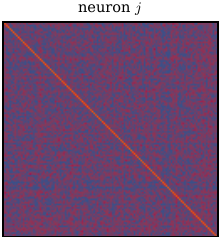}}
            \sbox{\boxKappaZeroC}{\includegraphics[height=\matrixplotboxheight]{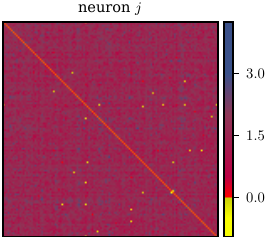}}

        \renewcommand{\costname}{transposition_cost_sqrt_ridge}
            \sbox{\boxKappaOneA}{\includegraphics[width=\wd\boxKappaZeroA]{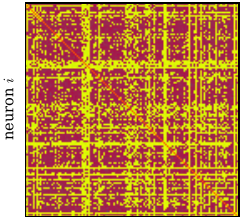}}
            \sbox{\boxKappaOneB}{\includegraphics[width=\wd\boxKappaZeroB]{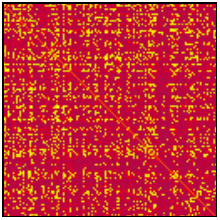}}
            \sbox{\boxKappaOneC}{\includegraphics[height=\ht\boxKappaOneA]{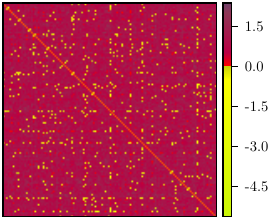}}%
        \hspace{1.05cm}\raisebox{.85cm}{\rotatebox{90}{\raisebox{0.2cm}{{Mahalanobis} estimate}}}\hspace{0.05cm}%
        \begin{subfigure}[b]{\wd\boxKappaZeroA} \usebox{\boxKappaZeroA} \end{subfigure}\hspace{0.05cm}%
        \begin{subfigure}[b]{\wd\boxKappaZeroB} \usebox{\boxKappaZeroB} \end{subfigure}\hspace{0.05cm}%
        \begin{subfigure}[b]{\wd\boxKappaZeroC} \usebox{\boxKappaZeroC} \end{subfigure}

        \hspace{1.05cm}\raisebox{1cm}{\rotatebox{90}{\raisebox{0.2cm}{{ridge regression} estimate}}}\hspace{0.05cm}%
        \begin{subfigure}[b]{\wd\boxKappaOneA} \usebox{\boxKappaOneA} \caption*{Layer 1} \end{subfigure}\hspace{0.05cm}%
        \begin{subfigure}[b]{\wd\boxKappaOneB} \usebox{\boxKappaOneB} \caption*{Layer 2} \end{subfigure}\hspace{0.05cm}%
        \begin{subfigure}[b]{\wd\boxKappaOneC} \usebox{\boxKappaOneC} \caption*{\hspace{0.0\wd\boxKappaOneC}Layer 3\hspace{0.18\wd\boxKappaOneC}} \end{subfigure}
        \caption{Neuron swap costs $\Delta_{(ij)}^{\mathrm{out}}$ (signed square-root transformed) \emph{per layer} for \textbf{\Wasymmetric MLP ($\mathbf{\boldsymbol\sigma_{\fixed} = 0}$)} trained on \textbf{MNIST}, estimated via \emph{Mahalanobis distance} \cref{eq:mahalanobisNorm} and \emph{ridge regression} (\cref{def:approx-cost}, $\beta=0.01$).
        Layers 1 -- 3 have hidden dimension $512$, shown are costs for first $128\times 128$ neuron pairs.
        }
    \end{figure}

\renewcommand{\runkey}{mlp_symmetry1_kappa1}
    \begin{figure}
        \captionsetup[subfigure]{skip=.1cm}
        \renewcommand{\costname}{transposition_cost_sqrt_mahalanobis}
            \sbox{\boxKappaZeroA}{\includegraphics[height=\matrixplotboxheight]{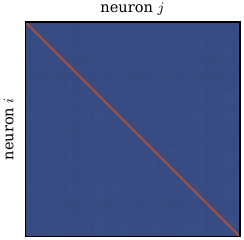}}
            \sbox{\boxKappaZeroB}{\includegraphics[height=\matrixplotboxheight]{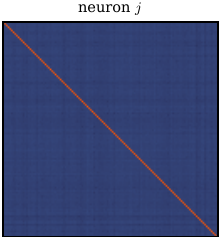}}
            \sbox{\boxKappaZeroC}{\includegraphics[height=\matrixplotboxheight]{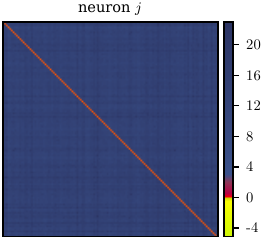}}

        \renewcommand{\costname}{transposition_cost_sqrt_ridge}
            \sbox{\boxKappaOneA}{\includegraphics[width=\wd\boxKappaZeroA]{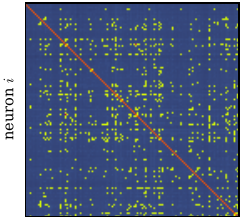}}
            \sbox{\boxKappaOneB}{\includegraphics[width=\wd\boxKappaZeroB]{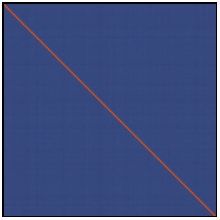}}
            \sbox{\boxKappaOneC}{\includegraphics[height=\ht\boxKappaOneA]{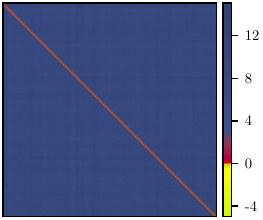}}%
        \hspace{1.05cm}\raisebox{.85cm}{\rotatebox{90}{\raisebox{0.2cm}{{Mahalanobis} estimate}}}\hspace{0.05cm}%
        \begin{subfigure}[b]{\wd\boxKappaZeroA} \usebox{\boxKappaZeroA} \end{subfigure}\hspace{0.05cm}%
        \begin{subfigure}[b]{\wd\boxKappaZeroB} \usebox{\boxKappaZeroB} \end{subfigure}\hspace{0.05cm}%
        \begin{subfigure}[b]{\wd\boxKappaZeroC} \usebox{\boxKappaZeroC} \end{subfigure}
        
        \hspace{1.05cm}\raisebox{1cm}{\rotatebox{90}{\raisebox{0.2cm}{{ridge regression} estimate}}}\hspace{0.05cm}%
        \begin{subfigure}[b]{\wd\boxKappaOneA} \usebox{\boxKappaOneA} \caption*{Layer 1} \end{subfigure}\hspace{0.05cm}%
        \begin{subfigure}[b]{\wd\boxKappaOneB} \usebox{\boxKappaOneB} \caption*{Layer 2} \end{subfigure}\hspace{0.05cm}%
        \begin{subfigure}[b]{\wd\boxKappaOneC} \usebox{\boxKappaOneC} \caption*{\hspace{0.0\wd\boxKappaOneC}Layer 3\hspace{0.18\wd\boxKappaOneC}} \end{subfigure}
        \caption{Neuron swap costs $\Delta_{(ij)}^{\mathrm{out}}$ (signed square-root transformed) \emph{per layer} for \textbf{\Wasymmetric MLP ($\mathbf{\boldsymbol\sigma_{\fixed} = 1}$)} trained on \textbf{MNIST}, estimated via \emph{Mahalanobis distance} \cref{eq:mahalanobisNorm} and \emph{ridge regression} (\cref{def:approx-cost}, $\beta=0.01$).
        Layers 1 -- 3 have hidden dimension $512$, shown are costs for first $128\times 128$ neuron pairs.
        }
    \end{figure}

\renewcommand{\runkey}{mlp_symmetry3_kappa1}
    \begin{figure}
        \captionsetup[subfigure]{skip=.1cm}
        \renewcommand{\costname}{transposition_cost_sqrt_mahalanobis}
            \sbox{\boxKappaZeroA}{\includegraphics[height=\matrixplotboxheight]{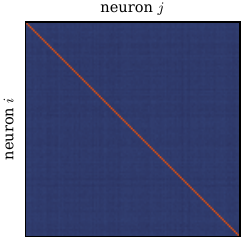}}
            \sbox{\boxKappaZeroB}{\includegraphics[height=\matrixplotboxheight]{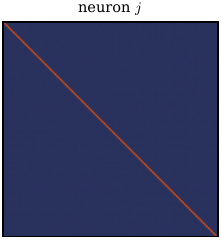}}
            \sbox{\boxKappaZeroC}{\includegraphics[height=\matrixplotboxheight]{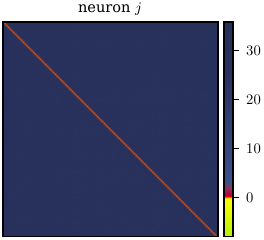}}

        \renewcommand{\costname}{transposition_cost_sqrt_ridge}
            \sbox{\boxKappaOneA}{\includegraphics[width=\wd\boxKappaZeroA]{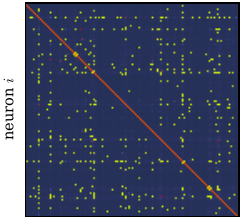}}
            \sbox{\boxKappaOneB}{\includegraphics[width=\wd\boxKappaZeroB]{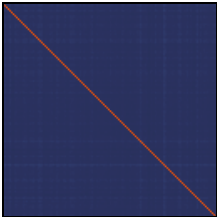}}
            \sbox{\boxKappaOneC}{\includegraphics[height=\ht\boxKappaOneA]{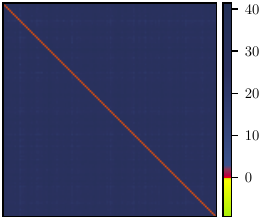}}%
        \hspace{1.05cm}\raisebox{.85cm}{\rotatebox{90}{\raisebox{0.2cm}{{Mahalanobis} estimate}}}\hspace{0.05cm}%
        \begin{subfigure}[b]{\wd\boxKappaZeroA} \usebox{\boxKappaZeroA} \end{subfigure}\hspace{0.05cm}%
        \begin{subfigure}[b]{\wd\boxKappaZeroB} \usebox{\boxKappaZeroB} \end{subfigure}\hspace{0.05cm}%
        \begin{subfigure}[b]{\wd\boxKappaZeroC} \usebox{\boxKappaZeroC} \end{subfigure}
        
        \hspace{1.05cm}\raisebox{1cm}{\rotatebox{90}{\raisebox{0.2cm}{{ridge regression} estimate}}}\hspace{0.05cm}%
        \begin{subfigure}[b]{\wd\boxKappaOneA} \usebox{\boxKappaOneA} \caption*{Layer 1} \end{subfigure}\hspace{0.05cm}%
        \begin{subfigure}[b]{\wd\boxKappaOneB} \usebox{\boxKappaOneB} \caption*{Layer 2} \end{subfigure}\hspace{0.05cm}%
        \begin{subfigure}[b]{\wd\boxKappaOneC} \usebox{\boxKappaOneC} \caption*{\hspace{0.0\wd\boxKappaOneC}Layer 3\hspace{0.18\wd\boxKappaOneC}} \end{subfigure}
        \caption{Neuron swap costs $\Delta_{(ij)}^{\mathrm{out}}$ (signed square-root transformed) \emph{per layer} for \textbf{\emph{syre}-MLP ($\mathbf{\boldsymbol\sigma_{\fixed} = 1}$)} trained on \textbf{MNIST}, estimated via \emph{Mahalanobis distance} \cref{eq:mahalanobisNorm} and \emph{ridge regression} (\cref{def:approx-cost}, $\beta=0.01$).
        Layers 1 -- 3 have hidden dimension $512$, shown are costs for first $128\times 128$ neuron pairs.
        }
    \end{figure}

\FloatBarrier
\appsubsection{Transformers}
\label{sec:appendix-experiments-transformers}

\cref{sec:experimental-results} studies the effect of neural parameter symmetries on optimization and linear mode connectivity for MLPs and ResNets.
We additionally examine Vision Transformers (ViTs) \cite{dosovitskiy2021an} on CIFAR-10 to test whether the same symmetry breaking mechanism extends to attention-based architectures.
A transformer is composed of linear maps, including the query, key, value, and output projections in multi-head self-attention, the two linear layers in each feed-forward block, the patch embedding layer, and the final classification head.
Transformers are an especially interesting extension because attention admits richer $\mathrm{GL}(\mathbb{R}^n)$ symmetries beyond the permutation setting in \cref{sec:experimental-results}. 
We note that our goal is not to introduce a highly tuned state-of-the-art ViT, but to use a trainable transformer baseline whose training accuracy reached high performance. The goal is to investigate whether the same parameter-level intervention can reduce the interpolation barrier.

The ViT setting is more challenging  for several reasons. First, transformer symmetries are not limited to hidden-unit or channel permutations.
Multi-head attention introduces head-permutation symmetries, while the feed-forward blocks contain MLP-like hidden-unit symmetries.
Second, LayerNorm, residual connections, positional embeddings, and the class token may interact with these symmetries in ways not captured by a simple linear-map masking scheme.
Third, our implementation applies the $\mW$-asymmetric parameterization directly to the transformer's linear maps.
This is a natural extension of the original construction, but it may not be the optimal transformer-specific way to break symmetries.
For example, breaking individual entries of $\mW_Q$, $\mW_K$, $\mW_V$, $\mW_O$ may not fully eliminate head-level equivalences unless the asymmetry is coordinated with the internal head structure.
Similarly, asymmetrizing all linear maps may introduce additional optimization difficulty because fixed entries constrain the parameterization throughout training.

We use a CIFAR-10 ViT with patch size $4$, embedding dimension $384$, depth $8$, $6$ attention heads, and MLP ratio $4$. 
Following the experimental setup in \cref{sec:experimental-results}, we apply a random 80\%/10\%/10\% train/val/test split.
For each linear map, we replace the ordinary trainable weight matrix $\mW$ with a $\mW$-asymmetric parameterization in the form of \cref{eq:W_eff}. 
The fixed entries are sampled once at initialization, and the number of fixed entries is $8$.
This construction is a natural extension of the $\mW$-asymmetric parameterization used for MLPs and ResNets, but it is not necessarily the optimal way to break transformer symmetries.
For example, a more tailored approach might use head-aware masks or a design that explicitly respects the coupled structure of $\mW_Q$, $\mW_K$, $\mW_V$, $\mW_O$.
We leave such transformer-specific variants for future work.
We train the model using AdamW \citep{loshchilov_decoupled_2019} with cosine learning-rate decay for 200 epochs using a batch size of 128.
In our preliminary tuning, we found that larger fixed-weight scales can make transformer optimization more difficult.
In particular, settings such as $\sigma_{\fixed}=2$ were harder to train.
When their scale is too large, they can dominate the trainable entries, alter the scale of attention logits and MLP activations, and reduce the flexibility of the model during optimization.
Therefore, unlike in an idealized symmetry breaking setting, increasing $\sigma_{\fixed}$ should not be interpreted as a monotonic improvement in practice.

\begin{wrapfigure}[12]{r}{0.35\textwidth}  
\vspace{-3 mm}
    \includegraphics[width=\linewidth]{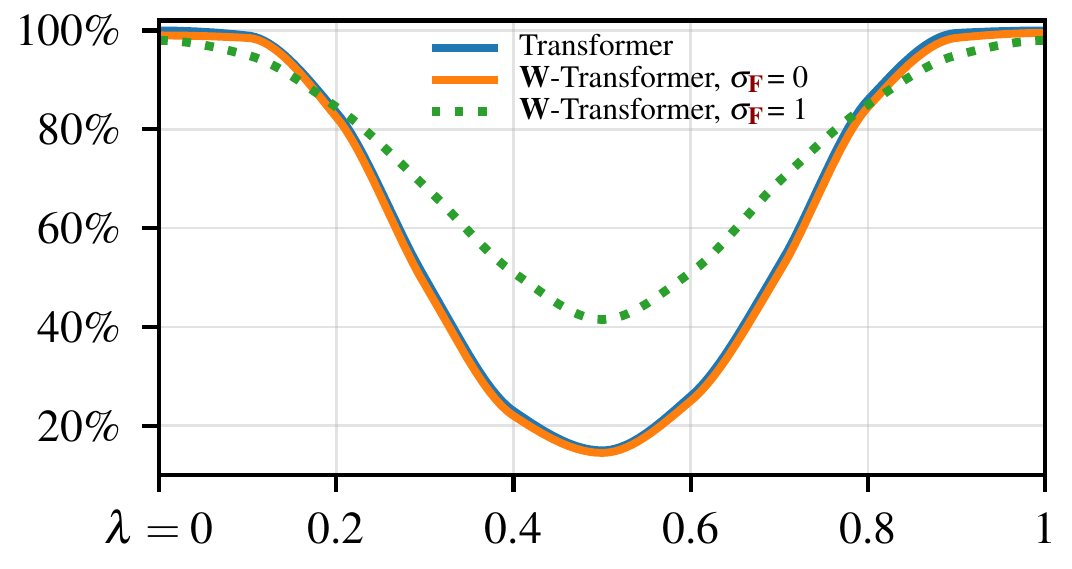}
    \caption{LMC of Transformer and $\mW$-Transformer on CIFAR-10, measured by training accuracy along the interpolation path.
    }
    \label{fig:transformer}
\end{wrapfigure}
\cref{fig:transformer} reports the training accuracy along the linear interpolation path between independently trained transformer endpoints.
We see that the standard transformer exhibits a pronounced interpolation barrier and the $\mW$-Transformer with $\sigma_{\fixed} = 0$ closely matches this behavior, suggesting that masking alone, when the fixed entries are zero, is not sufficient to meaningfully improve unaligned LMC.
In contrast, the $\mW$-Transformer with $\sigma_{\fixed} = 1$ yields a noticeably flatter interpolation curve and substantially higher midpoint accuracy.
However, the interpolation curve still remains well below the endpoint accuracy.
The improvement at $\sigma_{\fixed}=1$ suggests that nonzero $\mW$-asymmetric fixed entries can reduce the unaligned interpolation barrier in transformers, qualitatively matching the mechanism observed for MLPs and ResNets in \cref{sec:experimental-results}.
At the same time, the remaining barrier indicates that the current entrywise $\mW$-asymmetric construction does not fully resolve the symmetries or optimization effects present in transformer architectures.
This may be due to residual attention-head symmetries, richer $\mathrm{GL}(\mathbb{R}^n)$ invariances, interactions with LayerNorm and residual connections, or the fact that our masks are inherited from the MLP and ResNet setting rather than designed specifically for attention.
Thus, this provides evidence that the mechanism identified by our theory carries over qualitatively to transformers, while also motivating future work on transformer-specific symmetry breaking schemes.

\FloatBarrier
\appsubsection{Subspace Coherence of MNIST Inputs and Hidden Representations}
We empirically measure the subspace coherence $\nu(\mathcal U)$ of model inputs, hidden representations and the final output for standard and asymmetric MLPs on MNIST data variants over training.
\cref{fig:mnist-coherence} shows the results: in particular, variants without effective symmetry breaking (MLP and \Wasymmetric MLP with zero fixed weights) exhibit low subspace coherence, in particular in later layers, while \Wasymmetric MLPs and \emph{syre}-MLPs with large fixed weights both tend to exhibit higher subspace coherence values.

\label{sec:appendix-experiments-mnist-coherence}
\begin{figure}
    \centering
    \begin{subfigure}{0.234\linewidth}
        \includegraphics[height=\linewidth]{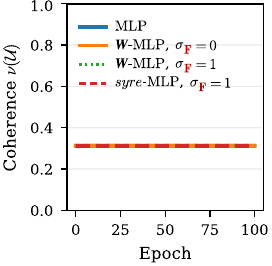}
        \caption{Model inputs}
    \end{subfigure}
    \begin{subfigure}{0.18\linewidth}
        \includegraphics[height=1.3\linewidth]{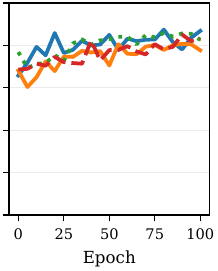}
        \caption{Layer 2 inputs}
    \end{subfigure}
    \begin{subfigure}{0.18\linewidth}
        \includegraphics[height=1.3\linewidth]{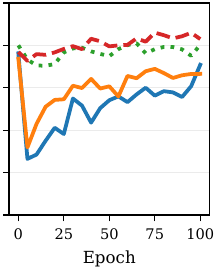}
        \caption{Layer 3 inputs}
    \end{subfigure}
    \begin{subfigure}{0.18\linewidth}
        \includegraphics[height=1.3\linewidth]{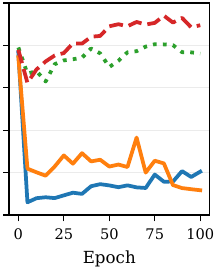}
        \caption{Layer 4 inputs}
    \end{subfigure}
    \begin{subfigure}{0.18\linewidth}
        \includegraphics[height=1.3\linewidth]{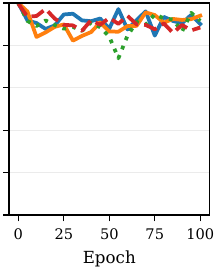}
        \caption{Model outputs}
    \end{subfigure}
    \caption{Subspace coherences $\nu(\subsp)$ computed for model inputs, outputs, and hidden representations, on MNIST data, for standard and asymmetric MLP variants over training.}
    \label{fig:mnist-coherence}
\end{figure}

\end{document}